%% file: main.tex
\documentclass[10pt,twocolumn,letterpaper]{article}

\usepackage[pagenumbers]{cvpr}

\input{preamble}
\input{tables/arxiv/numbers}

\definecolor{cvprblue}{rgb}{0.21,0.49,0.74}
\usepackage[pagebackref,breaklinks,colorlinks,allcolors=cvprblue]{hyperref}
\hypersetup{
  pdftitle={Teaching Vision-Language Models to Decline: Support Completion Against Mirage},
  pdfauthor={Alfaj Uddin Ahmed; Md. Shaown Miah; Md Arid Hasan; Syed Ishtiaque Ahmed; Sayeed Shafayet Chowdhury},
  pdfsubject={Vision-language models, mirage, support completion, abstention},
  pdfkeywords={vision-language models, mirage, hallucination, abstention, support completion, TIDE}
}

\def\paperID{}

\def\confName{CVPR}
\def\confYear{2027}

\title{Teaching Vision-Language Models to Decline:\\Support Completion Against Mirage}

\author{%
  Alfaj Uddin Ahmed$^{1}$ \qquad Md. Shaown Miah$^{2}$ \qquad Md Arid Hasan$^{4}$\\[2pt]
  Syed Ishtiaque Ahmed$^{4}$ \qquad Sayeed Shafayet Chowdhury$^{2,*}$\\[6pt]
  {\small $^{1}$Department of Biomedical Engineering, Purdue University, West Lafayette, IN, USA}\\
  {\small $^{2}$Department of Computer Science, Luddy School of Informatics, Computing, and Engineering,}\\
  {\small Indiana University Indianapolis, Indianapolis, IN, USA}\\
  {\small $^{4}$Department of Computer Science, University of Toronto, Toronto, ON, Canada}\\[3pt]
  {\small $^{*}$Corresponding author: \texttt{saychow@iu.edu}}%
}

\begin{document}
\maketitle
\input{sec/0_abstract}
\input{sec/1_intro}
\input{sec/2_related}
\input{sec/3_problem}
\input{sec/4_tide}
\input{sec/5_protocol}
\input{sec/6_results}

\input{sec/7_ablation}

\input{sec/8_mechanism}

\input{sec/9_limitations}
{
    \small
    \bibliographystyle{ieeenat_fullname}
    \bibliography{main}
}

\input{sec/X_suppl}

\end{document}

%% file: preamble.tex
\usepackage{graphicx}
\usepackage{booktabs}
\usepackage{multirow}
\usepackage{amsmath}
\usepackage{amssymb}
\usepackage{amsthm}
\usepackage{xcolor}
\theoremstyle{plain}\newtheorem{proposition}{Proposition}\newtheorem{corollary}{Corollary}
\theoremstyle{remark}

\newcommand{\Dc}{D_{4}}
\newcommand{\dec}[1]{\mathsf{decline}_{#1}}
\newcommand{\ans}{\mathsf{answer}}
\newcommand{\unmeasured}[1]{\textcolor{red}{\textbf{[UNMEASURED: #1]}}}

\newcommand{\rv}[1]{\ifcsname regval:#1\endcsname\csname regval:#1\endcsname\else\textcolor{red}{\textbf{[??#1]}}\fi}

\usepackage{microtype}

%% file: tables/arxiv/numbers.tex
\expandafter\def\csname regval:M-e0-U-imr\endcsname{0.759}
\expandafter\def\csname regval:M-e0-U-imr.n\endcsname{888}
\expandafter\def\csname regval:M-e0-U-imr.d\endcsname{1{,}170}
\expandafter\def\csname regval:M-e0-U-imr.ci\endcsname{[0.734, 0.783]}
\expandafter\def\csname regval:M-e0-U-tclia\endcsname{0.569}
\expandafter\def\csname regval:M-e0-U-tclia.n\endcsname{888}
\expandafter\def\csname regval:M-e0-U-tclia.d\endcsname{1{,}560}
\expandafter\def\csname regval:M-e0-U-blank\endcsname{0.700}
\expandafter\def\csname regval:M-e0-U-blank.n\endcsname{273}
\expandafter\def\csname regval:M-e0-U-blank.d\endcsname{390}
\expandafter\def\csname regval:M-e0-U-blank.ci\endcsname{[0.653, 0.743]}
\expandafter\def\csname regval:M-e0-U-id_unrelated\endcsname{0.813}
\expandafter\def\csname regval:M-e0-U-id_unrelated.n\endcsname{317}
\expandafter\def\csname regval:M-e0-U-id_unrelated.d\endcsname{390}
\expandafter\def\csname regval:M-e0-U-id_unrelated.ci\endcsname{[0.771, 0.848]}
\expandafter\def\csname regval:M-e0-U-od_unrelated\endcsname{0.764}
\expandafter\def\csname regval:M-e0-U-od_unrelated.n\endcsname{298}
\expandafter\def\csname regval:M-e0-U-od_unrelated.d\endcsname{390}
\expandafter\def\csname regval:M-e0-U-od_unrelated.ci\endcsname{[0.720, 0.803]}
\expandafter\def\csname regval:M-e0-U-imr-chest_vqa\endcsname{0.632}
\expandafter\def\csname regval:M-e0-U-imr-chest_vqa.n\endcsname{74}
\expandafter\def\csname regval:M-e0-U-imr-chest_vqa.d\endcsname{117}
\expandafter\def\csname regval:M-e0-U-imr-docvqa\endcsname{0.581}
\expandafter\def\csname regval:M-e0-U-imr-docvqa.n\endcsname{68}
\expandafter\def\csname regval:M-e0-U-imr-docvqa.d\endcsname{117}
\expandafter\def\csname regval:M-e0-U-imr-infovqa\endcsname{0.812}
\expandafter\def\csname regval:M-e0-U-imr-infovqa.n\endcsname{95}
\expandafter\def\csname regval:M-e0-U-imr-infovqa.d\endcsname{117}
\expandafter\def\csname regval:M-e0-U-imr-medxpertqa_mm\endcsname{0.889}
\expandafter\def\csname regval:M-e0-U-imr-medxpertqa_mm.n\endcsname{104}
\expandafter\def\csname regval:M-e0-U-imr-medxpertqa_mm.d\endcsname{117}
\expandafter\def\csname regval:M-e0-U-imr-microvqa\endcsname{0.923}
\expandafter\def\csname regval:M-e0-U-imr-microvqa.n\endcsname{108}
\expandafter\def\csname regval:M-e0-U-imr-microvqa.d\endcsname{117}
\expandafter\def\csname regval:M-e0-U-imr-mmmu_pro\endcsname{0.786}
\expandafter\def\csname regval:M-e0-U-imr-mmmu_pro.n\endcsname{92}
\expandafter\def\csname regval:M-e0-U-imr-mmmu_pro.d\endcsname{117}
\expandafter\def\csname regval:M-e0-U-imr-pathvqa\endcsname{0.632}
\expandafter\def\csname regval:M-e0-U-imr-pathvqa.n\endcsname{74}
\expandafter\def\csname regval:M-e0-U-imr-pathvqa.d\endcsname{117}
\expandafter\def\csname regval:M-e0-U-imr-rexvqa\endcsname{0.761}
\expandafter\def\csname regval:M-e0-U-imr-rexvqa.n\endcsname{89}
\expandafter\def\csname regval:M-e0-U-imr-rexvqa.d\endcsname{117}
\expandafter\def\csname regval:M-e0-U-imr-textvqa\endcsname{0.778}
\expandafter\def\csname regval:M-e0-U-imr-textvqa.n\endcsname{91}
\expandafter\def\csname regval:M-e0-U-imr-textvqa.d\endcsname{117}
\expandafter\def\csname regval:M-e0-U-imr-video_mmmu\endcsname{0.795}
\expandafter\def\csname regval:M-e0-U-imr-video_mmmu.n\endcsname{93}
\expandafter\def\csname regval:M-e0-U-imr-video_mmmu.d\endcsname{117}
\expandafter\def\csname regval:M-e0-B-iaimr\endcsname{0.431}
\expandafter\def\csname regval:M-e0-B-iaimr.n\endcsname{504}
\expandafter\def\csname regval:M-e0-B-iaimr.d\endcsname{1{,}170}
\expandafter\def\csname regval:M-e0-B-iaimr.ci\endcsname{[0.403, 0.459]}
\expandafter\def\csname regval:M-e0-B-iafmr\endcsname{0.323}
\expandafter\def\csname regval:M-e0-B-iafmr.n\endcsname{504}
\expandafter\def\csname regval:M-e0-B-iafmr.d\endcsname{1{,}560}
\expandafter\def\csname regval:M-e0-B-blank\endcsname{0.718}
\expandafter\def\csname regval:M-e0-B-blank.n\endcsname{280}
\expandafter\def\csname regval:M-e0-B-blank.d\endcsname{390}
\expandafter\def\csname regval:M-e0-B-blank.ci\endcsname{[0.671, 0.760]}
\expandafter\def\csname regval:M-e0-B-id_unrelated\endcsname{0.331}
\expandafter\def\csname regval:M-e0-B-id_unrelated.n\endcsname{129}
\expandafter\def\csname regval:M-e0-B-id_unrelated.d\endcsname{390}
\expandafter\def\csname regval:M-e0-B-id_unrelated.ci\endcsname{[0.286, 0.379]}
\expandafter\def\csname regval:M-e0-B-od_unrelated\endcsname{0.244}
\expandafter\def\csname regval:M-e0-B-od_unrelated.n\endcsname{95}
\expandafter\def\csname regval:M-e0-B-od_unrelated.d\endcsname{390}
\expandafter\def\csname regval:M-e0-B-od_unrelated.ci\endcsname{[0.204, 0.289]}
\expandafter\def\csname regval:M-e0-B-iaimr-chest_vqa\endcsname{0.256}
\expandafter\def\csname regval:M-e0-B-iaimr-chest_vqa.n\endcsname{30}
\expandafter\def\csname regval:M-e0-B-iaimr-chest_vqa.d\endcsname{117}
\expandafter\def\csname regval:M-e0-B-iaimr-docvqa\endcsname{0.137}
\expandafter\def\csname regval:M-e0-B-iaimr-docvqa.n\endcsname{16}
\expandafter\def\csname regval:M-e0-B-iaimr-docvqa.d\endcsname{117}
\expandafter\def\csname regval:M-e0-B-iaimr-infovqa\endcsname{0.282}
\expandafter\def\csname regval:M-e0-B-iaimr-infovqa.n\endcsname{33}
\expandafter\def\csname regval:M-e0-B-iaimr-infovqa.d\endcsname{117}
\expandafter\def\csname regval:M-e0-B-iaimr-medxpertqa_mm\endcsname{0.641}
\expandafter\def\csname regval:M-e0-B-iaimr-medxpertqa_mm.n\endcsname{75}
\expandafter\def\csname regval:M-e0-B-iaimr-medxpertqa_mm.d\endcsname{117}
\expandafter\def\csname regval:M-e0-B-iaimr-microvqa\endcsname{0.829}
\expandafter\def\csname regval:M-e0-B-iaimr-microvqa.n\endcsname{97}
\expandafter\def\csname regval:M-e0-B-iaimr-microvqa.d\endcsname{117}
\expandafter\def\csname regval:M-e0-B-iaimr-mmmu_pro\endcsname{0.538}
\expandafter\def\csname regval:M-e0-B-iaimr-mmmu_pro.n\endcsname{63}
\expandafter\def\csname regval:M-e0-B-iaimr-mmmu_pro.d\endcsname{117}
\expandafter\def\csname regval:M-e0-B-iaimr-pathvqa\endcsname{0.222}
\expandafter\def\csname regval:M-e0-B-iaimr-pathvqa.n\endcsname{26}
\expandafter\def\csname regval:M-e0-B-iaimr-pathvqa.d\endcsname{117}
\expandafter\def\csname regval:M-e0-B-iaimr-rexvqa\endcsname{0.632}
\expandafter\def\csname regval:M-e0-B-iaimr-rexvqa.n\endcsname{74}
\expandafter\def\csname regval:M-e0-B-iaimr-rexvqa.d\endcsname{117}
\expandafter\def\csname regval:M-e0-B-iaimr-textvqa\endcsname{0.231}
\expandafter\def\csname regval:M-e0-B-iaimr-textvqa.n\endcsname{27}
\expandafter\def\csname regval:M-e0-B-iaimr-textvqa.d\endcsname{117}
\expandafter\def\csname regval:M-e0-B-iaimr-video_mmmu\endcsname{0.538}
\expandafter\def\csname regval:M-e0-B-iaimr-video_mmmu.n\endcsname{63}
\expandafter\def\csname regval:M-e0-B-iaimr-video_mmmu.d\endcsname{117}
\expandafter\def\csname regval:M-e0-B-acc\endcsname{0.577}
\expandafter\def\csname regval:M-e0-B-acc.n\endcsname{225}
\expandafter\def\csname regval:M-e0-B-acc.d\endcsname{390}
\expandafter\def\csname regval:M-e0-B-acc.ci\endcsname{[0.527, 0.625]}
\expandafter\def\csname regval:M-e0-B-answered\endcsname{0.982}
\expandafter\def\csname regval:M-e0-B-answered.n\endcsname{383}
\expandafter\def\csname regval:M-e0-B-answered.d\endcsname{390}
\expandafter\def\csname regval:M-e0-B-fa\endcsname{0.018}
\expandafter\def\csname regval:M-e0-B-fa.n\endcsname{7}
\expandafter\def\csname regval:M-e0-B-fa.d\endcsname{390}
\expandafter\def\csname regval:M-e0-B-acc-cond\endcsname{0.588}
\expandafter\def\csname regval:M-e0-B-acc-cond.n\endcsname{225}
\expandafter\def\csname regval:M-e0-B-acc-cond.d\endcsname{383}
\expandafter\def\csname regval:M-e0-B-unparseable\endcsname{0.000}
\expandafter\def\csname regval:M-e0-B-unparseable.n\endcsname{0}
\expandafter\def\csname regval:M-e0-B-unparseable.d\endcsname{1{,}560}
\expandafter\def\csname regval:M-e0-B-probe-real-P1\endcsname{0.982}
\expandafter\def\csname regval:M-e0-B-probe-real-P1.d\endcsname{390}
\expandafter\def\csname regval:M-e0-B-probe-real-P2\endcsname{0.082}
\expandafter\def\csname regval:M-e0-B-probe-real-P2.d\endcsname{390}
\expandafter\def\csname regval:M-e0-B-probe-real-P3\endcsname{0.949}
\expandafter\def\csname regval:M-e0-B-probe-real-P3.d\endcsname{390}
\expandafter\def\csname regval:M-e0-B-probe-real-P4\endcsname{0.867}
\expandafter\def\csname regval:M-e0-B-probe-real-P4.d\endcsname{390}
\expandafter\def\csname regval:M-e0-B-probe-blank-P1\endcsname{0.756}
\expandafter\def\csname regval:M-e0-B-probe-blank-P1.d\endcsname{390}
\expandafter\def\csname regval:M-e0-B-probe-blank-P2\endcsname{0.349}
\expandafter\def\csname regval:M-e0-B-probe-blank-P2.d\endcsname{390}
\expandafter\def\csname regval:M-e0-B-probe-blank-P4\endcsname{0.564}
\expandafter\def\csname regval:M-e0-B-probe-blank-P4.d\endcsname{390}
\expandafter\def\csname regval:M-e0-B-probe-id_unrelated-P1\endcsname{0.695}
\expandafter\def\csname regval:M-e0-B-probe-id_unrelated-P1.d\endcsname{390}
\expandafter\def\csname regval:M-e0-B-probe-id_unrelated-P2\endcsname{0.487}
\expandafter\def\csname regval:M-e0-B-probe-id_unrelated-P2.d\endcsname{390}
\expandafter\def\csname regval:M-e0-B-probe-id_unrelated-P3\endcsname{0.749}
\expandafter\def\csname regval:M-e0-B-probe-id_unrelated-P3.d\endcsname{390}
\expandafter\def\csname regval:M-e0-B-probe-id_unrelated-P4\endcsname{0.272}
\expandafter\def\csname regval:M-e0-B-probe-id_unrelated-P4.d\endcsname{390}
\expandafter\def\csname regval:M-e0-B-probe-od_unrelated-P1\endcsname{0.595}
\expandafter\def\csname regval:M-e0-B-probe-od_unrelated-P1.d\endcsname{390}
\expandafter\def\csname regval:M-e0-B-probe-od_unrelated-P2\endcsname{0.659}
\expandafter\def\csname regval:M-e0-B-probe-od_unrelated-P2.d\endcsname{390}
\expandafter\def\csname regval:M-e0-B-probe-od_unrelated-P3\endcsname{0.741}
\expandafter\def\csname regval:M-e0-B-probe-od_unrelated-P3.d\endcsname{390}
\expandafter\def\csname regval:M-e0-B-probe-od_unrelated-P4\endcsname{0.164}
\expandafter\def\csname regval:M-e0-B-probe-od_unrelated-P4.d\endcsname{390}
\expandafter\def\csname regval:M-e0-B-opdec-F\endcsname{0.018}
\expandafter\def\csname regval:M-e0-B-opdec-F.d\endcsname{390}
\expandafter\def\csname regval:M-e0-B-opdec-M\endcsname{0.682}
\expandafter\def\csname regval:M-e0-B-opdec-M.d\endcsname{1{,}170}
\expandafter\def\csname regval:M-e0-B-opdec-sum\endcsname{0.516}
\expandafter\def\csname regval:M-e0-B-opdec-sum.d\endcsname{1{,}560}
\expandafter\def\csname regval:M-e0-B2-iaimr\endcsname{0.493}
\expandafter\def\csname regval:M-e0-B2-iaimr.n\endcsname{577}
\expandafter\def\csname regval:M-e0-B2-iaimr.d\endcsname{1{,}170}
\expandafter\def\csname regval:M-e0-B2-iaimr.ci\endcsname{[0.465, 0.522]}
\expandafter\def\csname regval:M-e0-B2-iafmr\endcsname{0.370}
\expandafter\def\csname regval:M-e0-B2-iafmr.n\endcsname{577}
\expandafter\def\csname regval:M-e0-B2-iafmr.d\endcsname{1{,}560}
\expandafter\def\csname regval:M-e0-B2-blank\endcsname{0.580}
\expandafter\def\csname regval:M-e0-B2-blank.n\endcsname{226}
\expandafter\def\csname regval:M-e0-B2-blank.d\endcsname{390}
\expandafter\def\csname regval:M-e0-B2-blank.ci\endcsname{[0.530, 0.627]}
\expandafter\def\csname regval:M-e0-B2-id_unrelated\endcsname{0.526}
\expandafter\def\csname regval:M-e0-B2-id_unrelated.n\endcsname{205}
\expandafter\def\csname regval:M-e0-B2-id_unrelated.d\endcsname{390}
\expandafter\def\csname regval:M-e0-B2-id_unrelated.ci\endcsname{[0.476, 0.575]}
\expandafter\def\csname regval:M-e0-B2-od_unrelated\endcsname{0.374}
\expandafter\def\csname regval:M-e0-B2-od_unrelated.n\endcsname{146}
\expandafter\def\csname regval:M-e0-B2-od_unrelated.d\endcsname{390}
\expandafter\def\csname regval:M-e0-B2-od_unrelated.ci\endcsname{[0.328, 0.423]}
\expandafter\def\csname regval:M-e0-B2-iaimr-chest_vqa\endcsname{0.462}
\expandafter\def\csname regval:M-e0-B2-iaimr-chest_vqa.n\endcsname{54}
\expandafter\def\csname regval:M-e0-B2-iaimr-chest_vqa.d\endcsname{117}
\expandafter\def\csname regval:M-e0-B2-iaimr-docvqa\endcsname{0.179}
\expandafter\def\csname regval:M-e0-B2-iaimr-docvqa.n\endcsname{21}
\expandafter\def\csname regval:M-e0-B2-iaimr-docvqa.d\endcsname{117}
\expandafter\def\csname regval:M-e0-B2-iaimr-infovqa\endcsname{0.256}
\expandafter\def\csname regval:M-e0-B2-iaimr-infovqa.n\endcsname{30}
\expandafter\def\csname regval:M-e0-B2-iaimr-infovqa.d\endcsname{117}
\expandafter\def\csname regval:M-e0-B2-iaimr-medxpertqa_mm\endcsname{0.692}
\expandafter\def\csname regval:M-e0-B2-iaimr-medxpertqa_mm.n\endcsname{81}
\expandafter\def\csname regval:M-e0-B2-iaimr-medxpertqa_mm.d\endcsname{117}
\expandafter\def\csname regval:M-e0-B2-iaimr-microvqa\endcsname{0.821}
\expandafter\def\csname regval:M-e0-B2-iaimr-microvqa.n\endcsname{96}
\expandafter\def\csname regval:M-e0-B2-iaimr-microvqa.d\endcsname{117}
\expandafter\def\csname regval:M-e0-B2-iaimr-mmmu_pro\endcsname{0.521}
\expandafter\def\csname regval:M-e0-B2-iaimr-mmmu_pro.n\endcsname{61}
\expandafter\def\csname regval:M-e0-B2-iaimr-mmmu_pro.d\endcsname{117}
\expandafter\def\csname regval:M-e0-B2-iaimr-pathvqa\endcsname{0.436}
\expandafter\def\csname regval:M-e0-B2-iaimr-pathvqa.n\endcsname{51}
\expandafter\def\csname regval:M-e0-B2-iaimr-pathvqa.d\endcsname{117}
\expandafter\def\csname regval:M-e0-B2-iaimr-rexvqa\endcsname{0.650}
\expandafter\def\csname regval:M-e0-B2-iaimr-rexvqa.n\endcsname{76}
\expandafter\def\csname regval:M-e0-B2-iaimr-rexvqa.d\endcsname{117}
\expandafter\def\csname regval:M-e0-B2-iaimr-textvqa\endcsname{0.325}
\expandafter\def\csname regval:M-e0-B2-iaimr-textvqa.n\endcsname{38}
\expandafter\def\csname regval:M-e0-B2-iaimr-textvqa.d\endcsname{117}
\expandafter\def\csname regval:M-e0-B2-iaimr-video_mmmu\endcsname{0.590}
\expandafter\def\csname regval:M-e0-B2-iaimr-video_mmmu.n\endcsname{69}
\expandafter\def\csname regval:M-e0-B2-iaimr-video_mmmu.d\endcsname{117}
\expandafter\def\csname regval:M-e0-B2-acc\endcsname{0.528}
\expandafter\def\csname regval:M-e0-B2-acc.n\endcsname{206}
\expandafter\def\csname regval:M-e0-B2-acc.d\endcsname{390}
\expandafter\def\csname regval:M-e0-B2-acc.ci\endcsname{[0.479, 0.577]}
\expandafter\def\csname regval:M-e0-B2-answered\endcsname{0.941}
\expandafter\def\csname regval:M-e0-B2-answered.n\endcsname{367}
\expandafter\def\csname regval:M-e0-B2-answered.d\endcsname{390}
\expandafter\def\csname regval:M-e0-B2-fa\endcsname{0.059}
\expandafter\def\csname regval:M-e0-B2-fa.n\endcsname{23}
\expandafter\def\csname regval:M-e0-B2-fa.d\endcsname{390}
\expandafter\def\csname regval:M-e0-B2-acc-cond\endcsname{0.561}
\expandafter\def\csname regval:M-e0-B2-acc-cond.n\endcsname{206}
\expandafter\def\csname regval:M-e0-B2-acc-cond.d\endcsname{367}
\expandafter\def\csname regval:M-e0-B2-unparseable\endcsname{0.000}
\expandafter\def\csname regval:M-e0-B2-unparseable.n\endcsname{0}
\expandafter\def\csname regval:M-e0-B2-unparseable.d\endcsname{1{,}560}
\expandafter\def\csname regval:M-e1-U-imr\endcsname{0.097}
\expandafter\def\csname regval:M-e1-U-imr.n\endcsname{113}
\expandafter\def\csname regval:M-e1-U-imr.d\endcsname{1{,}170}
\expandafter\def\csname regval:M-e1-U-imr.ci\endcsname{[0.081, 0.115]}
\expandafter\def\csname regval:M-e1-U-tclia\endcsname{0.072}
\expandafter\def\csname regval:M-e1-U-tclia.n\endcsname{113}
\expandafter\def\csname regval:M-e1-U-tclia.d\endcsname{1{,}560}
\expandafter\def\csname regval:M-e1-U-blank\endcsname{0.003}
\expandafter\def\csname regval:M-e1-U-blank.n\endcsname{1}
\expandafter\def\csname regval:M-e1-U-blank.d\endcsname{390}
\expandafter\def\csname regval:M-e1-U-blank.ci\endcsname{[0.001, 0.014]}
\expandafter\def\csname regval:M-e1-U-id_unrelated\endcsname{0.233}
\expandafter\def\csname regval:M-e1-U-id_unrelated.n\endcsname{91}
\expandafter\def\csname regval:M-e1-U-id_unrelated.d\endcsname{390}
\expandafter\def\csname regval:M-e1-U-id_unrelated.ci\endcsname{[0.194, 0.278]}
\expandafter\def\csname regval:M-e1-U-od_unrelated\endcsname{0.054}
\expandafter\def\csname regval:M-e1-U-od_unrelated.n\endcsname{21}
\expandafter\def\csname regval:M-e1-U-od_unrelated.d\endcsname{390}
\expandafter\def\csname regval:M-e1-U-od_unrelated.ci\endcsname{[0.035, 0.081]}
\expandafter\def\csname regval:M-e1-U-imr-chest_vqa\endcsname{0.231}
\expandafter\def\csname regval:M-e1-U-imr-chest_vqa.n\endcsname{27}
\expandafter\def\csname regval:M-e1-U-imr-chest_vqa.d\endcsname{117}
\expandafter\def\csname regval:M-e1-U-imr-docvqa\endcsname{0.043}
\expandafter\def\csname regval:M-e1-U-imr-docvqa.n\endcsname{5}
\expandafter\def\csname regval:M-e1-U-imr-docvqa.d\endcsname{117}
\expandafter\def\csname regval:M-e1-U-imr-infovqa\endcsname{0.000}
\expandafter\def\csname regval:M-e1-U-imr-infovqa.n\endcsname{0}
\expandafter\def\csname regval:M-e1-U-imr-infovqa.d\endcsname{117}
\expandafter\def\csname regval:M-e1-U-imr-medxpertqa_mm\endcsname{0.094}
\expandafter\def\csname regval:M-e1-U-imr-medxpertqa_mm.n\endcsname{11}
\expandafter\def\csname regval:M-e1-U-imr-medxpertqa_mm.d\endcsname{117}
\expandafter\def\csname regval:M-e1-U-imr-microvqa\endcsname{0.103}
\expandafter\def\csname regval:M-e1-U-imr-microvqa.n\endcsname{12}
\expandafter\def\csname regval:M-e1-U-imr-microvqa.d\endcsname{117}
\expandafter\def\csname regval:M-e1-U-imr-mmmu_pro\endcsname{0.043}
\expandafter\def\csname regval:M-e1-U-imr-mmmu_pro.n\endcsname{5}
\expandafter\def\csname regval:M-e1-U-imr-mmmu_pro.d\endcsname{117}
\expandafter\def\csname regval:M-e1-U-imr-pathvqa\endcsname{0.128}
\expandafter\def\csname regval:M-e1-U-imr-pathvqa.n\endcsname{15}
\expandafter\def\csname regval:M-e1-U-imr-pathvqa.d\endcsname{117}
\expandafter\def\csname regval:M-e1-U-imr-rexvqa\endcsname{0.205}
\expandafter\def\csname regval:M-e1-U-imr-rexvqa.n\endcsname{24}
\expandafter\def\csname regval:M-e1-U-imr-rexvqa.d\endcsname{117}
\expandafter\def\csname regval:M-e1-U-imr-textvqa\endcsname{0.026}
\expandafter\def\csname regval:M-e1-U-imr-textvqa.n\endcsname{3}
\expandafter\def\csname regval:M-e1-U-imr-textvqa.d\endcsname{117}
\expandafter\def\csname regval:M-e1-U-imr-video_mmmu\endcsname{0.094}
\expandafter\def\csname regval:M-e1-U-imr-video_mmmu.n\endcsname{11}
\expandafter\def\csname regval:M-e1-U-imr-video_mmmu.d\endcsname{117}
\expandafter\def\csname regval:M-e1-B-iaimr\endcsname{0.088}
\expandafter\def\csname regval:M-e1-B-iaimr.n\endcsname{103}
\expandafter\def\csname regval:M-e1-B-iaimr.d\endcsname{1{,}170}
\expandafter\def\csname regval:M-e1-B-iaimr.ci\endcsname{[0.073, 0.106]}
\expandafter\def\csname regval:M-e1-B-iafmr\endcsname{0.066}
\expandafter\def\csname regval:M-e1-B-iafmr.n\endcsname{103}
\expandafter\def\csname regval:M-e1-B-iafmr.d\endcsname{1{,}560}
\expandafter\def\csname regval:M-e1-B-blank\endcsname{0.003}
\expandafter\def\csname regval:M-e1-B-blank.n\endcsname{1}
\expandafter\def\csname regval:M-e1-B-blank.d\endcsname{390}
\expandafter\def\csname regval:M-e1-B-blank.ci\endcsname{[0.001, 0.014]}
\expandafter\def\csname regval:M-e1-B-id_unrelated\endcsname{0.208}
\expandafter\def\csname regval:M-e1-B-id_unrelated.n\endcsname{81}
\expandafter\def\csname regval:M-e1-B-id_unrelated.d\endcsname{390}
\expandafter\def\csname regval:M-e1-B-id_unrelated.ci\endcsname{[0.170, 0.251]}
\expandafter\def\csname regval:M-e1-B-od_unrelated\endcsname{0.054}
\expandafter\def\csname regval:M-e1-B-od_unrelated.n\endcsname{21}
\expandafter\def\csname regval:M-e1-B-od_unrelated.d\endcsname{390}
\expandafter\def\csname regval:M-e1-B-od_unrelated.ci\endcsname{[0.035, 0.081]}
\expandafter\def\csname regval:M-e1-B-iaimr-chest_vqa\endcsname{0.171}
\expandafter\def\csname regval:M-e1-B-iaimr-chest_vqa.n\endcsname{20}
\expandafter\def\csname regval:M-e1-B-iaimr-chest_vqa.d\endcsname{117}
\expandafter\def\csname regval:M-e1-B-iaimr-docvqa\endcsname{0.034}
\expandafter\def\csname regval:M-e1-B-iaimr-docvqa.n\endcsname{4}
\expandafter\def\csname regval:M-e1-B-iaimr-docvqa.d\endcsname{117}
\expandafter\def\csname regval:M-e1-B-iaimr-infovqa\endcsname{0.000}
\expandafter\def\csname regval:M-e1-B-iaimr-infovqa.n\endcsname{0}
\expandafter\def\csname regval:M-e1-B-iaimr-infovqa.d\endcsname{117}
\expandafter\def\csname regval:M-e1-B-iaimr-medxpertqa_mm\endcsname{0.094}
\expandafter\def\csname regval:M-e1-B-iaimr-medxpertqa_mm.n\endcsname{11}
\expandafter\def\csname regval:M-e1-B-iaimr-medxpertqa_mm.d\endcsname{117}
\expandafter\def\csname regval:M-e1-B-iaimr-microvqa\endcsname{0.103}
\expandafter\def\csname regval:M-e1-B-iaimr-microvqa.n\endcsname{12}
\expandafter\def\csname regval:M-e1-B-iaimr-microvqa.d\endcsname{117}
\expandafter\def\csname regval:M-e1-B-iaimr-mmmu_pro\endcsname{0.043}
\expandafter\def\csname regval:M-e1-B-iaimr-mmmu_pro.n\endcsname{5}
\expandafter\def\csname regval:M-e1-B-iaimr-mmmu_pro.d\endcsname{117}
\expandafter\def\csname regval:M-e1-B-iaimr-pathvqa\endcsname{0.128}
\expandafter\def\csname regval:M-e1-B-iaimr-pathvqa.n\endcsname{15}
\expandafter\def\csname regval:M-e1-B-iaimr-pathvqa.d\endcsname{117}
\expandafter\def\csname regval:M-e1-B-iaimr-rexvqa\endcsname{0.205}
\expandafter\def\csname regval:M-e1-B-iaimr-rexvqa.n\endcsname{24}
\expandafter\def\csname regval:M-e1-B-iaimr-rexvqa.d\endcsname{117}
\expandafter\def\csname regval:M-e1-B-iaimr-textvqa\endcsname{0.009}
\expandafter\def\csname regval:M-e1-B-iaimr-textvqa.n\endcsname{1}
\expandafter\def\csname regval:M-e1-B-iaimr-textvqa.d\endcsname{117}
\expandafter\def\csname regval:M-e1-B-iaimr-video_mmmu\endcsname{0.094}
\expandafter\def\csname regval:M-e1-B-iaimr-video_mmmu.n\endcsname{11}
\expandafter\def\csname regval:M-e1-B-iaimr-video_mmmu.d\endcsname{117}
\expandafter\def\csname regval:M-e1-B-acc\endcsname{0.559}
\expandafter\def\csname regval:M-e1-B-acc.n\endcsname{218}
\expandafter\def\csname regval:M-e1-B-acc.d\endcsname{390}
\expandafter\def\csname regval:M-e1-B-acc.ci\endcsname{[0.509, 0.607]}
\expandafter\def\csname regval:M-e1-B-answered\endcsname{0.874}
\expandafter\def\csname regval:M-e1-B-answered.n\endcsname{341}
\expandafter\def\csname regval:M-e1-B-answered.d\endcsname{390}
\expandafter\def\csname regval:M-e1-B-fa\endcsname{0.126}
\expandafter\def\csname regval:M-e1-B-fa.n\endcsname{49}
\expandafter\def\csname regval:M-e1-B-fa.d\endcsname{390}
\expandafter\def\csname regval:M-e1-B-acc-cond\endcsname{0.639}
\expandafter\def\csname regval:M-e1-B-acc-cond.n\endcsname{218}
\expandafter\def\csname regval:M-e1-B-acc-cond.d\endcsname{341}
\expandafter\def\csname regval:M-e1-B-unparseable\endcsname{0.000}
\expandafter\def\csname regval:M-e1-B-unparseable.n\endcsname{0}
\expandafter\def\csname regval:M-e1-B-unparseable.d\endcsname{1{,}560}
\expandafter\def\csname regval:M-e1-B-probe-real-P1\endcsname{0.874}
\expandafter\def\csname regval:M-e1-B-probe-real-P1.d\endcsname{390}
\expandafter\def\csname regval:M-e1-B-probe-real-P2\endcsname{0.126}
\expandafter\def\csname regval:M-e1-B-probe-real-P2.d\endcsname{390}
\expandafter\def\csname regval:M-e1-B-probe-real-P3\endcsname{0.726}
\expandafter\def\csname regval:M-e1-B-probe-real-P3.d\endcsname{390}
\expandafter\def\csname regval:M-e1-B-probe-real-P4\endcsname{0.451}
\expandafter\def\csname regval:M-e1-B-probe-real-P4.d\endcsname{390}
\expandafter\def\csname regval:M-e1-B-probe-blank-P1\endcsname{0.003}
\expandafter\def\csname regval:M-e1-B-probe-blank-P1.d\endcsname{390}
\expandafter\def\csname regval:M-e1-B-probe-blank-P2\endcsname{0.997}
\expandafter\def\csname regval:M-e1-B-probe-blank-P2.d\endcsname{390}
\expandafter\def\csname regval:M-e1-B-probe-blank-P4\endcsname{0.000}
\expandafter\def\csname regval:M-e1-B-probe-blank-P4.d\endcsname{390}
\expandafter\def\csname regval:M-e1-B-probe-id_unrelated-P1\endcsname{0.233}
\expandafter\def\csname regval:M-e1-B-probe-id_unrelated-P1.d\endcsname{390}
\expandafter\def\csname regval:M-e1-B-probe-id_unrelated-P2\endcsname{0.767}
\expandafter\def\csname regval:M-e1-B-probe-id_unrelated-P2.d\endcsname{390}
\expandafter\def\csname regval:M-e1-B-probe-id_unrelated-P3\endcsname{0.028}
\expandafter\def\csname regval:M-e1-B-probe-id_unrelated-P3.d\endcsname{390}
\expandafter\def\csname regval:M-e1-B-probe-id_unrelated-P4\endcsname{0.121}
\expandafter\def\csname regval:M-e1-B-probe-id_unrelated-P4.d\endcsname{390}
\expandafter\def\csname regval:M-e1-B-probe-od_unrelated-P1\endcsname{0.054}
\expandafter\def\csname regval:M-e1-B-probe-od_unrelated-P1.d\endcsname{390}
\expandafter\def\csname regval:M-e1-B-probe-od_unrelated-P2\endcsname{0.946}
\expandafter\def\csname regval:M-e1-B-probe-od_unrelated-P2.d\endcsname{390}
\expandafter\def\csname regval:M-e1-B-probe-od_unrelated-P3\endcsname{0.003}
\expandafter\def\csname regval:M-e1-B-probe-od_unrelated-P3.d\endcsname{390}
\expandafter\def\csname regval:M-e1-B-probe-od_unrelated-P4\endcsname{0.028}
\expandafter\def\csname regval:M-e1-B-probe-od_unrelated-P4.d\endcsname{390}
\expandafter\def\csname regval:M-e1-B-opdec-F\endcsname{0.126}
\expandafter\def\csname regval:M-e1-B-opdec-F.d\endcsname{390}
\expandafter\def\csname regval:M-e1-B-opdec-M\endcsname{0.097}
\expandafter\def\csname regval:M-e1-B-opdec-M.d\endcsname{1{,}170}
\expandafter\def\csname regval:M-e1-B-opdec-sum\endcsname{0.104}
\expandafter\def\csname regval:M-e1-B-opdec-sum.d\endcsname{1{,}560}
\expandafter\def\csname regval:M-e1-B2-iaimr\endcsname{0.089}
\expandafter\def\csname regval:M-e1-B2-iaimr.n\endcsname{104}
\expandafter\def\csname regval:M-e1-B2-iaimr.d\endcsname{1{,}170}
\expandafter\def\csname regval:M-e1-B2-iaimr.ci\endcsname{[0.074, 0.107]}
\expandafter\def\csname regval:M-e1-B2-iafmr\endcsname{0.067}
\expandafter\def\csname regval:M-e1-B2-iafmr.n\endcsname{104}
\expandafter\def\csname regval:M-e1-B2-iafmr.d\endcsname{1{,}560}
\expandafter\def\csname regval:M-e1-B2-blank\endcsname{0.003}
\expandafter\def\csname regval:M-e1-B2-blank.n\endcsname{1}
\expandafter\def\csname regval:M-e1-B2-blank.d\endcsname{390}
\expandafter\def\csname regval:M-e1-B2-blank.ci\endcsname{[0.001, 0.014]}
\expandafter\def\csname regval:M-e1-B2-id_unrelated\endcsname{0.213}
\expandafter\def\csname regval:M-e1-B2-id_unrelated.n\endcsname{83}
\expandafter\def\csname regval:M-e1-B2-id_unrelated.d\endcsname{390}
\expandafter\def\csname regval:M-e1-B2-id_unrelated.ci\endcsname{[0.175, 0.256]}
\expandafter\def\csname regval:M-e1-B2-od_unrelated\endcsname{0.051}
\expandafter\def\csname regval:M-e1-B2-od_unrelated.n\endcsname{20}
\expandafter\def\csname regval:M-e1-B2-od_unrelated.d\endcsname{390}
\expandafter\def\csname regval:M-e1-B2-od_unrelated.ci\endcsname{[0.033, 0.078]}
\expandafter\def\csname regval:M-e1-B2-iaimr-chest_vqa\endcsname{0.188}
\expandafter\def\csname regval:M-e1-B2-iaimr-chest_vqa.n\endcsname{22}
\expandafter\def\csname regval:M-e1-B2-iaimr-chest_vqa.d\endcsname{117}
\expandafter\def\csname regval:M-e1-B2-iaimr-docvqa\endcsname{0.043}
\expandafter\def\csname regval:M-e1-B2-iaimr-docvqa.n\endcsname{5}
\expandafter\def\csname regval:M-e1-B2-iaimr-docvqa.d\endcsname{117}
\expandafter\def\csname regval:M-e1-B2-iaimr-infovqa\endcsname{0.000}
\expandafter\def\csname regval:M-e1-B2-iaimr-infovqa.n\endcsname{0}
\expandafter\def\csname regval:M-e1-B2-iaimr-infovqa.d\endcsname{117}
\expandafter\def\csname regval:M-e1-B2-iaimr-medxpertqa_mm\endcsname{0.094}
\expandafter\def\csname regval:M-e1-B2-iaimr-medxpertqa_mm.n\endcsname{11}
\expandafter\def\csname regval:M-e1-B2-iaimr-medxpertqa_mm.d\endcsname{117}
\expandafter\def\csname regval:M-e1-B2-iaimr-microvqa\endcsname{0.103}
\expandafter\def\csname regval:M-e1-B2-iaimr-microvqa.n\endcsname{12}
\expandafter\def\csname regval:M-e1-B2-iaimr-microvqa.d\endcsname{117}
\expandafter\def\csname regval:M-e1-B2-iaimr-mmmu_pro\endcsname{0.043}
\expandafter\def\csname regval:M-e1-B2-iaimr-mmmu_pro.n\endcsname{5}
\expandafter\def\csname regval:M-e1-B2-iaimr-mmmu_pro.d\endcsname{117}
\expandafter\def\csname regval:M-e1-B2-iaimr-pathvqa\endcsname{0.111}
\expandafter\def\csname regval:M-e1-B2-iaimr-pathvqa.n\endcsname{13}
\expandafter\def\csname regval:M-e1-B2-iaimr-pathvqa.d\endcsname{117}
\expandafter\def\csname regval:M-e1-B2-iaimr-rexvqa\endcsname{0.188}
\expandafter\def\csname regval:M-e1-B2-iaimr-rexvqa.n\endcsname{22}
\expandafter\def\csname regval:M-e1-B2-iaimr-rexvqa.d\endcsname{117}
\expandafter\def\csname regval:M-e1-B2-iaimr-textvqa\endcsname{0.026}
\expandafter\def\csname regval:M-e1-B2-iaimr-textvqa.n\endcsname{3}
\expandafter\def\csname regval:M-e1-B2-iaimr-textvqa.d\endcsname{117}
\expandafter\def\csname regval:M-e1-B2-iaimr-video_mmmu\endcsname{0.094}
\expandafter\def\csname regval:M-e1-B2-iaimr-video_mmmu.n\endcsname{11}
\expandafter\def\csname regval:M-e1-B2-iaimr-video_mmmu.d\endcsname{117}
\expandafter\def\csname regval:M-e1-B2-acc\endcsname{0.564}
\expandafter\def\csname regval:M-e1-B2-acc.n\endcsname{220}
\expandafter\def\csname regval:M-e1-B2-acc.d\endcsname{390}
\expandafter\def\csname regval:M-e1-B2-acc.ci\endcsname{[0.514, 0.613]}
\expandafter\def\csname regval:M-e1-B2-answered\endcsname{0.867}
\expandafter\def\csname regval:M-e1-B2-answered.n\endcsname{338}
\expandafter\def\csname regval:M-e1-B2-answered.d\endcsname{390}
\expandafter\def\csname regval:M-e1-B2-fa\endcsname{0.133}
\expandafter\def\csname regval:M-e1-B2-fa.n\endcsname{52}
\expandafter\def\csname regval:M-e1-B2-fa.d\endcsname{390}
\expandafter\def\csname regval:M-e1-B2-acc-cond\endcsname{0.651}
\expandafter\def\csname regval:M-e1-B2-acc-cond.n\endcsname{220}
\expandafter\def\csname regval:M-e1-B2-acc-cond.d\endcsname{338}
\expandafter\def\csname regval:M-e1-B2-unparseable\endcsname{0.000}
\expandafter\def\csname regval:M-e1-B2-unparseable.n\endcsname{0}
\expandafter\def\csname regval:M-e1-B2-unparseable.d\endcsname{1{,}560}
\expandafter\def\csname regval:M-e2-U-imr\endcsname{0.859}
\expandafter\def\csname regval:M-e2-U-imr.n\endcsname{1{,}005}
\expandafter\def\csname regval:M-e2-U-imr.d\endcsname{1{,}170}
\expandafter\def\csname regval:M-e2-U-imr.ci\endcsname{[0.838, 0.878]}
\expandafter\def\csname regval:M-e2-U-tclia\endcsname{0.644}
\expandafter\def\csname regval:M-e2-U-tclia.n\endcsname{1{,}005}
\expandafter\def\csname regval:M-e2-U-tclia.d\endcsname{1{,}560}
\expandafter\def\csname regval:M-e2-U-blank\endcsname{0.646}
\expandafter\def\csname regval:M-e2-U-blank.n\endcsname{252}
\expandafter\def\csname regval:M-e2-U-blank.d\endcsname{390}
\expandafter\def\csname regval:M-e2-U-blank.ci\endcsname{[0.598, 0.692]}
\expandafter\def\csname regval:M-e2-U-id_unrelated\endcsname{0.962}
\expandafter\def\csname regval:M-e2-U-id_unrelated.n\endcsname{375}
\expandafter\def\csname regval:M-e2-U-id_unrelated.d\endcsname{390}
\expandafter\def\csname regval:M-e2-U-id_unrelated.ci\endcsname{[0.938, 0.977]}
\expandafter\def\csname regval:M-e2-U-od_unrelated\endcsname{0.969}
\expandafter\def\csname regval:M-e2-U-od_unrelated.n\endcsname{378}
\expandafter\def\csname regval:M-e2-U-od_unrelated.d\endcsname{390}
\expandafter\def\csname regval:M-e2-U-od_unrelated.ci\endcsname{[0.947, 0.982]}
\expandafter\def\csname regval:M-e2-U-imr-chest_vqa\endcsname{0.786}
\expandafter\def\csname regval:M-e2-U-imr-chest_vqa.n\endcsname{92}
\expandafter\def\csname regval:M-e2-U-imr-chest_vqa.d\endcsname{117}
\expandafter\def\csname regval:M-e2-U-imr-docvqa\endcsname{0.718}
\expandafter\def\csname regval:M-e2-U-imr-docvqa.n\endcsname{84}
\expandafter\def\csname regval:M-e2-U-imr-docvqa.d\endcsname{117}
\expandafter\def\csname regval:M-e2-U-imr-infovqa\endcsname{0.846}
\expandafter\def\csname regval:M-e2-U-imr-infovqa.n\endcsname{99}
\expandafter\def\csname regval:M-e2-U-imr-infovqa.d\endcsname{117}
\expandafter\def\csname regval:M-e2-U-imr-medxpertqa_mm\endcsname{0.957}
\expandafter\def\csname regval:M-e2-U-imr-medxpertqa_mm.n\endcsname{112}
\expandafter\def\csname regval:M-e2-U-imr-medxpertqa_mm.d\endcsname{117}
\expandafter\def\csname regval:M-e2-U-imr-microvqa\endcsname{0.974}
\expandafter\def\csname regval:M-e2-U-imr-microvqa.n\endcsname{114}
\expandafter\def\csname regval:M-e2-U-imr-microvqa.d\endcsname{117}
\expandafter\def\csname regval:M-e2-U-imr-mmmu_pro\endcsname{0.932}
\expandafter\def\csname regval:M-e2-U-imr-mmmu_pro.n\endcsname{109}
\expandafter\def\csname regval:M-e2-U-imr-mmmu_pro.d\endcsname{117}
\expandafter\def\csname regval:M-e2-U-imr-pathvqa\endcsname{0.752}
\expandafter\def\csname regval:M-e2-U-imr-pathvqa.n\endcsname{88}
\expandafter\def\csname regval:M-e2-U-imr-pathvqa.d\endcsname{117}
\expandafter\def\csname regval:M-e2-U-imr-rexvqa\endcsname{0.889}
\expandafter\def\csname regval:M-e2-U-imr-rexvqa.n\endcsname{104}
\expandafter\def\csname regval:M-e2-U-imr-rexvqa.d\endcsname{117}
\expandafter\def\csname regval:M-e2-U-imr-textvqa\endcsname{0.812}
\expandafter\def\csname regval:M-e2-U-imr-textvqa.n\endcsname{95}
\expandafter\def\csname regval:M-e2-U-imr-textvqa.d\endcsname{117}
\expandafter\def\csname regval:M-e2-U-imr-video_mmmu\endcsname{0.923}
\expandafter\def\csname regval:M-e2-U-imr-video_mmmu.n\endcsname{108}
\expandafter\def\csname regval:M-e2-U-imr-video_mmmu.d\endcsname{117}
\expandafter\def\csname regval:M-e2-B-iaimr\endcsname{0.799}
\expandafter\def\csname regval:M-e2-B-iaimr.n\endcsname{935}
\expandafter\def\csname regval:M-e2-B-iaimr.d\endcsname{1{,}170}
\expandafter\def\csname regval:M-e2-B-iaimr.ci\endcsname{[0.775, 0.821]}
\expandafter\def\csname regval:M-e2-B-iafmr\endcsname{0.599}
\expandafter\def\csname regval:M-e2-B-iafmr.n\endcsname{935}
\expandafter\def\csname regval:M-e2-B-iafmr.d\endcsname{1{,}560}
\expandafter\def\csname regval:M-e2-B-blank\endcsname{0.631}
\expandafter\def\csname regval:M-e2-B-blank.n\endcsname{246}
\expandafter\def\csname regval:M-e2-B-blank.d\endcsname{390}
\expandafter\def\csname regval:M-e2-B-blank.ci\endcsname{[0.582, 0.677]}
\expandafter\def\csname regval:M-e2-B-id_unrelated\endcsname{0.869}
\expandafter\def\csname regval:M-e2-B-id_unrelated.n\endcsname{339}
\expandafter\def\csname regval:M-e2-B-id_unrelated.d\endcsname{390}
\expandafter\def\csname regval:M-e2-B-id_unrelated.ci\endcsname{[0.832, 0.899]}
\expandafter\def\csname regval:M-e2-B-od_unrelated\endcsname{0.897}
\expandafter\def\csname regval:M-e2-B-od_unrelated.n\endcsname{350}
\expandafter\def\csname regval:M-e2-B-od_unrelated.d\endcsname{390}
\expandafter\def\csname regval:M-e2-B-od_unrelated.ci\endcsname{[0.863, 0.924]}
\expandafter\def\csname regval:M-e2-B-iaimr-chest_vqa\endcsname{0.607}
\expandafter\def\csname regval:M-e2-B-iaimr-chest_vqa.n\endcsname{71}
\expandafter\def\csname regval:M-e2-B-iaimr-chest_vqa.d\endcsname{117}
\expandafter\def\csname regval:M-e2-B-iaimr-docvqa\endcsname{0.641}
\expandafter\def\csname regval:M-e2-B-iaimr-docvqa.n\endcsname{75}
\expandafter\def\csname regval:M-e2-B-iaimr-docvqa.d\endcsname{117}
\expandafter\def\csname regval:M-e2-B-iaimr-infovqa\endcsname{0.786}
\expandafter\def\csname regval:M-e2-B-iaimr-infovqa.n\endcsname{92}
\expandafter\def\csname regval:M-e2-B-iaimr-infovqa.d\endcsname{117}
\expandafter\def\csname regval:M-e2-B-iaimr-medxpertqa_mm\endcsname{0.914}
\expandafter\def\csname regval:M-e2-B-iaimr-medxpertqa_mm.n\endcsname{107}
\expandafter\def\csname regval:M-e2-B-iaimr-medxpertqa_mm.d\endcsname{117}
\expandafter\def\csname regval:M-e2-B-iaimr-microvqa\endcsname{0.974}
\expandafter\def\csname regval:M-e2-B-iaimr-microvqa.n\endcsname{114}
\expandafter\def\csname regval:M-e2-B-iaimr-microvqa.d\endcsname{117}
\expandafter\def\csname regval:M-e2-B-iaimr-mmmu_pro\endcsname{0.863}
\expandafter\def\csname regval:M-e2-B-iaimr-mmmu_pro.n\endcsname{101}
\expandafter\def\csname regval:M-e2-B-iaimr-mmmu_pro.d\endcsname{117}
\expandafter\def\csname regval:M-e2-B-iaimr-pathvqa\endcsname{0.684}
\expandafter\def\csname regval:M-e2-B-iaimr-pathvqa.n\endcsname{80}
\expandafter\def\csname regval:M-e2-B-iaimr-pathvqa.d\endcsname{117}
\expandafter\def\csname regval:M-e2-B-iaimr-rexvqa\endcsname{0.923}
\expandafter\def\csname regval:M-e2-B-iaimr-rexvqa.n\endcsname{108}
\expandafter\def\csname regval:M-e2-B-iaimr-rexvqa.d\endcsname{117}
\expandafter\def\csname regval:M-e2-B-iaimr-textvqa\endcsname{0.735}
\expandafter\def\csname regval:M-e2-B-iaimr-textvqa.n\endcsname{86}
\expandafter\def\csname regval:M-e2-B-iaimr-textvqa.d\endcsname{117}
\expandafter\def\csname regval:M-e2-B-iaimr-video_mmmu\endcsname{0.863}
\expandafter\def\csname regval:M-e2-B-iaimr-video_mmmu.n\endcsname{101}
\expandafter\def\csname regval:M-e2-B-iaimr-video_mmmu.d\endcsname{117}
\expandafter\def\csname regval:M-e2-B-acc\endcsname{0.377}
\expandafter\def\csname regval:M-e2-B-acc.n\endcsname{147}
\expandafter\def\csname regval:M-e2-B-acc.d\endcsname{390}
\expandafter\def\csname regval:M-e2-B-acc.ci\endcsname{[0.330, 0.426]}
\expandafter\def\csname regval:M-e2-B-answered\endcsname{0.774}
\expandafter\def\csname regval:M-e2-B-answered.n\endcsname{302}
\expandafter\def\csname regval:M-e2-B-answered.d\endcsname{390}
\expandafter\def\csname regval:M-e2-B-fa\endcsname{0.226}
\expandafter\def\csname regval:M-e2-B-fa.n\endcsname{88}
\expandafter\def\csname regval:M-e2-B-fa.d\endcsname{390}
\expandafter\def\csname regval:M-e2-B-acc-cond\endcsname{0.487}
\expandafter\def\csname regval:M-e2-B-acc-cond.n\endcsname{147}
\expandafter\def\csname regval:M-e2-B-acc-cond.d\endcsname{302}
\expandafter\def\csname regval:M-e2-B-unparseable\endcsname{0.000}
\expandafter\def\csname regval:M-e2-B-unparseable.n\endcsname{0}
\expandafter\def\csname regval:M-e2-B-unparseable.d\endcsname{1{,}560}
\expandafter\def\csname regval:M-e2-B-probe-real-P1\endcsname{0.774}
\expandafter\def\csname regval:M-e2-B-probe-real-P1.d\endcsname{390}
\expandafter\def\csname regval:M-e2-B-probe-real-P2\endcsname{0.100}
\expandafter\def\csname regval:M-e2-B-probe-real-P2.d\endcsname{390}
\expandafter\def\csname regval:M-e2-B-probe-real-P3\endcsname{0.718}
\expandafter\def\csname regval:M-e2-B-probe-real-P3.d\endcsname{390}
\expandafter\def\csname regval:M-e2-B-probe-real-P4\endcsname{0.551}
\expandafter\def\csname regval:M-e2-B-probe-real-P4.d\endcsname{390}
\expandafter\def\csname regval:M-e2-B-probe-blank-P1\endcsname{0.656}
\expandafter\def\csname regval:M-e2-B-probe-blank-P1.d\endcsname{390}
\expandafter\def\csname regval:M-e2-B-probe-blank-P2\endcsname{0.426}
\expandafter\def\csname regval:M-e2-B-probe-blank-P2.d\endcsname{390}
\expandafter\def\csname regval:M-e2-B-probe-blank-P4\endcsname{0.482}
\expandafter\def\csname regval:M-e2-B-probe-blank-P4.d\endcsname{390}
\expandafter\def\csname regval:M-e2-B-probe-id_unrelated-P1\endcsname{0.426}
\expandafter\def\csname regval:M-e2-B-probe-id_unrelated-P1.d\endcsname{390}
\expandafter\def\csname regval:M-e2-B-probe-id_unrelated-P2\endcsname{0.087}
\expandafter\def\csname regval:M-e2-B-probe-id_unrelated-P2.d\endcsname{390}
\expandafter\def\csname regval:M-e2-B-probe-id_unrelated-P3\endcsname{0.097}
\expandafter\def\csname regval:M-e2-B-probe-id_unrelated-P3.d\endcsname{390}
\expandafter\def\csname regval:M-e2-B-probe-id_unrelated-P4\endcsname{0.210}
\expandafter\def\csname regval:M-e2-B-probe-id_unrelated-P4.d\endcsname{390}
\expandafter\def\csname regval:M-e2-B-probe-od_unrelated-P1\endcsname{0.246}
\expandafter\def\csname regval:M-e2-B-probe-od_unrelated-P1.d\endcsname{390}
\expandafter\def\csname regval:M-e2-B-probe-od_unrelated-P2\endcsname{0.085}
\expandafter\def\csname regval:M-e2-B-probe-od_unrelated-P2.d\endcsname{390}
\expandafter\def\csname regval:M-e2-B-probe-od_unrelated-P3\endcsname{0.046}
\expandafter\def\csname regval:M-e2-B-probe-od_unrelated-P3.d\endcsname{390}
\expandafter\def\csname regval:M-e2-B-probe-od_unrelated-P4\endcsname{0.095}
\expandafter\def\csname regval:M-e2-B-probe-od_unrelated-P4.d\endcsname{390}
\expandafter\def\csname regval:M-e2-B-opdec-F\endcsname{0.226}
\expandafter\def\csname regval:M-e2-B-opdec-F.d\endcsname{390}
\expandafter\def\csname regval:M-e2-B-opdec-M\endcsname{0.443}
\expandafter\def\csname regval:M-e2-B-opdec-M.d\endcsname{1{,}170}
\expandafter\def\csname regval:M-e2-B-opdec-sum\endcsname{0.389}
\expandafter\def\csname regval:M-e2-B-opdec-sum.d\endcsname{1{,}560}
\expandafter\def\csname regval:M-e2-B2-iaimr\endcsname{0.418}
\expandafter\def\csname regval:M-e2-B2-iaimr.n\endcsname{489}
\expandafter\def\csname regval:M-e2-B2-iaimr.d\endcsname{1{,}170}
\expandafter\def\csname regval:M-e2-B2-iaimr.ci\endcsname{[0.390, 0.446]}
\expandafter\def\csname regval:M-e2-B2-iafmr\endcsname{0.314}
\expandafter\def\csname regval:M-e2-B2-iafmr.n\endcsname{489}
\expandafter\def\csname regval:M-e2-B2-iafmr.d\endcsname{1{,}560}
\expandafter\def\csname regval:M-e2-B2-blank\endcsname{0.510}
\expandafter\def\csname regval:M-e2-B2-blank.n\endcsname{199}
\expandafter\def\csname regval:M-e2-B2-blank.d\endcsname{390}
\expandafter\def\csname regval:M-e2-B2-blank.ci\endcsname{[0.461, 0.559]}
\expandafter\def\csname regval:M-e2-B2-id_unrelated\endcsname{0.446}
\expandafter\def\csname regval:M-e2-B2-id_unrelated.n\endcsname{174}
\expandafter\def\csname regval:M-e2-B2-id_unrelated.d\endcsname{390}
\expandafter\def\csname regval:M-e2-B2-id_unrelated.ci\endcsname{[0.398, 0.496]}
\expandafter\def\csname regval:M-e2-B2-od_unrelated\endcsname{0.297}
\expandafter\def\csname regval:M-e2-B2-od_unrelated.n\endcsname{116}
\expandafter\def\csname regval:M-e2-B2-od_unrelated.d\endcsname{390}
\expandafter\def\csname regval:M-e2-B2-od_unrelated.ci\endcsname{[0.254, 0.345]}
\expandafter\def\csname regval:M-e2-B2-iaimr-chest_vqa\endcsname{0.299}
\expandafter\def\csname regval:M-e2-B2-iaimr-chest_vqa.n\endcsname{35}
\expandafter\def\csname regval:M-e2-B2-iaimr-chest_vqa.d\endcsname{117}
\expandafter\def\csname regval:M-e2-B2-iaimr-docvqa\endcsname{0.197}
\expandafter\def\csname regval:M-e2-B2-iaimr-docvqa.n\endcsname{23}
\expandafter\def\csname regval:M-e2-B2-iaimr-docvqa.d\endcsname{117}
\expandafter\def\csname regval:M-e2-B2-iaimr-infovqa\endcsname{0.239}
\expandafter\def\csname regval:M-e2-B2-iaimr-infovqa.n\endcsname{28}
\expandafter\def\csname regval:M-e2-B2-iaimr-infovqa.d\endcsname{117}
\expandafter\def\csname regval:M-e2-B2-iaimr-medxpertqa_mm\endcsname{0.393}
\expandafter\def\csname regval:M-e2-B2-iaimr-medxpertqa_mm.n\endcsname{46}
\expandafter\def\csname regval:M-e2-B2-iaimr-medxpertqa_mm.d\endcsname{117}
\expandafter\def\csname regval:M-e2-B2-iaimr-microvqa\endcsname{0.692}
\expandafter\def\csname regval:M-e2-B2-iaimr-microvqa.n\endcsname{81}
\expandafter\def\csname regval:M-e2-B2-iaimr-microvqa.d\endcsname{117}
\expandafter\def\csname regval:M-e2-B2-iaimr-mmmu_pro\endcsname{0.590}
\expandafter\def\csname regval:M-e2-B2-iaimr-mmmu_pro.n\endcsname{69}
\expandafter\def\csname regval:M-e2-B2-iaimr-mmmu_pro.d\endcsname{117}
\expandafter\def\csname regval:M-e2-B2-iaimr-pathvqa\endcsname{0.222}
\expandafter\def\csname regval:M-e2-B2-iaimr-pathvqa.n\endcsname{26}
\expandafter\def\csname regval:M-e2-B2-iaimr-pathvqa.d\endcsname{117}
\expandafter\def\csname regval:M-e2-B2-iaimr-rexvqa\endcsname{0.624}
\expandafter\def\csname regval:M-e2-B2-iaimr-rexvqa.n\endcsname{73}
\expandafter\def\csname regval:M-e2-B2-iaimr-rexvqa.d\endcsname{117}
\expandafter\def\csname regval:M-e2-B2-iaimr-textvqa\endcsname{0.367}
\expandafter\def\csname regval:M-e2-B2-iaimr-textvqa.n\endcsname{43}
\expandafter\def\csname regval:M-e2-B2-iaimr-textvqa.d\endcsname{117}
\expandafter\def\csname regval:M-e2-B2-iaimr-video_mmmu\endcsname{0.556}
\expandafter\def\csname regval:M-e2-B2-iaimr-video_mmmu.n\endcsname{65}
\expandafter\def\csname regval:M-e2-B2-iaimr-video_mmmu.d\endcsname{117}
\expandafter\def\csname regval:M-e2-B2-acc\endcsname{0.331}
\expandafter\def\csname regval:M-e2-B2-acc.n\endcsname{129}
\expandafter\def\csname regval:M-e2-B2-acc.d\endcsname{390}
\expandafter\def\csname regval:M-e2-B2-acc.ci\endcsname{[0.286, 0.379]}
\expandafter\def\csname regval:M-e2-B2-answered\endcsname{0.746}
\expandafter\def\csname regval:M-e2-B2-answered.n\endcsname{291}
\expandafter\def\csname regval:M-e2-B2-answered.d\endcsname{390}
\expandafter\def\csname regval:M-e2-B2-fa\endcsname{0.254}
\expandafter\def\csname regval:M-e2-B2-fa.n\endcsname{99}
\expandafter\def\csname regval:M-e2-B2-fa.d\endcsname{390}
\expandafter\def\csname regval:M-e2-B2-acc-cond\endcsname{0.443}
\expandafter\def\csname regval:M-e2-B2-acc-cond.n\endcsname{129}
\expandafter\def\csname regval:M-e2-B2-acc-cond.d\endcsname{291}
\expandafter\def\csname regval:M-e2-B2-unparseable\endcsname{0.000}
\expandafter\def\csname regval:M-e2-B2-unparseable.n\endcsname{0}
\expandafter\def\csname regval:M-e2-B2-unparseable.d\endcsname{1{,}560}
\expandafter\def\csname regval:M-e3-U-imr\endcsname{0.094}
\expandafter\def\csname regval:M-e3-U-imr.n\endcsname{110}
\expandafter\def\csname regval:M-e3-U-imr.d\endcsname{1{,}170}
\expandafter\def\csname regval:M-e3-U-imr.ci\endcsname{[0.079, 0.112]}
\expandafter\def\csname regval:M-e3-U-tclia\endcsname{0.070}
\expandafter\def\csname regval:M-e3-U-tclia.n\endcsname{110}
\expandafter\def\csname regval:M-e3-U-tclia.d\endcsname{1{,}560}
\expandafter\def\csname regval:M-e3-U-blank\endcsname{0.000}
\expandafter\def\csname regval:M-e3-U-blank.n\endcsname{0}
\expandafter\def\csname regval:M-e3-U-blank.d\endcsname{390}
\expandafter\def\csname regval:M-e3-U-blank.ci\endcsname{[0.000, 0.010]}
\expandafter\def\csname regval:M-e3-U-id_unrelated\endcsname{0.236}
\expandafter\def\csname regval:M-e3-U-id_unrelated.n\endcsname{92}
\expandafter\def\csname regval:M-e3-U-id_unrelated.d\endcsname{390}
\expandafter\def\csname regval:M-e3-U-id_unrelated.ci\endcsname{[0.197, 0.281]}
\expandafter\def\csname regval:M-e3-U-od_unrelated\endcsname{0.046}
\expandafter\def\csname regval:M-e3-U-od_unrelated.n\endcsname{18}
\expandafter\def\csname regval:M-e3-U-od_unrelated.d\endcsname{390}
\expandafter\def\csname regval:M-e3-U-od_unrelated.ci\endcsname{[0.029, 0.072]}
\expandafter\def\csname regval:M-e3-U-imr-chest_vqa\endcsname{0.231}
\expandafter\def\csname regval:M-e3-U-imr-chest_vqa.n\endcsname{27}
\expandafter\def\csname regval:M-e3-U-imr-chest_vqa.d\endcsname{117}
\expandafter\def\csname regval:M-e3-U-imr-docvqa\endcsname{0.026}
\expandafter\def\csname regval:M-e3-U-imr-docvqa.n\endcsname{3}
\expandafter\def\csname regval:M-e3-U-imr-docvqa.d\endcsname{117}
\expandafter\def\csname regval:M-e3-U-imr-infovqa\endcsname{0.009}
\expandafter\def\csname regval:M-e3-U-imr-infovqa.n\endcsname{1}
\expandafter\def\csname regval:M-e3-U-imr-infovqa.d\endcsname{117}
\expandafter\def\csname regval:M-e3-U-imr-medxpertqa_mm\endcsname{0.077}
\expandafter\def\csname regval:M-e3-U-imr-medxpertqa_mm.n\endcsname{9}
\expandafter\def\csname regval:M-e3-U-imr-medxpertqa_mm.d\endcsname{117}
\expandafter\def\csname regval:M-e3-U-imr-microvqa\endcsname{0.094}
\expandafter\def\csname regval:M-e3-U-imr-microvqa.n\endcsname{11}
\expandafter\def\csname regval:M-e3-U-imr-microvqa.d\endcsname{117}
\expandafter\def\csname regval:M-e3-U-imr-mmmu_pro\endcsname{0.051}
\expandafter\def\csname regval:M-e3-U-imr-mmmu_pro.n\endcsname{6}
\expandafter\def\csname regval:M-e3-U-imr-mmmu_pro.d\endcsname{117}
\expandafter\def\csname regval:M-e3-U-imr-pathvqa\endcsname{0.103}
\expandafter\def\csname regval:M-e3-U-imr-pathvqa.n\endcsname{12}
\expandafter\def\csname regval:M-e3-U-imr-pathvqa.d\endcsname{117}
\expandafter\def\csname regval:M-e3-U-imr-rexvqa\endcsname{0.214}
\expandafter\def\csname regval:M-e3-U-imr-rexvqa.n\endcsname{25}
\expandafter\def\csname regval:M-e3-U-imr-rexvqa.d\endcsname{117}
\expandafter\def\csname regval:M-e3-U-imr-textvqa\endcsname{0.000}
\expandafter\def\csname regval:M-e3-U-imr-textvqa.n\endcsname{0}
\expandafter\def\csname regval:M-e3-U-imr-textvqa.d\endcsname{117}
\expandafter\def\csname regval:M-e3-U-imr-video_mmmu\endcsname{0.137}
\expandafter\def\csname regval:M-e3-U-imr-video_mmmu.n\endcsname{16}
\expandafter\def\csname regval:M-e3-U-imr-video_mmmu.d\endcsname{117}
\expandafter\def\csname regval:M-e3-B-iaimr\endcsname{0.086}
\expandafter\def\csname regval:M-e3-B-iaimr.n\endcsname{101}
\expandafter\def\csname regval:M-e3-B-iaimr.d\endcsname{1{,}170}
\expandafter\def\csname regval:M-e3-B-iaimr.ci\endcsname{[0.072, 0.104]}
\expandafter\def\csname regval:M-e3-B-iafmr\endcsname{0.065}
\expandafter\def\csname regval:M-e3-B-iafmr.n\endcsname{101}
\expandafter\def\csname regval:M-e3-B-iafmr.d\endcsname{1{,}560}
\expandafter\def\csname regval:M-e3-B-blank\endcsname{0.000}
\expandafter\def\csname regval:M-e3-B-blank.n\endcsname{0}
\expandafter\def\csname regval:M-e3-B-blank.d\endcsname{390}
\expandafter\def\csname regval:M-e3-B-blank.ci\endcsname{[0.000, 0.010]}
\expandafter\def\csname regval:M-e3-B-id_unrelated\endcsname{0.215}
\expandafter\def\csname regval:M-e3-B-id_unrelated.n\endcsname{84}
\expandafter\def\csname regval:M-e3-B-id_unrelated.d\endcsname{390}
\expandafter\def\csname regval:M-e3-B-id_unrelated.ci\endcsname{[0.177, 0.259]}
\expandafter\def\csname regval:M-e3-B-od_unrelated\endcsname{0.044}
\expandafter\def\csname regval:M-e3-B-od_unrelated.n\endcsname{17}
\expandafter\def\csname regval:M-e3-B-od_unrelated.d\endcsname{390}
\expandafter\def\csname regval:M-e3-B-od_unrelated.ci\endcsname{[0.027, 0.069]}
\expandafter\def\csname regval:M-e3-B-iaimr-chest_vqa\endcsname{0.171}
\expandafter\def\csname regval:M-e3-B-iaimr-chest_vqa.n\endcsname{20}
\expandafter\def\csname regval:M-e3-B-iaimr-chest_vqa.d\endcsname{117}
\expandafter\def\csname regval:M-e3-B-iaimr-docvqa\endcsname{0.009}
\expandafter\def\csname regval:M-e3-B-iaimr-docvqa.n\endcsname{1}
\expandafter\def\csname regval:M-e3-B-iaimr-docvqa.d\endcsname{117}
\expandafter\def\csname regval:M-e3-B-iaimr-infovqa\endcsname{0.009}
\expandafter\def\csname regval:M-e3-B-iaimr-infovqa.n\endcsname{1}
\expandafter\def\csname regval:M-e3-B-iaimr-infovqa.d\endcsname{117}
\expandafter\def\csname regval:M-e3-B-iaimr-medxpertqa_mm\endcsname{0.077}
\expandafter\def\csname regval:M-e3-B-iaimr-medxpertqa_mm.n\endcsname{9}
\expandafter\def\csname regval:M-e3-B-iaimr-medxpertqa_mm.d\endcsname{117}
\expandafter\def\csname regval:M-e3-B-iaimr-microvqa\endcsname{0.094}
\expandafter\def\csname regval:M-e3-B-iaimr-microvqa.n\endcsname{11}
\expandafter\def\csname regval:M-e3-B-iaimr-microvqa.d\endcsname{117}
\expandafter\def\csname regval:M-e3-B-iaimr-mmmu_pro\endcsname{0.051}
\expandafter\def\csname regval:M-e3-B-iaimr-mmmu_pro.n\endcsname{6}
\expandafter\def\csname regval:M-e3-B-iaimr-mmmu_pro.d\endcsname{117}
\expandafter\def\csname regval:M-e3-B-iaimr-pathvqa\endcsname{0.103}
\expandafter\def\csname regval:M-e3-B-iaimr-pathvqa.n\endcsname{12}
\expandafter\def\csname regval:M-e3-B-iaimr-pathvqa.d\endcsname{117}
\expandafter\def\csname regval:M-e3-B-iaimr-rexvqa\endcsname{0.214}
\expandafter\def\csname regval:M-e3-B-iaimr-rexvqa.n\endcsname{25}
\expandafter\def\csname regval:M-e3-B-iaimr-rexvqa.d\endcsname{117}
\expandafter\def\csname regval:M-e3-B-iaimr-textvqa\endcsname{0.000}
\expandafter\def\csname regval:M-e3-B-iaimr-textvqa.n\endcsname{0}
\expandafter\def\csname regval:M-e3-B-iaimr-textvqa.d\endcsname{117}
\expandafter\def\csname regval:M-e3-B-iaimr-video_mmmu\endcsname{0.137}
\expandafter\def\csname regval:M-e3-B-iaimr-video_mmmu.n\endcsname{16}
\expandafter\def\csname regval:M-e3-B-iaimr-video_mmmu.d\endcsname{117}
\expandafter\def\csname regval:M-e3-B-acc\endcsname{0.526}
\expandafter\def\csname regval:M-e3-B-acc.n\endcsname{205}
\expandafter\def\csname regval:M-e3-B-acc.d\endcsname{390}
\expandafter\def\csname regval:M-e3-B-acc.ci\endcsname{[0.476, 0.575]}
\expandafter\def\csname regval:M-e3-B-answered\endcsname{0.849}
\expandafter\def\csname regval:M-e3-B-answered.n\endcsname{331}
\expandafter\def\csname regval:M-e3-B-answered.d\endcsname{390}
\expandafter\def\csname regval:M-e3-B-fa\endcsname{0.151}
\expandafter\def\csname regval:M-e3-B-fa.n\endcsname{59}
\expandafter\def\csname regval:M-e3-B-fa.d\endcsname{390}
\expandafter\def\csname regval:M-e3-B-acc-cond\endcsname{0.619}
\expandafter\def\csname regval:M-e3-B-acc-cond.n\endcsname{205}
\expandafter\def\csname regval:M-e3-B-acc-cond.d\endcsname{331}
\expandafter\def\csname regval:M-e3-B-unparseable\endcsname{0.000}
\expandafter\def\csname regval:M-e3-B-unparseable.n\endcsname{0}
\expandafter\def\csname regval:M-e3-B-unparseable.d\endcsname{1{,}560}
\expandafter\def\csname regval:M-e3-B-probe-real-P1\endcsname{0.849}
\expandafter\def\csname regval:M-e3-B-probe-real-P1.d\endcsname{390}
\expandafter\def\csname regval:M-e3-B-probe-real-P2\endcsname{0.151}
\expandafter\def\csname regval:M-e3-B-probe-real-P2.d\endcsname{390}
\expandafter\def\csname regval:M-e3-B-probe-real-P3\endcsname{0.695}
\expandafter\def\csname regval:M-e3-B-probe-real-P3.d\endcsname{390}
\expandafter\def\csname regval:M-e3-B-probe-real-P4\endcsname{0.431}
\expandafter\def\csname regval:M-e3-B-probe-real-P4.d\endcsname{390}
\expandafter\def\csname regval:M-e3-B-probe-blank-P1\endcsname{0.000}
\expandafter\def\csname regval:M-e3-B-probe-blank-P1.d\endcsname{390}
\expandafter\def\csname regval:M-e3-B-probe-blank-P2\endcsname{1.000}
\expandafter\def\csname regval:M-e3-B-probe-blank-P2.d\endcsname{390}
\expandafter\def\csname regval:M-e3-B-probe-blank-P4\endcsname{0.000}
\expandafter\def\csname regval:M-e3-B-probe-blank-P4.d\endcsname{390}
\expandafter\def\csname regval:M-e3-B-probe-id_unrelated-P1\endcsname{0.236}
\expandafter\def\csname regval:M-e3-B-probe-id_unrelated-P1.d\endcsname{390}
\expandafter\def\csname regval:M-e3-B-probe-id_unrelated-P2\endcsname{0.764}
\expandafter\def\csname regval:M-e3-B-probe-id_unrelated-P2.d\endcsname{390}
\expandafter\def\csname regval:M-e3-B-probe-id_unrelated-P3\endcsname{0.021}
\expandafter\def\csname regval:M-e3-B-probe-id_unrelated-P3.d\endcsname{390}
\expandafter\def\csname regval:M-e3-B-probe-id_unrelated-P4\endcsname{0.128}
\expandafter\def\csname regval:M-e3-B-probe-id_unrelated-P4.d\endcsname{390}
\expandafter\def\csname regval:M-e3-B-probe-od_unrelated-P1\endcsname{0.046}
\expandafter\def\csname regval:M-e3-B-probe-od_unrelated-P1.d\endcsname{390}
\expandafter\def\csname regval:M-e3-B-probe-od_unrelated-P2\endcsname{0.954}
\expandafter\def\csname regval:M-e3-B-probe-od_unrelated-P2.d\endcsname{390}
\expandafter\def\csname regval:M-e3-B-probe-od_unrelated-P3\endcsname{0.005}
\expandafter\def\csname regval:M-e3-B-probe-od_unrelated-P3.d\endcsname{390}
\expandafter\def\csname regval:M-e3-B-probe-od_unrelated-P4\endcsname{0.010}
\expandafter\def\csname regval:M-e3-B-probe-od_unrelated-P4.d\endcsname{390}
\expandafter\def\csname regval:M-e3-B-opdec-F\endcsname{0.151}
\expandafter\def\csname regval:M-e3-B-opdec-F.d\endcsname{390}
\expandafter\def\csname regval:M-e3-B-opdec-M\endcsname{0.094}
\expandafter\def\csname regval:M-e3-B-opdec-M.d\endcsname{1{,}170}
\expandafter\def\csname regval:M-e3-B-opdec-sum\endcsname{0.108}
\expandafter\def\csname regval:M-e3-B-opdec-sum.d\endcsname{1{,}560}
\expandafter\def\csname regval:M-e3-B2-iaimr\endcsname{0.088}
\expandafter\def\csname regval:M-e3-B2-iaimr.n\endcsname{103}
\expandafter\def\csname regval:M-e3-B2-iaimr.d\endcsname{1{,}170}
\expandafter\def\csname regval:M-e3-B2-iaimr.ci\endcsname{[0.073, 0.106]}
\expandafter\def\csname regval:M-e3-B2-iafmr\endcsname{0.066}
\expandafter\def\csname regval:M-e3-B2-iafmr.n\endcsname{103}
\expandafter\def\csname regval:M-e3-B2-iafmr.d\endcsname{1{,}560}
\expandafter\def\csname regval:M-e3-B2-blank\endcsname{0.000}
\expandafter\def\csname regval:M-e3-B2-blank.n\endcsname{0}
\expandafter\def\csname regval:M-e3-B2-blank.d\endcsname{390}
\expandafter\def\csname regval:M-e3-B2-blank.ci\endcsname{[0.000, 0.010]}
\expandafter\def\csname regval:M-e3-B2-id_unrelated\endcsname{0.218}
\expandafter\def\csname regval:M-e3-B2-id_unrelated.n\endcsname{85}
\expandafter\def\csname regval:M-e3-B2-id_unrelated.d\endcsname{390}
\expandafter\def\csname regval:M-e3-B2-id_unrelated.ci\endcsname{[0.180, 0.262]}
\expandafter\def\csname regval:M-e3-B2-od_unrelated\endcsname{0.046}
\expandafter\def\csname regval:M-e3-B2-od_unrelated.n\endcsname{18}
\expandafter\def\csname regval:M-e3-B2-od_unrelated.d\endcsname{390}
\expandafter\def\csname regval:M-e3-B2-od_unrelated.ci\endcsname{[0.029, 0.072]}
\expandafter\def\csname regval:M-e3-B2-iaimr-chest_vqa\endcsname{0.171}
\expandafter\def\csname regval:M-e3-B2-iaimr-chest_vqa.n\endcsname{20}
\expandafter\def\csname regval:M-e3-B2-iaimr-chest_vqa.d\endcsname{117}
\expandafter\def\csname regval:M-e3-B2-iaimr-docvqa\endcsname{0.026}
\expandafter\def\csname regval:M-e3-B2-iaimr-docvqa.n\endcsname{3}
\expandafter\def\csname regval:M-e3-B2-iaimr-docvqa.d\endcsname{117}
\expandafter\def\csname regval:M-e3-B2-iaimr-infovqa\endcsname{0.009}
\expandafter\def\csname regval:M-e3-B2-iaimr-infovqa.n\endcsname{1}
\expandafter\def\csname regval:M-e3-B2-iaimr-infovqa.d\endcsname{117}
\expandafter\def\csname regval:M-e3-B2-iaimr-medxpertqa_mm\endcsname{0.077}
\expandafter\def\csname regval:M-e3-B2-iaimr-medxpertqa_mm.n\endcsname{9}
\expandafter\def\csname regval:M-e3-B2-iaimr-medxpertqa_mm.d\endcsname{117}
\expandafter\def\csname regval:M-e3-B2-iaimr-microvqa\endcsname{0.094}
\expandafter\def\csname regval:M-e3-B2-iaimr-microvqa.n\endcsname{11}
\expandafter\def\csname regval:M-e3-B2-iaimr-microvqa.d\endcsname{117}
\expandafter\def\csname regval:M-e3-B2-iaimr-mmmu_pro\endcsname{0.051}
\expandafter\def\csname regval:M-e3-B2-iaimr-mmmu_pro.n\endcsname{6}
\expandafter\def\csname regval:M-e3-B2-iaimr-mmmu_pro.d\endcsname{117}
\expandafter\def\csname regval:M-e3-B2-iaimr-pathvqa\endcsname{0.103}
\expandafter\def\csname regval:M-e3-B2-iaimr-pathvqa.n\endcsname{12}
\expandafter\def\csname regval:M-e3-B2-iaimr-pathvqa.d\endcsname{117}
\expandafter\def\csname regval:M-e3-B2-iaimr-rexvqa\endcsname{0.214}
\expandafter\def\csname regval:M-e3-B2-iaimr-rexvqa.n\endcsname{25}
\expandafter\def\csname regval:M-e3-B2-iaimr-rexvqa.d\endcsname{117}
\expandafter\def\csname regval:M-e3-B2-iaimr-textvqa\endcsname{0.000}
\expandafter\def\csname regval:M-e3-B2-iaimr-textvqa.n\endcsname{0}
\expandafter\def\csname regval:M-e3-B2-iaimr-textvqa.d\endcsname{117}
\expandafter\def\csname regval:M-e3-B2-iaimr-video_mmmu\endcsname{0.137}
\expandafter\def\csname regval:M-e3-B2-iaimr-video_mmmu.n\endcsname{16}
\expandafter\def\csname regval:M-e3-B2-iaimr-video_mmmu.d\endcsname{117}
\expandafter\def\csname regval:M-e3-B2-acc\endcsname{0.526}
\expandafter\def\csname regval:M-e3-B2-acc.n\endcsname{205}
\expandafter\def\csname regval:M-e3-B2-acc.d\endcsname{390}
\expandafter\def\csname regval:M-e3-B2-acc.ci\endcsname{[0.476, 0.575]}
\expandafter\def\csname regval:M-e3-B2-answered\endcsname{0.841}
\expandafter\def\csname regval:M-e3-B2-answered.n\endcsname{328}
\expandafter\def\csname regval:M-e3-B2-answered.d\endcsname{390}
\expandafter\def\csname regval:M-e3-B2-fa\endcsname{0.159}
\expandafter\def\csname regval:M-e3-B2-fa.n\endcsname{62}
\expandafter\def\csname regval:M-e3-B2-fa.d\endcsname{390}
\expandafter\def\csname regval:M-e3-B2-acc-cond\endcsname{0.625}
\expandafter\def\csname regval:M-e3-B2-acc-cond.n\endcsname{205}
\expandafter\def\csname regval:M-e3-B2-acc-cond.d\endcsname{328}
\expandafter\def\csname regval:M-e3-B2-unparseable\endcsname{0.000}
\expandafter\def\csname regval:M-e3-B2-unparseable.n\endcsname{0}
\expandafter\def\csname regval:M-e3-B2-unparseable.d\endcsname{1{,}560}
\expandafter\def\csname regval:M-e4-U-imr\endcsname{0.615}
\expandafter\def\csname regval:M-e4-U-imr.n\endcsname{719}
\expandafter\def\csname regval:M-e4-U-imr.d\endcsname{1{,}170}
\expandafter\def\csname regval:M-e4-U-imr.ci\endcsname{[0.586, 0.642]}
\expandafter\def\csname regval:M-e4-U-tclia\endcsname{0.461}
\expandafter\def\csname regval:M-e4-U-tclia.n\endcsname{719}
\expandafter\def\csname regval:M-e4-U-tclia.d\endcsname{1{,}560}
\expandafter\def\csname regval:M-e4-U-blank\endcsname{0.000}
\expandafter\def\csname regval:M-e4-U-blank.n\endcsname{0}
\expandafter\def\csname regval:M-e4-U-blank.d\endcsname{390}
\expandafter\def\csname regval:M-e4-U-blank.ci\endcsname{[0.000, 0.010]}
\expandafter\def\csname regval:M-e4-U-id_unrelated\endcsname{0.936}
\expandafter\def\csname regval:M-e4-U-id_unrelated.n\endcsname{365}
\expandafter\def\csname regval:M-e4-U-id_unrelated.d\endcsname{390}
\expandafter\def\csname regval:M-e4-U-id_unrelated.ci\endcsname{[0.907, 0.956]}
\expandafter\def\csname regval:M-e4-U-od_unrelated\endcsname{0.908}
\expandafter\def\csname regval:M-e4-U-od_unrelated.n\endcsname{354}
\expandafter\def\csname regval:M-e4-U-od_unrelated.d\endcsname{390}
\expandafter\def\csname regval:M-e4-U-od_unrelated.ci\endcsname{[0.875, 0.933]}
\expandafter\def\csname regval:M-e4-U-imr-chest_vqa\endcsname{0.632}
\expandafter\def\csname regval:M-e4-U-imr-chest_vqa.n\endcsname{74}
\expandafter\def\csname regval:M-e4-U-imr-chest_vqa.d\endcsname{117}
\expandafter\def\csname regval:M-e4-U-imr-docvqa\endcsname{0.573}
\expandafter\def\csname regval:M-e4-U-imr-docvqa.n\endcsname{67}
\expandafter\def\csname regval:M-e4-U-imr-docvqa.d\endcsname{117}
\expandafter\def\csname regval:M-e4-U-imr-infovqa\endcsname{0.641}
\expandafter\def\csname regval:M-e4-U-imr-infovqa.n\endcsname{75}
\expandafter\def\csname regval:M-e4-U-imr-infovqa.d\endcsname{117}
\expandafter\def\csname regval:M-e4-U-imr-medxpertqa_mm\endcsname{0.607}
\expandafter\def\csname regval:M-e4-U-imr-medxpertqa_mm.n\endcsname{71}
\expandafter\def\csname regval:M-e4-U-imr-medxpertqa_mm.d\endcsname{117}
\expandafter\def\csname regval:M-e4-U-imr-microvqa\endcsname{0.564}
\expandafter\def\csname regval:M-e4-U-imr-microvqa.n\endcsname{66}
\expandafter\def\csname regval:M-e4-U-imr-microvqa.d\endcsname{117}
\expandafter\def\csname regval:M-e4-U-imr-mmmu_pro\endcsname{0.658}
\expandafter\def\csname regval:M-e4-U-imr-mmmu_pro.n\endcsname{77}
\expandafter\def\csname regval:M-e4-U-imr-mmmu_pro.d\endcsname{117}
\expandafter\def\csname regval:M-e4-U-imr-pathvqa\endcsname{0.632}
\expandafter\def\csname regval:M-e4-U-imr-pathvqa.n\endcsname{74}
\expandafter\def\csname regval:M-e4-U-imr-pathvqa.d\endcsname{117}
\expandafter\def\csname regval:M-e4-U-imr-rexvqa\endcsname{0.632}
\expandafter\def\csname regval:M-e4-U-imr-rexvqa.n\endcsname{74}
\expandafter\def\csname regval:M-e4-U-imr-rexvqa.d\endcsname{117}
\expandafter\def\csname regval:M-e4-U-imr-textvqa\endcsname{0.641}
\expandafter\def\csname regval:M-e4-U-imr-textvqa.n\endcsname{75}
\expandafter\def\csname regval:M-e4-U-imr-textvqa.d\endcsname{117}
\expandafter\def\csname regval:M-e4-U-imr-video_mmmu\endcsname{0.564}
\expandafter\def\csname regval:M-e4-U-imr-video_mmmu.n\endcsname{66}
\expandafter\def\csname regval:M-e4-U-imr-video_mmmu.d\endcsname{117}
\expandafter\def\csname regval:M-e4-B-iaimr\endcsname{0.490}
\expandafter\def\csname regval:M-e4-B-iaimr.n\endcsname{573}
\expandafter\def\csname regval:M-e4-B-iaimr.d\endcsname{1{,}170}
\expandafter\def\csname regval:M-e4-B-iaimr.ci\endcsname{[0.461, 0.518]}
\expandafter\def\csname regval:M-e4-B-iafmr\endcsname{0.367}
\expandafter\def\csname regval:M-e4-B-iafmr.n\endcsname{573}
\expandafter\def\csname regval:M-e4-B-iafmr.d\endcsname{1{,}560}
\expandafter\def\csname regval:M-e4-B-blank\endcsname{0.000}
\expandafter\def\csname regval:M-e4-B-blank.n\endcsname{0}
\expandafter\def\csname regval:M-e4-B-blank.d\endcsname{390}
\expandafter\def\csname regval:M-e4-B-blank.ci\endcsname{[0.000, 0.010]}
\expandafter\def\csname regval:M-e4-B-id_unrelated\endcsname{0.726}
\expandafter\def\csname regval:M-e4-B-id_unrelated.n\endcsname{283}
\expandafter\def\csname regval:M-e4-B-id_unrelated.d\endcsname{390}
\expandafter\def\csname regval:M-e4-B-id_unrelated.ci\endcsname{[0.679, 0.768]}
\expandafter\def\csname regval:M-e4-B-od_unrelated\endcsname{0.744}
\expandafter\def\csname regval:M-e4-B-od_unrelated.n\endcsname{290}
\expandafter\def\csname regval:M-e4-B-od_unrelated.d\endcsname{390}
\expandafter\def\csname regval:M-e4-B-od_unrelated.ci\endcsname{[0.698, 0.784]}
\expandafter\def\csname regval:M-e4-B-iaimr-chest_vqa\endcsname{0.487}
\expandafter\def\csname regval:M-e4-B-iaimr-chest_vqa.n\endcsname{57}
\expandafter\def\csname regval:M-e4-B-iaimr-chest_vqa.d\endcsname{117}
\expandafter\def\csname regval:M-e4-B-iaimr-docvqa\endcsname{0.325}
\expandafter\def\csname regval:M-e4-B-iaimr-docvqa.n\endcsname{38}
\expandafter\def\csname regval:M-e4-B-iaimr-docvqa.d\endcsname{117}
\expandafter\def\csname regval:M-e4-B-iaimr-infovqa\endcsname{0.496}
\expandafter\def\csname regval:M-e4-B-iaimr-infovqa.n\endcsname{58}
\expandafter\def\csname regval:M-e4-B-iaimr-infovqa.d\endcsname{117}
\expandafter\def\csname regval:M-e4-B-iaimr-medxpertqa_mm\endcsname{0.487}
\expandafter\def\csname regval:M-e4-B-iaimr-medxpertqa_mm.n\endcsname{57}
\expandafter\def\csname regval:M-e4-B-iaimr-medxpertqa_mm.d\endcsname{117}
\expandafter\def\csname regval:M-e4-B-iaimr-microvqa\endcsname{0.487}
\expandafter\def\csname regval:M-e4-B-iaimr-microvqa.n\endcsname{57}
\expandafter\def\csname regval:M-e4-B-iaimr-microvqa.d\endcsname{117}
\expandafter\def\csname regval:M-e4-B-iaimr-mmmu_pro\endcsname{0.479}
\expandafter\def\csname regval:M-e4-B-iaimr-mmmu_pro.n\endcsname{56}
\expandafter\def\csname regval:M-e4-B-iaimr-mmmu_pro.d\endcsname{117}
\expandafter\def\csname regval:M-e4-B-iaimr-pathvqa\endcsname{0.590}
\expandafter\def\csname regval:M-e4-B-iaimr-pathvqa.n\endcsname{69}
\expandafter\def\csname regval:M-e4-B-iaimr-pathvqa.d\endcsname{117}
\expandafter\def\csname regval:M-e4-B-iaimr-rexvqa\endcsname{0.590}
\expandafter\def\csname regval:M-e4-B-iaimr-rexvqa.n\endcsname{69}
\expandafter\def\csname regval:M-e4-B-iaimr-rexvqa.d\endcsname{117}
\expandafter\def\csname regval:M-e4-B-iaimr-textvqa\endcsname{0.479}
\expandafter\def\csname regval:M-e4-B-iaimr-textvqa.n\endcsname{56}
\expandafter\def\csname regval:M-e4-B-iaimr-textvqa.d\endcsname{117}
\expandafter\def\csname regval:M-e4-B-iaimr-video_mmmu\endcsname{0.479}
\expandafter\def\csname regval:M-e4-B-iaimr-video_mmmu.n\endcsname{56}
\expandafter\def\csname regval:M-e4-B-iaimr-video_mmmu.d\endcsname{117}
\expandafter\def\csname regval:M-e4-B-acc\endcsname{0.423}
\expandafter\def\csname regval:M-e4-B-acc.n\endcsname{165}
\expandafter\def\csname regval:M-e4-B-acc.d\endcsname{390}
\expandafter\def\csname regval:M-e4-B-acc.ci\endcsname{[0.375, 0.473]}
\expandafter\def\csname regval:M-e4-B-answered\endcsname{0.821}
\expandafter\def\csname regval:M-e4-B-answered.n\endcsname{320}
\expandafter\def\csname regval:M-e4-B-answered.d\endcsname{390}
\expandafter\def\csname regval:M-e4-B-fa\endcsname{0.179}
\expandafter\def\csname regval:M-e4-B-fa.n\endcsname{70}
\expandafter\def\csname regval:M-e4-B-fa.d\endcsname{390}
\expandafter\def\csname regval:M-e4-B-acc-cond\endcsname{0.516}
\expandafter\def\csname regval:M-e4-B-acc-cond.n\endcsname{165}
\expandafter\def\csname regval:M-e4-B-acc-cond.d\endcsname{320}
\expandafter\def\csname regval:M-e4-B-unparseable\endcsname{0.000}
\expandafter\def\csname regval:M-e4-B-unparseable.n\endcsname{0}
\expandafter\def\csname regval:M-e4-B-unparseable.d\endcsname{1{,}560}
\expandafter\def\csname regval:M-e4-B-probe-real-P1\endcsname{0.821}
\expandafter\def\csname regval:M-e4-B-probe-real-P1.d\endcsname{390}
\expandafter\def\csname regval:M-e4-B-probe-real-P2\endcsname{0.059}
\expandafter\def\csname regval:M-e4-B-probe-real-P2.d\endcsname{390}
\expandafter\def\csname regval:M-e4-B-probe-real-P3\endcsname{0.674}
\expandafter\def\csname regval:M-e4-B-probe-real-P3.d\endcsname{390}
\expandafter\def\csname regval:M-e4-B-probe-real-P4\endcsname{0.426}
\expandafter\def\csname regval:M-e4-B-probe-real-P4.d\endcsname{390}
\expandafter\def\csname regval:M-e4-B-probe-blank-P1\endcsname{0.000}
\expandafter\def\csname regval:M-e4-B-probe-blank-P1.d\endcsname{390}
\expandafter\def\csname regval:M-e4-B-probe-blank-P2\endcsname{1.000}
\expandafter\def\csname regval:M-e4-B-probe-blank-P2.d\endcsname{390}
\expandafter\def\csname regval:M-e4-B-probe-blank-P4\endcsname{0.000}
\expandafter\def\csname regval:M-e4-B-probe-blank-P4.d\endcsname{390}
\expandafter\def\csname regval:M-e4-B-probe-id_unrelated-P1\endcsname{0.328}
\expandafter\def\csname regval:M-e4-B-probe-id_unrelated-P1.d\endcsname{390}
\expandafter\def\csname regval:M-e4-B-probe-id_unrelated-P2\endcsname{0.254}
\expandafter\def\csname regval:M-e4-B-probe-id_unrelated-P2.d\endcsname{390}
\expandafter\def\csname regval:M-e4-B-probe-id_unrelated-P3\endcsname{0.026}
\expandafter\def\csname regval:M-e4-B-probe-id_unrelated-P3.d\endcsname{390}
\expandafter\def\csname regval:M-e4-B-probe-id_unrelated-P4\endcsname{0.156}
\expandafter\def\csname regval:M-e4-B-probe-id_unrelated-P4.d\endcsname{390}
\expandafter\def\csname regval:M-e4-B-probe-od_unrelated-P1\endcsname{0.203}
\expandafter\def\csname regval:M-e4-B-probe-od_unrelated-P1.d\endcsname{390}
\expandafter\def\csname regval:M-e4-B-probe-od_unrelated-P2\endcsname{0.249}
\expandafter\def\csname regval:M-e4-B-probe-od_unrelated-P2.d\endcsname{390}
\expandafter\def\csname regval:M-e4-B-probe-od_unrelated-P3\endcsname{0.013}
\expandafter\def\csname regval:M-e4-B-probe-od_unrelated-P3.d\endcsname{390}
\expandafter\def\csname regval:M-e4-B-probe-od_unrelated-P4\endcsname{0.056}
\expandafter\def\csname regval:M-e4-B-probe-od_unrelated-P4.d\endcsname{390}
\expandafter\def\csname regval:M-e4-B-opdec-F\endcsname{0.179}
\expandafter\def\csname regval:M-e4-B-opdec-F.d\endcsname{390}
\expandafter\def\csname regval:M-e4-B-opdec-M\endcsname{0.177}
\expandafter\def\csname regval:M-e4-B-opdec-M.d\endcsname{1{,}170}
\expandafter\def\csname regval:M-e4-B-opdec-sum\endcsname{0.177}
\expandafter\def\csname regval:M-e4-B-opdec-sum.d\endcsname{1{,}560}
\expandafter\def\csname regval:M-e4-B2-iaimr\endcsname{0.185}
\expandafter\def\csname regval:M-e4-B2-iaimr.n\endcsname{217}
\expandafter\def\csname regval:M-e4-B2-iaimr.d\endcsname{1{,}170}
\expandafter\def\csname regval:M-e4-B2-iaimr.ci\endcsname{[0.164, 0.209]}
\expandafter\def\csname regval:M-e4-B2-iafmr\endcsname{0.139}
\expandafter\def\csname regval:M-e4-B2-iafmr.n\endcsname{217}
\expandafter\def\csname regval:M-e4-B2-iafmr.d\endcsname{1{,}560}
\expandafter\def\csname regval:M-e4-B2-blank\endcsname{0.000}
\expandafter\def\csname regval:M-e4-B2-blank.n\endcsname{0}
\expandafter\def\csname regval:M-e4-B2-blank.d\endcsname{390}
\expandafter\def\csname regval:M-e4-B2-blank.ci\endcsname{[0.000, 0.010]}
\expandafter\def\csname regval:M-e4-B2-id_unrelated\endcsname{0.346}
\expandafter\def\csname regval:M-e4-B2-id_unrelated.n\endcsname{135}
\expandafter\def\csname regval:M-e4-B2-id_unrelated.d\endcsname{390}
\expandafter\def\csname regval:M-e4-B2-id_unrelated.ci\endcsname{[0.301, 0.395]}
\expandafter\def\csname regval:M-e4-B2-od_unrelated\endcsname{0.210}
\expandafter\def\csname regval:M-e4-B2-od_unrelated.n\endcsname{82}
\expandafter\def\csname regval:M-e4-B2-od_unrelated.d\endcsname{390}
\expandafter\def\csname regval:M-e4-B2-od_unrelated.ci\endcsname{[0.173, 0.253]}
\expandafter\def\csname regval:M-e4-B2-iaimr-chest_vqa\endcsname{0.205}
\expandafter\def\csname regval:M-e4-B2-iaimr-chest_vqa.n\endcsname{24}
\expandafter\def\csname regval:M-e4-B2-iaimr-chest_vqa.d\endcsname{117}
\expandafter\def\csname regval:M-e4-B2-iaimr-docvqa\endcsname{0.068}
\expandafter\def\csname regval:M-e4-B2-iaimr-docvqa.n\endcsname{8}
\expandafter\def\csname regval:M-e4-B2-iaimr-docvqa.d\endcsname{117}
\expandafter\def\csname regval:M-e4-B2-iaimr-infovqa\endcsname{0.068}
\expandafter\def\csname regval:M-e4-B2-iaimr-infovqa.n\endcsname{8}
\expandafter\def\csname regval:M-e4-B2-iaimr-infovqa.d\endcsname{117}
\expandafter\def\csname regval:M-e4-B2-iaimr-medxpertqa_mm\endcsname{0.137}
\expandafter\def\csname regval:M-e4-B2-iaimr-medxpertqa_mm.n\endcsname{16}
\expandafter\def\csname regval:M-e4-B2-iaimr-medxpertqa_mm.d\endcsname{117}
\expandafter\def\csname regval:M-e4-B2-iaimr-microvqa\endcsname{0.128}
\expandafter\def\csname regval:M-e4-B2-iaimr-microvqa.n\endcsname{15}
\expandafter\def\csname regval:M-e4-B2-iaimr-microvqa.d\endcsname{117}
\expandafter\def\csname regval:M-e4-B2-iaimr-mmmu_pro\endcsname{0.359}
\expandafter\def\csname regval:M-e4-B2-iaimr-mmmu_pro.n\endcsname{42}
\expandafter\def\csname regval:M-e4-B2-iaimr-mmmu_pro.d\endcsname{117}
\expandafter\def\csname regval:M-e4-B2-iaimr-pathvqa\endcsname{0.111}
\expandafter\def\csname regval:M-e4-B2-iaimr-pathvqa.n\endcsname{13}
\expandafter\def\csname regval:M-e4-B2-iaimr-pathvqa.d\endcsname{117}
\expandafter\def\csname regval:M-e4-B2-iaimr-rexvqa\endcsname{0.402}
\expandafter\def\csname regval:M-e4-B2-iaimr-rexvqa.n\endcsname{47}
\expandafter\def\csname regval:M-e4-B2-iaimr-rexvqa.d\endcsname{117}
\expandafter\def\csname regval:M-e4-B2-iaimr-textvqa\endcsname{0.137}
\expandafter\def\csname regval:M-e4-B2-iaimr-textvqa.n\endcsname{16}
\expandafter\def\csname regval:M-e4-B2-iaimr-textvqa.d\endcsname{117}
\expandafter\def\csname regval:M-e4-B2-iaimr-video_mmmu\endcsname{0.239}
\expandafter\def\csname regval:M-e4-B2-iaimr-video_mmmu.n\endcsname{28}
\expandafter\def\csname regval:M-e4-B2-iaimr-video_mmmu.d\endcsname{117}
\expandafter\def\csname regval:M-e4-B2-acc\endcsname{0.415}
\expandafter\def\csname regval:M-e4-B2-acc.n\endcsname{162}
\expandafter\def\csname regval:M-e4-B2-acc.d\endcsname{390}
\expandafter\def\csname regval:M-e4-B2-acc.ci\endcsname{[0.367, 0.465]}
\expandafter\def\csname regval:M-e4-B2-answered\endcsname{0.813}
\expandafter\def\csname regval:M-e4-B2-answered.n\endcsname{317}
\expandafter\def\csname regval:M-e4-B2-answered.d\endcsname{390}
\expandafter\def\csname regval:M-e4-B2-fa\endcsname{0.187}
\expandafter\def\csname regval:M-e4-B2-fa.n\endcsname{73}
\expandafter\def\csname regval:M-e4-B2-fa.d\endcsname{390}
\expandafter\def\csname regval:M-e4-B2-acc-cond\endcsname{0.511}
\expandafter\def\csname regval:M-e4-B2-acc-cond.n\endcsname{162}
\expandafter\def\csname regval:M-e4-B2-acc-cond.d\endcsname{317}
\expandafter\def\csname regval:M-e4-B2-unparseable\endcsname{0.000}
\expandafter\def\csname regval:M-e4-B2-unparseable.n\endcsname{0}
\expandafter\def\csname regval:M-e4-B2-unparseable.d\endcsname{1{,}560}
\expandafter\def\csname regval:M-e5-U-imr\endcsname{0.718}
\expandafter\def\csname regval:M-e5-U-imr.n\endcsname{840}
\expandafter\def\csname regval:M-e5-U-imr.d\endcsname{1{,}170}
\expandafter\def\csname regval:M-e5-U-imr.ci\endcsname{[0.692, 0.743]}
\expandafter\def\csname regval:M-e5-U-tclia\endcsname{0.538}
\expandafter\def\csname regval:M-e5-U-tclia.n\endcsname{840}
\expandafter\def\csname regval:M-e5-U-tclia.d\endcsname{1{,}560}
\expandafter\def\csname regval:M-e5-U-blank\endcsname{0.656}
\expandafter\def\csname regval:M-e5-U-blank.n\endcsname{256}
\expandafter\def\csname regval:M-e5-U-blank.d\endcsname{390}
\expandafter\def\csname regval:M-e5-U-blank.ci\endcsname{[0.608, 0.702]}
\expandafter\def\csname regval:M-e5-U-id_unrelated\endcsname{0.779}
\expandafter\def\csname regval:M-e5-U-id_unrelated.n\endcsname{304}
\expandafter\def\csname regval:M-e5-U-id_unrelated.d\endcsname{390}
\expandafter\def\csname regval:M-e5-U-id_unrelated.ci\endcsname{[0.736, 0.818]}
\expandafter\def\csname regval:M-e5-U-od_unrelated\endcsname{0.718}
\expandafter\def\csname regval:M-e5-U-od_unrelated.n\endcsname{280}
\expandafter\def\csname regval:M-e5-U-od_unrelated.d\endcsname{390}
\expandafter\def\csname regval:M-e5-U-od_unrelated.ci\endcsname{[0.671, 0.760]}
\expandafter\def\csname regval:M-e5-U-imr-chest_vqa\endcsname{0.667}
\expandafter\def\csname regval:M-e5-U-imr-chest_vqa.n\endcsname{78}
\expandafter\def\csname regval:M-e5-U-imr-chest_vqa.d\endcsname{117}
\expandafter\def\csname regval:M-e5-U-imr-docvqa\endcsname{0.564}
\expandafter\def\csname regval:M-e5-U-imr-docvqa.n\endcsname{66}
\expandafter\def\csname regval:M-e5-U-imr-docvqa.d\endcsname{117}
\expandafter\def\csname regval:M-e5-U-imr-infovqa\endcsname{0.761}
\expandafter\def\csname regval:M-e5-U-imr-infovqa.n\endcsname{89}
\expandafter\def\csname regval:M-e5-U-imr-infovqa.d\endcsname{117}
\expandafter\def\csname regval:M-e5-U-imr-medxpertqa_mm\endcsname{0.778}
\expandafter\def\csname regval:M-e5-U-imr-medxpertqa_mm.n\endcsname{91}
\expandafter\def\csname regval:M-e5-U-imr-medxpertqa_mm.d\endcsname{117}
\expandafter\def\csname regval:M-e5-U-imr-microvqa\endcsname{0.846}
\expandafter\def\csname regval:M-e5-U-imr-microvqa.n\endcsname{99}
\expandafter\def\csname regval:M-e5-U-imr-microvqa.d\endcsname{117}
\expandafter\def\csname regval:M-e5-U-imr-mmmu_pro\endcsname{0.795}
\expandafter\def\csname regval:M-e5-U-imr-mmmu_pro.n\endcsname{93}
\expandafter\def\csname regval:M-e5-U-imr-mmmu_pro.d\endcsname{117}
\expandafter\def\csname regval:M-e5-U-imr-pathvqa\endcsname{0.667}
\expandafter\def\csname regval:M-e5-U-imr-pathvqa.n\endcsname{78}
\expandafter\def\csname regval:M-e5-U-imr-pathvqa.d\endcsname{117}
\expandafter\def\csname regval:M-e5-U-imr-rexvqa\endcsname{0.624}
\expandafter\def\csname regval:M-e5-U-imr-rexvqa.n\endcsname{73}
\expandafter\def\csname regval:M-e5-U-imr-rexvqa.d\endcsname{117}
\expandafter\def\csname regval:M-e5-U-imr-textvqa\endcsname{0.727}
\expandafter\def\csname regval:M-e5-U-imr-textvqa.n\endcsname{85}
\expandafter\def\csname regval:M-e5-U-imr-textvqa.d\endcsname{117}
\expandafter\def\csname regval:M-e5-U-imr-video_mmmu\endcsname{0.752}
\expandafter\def\csname regval:M-e5-U-imr-video_mmmu.n\endcsname{88}
\expandafter\def\csname regval:M-e5-U-imr-video_mmmu.d\endcsname{117}
\expandafter\def\csname regval:M-e0-B-fa-chest_vqa\endcsname{0.026}
\expandafter\def\csname regval:M-e0-B-fa-chest_vqa.n\endcsname{1}
\expandafter\def\csname regval:M-e0-B-fa-chest_vqa.d\endcsname{39}
\expandafter\def\csname regval:M-e0-B-answered-chest_vqa\endcsname{0.974}
\expandafter\def\csname regval:M-e0-B-answered-chest_vqa.n\endcsname{38}
\expandafter\def\csname regval:M-e0-B-answered-chest_vqa.d\endcsname{39}
\expandafter\def\csname regval:M-e0-B-fa-docvqa\endcsname{0.000}
\expandafter\def\csname regval:M-e0-B-fa-docvqa.n\endcsname{0}
\expandafter\def\csname regval:M-e0-B-fa-docvqa.d\endcsname{39}
\expandafter\def\csname regval:M-e0-B-answered-docvqa\endcsname{1.000}
\expandafter\def\csname regval:M-e0-B-answered-docvqa.n\endcsname{39}
\expandafter\def\csname regval:M-e0-B-answered-docvqa.d\endcsname{39}
\expandafter\def\csname regval:M-e0-B-fa-infovqa\endcsname{0.051}
\expandafter\def\csname regval:M-e0-B-fa-infovqa.n\endcsname{2}
\expandafter\def\csname regval:M-e0-B-fa-infovqa.d\endcsname{39}
\expandafter\def\csname regval:M-e0-B-answered-infovqa\endcsname{0.949}
\expandafter\def\csname regval:M-e0-B-answered-infovqa.n\endcsname{37}
\expandafter\def\csname regval:M-e0-B-answered-infovqa.d\endcsname{39}
\expandafter\def\csname regval:M-e0-B-fa-medxpertqa_mm\endcsname{0.000}
\expandafter\def\csname regval:M-e0-B-fa-medxpertqa_mm.n\endcsname{0}
\expandafter\def\csname regval:M-e0-B-fa-medxpertqa_mm.d\endcsname{39}
\expandafter\def\csname regval:M-e0-B-answered-medxpertqa_mm\endcsname{1.000}
\expandafter\def\csname regval:M-e0-B-answered-medxpertqa_mm.n\endcsname{39}
\expandafter\def\csname regval:M-e0-B-answered-medxpertqa_mm.d\endcsname{39}
\expandafter\def\csname regval:M-e0-B-fa-microvqa\endcsname{0.000}
\expandafter\def\csname regval:M-e0-B-fa-microvqa.n\endcsname{0}
\expandafter\def\csname regval:M-e0-B-fa-microvqa.d\endcsname{39}
\expandafter\def\csname regval:M-e0-B-answered-microvqa\endcsname{1.000}
\expandafter\def\csname regval:M-e0-B-answered-microvqa.n\endcsname{39}
\expandafter\def\csname regval:M-e0-B-answered-microvqa.d\endcsname{39}
\expandafter\def\csname regval:M-e0-B-fa-mmmu_pro\endcsname{0.000}
\expandafter\def\csname regval:M-e0-B-fa-mmmu_pro.n\endcsname{0}
\expandafter\def\csname regval:M-e0-B-fa-mmmu_pro.d\endcsname{39}
\expandafter\def\csname regval:M-e0-B-answered-mmmu_pro\endcsname{1.000}
\expandafter\def\csname regval:M-e0-B-answered-mmmu_pro.n\endcsname{39}
\expandafter\def\csname regval:M-e0-B-answered-mmmu_pro.d\endcsname{39}
\expandafter\def\csname regval:M-e0-B-fa-pathvqa\endcsname{0.077}
\expandafter\def\csname regval:M-e0-B-fa-pathvqa.n\endcsname{3}
\expandafter\def\csname regval:M-e0-B-fa-pathvqa.d\endcsname{39}
\expandafter\def\csname regval:M-e0-B-answered-pathvqa\endcsname{0.923}
\expandafter\def\csname regval:M-e0-B-answered-pathvqa.n\endcsname{36}
\expandafter\def\csname regval:M-e0-B-answered-pathvqa.d\endcsname{39}
\expandafter\def\csname regval:M-e0-B-fa-rexvqa\endcsname{0.000}
\expandafter\def\csname regval:M-e0-B-fa-rexvqa.n\endcsname{0}
\expandafter\def\csname regval:M-e0-B-fa-rexvqa.d\endcsname{39}
\expandafter\def\csname regval:M-e0-B-answered-rexvqa\endcsname{1.000}
\expandafter\def\csname regval:M-e0-B-answered-rexvqa.n\endcsname{39}
\expandafter\def\csname regval:M-e0-B-answered-rexvqa.d\endcsname{39}
\expandafter\def\csname regval:M-e0-B-fa-textvqa\endcsname{0.000}
\expandafter\def\csname regval:M-e0-B-fa-textvqa.n\endcsname{0}
\expandafter\def\csname regval:M-e0-B-fa-textvqa.d\endcsname{39}
\expandafter\def\csname regval:M-e0-B-answered-textvqa\endcsname{1.000}
\expandafter\def\csname regval:M-e0-B-answered-textvqa.n\endcsname{39}
\expandafter\def\csname regval:M-e0-B-answered-textvqa.d\endcsname{39}
\expandafter\def\csname regval:M-e0-B-fa-video_mmmu\endcsname{0.026}
\expandafter\def\csname regval:M-e0-B-fa-video_mmmu.n\endcsname{1}
\expandafter\def\csname regval:M-e0-B-fa-video_mmmu.d\endcsname{39}
\expandafter\def\csname regval:M-e0-B-answered-video_mmmu\endcsname{0.974}
\expandafter\def\csname regval:M-e0-B-answered-video_mmmu.n\endcsname{38}
\expandafter\def\csname regval:M-e0-B-answered-video_mmmu.d\endcsname{39}
\expandafter\def\csname regval:M-e1-B-fa-chest_vqa\endcsname{0.154}
\expandafter\def\csname regval:M-e1-B-fa-chest_vqa.n\endcsname{6}
\expandafter\def\csname regval:M-e1-B-fa-chest_vqa.d\endcsname{39}
\expandafter\def\csname regval:M-e1-B-answered-chest_vqa\endcsname{0.846}
\expandafter\def\csname regval:M-e1-B-answered-chest_vqa.n\endcsname{33}
\expandafter\def\csname regval:M-e1-B-answered-chest_vqa.d\endcsname{39}
\expandafter\def\csname regval:M-e1-B-fa-docvqa\endcsname{0.051}
\expandafter\def\csname regval:M-e1-B-fa-docvqa.n\endcsname{2}
\expandafter\def\csname regval:M-e1-B-fa-docvqa.d\endcsname{39}
\expandafter\def\csname regval:M-e1-B-answered-docvqa\endcsname{0.949}
\expandafter\def\csname regval:M-e1-B-answered-docvqa.n\endcsname{37}
\expandafter\def\csname regval:M-e1-B-answered-docvqa.d\endcsname{39}
\expandafter\def\csname regval:M-e1-B-fa-infovqa\endcsname{0.026}
\expandafter\def\csname regval:M-e1-B-fa-infovqa.n\endcsname{1}
\expandafter\def\csname regval:M-e1-B-fa-infovqa.d\endcsname{39}
\expandafter\def\csname regval:M-e1-B-answered-infovqa\endcsname{0.974}
\expandafter\def\csname regval:M-e1-B-answered-infovqa.n\endcsname{38}
\expandafter\def\csname regval:M-e1-B-answered-infovqa.d\endcsname{39}
\expandafter\def\csname regval:M-e1-B-fa-medxpertqa_mm\endcsname{0.103}
\expandafter\def\csname regval:M-e1-B-fa-medxpertqa_mm.n\endcsname{4}
\expandafter\def\csname regval:M-e1-B-fa-medxpertqa_mm.d\endcsname{39}
\expandafter\def\csname regval:M-e1-B-answered-medxpertqa_mm\endcsname{0.897}
\expandafter\def\csname regval:M-e1-B-answered-medxpertqa_mm.n\endcsname{35}
\expandafter\def\csname regval:M-e1-B-answered-medxpertqa_mm.d\endcsname{39}
\expandafter\def\csname regval:M-e1-B-fa-microvqa\endcsname{0.077}
\expandafter\def\csname regval:M-e1-B-fa-microvqa.n\endcsname{3}
\expandafter\def\csname regval:M-e1-B-fa-microvqa.d\endcsname{39}
\expandafter\def\csname regval:M-e1-B-answered-microvqa\endcsname{0.923}
\expandafter\def\csname regval:M-e1-B-answered-microvqa.n\endcsname{36}
\expandafter\def\csname regval:M-e1-B-answered-microvqa.d\endcsname{39}
\expandafter\def\csname regval:M-e1-B-fa-mmmu_pro\endcsname{0.026}
\expandafter\def\csname regval:M-e1-B-fa-mmmu_pro.n\endcsname{1}
\expandafter\def\csname regval:M-e1-B-fa-mmmu_pro.d\endcsname{39}
\expandafter\def\csname regval:M-e1-B-answered-mmmu_pro\endcsname{0.974}
\expandafter\def\csname regval:M-e1-B-answered-mmmu_pro.n\endcsname{38}
\expandafter\def\csname regval:M-e1-B-answered-mmmu_pro.d\endcsname{39}
\expandafter\def\csname regval:M-e1-B-fa-pathvqa\endcsname{0.487}
\expandafter\def\csname regval:M-e1-B-fa-pathvqa.n\endcsname{19}
\expandafter\def\csname regval:M-e1-B-fa-pathvqa.d\endcsname{39}
\expandafter\def\csname regval:M-e1-B-answered-pathvqa\endcsname{0.513}
\expandafter\def\csname regval:M-e1-B-answered-pathvqa.n\endcsname{20}
\expandafter\def\csname regval:M-e1-B-answered-pathvqa.d\endcsname{39}
\expandafter\def\csname regval:M-e1-B-fa-rexvqa\endcsname{0.333}
\expandafter\def\csname regval:M-e1-B-fa-rexvqa.n\endcsname{13}
\expandafter\def\csname regval:M-e1-B-fa-rexvqa.d\endcsname{39}
\expandafter\def\csname regval:M-e1-B-answered-rexvqa\endcsname{0.667}
\expandafter\def\csname regval:M-e1-B-answered-rexvqa.n\endcsname{26}
\expandafter\def\csname regval:M-e1-B-answered-rexvqa.d\endcsname{39}
\expandafter\def\csname regval:M-e1-B-fa-textvqa\endcsname{0.000}
\expandafter\def\csname regval:M-e1-B-fa-textvqa.n\endcsname{0}
\expandafter\def\csname regval:M-e1-B-fa-textvqa.d\endcsname{39}
\expandafter\def\csname regval:M-e1-B-answered-textvqa\endcsname{1.000}
\expandafter\def\csname regval:M-e1-B-answered-textvqa.n\endcsname{39}
\expandafter\def\csname regval:M-e1-B-answered-textvqa.d\endcsname{39}
\expandafter\def\csname regval:M-e1-B-fa-video_mmmu\endcsname{0.000}
\expandafter\def\csname regval:M-e1-B-fa-video_mmmu.n\endcsname{0}
\expandafter\def\csname regval:M-e1-B-fa-video_mmmu.d\endcsname{39}
\expandafter\def\csname regval:M-e1-B-answered-video_mmmu\endcsname{1.000}
\expandafter\def\csname regval:M-e1-B-answered-video_mmmu.n\endcsname{39}
\expandafter\def\csname regval:M-e1-B-answered-video_mmmu.d\endcsname{39}
\expandafter\def\csname regval:M-e2-B-fa-chest_vqa\endcsname{0.641}
\expandafter\def\csname regval:M-e2-B-fa-chest_vqa.n\endcsname{25}
\expandafter\def\csname regval:M-e2-B-fa-chest_vqa.d\endcsname{39}
\expandafter\def\csname regval:M-e2-B-answered-chest_vqa\endcsname{0.359}
\expandafter\def\csname regval:M-e2-B-answered-chest_vqa.n\endcsname{14}
\expandafter\def\csname regval:M-e2-B-answered-chest_vqa.d\endcsname{39}
\expandafter\def\csname regval:M-e2-B-fa-docvqa\endcsname{0.077}
\expandafter\def\csname regval:M-e2-B-fa-docvqa.n\endcsname{3}
\expandafter\def\csname regval:M-e2-B-fa-docvqa.d\endcsname{39}
\expandafter\def\csname regval:M-e2-B-answered-docvqa\endcsname{0.923}
\expandafter\def\csname regval:M-e2-B-answered-docvqa.n\endcsname{36}
\expandafter\def\csname regval:M-e2-B-answered-docvqa.d\endcsname{39}
\expandafter\def\csname regval:M-e2-B-fa-infovqa\endcsname{0.154}
\expandafter\def\csname regval:M-e2-B-fa-infovqa.n\endcsname{6}
\expandafter\def\csname regval:M-e2-B-fa-infovqa.d\endcsname{39}
\expandafter\def\csname regval:M-e2-B-answered-infovqa\endcsname{0.846}
\expandafter\def\csname regval:M-e2-B-answered-infovqa.n\endcsname{33}
\expandafter\def\csname regval:M-e2-B-answered-infovqa.d\endcsname{39}
\expandafter\def\csname regval:M-e2-B-fa-medxpertqa_mm\endcsname{0.282}
\expandafter\def\csname regval:M-e2-B-fa-medxpertqa_mm.n\endcsname{11}
\expandafter\def\csname regval:M-e2-B-fa-medxpertqa_mm.d\endcsname{39}
\expandafter\def\csname regval:M-e2-B-answered-medxpertqa_mm\endcsname{0.718}
\expandafter\def\csname regval:M-e2-B-answered-medxpertqa_mm.n\endcsname{28}
\expandafter\def\csname regval:M-e2-B-answered-medxpertqa_mm.d\endcsname{39}
\expandafter\def\csname regval:M-e2-B-fa-microvqa\endcsname{0.000}
\expandafter\def\csname regval:M-e2-B-fa-microvqa.n\endcsname{0}
\expandafter\def\csname regval:M-e2-B-fa-microvqa.d\endcsname{39}
\expandafter\def\csname regval:M-e2-B-answered-microvqa\endcsname{1.000}
\expandafter\def\csname regval:M-e2-B-answered-microvqa.n\endcsname{39}
\expandafter\def\csname regval:M-e2-B-answered-microvqa.d\endcsname{39}
\expandafter\def\csname regval:M-e2-B-fa-mmmu_pro\endcsname{0.128}
\expandafter\def\csname regval:M-e2-B-fa-mmmu_pro.n\endcsname{5}
\expandafter\def\csname regval:M-e2-B-fa-mmmu_pro.d\endcsname{39}
\expandafter\def\csname regval:M-e2-B-answered-mmmu_pro\endcsname{0.872}
\expandafter\def\csname regval:M-e2-B-answered-mmmu_pro.n\endcsname{34}
\expandafter\def\csname regval:M-e2-B-answered-mmmu_pro.d\endcsname{39}
\expandafter\def\csname regval:M-e2-B-fa-pathvqa\endcsname{0.615}
\expandafter\def\csname regval:M-e2-B-fa-pathvqa.n\endcsname{24}
\expandafter\def\csname regval:M-e2-B-fa-pathvqa.d\endcsname{39}
\expandafter\def\csname regval:M-e2-B-answered-pathvqa\endcsname{0.385}
\expandafter\def\csname regval:M-e2-B-answered-pathvqa.n\endcsname{15}
\expandafter\def\csname regval:M-e2-B-answered-pathvqa.d\endcsname{39}
\expandafter\def\csname regval:M-e2-B-fa-rexvqa\endcsname{0.179}
\expandafter\def\csname regval:M-e2-B-fa-rexvqa.n\endcsname{7}
\expandafter\def\csname regval:M-e2-B-fa-rexvqa.d\endcsname{39}
\expandafter\def\csname regval:M-e2-B-answered-rexvqa\endcsname{0.821}
\expandafter\def\csname regval:M-e2-B-answered-rexvqa.n\endcsname{32}
\expandafter\def\csname regval:M-e2-B-answered-rexvqa.d\endcsname{39}
\expandafter\def\csname regval:M-e2-B-fa-textvqa\endcsname{0.128}
\expandafter\def\csname regval:M-e2-B-fa-textvqa.n\endcsname{5}
\expandafter\def\csname regval:M-e2-B-fa-textvqa.d\endcsname{39}
\expandafter\def\csname regval:M-e2-B-answered-textvqa\endcsname{0.872}
\expandafter\def\csname regval:M-e2-B-answered-textvqa.n\endcsname{34}
\expandafter\def\csname regval:M-e2-B-answered-textvqa.d\endcsname{39}
\expandafter\def\csname regval:M-e2-B-fa-video_mmmu\endcsname{0.051}
\expandafter\def\csname regval:M-e2-B-fa-video_mmmu.n\endcsname{2}
\expandafter\def\csname regval:M-e2-B-fa-video_mmmu.d\endcsname{39}
\expandafter\def\csname regval:M-e2-B-answered-video_mmmu\endcsname{0.949}
\expandafter\def\csname regval:M-e2-B-answered-video_mmmu.n\endcsname{37}
\expandafter\def\csname regval:M-e2-B-answered-video_mmmu.d\endcsname{39}
\expandafter\def\csname regval:M-e3-B-fa-chest_vqa\endcsname{0.205}
\expandafter\def\csname regval:M-e3-B-fa-chest_vqa.n\endcsname{8}
\expandafter\def\csname regval:M-e3-B-fa-chest_vqa.d\endcsname{39}
\expandafter\def\csname regval:M-e3-B-answered-chest_vqa\endcsname{0.795}
\expandafter\def\csname regval:M-e3-B-answered-chest_vqa.n\endcsname{31}
\expandafter\def\csname regval:M-e3-B-answered-chest_vqa.d\endcsname{39}
\expandafter\def\csname regval:M-e3-B-fa-docvqa\endcsname{0.026}
\expandafter\def\csname regval:M-e3-B-fa-docvqa.n\endcsname{1}
\expandafter\def\csname regval:M-e3-B-fa-docvqa.d\endcsname{39}
\expandafter\def\csname regval:M-e3-B-answered-docvqa\endcsname{0.974}
\expandafter\def\csname regval:M-e3-B-answered-docvqa.n\endcsname{38}
\expandafter\def\csname regval:M-e3-B-answered-docvqa.d\endcsname{39}
\expandafter\def\csname regval:M-e3-B-fa-infovqa\endcsname{0.077}
\expandafter\def\csname regval:M-e3-B-fa-infovqa.n\endcsname{3}
\expandafter\def\csname regval:M-e3-B-fa-infovqa.d\endcsname{39}
\expandafter\def\csname regval:M-e3-B-answered-infovqa\endcsname{0.923}
\expandafter\def\csname regval:M-e3-B-answered-infovqa.n\endcsname{36}
\expandafter\def\csname regval:M-e3-B-answered-infovqa.d\endcsname{39}
\expandafter\def\csname regval:M-e3-B-fa-medxpertqa_mm\endcsname{0.128}
\expandafter\def\csname regval:M-e3-B-fa-medxpertqa_mm.n\endcsname{5}
\expandafter\def\csname regval:M-e3-B-fa-medxpertqa_mm.d\endcsname{39}
\expandafter\def\csname regval:M-e3-B-answered-medxpertqa_mm\endcsname{0.872}
\expandafter\def\csname regval:M-e3-B-answered-medxpertqa_mm.n\endcsname{34}
\expandafter\def\csname regval:M-e3-B-answered-medxpertqa_mm.d\endcsname{39}
\expandafter\def\csname regval:M-e3-B-fa-microvqa\endcsname{0.077}
\expandafter\def\csname regval:M-e3-B-fa-microvqa.n\endcsname{3}
\expandafter\def\csname regval:M-e3-B-fa-microvqa.d\endcsname{39}
\expandafter\def\csname regval:M-e3-B-answered-microvqa\endcsname{0.923}
\expandafter\def\csname regval:M-e3-B-answered-microvqa.n\endcsname{36}
\expandafter\def\csname regval:M-e3-B-answered-microvqa.d\endcsname{39}
\expandafter\def\csname regval:M-e3-B-fa-mmmu_pro\endcsname{0.000}
\expandafter\def\csname regval:M-e3-B-fa-mmmu_pro.n\endcsname{0}
\expandafter\def\csname regval:M-e3-B-fa-mmmu_pro.d\endcsname{39}
\expandafter\def\csname regval:M-e3-B-answered-mmmu_pro\endcsname{1.000}
\expandafter\def\csname regval:M-e3-B-answered-mmmu_pro.n\endcsname{39}
\expandafter\def\csname regval:M-e3-B-answered-mmmu_pro.d\endcsname{39}
\expandafter\def\csname regval:M-e3-B-fa-pathvqa\endcsname{0.641}
\expandafter\def\csname regval:M-e3-B-fa-pathvqa.n\endcsname{25}
\expandafter\def\csname regval:M-e3-B-fa-pathvqa.d\endcsname{39}
\expandafter\def\csname regval:M-e3-B-answered-pathvqa\endcsname{0.359}
\expandafter\def\csname regval:M-e3-B-answered-pathvqa.n\endcsname{14}
\expandafter\def\csname regval:M-e3-B-answered-pathvqa.d\endcsname{39}
\expandafter\def\csname regval:M-e3-B-fa-rexvqa\endcsname{0.282}
\expandafter\def\csname regval:M-e3-B-fa-rexvqa.n\endcsname{11}
\expandafter\def\csname regval:M-e3-B-fa-rexvqa.d\endcsname{39}
\expandafter\def\csname regval:M-e3-B-answered-rexvqa\endcsname{0.718}
\expandafter\def\csname regval:M-e3-B-answered-rexvqa.n\endcsname{28}
\expandafter\def\csname regval:M-e3-B-answered-rexvqa.d\endcsname{39}
\expandafter\def\csname regval:M-e3-B-fa-textvqa\endcsname{0.077}
\expandafter\def\csname regval:M-e3-B-fa-textvqa.n\endcsname{3}
\expandafter\def\csname regval:M-e3-B-fa-textvqa.d\endcsname{39}
\expandafter\def\csname regval:M-e3-B-answered-textvqa\endcsname{0.923}
\expandafter\def\csname regval:M-e3-B-answered-textvqa.n\endcsname{36}
\expandafter\def\csname regval:M-e3-B-answered-textvqa.d\endcsname{39}
\expandafter\def\csname regval:M-e3-B-fa-video_mmmu\endcsname{0.000}
\expandafter\def\csname regval:M-e3-B-fa-video_mmmu.n\endcsname{0}
\expandafter\def\csname regval:M-e3-B-fa-video_mmmu.d\endcsname{39}
\expandafter\def\csname regval:M-e3-B-answered-video_mmmu\endcsname{1.000}
\expandafter\def\csname regval:M-e3-B-answered-video_mmmu.n\endcsname{39}
\expandafter\def\csname regval:M-e3-B-answered-video_mmmu.d\endcsname{39}
\expandafter\def\csname regval:M-e4-B-fa-chest_vqa\endcsname{0.359}
\expandafter\def\csname regval:M-e4-B-fa-chest_vqa.n\endcsname{14}
\expandafter\def\csname regval:M-e4-B-fa-chest_vqa.d\endcsname{39}
\expandafter\def\csname regval:M-e4-B-answered-chest_vqa\endcsname{0.641}
\expandafter\def\csname regval:M-e4-B-answered-chest_vqa.n\endcsname{25}
\expandafter\def\csname regval:M-e4-B-answered-chest_vqa.d\endcsname{39}
\expandafter\def\csname regval:M-e4-B-fa-docvqa\endcsname{0.231}
\expandafter\def\csname regval:M-e4-B-fa-docvqa.n\endcsname{9}
\expandafter\def\csname regval:M-e4-B-fa-docvqa.d\endcsname{39}
\expandafter\def\csname regval:M-e4-B-answered-docvqa\endcsname{0.769}
\expandafter\def\csname regval:M-e4-B-answered-docvqa.n\endcsname{30}
\expandafter\def\csname regval:M-e4-B-answered-docvqa.d\endcsname{39}
\expandafter\def\csname regval:M-e4-B-fa-infovqa\endcsname{0.179}
\expandafter\def\csname regval:M-e4-B-fa-infovqa.n\endcsname{7}
\expandafter\def\csname regval:M-e4-B-fa-infovqa.d\endcsname{39}
\expandafter\def\csname regval:M-e4-B-answered-infovqa\endcsname{0.821}
\expandafter\def\csname regval:M-e4-B-answered-infovqa.n\endcsname{32}
\expandafter\def\csname regval:M-e4-B-answered-infovqa.d\endcsname{39}
\expandafter\def\csname regval:M-e4-B-fa-medxpertqa_mm\endcsname{0.308}
\expandafter\def\csname regval:M-e4-B-fa-medxpertqa_mm.n\endcsname{12}
\expandafter\def\csname regval:M-e4-B-fa-medxpertqa_mm.d\endcsname{39}
\expandafter\def\csname regval:M-e4-B-answered-medxpertqa_mm\endcsname{0.692}
\expandafter\def\csname regval:M-e4-B-answered-medxpertqa_mm.n\endcsname{27}
\expandafter\def\csname regval:M-e4-B-answered-medxpertqa_mm.d\endcsname{39}
\expandafter\def\csname regval:M-e4-B-fa-microvqa\endcsname{0.128}
\expandafter\def\csname regval:M-e4-B-fa-microvqa.n\endcsname{5}
\expandafter\def\csname regval:M-e4-B-fa-microvqa.d\endcsname{39}
\expandafter\def\csname regval:M-e4-B-answered-microvqa\endcsname{0.872}
\expandafter\def\csname regval:M-e4-B-answered-microvqa.n\endcsname{34}
\expandafter\def\csname regval:M-e4-B-answered-microvqa.d\endcsname{39}
\expandafter\def\csname regval:M-e4-B-fa-mmmu_pro\endcsname{0.026}
\expandafter\def\csname regval:M-e4-B-fa-mmmu_pro.n\endcsname{1}
\expandafter\def\csname regval:M-e4-B-fa-mmmu_pro.d\endcsname{39}
\expandafter\def\csname regval:M-e4-B-answered-mmmu_pro\endcsname{0.974}
\expandafter\def\csname regval:M-e4-B-answered-mmmu_pro.n\endcsname{38}
\expandafter\def\csname regval:M-e4-B-answered-mmmu_pro.d\endcsname{39}
\expandafter\def\csname regval:M-e4-B-fa-pathvqa\endcsname{0.513}
\expandafter\def\csname regval:M-e4-B-fa-pathvqa.n\endcsname{20}
\expandafter\def\csname regval:M-e4-B-fa-pathvqa.d\endcsname{39}
\expandafter\def\csname regval:M-e4-B-answered-pathvqa\endcsname{0.487}
\expandafter\def\csname regval:M-e4-B-answered-pathvqa.n\endcsname{19}
\expandafter\def\csname regval:M-e4-B-answered-pathvqa.d\endcsname{39}
\expandafter\def\csname regval:M-e4-B-fa-rexvqa\endcsname{0.000}
\expandafter\def\csname regval:M-e4-B-fa-rexvqa.n\endcsname{0}
\expandafter\def\csname regval:M-e4-B-fa-rexvqa.d\endcsname{39}
\expandafter\def\csname regval:M-e4-B-answered-rexvqa\endcsname{1.000}
\expandafter\def\csname regval:M-e4-B-answered-rexvqa.n\endcsname{39}
\expandafter\def\csname regval:M-e4-B-answered-rexvqa.d\endcsname{39}
\expandafter\def\csname regval:M-e4-B-fa-textvqa\endcsname{0.026}
\expandafter\def\csname regval:M-e4-B-fa-textvqa.n\endcsname{1}
\expandafter\def\csname regval:M-e4-B-fa-textvqa.d\endcsname{39}
\expandafter\def\csname regval:M-e4-B-answered-textvqa\endcsname{0.974}
\expandafter\def\csname regval:M-e4-B-answered-textvqa.n\endcsname{38}
\expandafter\def\csname regval:M-e4-B-answered-textvqa.d\endcsname{39}
\expandafter\def\csname regval:M-e4-B-fa-video_mmmu\endcsname{0.026}
\expandafter\def\csname regval:M-e4-B-fa-video_mmmu.n\endcsname{1}
\expandafter\def\csname regval:M-e4-B-fa-video_mmmu.d\endcsname{39}
\expandafter\def\csname regval:M-e4-B-answered-video_mmmu\endcsname{0.974}
\expandafter\def\csname regval:M-e4-B-answered-video_mmmu.n\endcsname{38}
\expandafter\def\csname regval:M-e4-B-answered-video_mmmu.d\endcsname{39}
\expandafter\def\csname regval:M-e1-U-id-chest_vqa\endcsname{0.564}
\expandafter\def\csname regval:M-e1-U-id-chest_vqa.n\endcsname{22}
\expandafter\def\csname regval:M-e1-U-id-chest_vqa.d\endcsname{39}
\expandafter\def\csname regval:M-e1-U-id-docvqa\endcsname{0.077}
\expandafter\def\csname regval:M-e1-U-id-docvqa.n\endcsname{3}
\expandafter\def\csname regval:M-e1-U-id-docvqa.d\endcsname{39}
\expandafter\def\csname regval:M-e1-U-id-infovqa\endcsname{0.000}
\expandafter\def\csname regval:M-e1-U-id-infovqa.n\endcsname{0}
\expandafter\def\csname regval:M-e1-U-id-infovqa.d\endcsname{39}
\expandafter\def\csname regval:M-e1-U-id-medxpertqa_mm\endcsname{0.231}
\expandafter\def\csname regval:M-e1-U-id-medxpertqa_mm.n\endcsname{9}
\expandafter\def\csname regval:M-e1-U-id-medxpertqa_mm.d\endcsname{39}
\expandafter\def\csname regval:M-e1-U-id-microvqa\endcsname{0.308}
\expandafter\def\csname regval:M-e1-U-id-microvqa.n\endcsname{12}
\expandafter\def\csname regval:M-e1-U-id-microvqa.d\endcsname{39}
\expandafter\def\csname regval:M-e1-U-id-mmmu_pro\endcsname{0.051}
\expandafter\def\csname regval:M-e1-U-id-mmmu_pro.n\endcsname{2}
\expandafter\def\csname regval:M-e1-U-id-mmmu_pro.d\endcsname{39}
\expandafter\def\csname regval:M-e1-U-id-pathvqa\endcsname{0.333}
\expandafter\def\csname regval:M-e1-U-id-pathvqa.n\endcsname{13}
\expandafter\def\csname regval:M-e1-U-id-pathvqa.d\endcsname{39}
\expandafter\def\csname regval:M-e1-U-id-rexvqa\endcsname{0.590}
\expandafter\def\csname regval:M-e1-U-id-rexvqa.n\endcsname{23}
\expandafter\def\csname regval:M-e1-U-id-rexvqa.d\endcsname{39}
\expandafter\def\csname regval:M-e1-U-id-textvqa\endcsname{0.051}
\expandafter\def\csname regval:M-e1-U-id-textvqa.n\endcsname{2}
\expandafter\def\csname regval:M-e1-U-id-textvqa.d\endcsname{39}
\expandafter\def\csname regval:M-e1-U-id-video_mmmu\endcsname{0.128}
\expandafter\def\csname regval:M-e1-U-id-video_mmmu.n\endcsname{5}
\expandafter\def\csname regval:M-e1-U-id-video_mmmu.d\endcsname{39}
\expandafter\def\csname regval:M-e1-H6-spearman\endcsname{0.820}
\expandafter\def\csname regval:M-e1-H6-spearman.d\endcsname{10}
\expandafter\def\csname regval:M-e3-U-id-chest_vqa\endcsname{0.590}
\expandafter\def\csname regval:M-e3-U-id-chest_vqa.n\endcsname{23}
\expandafter\def\csname regval:M-e3-U-id-chest_vqa.d\endcsname{39}
\expandafter\def\csname regval:M-e3-U-id-docvqa\endcsname{0.077}
\expandafter\def\csname regval:M-e3-U-id-docvqa.n\endcsname{3}
\expandafter\def\csname regval:M-e3-U-id-docvqa.d\endcsname{39}
\expandafter\def\csname regval:M-e3-U-id-infovqa\endcsname{0.026}
\expandafter\def\csname regval:M-e3-U-id-infovqa.n\endcsname{1}
\expandafter\def\csname regval:M-e3-U-id-infovqa.d\endcsname{39}
\expandafter\def\csname regval:M-e3-U-id-medxpertqa_mm\endcsname{0.179}
\expandafter\def\csname regval:M-e3-U-id-medxpertqa_mm.n\endcsname{7}
\expandafter\def\csname regval:M-e3-U-id-medxpertqa_mm.d\endcsname{39}
\expandafter\def\csname regval:M-e3-U-id-microvqa\endcsname{0.282}
\expandafter\def\csname regval:M-e3-U-id-microvqa.n\endcsname{11}
\expandafter\def\csname regval:M-e3-U-id-microvqa.d\endcsname{39}
\expandafter\def\csname regval:M-e3-U-id-mmmu_pro\endcsname{0.077}
\expandafter\def\csname regval:M-e3-U-id-mmmu_pro.n\endcsname{3}
\expandafter\def\csname regval:M-e3-U-id-mmmu_pro.d\endcsname{39}
\expandafter\def\csname regval:M-e3-U-id-pathvqa\endcsname{0.282}
\expandafter\def\csname regval:M-e3-U-id-pathvqa.n\endcsname{11}
\expandafter\def\csname regval:M-e3-U-id-pathvqa.d\endcsname{39}
\expandafter\def\csname regval:M-e3-U-id-rexvqa\endcsname{0.615}
\expandafter\def\csname regval:M-e3-U-id-rexvqa.n\endcsname{24}
\expandafter\def\csname regval:M-e3-U-id-rexvqa.d\endcsname{39}
\expandafter\def\csname regval:M-e3-U-id-textvqa\endcsname{0.000}
\expandafter\def\csname regval:M-e3-U-id-textvqa.n\endcsname{0}
\expandafter\def\csname regval:M-e3-U-id-textvqa.d\endcsname{39}
\expandafter\def\csname regval:M-e3-U-id-video_mmmu\endcsname{0.231}
\expandafter\def\csname regval:M-e3-U-id-video_mmmu.n\endcsname{9}
\expandafter\def\csname regval:M-e3-U-id-video_mmmu.d\endcsname{39}
\expandafter\def\csname regval:M-e3-H6-spearman\endcsname{0.579}
\expandafter\def\csname regval:M-e3-H6-spearman.d\endcsname{10}
\expandafter\def\csname regval:M-contrast-U-e0-e1\endcsname{-0.662}
\expandafter\def\csname regval:M-contrast-U-e0-e1.d\endcsname{1{,}170}
\expandafter\def\csname regval:M-contrast-U-e0-e1.ci\endcsname{[-0.692, -0.632]}
\expandafter\def\csname regval:M-contrast-B-e0-e1\endcsname{-0.343}
\expandafter\def\csname regval:M-contrast-B-e0-e1.d\endcsname{1{,}170}
\expandafter\def\csname regval:M-contrast-B-e0-e1.ci\endcsname{[-0.380, -0.308]}
\expandafter\def\csname regval:M-contrast-U-e0-e3\endcsname{-0.665}
\expandafter\def\csname regval:M-contrast-U-e0-e3.d\endcsname{1{,}170}
\expandafter\def\csname regval:M-contrast-U-e0-e3.ci\endcsname{[-0.696, -0.636]}
\expandafter\def\csname regval:M-contrast-B-e0-e3\endcsname{-0.344}
\expandafter\def\csname regval:M-contrast-B-e0-e3.d\endcsname{1{,}170}
\expandafter\def\csname regval:M-contrast-B-e0-e3.ci\endcsname{[-0.381, -0.311]}
\expandafter\def\csname regval:M-contrast-U-e1-e3\endcsname{-0.003}
\expandafter\def\csname regval:M-contrast-U-e1-e3.d\endcsname{1{,}170}
\expandafter\def\csname regval:M-contrast-U-e1-e3.ci\endcsname{[-0.018, 0.014]}
\expandafter\def\csname regval:M-contrast-B-e1-e3\endcsname{-0.002}
\expandafter\def\csname regval:M-contrast-B-e1-e3.d\endcsname{1{,}170}
\expandafter\def\csname regval:M-contrast-B-e1-e3.ci\endcsname{[-0.016, 0.014]}
\expandafter\def\csname regval:M-contrast-U-e0-e2\endcsname{0.100}
\expandafter\def\csname regval:M-contrast-U-e0-e2.d\endcsname{1{,}170}
\expandafter\def\csname regval:M-contrast-U-e0-e2.ci\endcsname{[0.073, 0.127]}
\expandafter\def\csname regval:M-contrast-B-e0-e2\endcsname{0.368}
\expandafter\def\csname regval:M-contrast-B-e0-e2.d\endcsname{1{,}170}
\expandafter\def\csname regval:M-contrast-B-e0-e2.ci\endcsname{[0.338, 0.397]}
\expandafter\def\csname regval:M-contrast-U-e1-e4\endcsname{0.518}
\expandafter\def\csname regval:M-contrast-U-e1-e4.d\endcsname{1{,}170}
\expandafter\def\csname regval:M-contrast-U-e1-e4.ci\endcsname{[0.497, 0.538]}
\expandafter\def\csname regval:M-contrast-B-e1-e4\endcsname{0.402}
\expandafter\def\csname regval:M-contrast-B-e1-e4.d\endcsname{1{,}170}
\expandafter\def\csname regval:M-contrast-B-e1-e4.ci\endcsname{[0.376, 0.425]}
\expandafter\def\csname regval:M-contrast-U-e0-e5\endcsname{-0.041}
\expandafter\def\csname regval:M-contrast-U-e0-e5.d\endcsname{1{,}170}
\expandafter\def\csname regval:M-contrast-U-e0-e5.ci\endcsname{[-0.068, -0.014]}
\expandafter\def\csname regval:M-e0-dec-real-exact\endcsname{0}
\expandafter\def\csname regval:M-e0-dec-real-exact.n\endcsname{0}
\expandafter\def\csname regval:M-e0-dec-real-exact.d\endcsname{390}
\expandafter\def\csname regval:M-e0-dec-real-near\endcsname{0}
\expandafter\def\csname regval:M-e0-dec-real-near.n\endcsname{0}
\expandafter\def\csname regval:M-e0-dec-real-near.d\endcsname{390}
\expandafter\def\csname regval:M-e0-dec-real-refusalpattern\endcsname{58}
\expandafter\def\csname regval:M-e0-dec-real-refusalpattern.n\endcsname{58}
\expandafter\def\csname regval:M-e0-dec-real-refusalpattern.d\endcsname{390}
\expandafter\def\csname regval:M-e0-dec-real-empty\endcsname{0}
\expandafter\def\csname regval:M-e0-dec-real-empty.n\endcsname{0}
\expandafter\def\csname regval:M-e0-dec-real-empty.d\endcsname{390}
\expandafter\def\csname regval:M-e0-dec-real-answer\endcsname{332}
\expandafter\def\csname regval:M-e0-dec-real-answer.n\endcsname{332}
\expandafter\def\csname regval:M-e0-dec-real-answer.d\endcsname{390}
\expandafter\def\csname regval:M-e0-dec-blank-exact\endcsname{0}
\expandafter\def\csname regval:M-e0-dec-blank-exact.n\endcsname{0}
\expandafter\def\csname regval:M-e0-dec-blank-exact.d\endcsname{390}
\expandafter\def\csname regval:M-e0-dec-blank-near\endcsname{0}
\expandafter\def\csname regval:M-e0-dec-blank-near.n\endcsname{0}
\expandafter\def\csname regval:M-e0-dec-blank-near.d\endcsname{390}
\expandafter\def\csname regval:M-e0-dec-blank-refusalpattern\endcsname{141}
\expandafter\def\csname regval:M-e0-dec-blank-refusalpattern.n\endcsname{141}
\expandafter\def\csname regval:M-e0-dec-blank-refusalpattern.d\endcsname{390}
\expandafter\def\csname regval:M-e0-dec-blank-empty\endcsname{0}
\expandafter\def\csname regval:M-e0-dec-blank-empty.n\endcsname{0}
\expandafter\def\csname regval:M-e0-dec-blank-empty.d\endcsname{390}
\expandafter\def\csname regval:M-e0-dec-blank-answer\endcsname{249}
\expandafter\def\csname regval:M-e0-dec-blank-answer.n\endcsname{249}
\expandafter\def\csname regval:M-e0-dec-blank-answer.d\endcsname{390}
\expandafter\def\csname regval:M-e0-dec-id_unrelated-exact\endcsname{0}
\expandafter\def\csname regval:M-e0-dec-id_unrelated-exact.n\endcsname{0}
\expandafter\def\csname regval:M-e0-dec-id_unrelated-exact.d\endcsname{390}
\expandafter\def\csname regval:M-e0-dec-id_unrelated-near\endcsname{0}
\expandafter\def\csname regval:M-e0-dec-id_unrelated-near.n\endcsname{0}
\expandafter\def\csname regval:M-e0-dec-id_unrelated-near.d\endcsname{390}
\expandafter\def\csname regval:M-e0-dec-id_unrelated-refusalpattern\endcsname{159}
\expandafter\def\csname regval:M-e0-dec-id_unrelated-refusalpattern.n\endcsname{159}
\expandafter\def\csname regval:M-e0-dec-id_unrelated-refusalpattern.d\endcsname{390}
\expandafter\def\csname regval:M-e0-dec-id_unrelated-empty\endcsname{0}
\expandafter\def\csname regval:M-e0-dec-id_unrelated-empty.n\endcsname{0}
\expandafter\def\csname regval:M-e0-dec-id_unrelated-empty.d\endcsname{390}
\expandafter\def\csname regval:M-e0-dec-id_unrelated-answer\endcsname{231}
\expandafter\def\csname regval:M-e0-dec-id_unrelated-answer.n\endcsname{231}
\expandafter\def\csname regval:M-e0-dec-id_unrelated-answer.d\endcsname{390}
\expandafter\def\csname regval:M-e0-dec-od_unrelated-exact\endcsname{0}
\expandafter\def\csname regval:M-e0-dec-od_unrelated-exact.n\endcsname{0}
\expandafter\def\csname regval:M-e0-dec-od_unrelated-exact.d\endcsname{390}
\expandafter\def\csname regval:M-e0-dec-od_unrelated-near\endcsname{0}
\expandafter\def\csname regval:M-e0-dec-od_unrelated-near.n\endcsname{0}
\expandafter\def\csname regval:M-e0-dec-od_unrelated-near.d\endcsname{390}
\expandafter\def\csname regval:M-e0-dec-od_unrelated-refusalpattern\endcsname{191}
\expandafter\def\csname regval:M-e0-dec-od_unrelated-refusalpattern.n\endcsname{191}
\expandafter\def\csname regval:M-e0-dec-od_unrelated-refusalpattern.d\endcsname{390}
\expandafter\def\csname regval:M-e0-dec-od_unrelated-empty\endcsname{0}
\expandafter\def\csname regval:M-e0-dec-od_unrelated-empty.n\endcsname{0}
\expandafter\def\csname regval:M-e0-dec-od_unrelated-empty.d\endcsname{390}
\expandafter\def\csname regval:M-e0-dec-od_unrelated-answer\endcsname{199}
\expandafter\def\csname regval:M-e0-dec-od_unrelated-answer.n\endcsname{199}
\expandafter\def\csname regval:M-e0-dec-od_unrelated-answer.d\endcsname{390}
\expandafter\def\csname regval:M-e0-dec-F\endcsname{0.000}
\expandafter\def\csname regval:M-e0-dec-F.n\endcsname{0}
\expandafter\def\csname regval:M-e0-dec-F.d\endcsname{390}
\expandafter\def\csname regval:M-e0-dec-F.ci\endcsname{[0.000, 0.010]}
\expandafter\def\csname regval:M-e0-dec-M\endcsname{1.000}
\expandafter\def\csname regval:M-e0-dec-M.n\endcsname{1{,}170}
\expandafter\def\csname regval:M-e0-dec-M.d\endcsname{1{,}170}
\expandafter\def\csname regval:M-e0-dec-M.ci\endcsname{[0.997, 1.000]}
\expandafter\def\csname regval:M-e0-dec-sum\endcsname{0.750}
\expandafter\def\csname regval:M-e0-dec-sum.d\endcsname{1{,}560}
\expandafter\def\csname regval:M-e0-dec-sum.ci\endcsname{[0.750, 0.750]}
\expandafter\def\csname regval:M-e0-dec-se\endcsname{0.011}
\expandafter\def\csname regval:M-e0-dec-se.d\endcsname{1{,}560}
\expandafter\def\csname regval:M-e0-dec-boot-sd\endcsname{0.000}
\expandafter\def\csname regval:M-e0-dec-boot-sd.d\endcsname{390}
\expandafter\def\csname regval:M-e0-dec-exact-sum\endcsname{0.750}
\expandafter\def\csname regval:M-e0-dec-exact-sum.d\endcsname{1{,}560}
\expandafter\def\csname regval:M-e0-dec-exact-sum.ci\endcsname{[0.750, 0.750]}
\expandafter\def\csname regval:M-e0-dec-exact-F\endcsname{0.000}
\expandafter\def\csname regval:M-e0-dec-exact-F.n\endcsname{0}
\expandafter\def\csname regval:M-e0-dec-exact-F.d\endcsname{390}
\expandafter\def\csname regval:M-e0-dec-exact-M\endcsname{1.000}
\expandafter\def\csname regval:M-e0-dec-exact-M.n\endcsname{1{,}170}
\expandafter\def\csname regval:M-e0-dec-exact-M.d\endcsname{1{,}170}
\expandafter\def\csname regval:M-e0-dec-blank-answerrate\endcsname{1.000}
\expandafter\def\csname regval:M-e0-dec-blank-answerrate.n\endcsname{390}
\expandafter\def\csname regval:M-e0-dec-blank-answerrate.d\endcsname{390}
\expandafter\def\csname regval:M-e0-dec-id_unrelated-answerrate\endcsname{1.000}
\expandafter\def\csname regval:M-e0-dec-id_unrelated-answerrate.n\endcsname{390}
\expandafter\def\csname regval:M-e0-dec-id_unrelated-answerrate.d\endcsname{390}
\expandafter\def\csname regval:M-e0-dec-od_unrelated-answerrate\endcsname{1.000}
\expandafter\def\csname regval:M-e0-dec-od_unrelated-answerrate.n\endcsname{390}
\expandafter\def\csname regval:M-e0-dec-od_unrelated-answerrate.d\endcsname{390}
\expandafter\def\csname regval:M-e0-dec-agree-real\endcsname{383}
\expandafter\def\csname regval:M-e0-dec-agree-real.n\endcsname{383}
\expandafter\def\csname regval:M-e0-dec-agree-real.d\endcsname{390}
\expandafter\def\csname regval:M-e0-dec-agree-inv\endcsname{0}
\expandafter\def\csname regval:M-e0-dec-agree-inv.n\endcsname{0}
\expandafter\def\csname regval:M-e0-dec-agree-inv.d\endcsname{1}
\expandafter\def\csname regval:M-e0-dec-inv-declined\endcsname{0}
\expandafter\def\csname regval:M-e0-dec-inv-declined.n\endcsname{0}
\expandafter\def\csname regval:M-e0-dec-inv-declined.d\endcsname{1{,}170}
\expandafter\def\csname regval:M-e1-dec-real-exact\endcsname{49}
\expandafter\def\csname regval:M-e1-dec-real-exact.n\endcsname{49}
\expandafter\def\csname regval:M-e1-dec-real-exact.d\endcsname{390}
\expandafter\def\csname regval:M-e1-dec-real-near\endcsname{0}
\expandafter\def\csname regval:M-e1-dec-real-near.n\endcsname{0}
\expandafter\def\csname regval:M-e1-dec-real-near.d\endcsname{390}
\expandafter\def\csname regval:M-e1-dec-real-refusalpattern\endcsname{0}
\expandafter\def\csname regval:M-e1-dec-real-refusalpattern.n\endcsname{0}
\expandafter\def\csname regval:M-e1-dec-real-refusalpattern.d\endcsname{390}
\expandafter\def\csname regval:M-e1-dec-real-empty\endcsname{0}
\expandafter\def\csname regval:M-e1-dec-real-empty.n\endcsname{0}
\expandafter\def\csname regval:M-e1-dec-real-empty.d\endcsname{390}
\expandafter\def\csname regval:M-e1-dec-real-answer\endcsname{341}
\expandafter\def\csname regval:M-e1-dec-real-answer.n\endcsname{341}
\expandafter\def\csname regval:M-e1-dec-real-answer.d\endcsname{390}
\expandafter\def\csname regval:M-e1-dec-blank-exact\endcsname{389}
\expandafter\def\csname regval:M-e1-dec-blank-exact.n\endcsname{389}
\expandafter\def\csname regval:M-e1-dec-blank-exact.d\endcsname{390}
\expandafter\def\csname regval:M-e1-dec-blank-near\endcsname{0}
\expandafter\def\csname regval:M-e1-dec-blank-near.n\endcsname{0}
\expandafter\def\csname regval:M-e1-dec-blank-near.d\endcsname{390}
\expandafter\def\csname regval:M-e1-dec-blank-refusalpattern\endcsname{0}
\expandafter\def\csname regval:M-e1-dec-blank-refusalpattern.n\endcsname{0}
\expandafter\def\csname regval:M-e1-dec-blank-refusalpattern.d\endcsname{390}
\expandafter\def\csname regval:M-e1-dec-blank-empty\endcsname{0}
\expandafter\def\csname regval:M-e1-dec-blank-empty.n\endcsname{0}
\expandafter\def\csname regval:M-e1-dec-blank-empty.d\endcsname{390}
\expandafter\def\csname regval:M-e1-dec-blank-answer\endcsname{1}
\expandafter\def\csname regval:M-e1-dec-blank-answer.n\endcsname{1}
\expandafter\def\csname regval:M-e1-dec-blank-answer.d\endcsname{390}
\expandafter\def\csname regval:M-e1-dec-id_unrelated-exact\endcsname{299}
\expandafter\def\csname regval:M-e1-dec-id_unrelated-exact.n\endcsname{299}
\expandafter\def\csname regval:M-e1-dec-id_unrelated-exact.d\endcsname{390}
\expandafter\def\csname regval:M-e1-dec-id_unrelated-near\endcsname{0}
\expandafter\def\csname regval:M-e1-dec-id_unrelated-near.n\endcsname{0}
\expandafter\def\csname regval:M-e1-dec-id_unrelated-near.d\endcsname{390}
\expandafter\def\csname regval:M-e1-dec-id_unrelated-refusalpattern\endcsname{0}
\expandafter\def\csname regval:M-e1-dec-id_unrelated-refusalpattern.n\endcsname{0}
\expandafter\def\csname regval:M-e1-dec-id_unrelated-refusalpattern.d\endcsname{390}
\expandafter\def\csname regval:M-e1-dec-id_unrelated-empty\endcsname{0}
\expandafter\def\csname regval:M-e1-dec-id_unrelated-empty.n\endcsname{0}
\expandafter\def\csname regval:M-e1-dec-id_unrelated-empty.d\endcsname{390}
\expandafter\def\csname regval:M-e1-dec-id_unrelated-answer\endcsname{91}
\expandafter\def\csname regval:M-e1-dec-id_unrelated-answer.n\endcsname{91}
\expandafter\def\csname regval:M-e1-dec-id_unrelated-answer.d\endcsname{390}
\expandafter\def\csname regval:M-e1-dec-od_unrelated-exact\endcsname{368}
\expandafter\def\csname regval:M-e1-dec-od_unrelated-exact.n\endcsname{368}
\expandafter\def\csname regval:M-e1-dec-od_unrelated-exact.d\endcsname{390}
\expandafter\def\csname regval:M-e1-dec-od_unrelated-near\endcsname{1}
\expandafter\def\csname regval:M-e1-dec-od_unrelated-near.n\endcsname{1}
\expandafter\def\csname regval:M-e1-dec-od_unrelated-near.d\endcsname{390}
\expandafter\def\csname regval:M-e1-dec-od_unrelated-refusalpattern\endcsname{0}
\expandafter\def\csname regval:M-e1-dec-od_unrelated-refusalpattern.n\endcsname{0}
\expandafter\def\csname regval:M-e1-dec-od_unrelated-refusalpattern.d\endcsname{390}
\expandafter\def\csname regval:M-e1-dec-od_unrelated-empty\endcsname{0}
\expandafter\def\csname regval:M-e1-dec-od_unrelated-empty.n\endcsname{0}
\expandafter\def\csname regval:M-e1-dec-od_unrelated-empty.d\endcsname{390}
\expandafter\def\csname regval:M-e1-dec-od_unrelated-answer\endcsname{21}
\expandafter\def\csname regval:M-e1-dec-od_unrelated-answer.n\endcsname{21}
\expandafter\def\csname regval:M-e1-dec-od_unrelated-answer.d\endcsname{390}
\expandafter\def\csname regval:M-e1-dec-F\endcsname{0.126}
\expandafter\def\csname regval:M-e1-dec-F.n\endcsname{49}
\expandafter\def\csname regval:M-e1-dec-F.d\endcsname{390}
\expandafter\def\csname regval:M-e1-dec-F.ci\endcsname{[0.096, 0.162]}
\expandafter\def\csname regval:M-e1-dec-M\endcsname{0.097}
\expandafter\def\csname regval:M-e1-dec-M.n\endcsname{113}
\expandafter\def\csname regval:M-e1-dec-M.d\endcsname{1{,}170}
\expandafter\def\csname regval:M-e1-dec-M.ci\endcsname{[0.081, 0.115]}
\expandafter\def\csname regval:M-e1-dec-sum\endcsname{0.104}
\expandafter\def\csname regval:M-e1-dec-sum.d\endcsname{1{,}560}
\expandafter\def\csname regval:M-e1-dec-sum.ci\endcsname{[0.090, 0.118]}
\expandafter\def\csname regval:M-e1-dec-se\endcsname{0.008}
\expandafter\def\csname regval:M-e1-dec-se.d\endcsname{1{,}560}
\expandafter\def\csname regval:M-e1-dec-boot-sd\endcsname{0.007}
\expandafter\def\csname regval:M-e1-dec-boot-sd.d\endcsname{390}
\expandafter\def\csname regval:M-e1-dec-exact-sum\endcsname{0.104}
\expandafter\def\csname regval:M-e1-dec-exact-sum.d\endcsname{1{,}560}
\expandafter\def\csname regval:M-e1-dec-exact-sum.ci\endcsname{[0.091, 0.119]}
\expandafter\def\csname regval:M-e1-dec-exact-F\endcsname{0.126}
\expandafter\def\csname regval:M-e1-dec-exact-F.n\endcsname{49}
\expandafter\def\csname regval:M-e1-dec-exact-F.d\endcsname{390}
\expandafter\def\csname regval:M-e1-dec-exact-M\endcsname{0.097}
\expandafter\def\csname regval:M-e1-dec-exact-M.n\endcsname{114}
\expandafter\def\csname regval:M-e1-dec-exact-M.d\endcsname{1{,}170}
\expandafter\def\csname regval:M-e1-dec-blank-answerrate\endcsname{0.003}
\expandafter\def\csname regval:M-e1-dec-blank-answerrate.n\endcsname{1}
\expandafter\def\csname regval:M-e1-dec-blank-answerrate.d\endcsname{390}
\expandafter\def\csname regval:M-e1-dec-id_unrelated-answerrate\endcsname{0.233}
\expandafter\def\csname regval:M-e1-dec-id_unrelated-answerrate.n\endcsname{91}
\expandafter\def\csname regval:M-e1-dec-id_unrelated-answerrate.d\endcsname{390}
\expandafter\def\csname regval:M-e1-dec-od_unrelated-answerrate\endcsname{0.054}
\expandafter\def\csname regval:M-e1-dec-od_unrelated-answerrate.n\endcsname{21}
\expandafter\def\csname regval:M-e1-dec-od_unrelated-answerrate.d\endcsname{390}
\expandafter\def\csname regval:M-e1-dec-agree-real\endcsname{390}
\expandafter\def\csname regval:M-e1-dec-agree-real.n\endcsname{390}
\expandafter\def\csname regval:M-e1-dec-agree-real.d\endcsname{390}
\expandafter\def\csname regval:M-e1-dec-agree-inv\endcsname{1{,}057}
\expandafter\def\csname regval:M-e1-dec-agree-inv.n\endcsname{1{,}057}
\expandafter\def\csname regval:M-e1-dec-agree-inv.d\endcsname{1{,}057}
\expandafter\def\csname regval:M-e1-dec-inv-declined\endcsname{1{,}057}
\expandafter\def\csname regval:M-e1-dec-inv-declined.n\endcsname{1{,}057}
\expandafter\def\csname regval:M-e1-dec-inv-declined.d\endcsname{1{,}170}
\expandafter\def\csname regval:M-e2-dec-real-exact\endcsname{0}
\expandafter\def\csname regval:M-e2-dec-real-exact.n\endcsname{0}
\expandafter\def\csname regval:M-e2-dec-real-exact.d\endcsname{390}
\expandafter\def\csname regval:M-e2-dec-real-near\endcsname{0}
\expandafter\def\csname regval:M-e2-dec-real-near.n\endcsname{0}
\expandafter\def\csname regval:M-e2-dec-real-near.d\endcsname{390}
\expandafter\def\csname regval:M-e2-dec-real-refusalpattern\endcsname{23}
\expandafter\def\csname regval:M-e2-dec-real-refusalpattern.n\endcsname{23}
\expandafter\def\csname regval:M-e2-dec-real-refusalpattern.d\endcsname{390}
\expandafter\def\csname regval:M-e2-dec-real-empty\endcsname{0}
\expandafter\def\csname regval:M-e2-dec-real-empty.n\endcsname{0}
\expandafter\def\csname regval:M-e2-dec-real-empty.d\endcsname{390}
\expandafter\def\csname regval:M-e2-dec-real-answer\endcsname{367}
\expandafter\def\csname regval:M-e2-dec-real-answer.n\endcsname{367}
\expandafter\def\csname regval:M-e2-dec-real-answer.d\endcsname{390}
\expandafter\def\csname regval:M-e2-dec-blank-exact\endcsname{0}
\expandafter\def\csname regval:M-e2-dec-blank-exact.n\endcsname{0}
\expandafter\def\csname regval:M-e2-dec-blank-exact.d\endcsname{390}
\expandafter\def\csname regval:M-e2-dec-blank-near\endcsname{0}
\expandafter\def\csname regval:M-e2-dec-blank-near.n\endcsname{0}
\expandafter\def\csname regval:M-e2-dec-blank-near.d\endcsname{390}
\expandafter\def\csname regval:M-e2-dec-blank-refusalpattern\endcsname{169}
\expandafter\def\csname regval:M-e2-dec-blank-refusalpattern.n\endcsname{169}
\expandafter\def\csname regval:M-e2-dec-blank-refusalpattern.d\endcsname{390}
\expandafter\def\csname regval:M-e2-dec-blank-empty\endcsname{0}
\expandafter\def\csname regval:M-e2-dec-blank-empty.n\endcsname{0}
\expandafter\def\csname regval:M-e2-dec-blank-empty.d\endcsname{390}
\expandafter\def\csname regval:M-e2-dec-blank-answer\endcsname{221}
\expandafter\def\csname regval:M-e2-dec-blank-answer.n\endcsname{221}
\expandafter\def\csname regval:M-e2-dec-blank-answer.d\endcsname{390}
\expandafter\def\csname regval:M-e2-dec-id_unrelated-exact\endcsname{0}
\expandafter\def\csname regval:M-e2-dec-id_unrelated-exact.n\endcsname{0}
\expandafter\def\csname regval:M-e2-dec-id_unrelated-exact.d\endcsname{390}
\expandafter\def\csname regval:M-e2-dec-id_unrelated-near\endcsname{0}
\expandafter\def\csname regval:M-e2-dec-id_unrelated-near.n\endcsname{0}
\expandafter\def\csname regval:M-e2-dec-id_unrelated-near.d\endcsname{390}
\expandafter\def\csname regval:M-e2-dec-id_unrelated-refusalpattern\endcsname{15}
\expandafter\def\csname regval:M-e2-dec-id_unrelated-refusalpattern.n\endcsname{15}
\expandafter\def\csname regval:M-e2-dec-id_unrelated-refusalpattern.d\endcsname{390}
\expandafter\def\csname regval:M-e2-dec-id_unrelated-empty\endcsname{1}
\expandafter\def\csname regval:M-e2-dec-id_unrelated-empty.n\endcsname{1}
\expandafter\def\csname regval:M-e2-dec-id_unrelated-empty.d\endcsname{390}
\expandafter\def\csname regval:M-e2-dec-id_unrelated-answer\endcsname{374}
\expandafter\def\csname regval:M-e2-dec-id_unrelated-answer.n\endcsname{374}
\expandafter\def\csname regval:M-e2-dec-id_unrelated-answer.d\endcsname{390}
\expandafter\def\csname regval:M-e2-dec-od_unrelated-exact\endcsname{0}
\expandafter\def\csname regval:M-e2-dec-od_unrelated-exact.n\endcsname{0}
\expandafter\def\csname regval:M-e2-dec-od_unrelated-exact.d\endcsname{390}
\expandafter\def\csname regval:M-e2-dec-od_unrelated-near\endcsname{0}
\expandafter\def\csname regval:M-e2-dec-od_unrelated-near.n\endcsname{0}
\expandafter\def\csname regval:M-e2-dec-od_unrelated-near.d\endcsname{390}
\expandafter\def\csname regval:M-e2-dec-od_unrelated-refusalpattern\endcsname{10}
\expandafter\def\csname regval:M-e2-dec-od_unrelated-refusalpattern.n\endcsname{10}
\expandafter\def\csname regval:M-e2-dec-od_unrelated-refusalpattern.d\endcsname{390}
\expandafter\def\csname regval:M-e2-dec-od_unrelated-empty\endcsname{0}
\expandafter\def\csname regval:M-e2-dec-od_unrelated-empty.n\endcsname{0}
\expandafter\def\csname regval:M-e2-dec-od_unrelated-empty.d\endcsname{390}
\expandafter\def\csname regval:M-e2-dec-od_unrelated-answer\endcsname{380}
\expandafter\def\csname regval:M-e2-dec-od_unrelated-answer.n\endcsname{380}
\expandafter\def\csname regval:M-e2-dec-od_unrelated-answer.d\endcsname{390}
\expandafter\def\csname regval:M-e2-dec-F\endcsname{0.000}
\expandafter\def\csname regval:M-e2-dec-F.n\endcsname{0}
\expandafter\def\csname regval:M-e2-dec-F.d\endcsname{390}
\expandafter\def\csname regval:M-e2-dec-F.ci\endcsname{[0.000, 0.010]}
\expandafter\def\csname regval:M-e2-dec-M\endcsname{1.000}
\expandafter\def\csname regval:M-e2-dec-M.n\endcsname{1{,}170}
\expandafter\def\csname regval:M-e2-dec-M.d\endcsname{1{,}170}
\expandafter\def\csname regval:M-e2-dec-M.ci\endcsname{[0.997, 1.000]}
\expandafter\def\csname regval:M-e2-dec-sum\endcsname{0.750}
\expandafter\def\csname regval:M-e2-dec-sum.d\endcsname{1{,}560}
\expandafter\def\csname regval:M-e2-dec-sum.ci\endcsname{[0.750, 0.750]}
\expandafter\def\csname regval:M-e2-dec-se\endcsname{0.011}
\expandafter\def\csname regval:M-e2-dec-se.d\endcsname{1{,}560}
\expandafter\def\csname regval:M-e2-dec-boot-sd\endcsname{0.000}
\expandafter\def\csname regval:M-e2-dec-boot-sd.d\endcsname{390}
\expandafter\def\csname regval:M-e2-dec-exact-sum\endcsname{0.750}
\expandafter\def\csname regval:M-e2-dec-exact-sum.d\endcsname{1{,}560}
\expandafter\def\csname regval:M-e2-dec-exact-sum.ci\endcsname{[0.750, 0.750]}
\expandafter\def\csname regval:M-e2-dec-exact-F\endcsname{0.000}
\expandafter\def\csname regval:M-e2-dec-exact-F.n\endcsname{0}
\expandafter\def\csname regval:M-e2-dec-exact-F.d\endcsname{390}
\expandafter\def\csname regval:M-e2-dec-exact-M\endcsname{1.000}
\expandafter\def\csname regval:M-e2-dec-exact-M.n\endcsname{1{,}170}
\expandafter\def\csname regval:M-e2-dec-exact-M.d\endcsname{1{,}170}
\expandafter\def\csname regval:M-e2-dec-blank-answerrate\endcsname{1.000}
\expandafter\def\csname regval:M-e2-dec-blank-answerrate.n\endcsname{390}
\expandafter\def\csname regval:M-e2-dec-blank-answerrate.d\endcsname{390}
\expandafter\def\csname regval:M-e2-dec-id_unrelated-answerrate\endcsname{1.000}
\expandafter\def\csname regval:M-e2-dec-id_unrelated-answerrate.n\endcsname{390}
\expandafter\def\csname regval:M-e2-dec-id_unrelated-answerrate.d\endcsname{390}
\expandafter\def\csname regval:M-e2-dec-od_unrelated-answerrate\endcsname{1.000}
\expandafter\def\csname regval:M-e2-dec-od_unrelated-answerrate.n\endcsname{390}
\expandafter\def\csname regval:M-e2-dec-od_unrelated-answerrate.d\endcsname{390}
\expandafter\def\csname regval:M-e2-dec-agree-real\endcsname{302}
\expandafter\def\csname regval:M-e2-dec-agree-real.n\endcsname{302}
\expandafter\def\csname regval:M-e2-dec-agree-real.d\endcsname{390}
\expandafter\def\csname regval:M-e2-dec-agree-inv\endcsname{0}
\expandafter\def\csname regval:M-e2-dec-agree-inv.n\endcsname{0}
\expandafter\def\csname regval:M-e2-dec-agree-inv.d\endcsname{1}
\expandafter\def\csname regval:M-e2-dec-inv-declined\endcsname{0}
\expandafter\def\csname regval:M-e2-dec-inv-declined.n\endcsname{0}
\expandafter\def\csname regval:M-e2-dec-inv-declined.d\endcsname{1{,}170}
\expandafter\def\csname regval:M-e3-dec-real-exact\endcsname{59}
\expandafter\def\csname regval:M-e3-dec-real-exact.n\endcsname{59}
\expandafter\def\csname regval:M-e3-dec-real-exact.d\endcsname{390}
\expandafter\def\csname regval:M-e3-dec-real-near\endcsname{0}
\expandafter\def\csname regval:M-e3-dec-real-near.n\endcsname{0}
\expandafter\def\csname regval:M-e3-dec-real-near.d\endcsname{390}
\expandafter\def\csname regval:M-e3-dec-real-refusalpattern\endcsname{0}
\expandafter\def\csname regval:M-e3-dec-real-refusalpattern.n\endcsname{0}
\expandafter\def\csname regval:M-e3-dec-real-refusalpattern.d\endcsname{390}
\expandafter\def\csname regval:M-e3-dec-real-empty\endcsname{0}
\expandafter\def\csname regval:M-e3-dec-real-empty.n\endcsname{0}
\expandafter\def\csname regval:M-e3-dec-real-empty.d\endcsname{390}
\expandafter\def\csname regval:M-e3-dec-real-answer\endcsname{331}
\expandafter\def\csname regval:M-e3-dec-real-answer.n\endcsname{331}
\expandafter\def\csname regval:M-e3-dec-real-answer.d\endcsname{390}
\expandafter\def\csname regval:M-e3-dec-blank-exact\endcsname{390}
\expandafter\def\csname regval:M-e3-dec-blank-exact.n\endcsname{390}
\expandafter\def\csname regval:M-e3-dec-blank-exact.d\endcsname{390}
\expandafter\def\csname regval:M-e3-dec-blank-near\endcsname{0}
\expandafter\def\csname regval:M-e3-dec-blank-near.n\endcsname{0}
\expandafter\def\csname regval:M-e3-dec-blank-near.d\endcsname{390}
\expandafter\def\csname regval:M-e3-dec-blank-refusalpattern\endcsname{0}
\expandafter\def\csname regval:M-e3-dec-blank-refusalpattern.n\endcsname{0}
\expandafter\def\csname regval:M-e3-dec-blank-refusalpattern.d\endcsname{390}
\expandafter\def\csname regval:M-e3-dec-blank-empty\endcsname{0}
\expandafter\def\csname regval:M-e3-dec-blank-empty.n\endcsname{0}
\expandafter\def\csname regval:M-e3-dec-blank-empty.d\endcsname{390}
\expandafter\def\csname regval:M-e3-dec-blank-answer\endcsname{0}
\expandafter\def\csname regval:M-e3-dec-blank-answer.n\endcsname{0}
\expandafter\def\csname regval:M-e3-dec-blank-answer.d\endcsname{390}
\expandafter\def\csname regval:M-e3-dec-id_unrelated-exact\endcsname{298}
\expandafter\def\csname regval:M-e3-dec-id_unrelated-exact.n\endcsname{298}
\expandafter\def\csname regval:M-e3-dec-id_unrelated-exact.d\endcsname{390}
\expandafter\def\csname regval:M-e3-dec-id_unrelated-near\endcsname{0}
\expandafter\def\csname regval:M-e3-dec-id_unrelated-near.n\endcsname{0}
\expandafter\def\csname regval:M-e3-dec-id_unrelated-near.d\endcsname{390}
\expandafter\def\csname regval:M-e3-dec-id_unrelated-refusalpattern\endcsname{0}
\expandafter\def\csname regval:M-e3-dec-id_unrelated-refusalpattern.n\endcsname{0}
\expandafter\def\csname regval:M-e3-dec-id_unrelated-refusalpattern.d\endcsname{390}
\expandafter\def\csname regval:M-e3-dec-id_unrelated-empty\endcsname{0}
\expandafter\def\csname regval:M-e3-dec-id_unrelated-empty.n\endcsname{0}
\expandafter\def\csname regval:M-e3-dec-id_unrelated-empty.d\endcsname{390}
\expandafter\def\csname regval:M-e3-dec-id_unrelated-answer\endcsname{92}
\expandafter\def\csname regval:M-e3-dec-id_unrelated-answer.n\endcsname{92}
\expandafter\def\csname regval:M-e3-dec-id_unrelated-answer.d\endcsname{390}
\expandafter\def\csname regval:M-e3-dec-od_unrelated-exact\endcsname{372}
\expandafter\def\csname regval:M-e3-dec-od_unrelated-exact.n\endcsname{372}
\expandafter\def\csname regval:M-e3-dec-od_unrelated-exact.d\endcsname{390}
\expandafter\def\csname regval:M-e3-dec-od_unrelated-near\endcsname{0}
\expandafter\def\csname regval:M-e3-dec-od_unrelated-near.n\endcsname{0}
\expandafter\def\csname regval:M-e3-dec-od_unrelated-near.d\endcsname{390}
\expandafter\def\csname regval:M-e3-dec-od_unrelated-refusalpattern\endcsname{0}
\expandafter\def\csname regval:M-e3-dec-od_unrelated-refusalpattern.n\endcsname{0}
\expandafter\def\csname regval:M-e3-dec-od_unrelated-refusalpattern.d\endcsname{390}
\expandafter\def\csname regval:M-e3-dec-od_unrelated-empty\endcsname{0}
\expandafter\def\csname regval:M-e3-dec-od_unrelated-empty.n\endcsname{0}
\expandafter\def\csname regval:M-e3-dec-od_unrelated-empty.d\endcsname{390}
\expandafter\def\csname regval:M-e3-dec-od_unrelated-answer\endcsname{18}
\expandafter\def\csname regval:M-e3-dec-od_unrelated-answer.n\endcsname{18}
\expandafter\def\csname regval:M-e3-dec-od_unrelated-answer.d\endcsname{390}
\expandafter\def\csname regval:M-e3-dec-F\endcsname{0.151}
\expandafter\def\csname regval:M-e3-dec-F.n\endcsname{59}
\expandafter\def\csname regval:M-e3-dec-F.d\endcsname{390}
\expandafter\def\csname regval:M-e3-dec-F.ci\endcsname{[0.119, 0.190]}
\expandafter\def\csname regval:M-e3-dec-M\endcsname{0.094}
\expandafter\def\csname regval:M-e3-dec-M.n\endcsname{110}
\expandafter\def\csname regval:M-e3-dec-M.d\endcsname{1{,}170}
\expandafter\def\csname regval:M-e3-dec-M.ci\endcsname{[0.079, 0.112]}
\expandafter\def\csname regval:M-e3-dec-sum\endcsname{0.108}
\expandafter\def\csname regval:M-e3-dec-sum.d\endcsname{1{,}560}
\expandafter\def\csname regval:M-e3-dec-sum.ci\endcsname{[0.094, 0.122]}
\expandafter\def\csname regval:M-e3-dec-se\endcsname{0.008}
\expandafter\def\csname regval:M-e3-dec-se.d\endcsname{1{,}560}
\expandafter\def\csname regval:M-e3-dec-boot-sd\endcsname{0.007}
\expandafter\def\csname regval:M-e3-dec-boot-sd.d\endcsname{390}
\expandafter\def\csname regval:M-e3-dec-exact-sum\endcsname{0.108}
\expandafter\def\csname regval:M-e3-dec-exact-sum.d\endcsname{1{,}560}
\expandafter\def\csname regval:M-e3-dec-exact-sum.ci\endcsname{[0.094, 0.122]}
\expandafter\def\csname regval:M-e3-dec-exact-F\endcsname{0.151}
\expandafter\def\csname regval:M-e3-dec-exact-F.n\endcsname{59}
\expandafter\def\csname regval:M-e3-dec-exact-F.d\endcsname{390}
\expandafter\def\csname regval:M-e3-dec-exact-M\endcsname{0.094}
\expandafter\def\csname regval:M-e3-dec-exact-M.n\endcsname{110}
\expandafter\def\csname regval:M-e3-dec-exact-M.d\endcsname{1{,}170}
\expandafter\def\csname regval:M-e3-dec-blank-answerrate\endcsname{0.000}
\expandafter\def\csname regval:M-e3-dec-blank-answerrate.n\endcsname{0}
\expandafter\def\csname regval:M-e3-dec-blank-answerrate.d\endcsname{390}
\expandafter\def\csname regval:M-e3-dec-id_unrelated-answerrate\endcsname{0.236}
\expandafter\def\csname regval:M-e3-dec-id_unrelated-answerrate.n\endcsname{92}
\expandafter\def\csname regval:M-e3-dec-id_unrelated-answerrate.d\endcsname{390}
\expandafter\def\csname regval:M-e3-dec-od_unrelated-answerrate\endcsname{0.046}
\expandafter\def\csname regval:M-e3-dec-od_unrelated-answerrate.n\endcsname{18}
\expandafter\def\csname regval:M-e3-dec-od_unrelated-answerrate.d\endcsname{390}
\expandafter\def\csname regval:M-e3-dec-agree-real\endcsname{390}
\expandafter\def\csname regval:M-e3-dec-agree-real.n\endcsname{390}
\expandafter\def\csname regval:M-e3-dec-agree-real.d\endcsname{390}
\expandafter\def\csname regval:M-e3-dec-agree-inv\endcsname{1{,}060}
\expandafter\def\csname regval:M-e3-dec-agree-inv.n\endcsname{1{,}060}
\expandafter\def\csname regval:M-e3-dec-agree-inv.d\endcsname{1{,}060}
\expandafter\def\csname regval:M-e3-dec-inv-declined\endcsname{1{,}060}
\expandafter\def\csname regval:M-e3-dec-inv-declined.n\endcsname{1{,}060}
\expandafter\def\csname regval:M-e3-dec-inv-declined.d\endcsname{1{,}170}
\expandafter\def\csname regval:M-e4-dec-real-exact\endcsname{3}
\expandafter\def\csname regval:M-e4-dec-real-exact.n\endcsname{3}
\expandafter\def\csname regval:M-e4-dec-real-exact.d\endcsname{390}
\expandafter\def\csname regval:M-e4-dec-real-near\endcsname{2}
\expandafter\def\csname regval:M-e4-dec-real-near.n\endcsname{2}
\expandafter\def\csname regval:M-e4-dec-real-near.d\endcsname{390}
\expandafter\def\csname regval:M-e4-dec-real-refusalpattern\endcsname{1}
\expandafter\def\csname regval:M-e4-dec-real-refusalpattern.n\endcsname{1}
\expandafter\def\csname regval:M-e4-dec-real-refusalpattern.d\endcsname{390}
\expandafter\def\csname regval:M-e4-dec-real-empty\endcsname{0}
\expandafter\def\csname regval:M-e4-dec-real-empty.n\endcsname{0}
\expandafter\def\csname regval:M-e4-dec-real-empty.d\endcsname{390}
\expandafter\def\csname regval:M-e4-dec-real-answer\endcsname{384}
\expandafter\def\csname regval:M-e4-dec-real-answer.n\endcsname{384}
\expandafter\def\csname regval:M-e4-dec-real-answer.d\endcsname{390}
\expandafter\def\csname regval:M-e4-dec-blank-exact\endcsname{390}
\expandafter\def\csname regval:M-e4-dec-blank-exact.n\endcsname{390}
\expandafter\def\csname regval:M-e4-dec-blank-exact.d\endcsname{390}
\expandafter\def\csname regval:M-e4-dec-blank-near\endcsname{0}
\expandafter\def\csname regval:M-e4-dec-blank-near.n\endcsname{0}
\expandafter\def\csname regval:M-e4-dec-blank-near.d\endcsname{390}
\expandafter\def\csname regval:M-e4-dec-blank-refusalpattern\endcsname{0}
\expandafter\def\csname regval:M-e4-dec-blank-refusalpattern.n\endcsname{0}
\expandafter\def\csname regval:M-e4-dec-blank-refusalpattern.d\endcsname{390}
\expandafter\def\csname regval:M-e4-dec-blank-empty\endcsname{0}
\expandafter\def\csname regval:M-e4-dec-blank-empty.n\endcsname{0}
\expandafter\def\csname regval:M-e4-dec-blank-empty.d\endcsname{390}
\expandafter\def\csname regval:M-e4-dec-blank-answer\endcsname{0}
\expandafter\def\csname regval:M-e4-dec-blank-answer.n\endcsname{0}
\expandafter\def\csname regval:M-e4-dec-blank-answer.d\endcsname{390}
\expandafter\def\csname regval:M-e4-dec-id_unrelated-exact\endcsname{3}
\expandafter\def\csname regval:M-e4-dec-id_unrelated-exact.n\endcsname{3}
\expandafter\def\csname regval:M-e4-dec-id_unrelated-exact.d\endcsname{390}
\expandafter\def\csname regval:M-e4-dec-id_unrelated-near\endcsname{2}
\expandafter\def\csname regval:M-e4-dec-id_unrelated-near.n\endcsname{2}
\expandafter\def\csname regval:M-e4-dec-id_unrelated-near.d\endcsname{390}
\expandafter\def\csname regval:M-e4-dec-id_unrelated-refusalpattern\endcsname{1}
\expandafter\def\csname regval:M-e4-dec-id_unrelated-refusalpattern.n\endcsname{1}
\expandafter\def\csname regval:M-e4-dec-id_unrelated-refusalpattern.d\endcsname{390}
\expandafter\def\csname regval:M-e4-dec-id_unrelated-empty\endcsname{0}
\expandafter\def\csname regval:M-e4-dec-id_unrelated-empty.n\endcsname{0}
\expandafter\def\csname regval:M-e4-dec-id_unrelated-empty.d\endcsname{390}
\expandafter\def\csname regval:M-e4-dec-id_unrelated-answer\endcsname{384}
\expandafter\def\csname regval:M-e4-dec-id_unrelated-answer.n\endcsname{384}
\expandafter\def\csname regval:M-e4-dec-id_unrelated-answer.d\endcsname{390}
\expandafter\def\csname regval:M-e4-dec-od_unrelated-exact\endcsname{5}
\expandafter\def\csname regval:M-e4-dec-od_unrelated-exact.n\endcsname{5}
\expandafter\def\csname regval:M-e4-dec-od_unrelated-exact.d\endcsname{390}
\expandafter\def\csname regval:M-e4-dec-od_unrelated-near\endcsname{2}
\expandafter\def\csname regval:M-e4-dec-od_unrelated-near.n\endcsname{2}
\expandafter\def\csname regval:M-e4-dec-od_unrelated-near.d\endcsname{390}
\expandafter\def\csname regval:M-e4-dec-od_unrelated-refusalpattern\endcsname{0}
\expandafter\def\csname regval:M-e4-dec-od_unrelated-refusalpattern.n\endcsname{0}
\expandafter\def\csname regval:M-e4-dec-od_unrelated-refusalpattern.d\endcsname{390}
\expandafter\def\csname regval:M-e4-dec-od_unrelated-empty\endcsname{0}
\expandafter\def\csname regval:M-e4-dec-od_unrelated-empty.n\endcsname{0}
\expandafter\def\csname regval:M-e4-dec-od_unrelated-empty.d\endcsname{390}
\expandafter\def\csname regval:M-e4-dec-od_unrelated-answer\endcsname{383}
\expandafter\def\csname regval:M-e4-dec-od_unrelated-answer.n\endcsname{383}
\expandafter\def\csname regval:M-e4-dec-od_unrelated-answer.d\endcsname{390}
\expandafter\def\csname regval:M-e4-dec-F\endcsname{0.013}
\expandafter\def\csname regval:M-e4-dec-F.n\endcsname{5}
\expandafter\def\csname regval:M-e4-dec-F.d\endcsname{390}
\expandafter\def\csname regval:M-e4-dec-F.ci\endcsname{[0.005, 0.030]}
\expandafter\def\csname regval:M-e4-dec-M\endcsname{0.656}
\expandafter\def\csname regval:M-e4-dec-M.n\endcsname{768}
\expandafter\def\csname regval:M-e4-dec-M.d\endcsname{1{,}170}
\expandafter\def\csname regval:M-e4-dec-M.ci\endcsname{[0.629, 0.683]}
\expandafter\def\csname regval:M-e4-dec-sum\endcsname{0.495}
\expandafter\def\csname regval:M-e4-dec-sum.d\endcsname{1{,}560}
\expandafter\def\csname regval:M-e4-dec-sum.ci\endcsname{[0.490, 0.501]}
\expandafter\def\csname regval:M-e4-dec-se\endcsname{0.013}
\expandafter\def\csname regval:M-e4-dec-se.d\endcsname{1{,}560}
\expandafter\def\csname regval:M-e4-dec-boot-sd\endcsname{0.003}
\expandafter\def\csname regval:M-e4-dec-boot-sd.d\endcsname{390}
\expandafter\def\csname regval:M-e4-dec-exact-sum\endcsname{0.497}
\expandafter\def\csname regval:M-e4-dec-exact-sum.d\endcsname{1{,}560}
\expandafter\def\csname regval:M-e4-dec-exact-sum.ci\endcsname{[0.492, 0.501]}
\expandafter\def\csname regval:M-e4-dec-exact-F\endcsname{0.008}
\expandafter\def\csname regval:M-e4-dec-exact-F.n\endcsname{3}
\expandafter\def\csname regval:M-e4-dec-exact-F.d\endcsname{390}
\expandafter\def\csname regval:M-e4-dec-exact-M\endcsname{0.660}
\expandafter\def\csname regval:M-e4-dec-exact-M.n\endcsname{772}
\expandafter\def\csname regval:M-e4-dec-exact-M.d\endcsname{1{,}170}
\expandafter\def\csname regval:M-e4-dec-blank-answerrate\endcsname{0.000}
\expandafter\def\csname regval:M-e4-dec-blank-answerrate.n\endcsname{0}
\expandafter\def\csname regval:M-e4-dec-blank-answerrate.d\endcsname{390}
\expandafter\def\csname regval:M-e4-dec-id_unrelated-answerrate\endcsname{0.987}
\expandafter\def\csname regval:M-e4-dec-id_unrelated-answerrate.n\endcsname{385}
\expandafter\def\csname regval:M-e4-dec-id_unrelated-answerrate.d\endcsname{390}
\expandafter\def\csname regval:M-e4-dec-od_unrelated-answerrate\endcsname{0.982}
\expandafter\def\csname regval:M-e4-dec-od_unrelated-answerrate.n\endcsname{383}
\expandafter\def\csname regval:M-e4-dec-od_unrelated-answerrate.d\endcsname{390}
\expandafter\def\csname regval:M-e4-dec-agree-real\endcsname{325}
\expandafter\def\csname regval:M-e4-dec-agree-real.n\endcsname{325}
\expandafter\def\csname regval:M-e4-dec-agree-real.d\endcsname{390}
\expandafter\def\csname regval:M-e4-dec-agree-inv\endcsname{402}
\expandafter\def\csname regval:M-e4-dec-agree-inv.n\endcsname{402}
\expandafter\def\csname regval:M-e4-dec-agree-inv.d\endcsname{402}
\expandafter\def\csname regval:M-e4-dec-inv-declined\endcsname{402}
\expandafter\def\csname regval:M-e4-dec-inv-declined.n\endcsname{402}
\expandafter\def\csname regval:M-e4-dec-inv-declined.d\endcsname{1{,}170}
\expandafter\def\csname regval:M-e5-dec-real-exact\endcsname{0}
\expandafter\def\csname regval:M-e5-dec-real-exact.n\endcsname{0}
\expandafter\def\csname regval:M-e5-dec-real-exact.d\endcsname{390}
\expandafter\def\csname regval:M-e5-dec-real-near\endcsname{0}
\expandafter\def\csname regval:M-e5-dec-real-near.n\endcsname{0}
\expandafter\def\csname regval:M-e5-dec-real-near.d\endcsname{390}
\expandafter\def\csname regval:M-e5-dec-real-refusalpattern\endcsname{71}
\expandafter\def\csname regval:M-e5-dec-real-refusalpattern.n\endcsname{71}
\expandafter\def\csname regval:M-e5-dec-real-refusalpattern.d\endcsname{390}
\expandafter\def\csname regval:M-e5-dec-real-empty\endcsname{0}
\expandafter\def\csname regval:M-e5-dec-real-empty.n\endcsname{0}
\expandafter\def\csname regval:M-e5-dec-real-empty.d\endcsname{390}
\expandafter\def\csname regval:M-e5-dec-real-answer\endcsname{319}
\expandafter\def\csname regval:M-e5-dec-real-answer.n\endcsname{319}
\expandafter\def\csname regval:M-e5-dec-real-answer.d\endcsname{390}
\expandafter\def\csname regval:M-e5-dec-blank-exact\endcsname{0}
\expandafter\def\csname regval:M-e5-dec-blank-exact.n\endcsname{0}
\expandafter\def\csname regval:M-e5-dec-blank-exact.d\endcsname{390}
\expandafter\def\csname regval:M-e5-dec-blank-near\endcsname{0}
\expandafter\def\csname regval:M-e5-dec-blank-near.n\endcsname{0}
\expandafter\def\csname regval:M-e5-dec-blank-near.d\endcsname{390}
\expandafter\def\csname regval:M-e5-dec-blank-refusalpattern\endcsname{137}
\expandafter\def\csname regval:M-e5-dec-blank-refusalpattern.n\endcsname{137}
\expandafter\def\csname regval:M-e5-dec-blank-refusalpattern.d\endcsname{390}
\expandafter\def\csname regval:M-e5-dec-blank-empty\endcsname{0}
\expandafter\def\csname regval:M-e5-dec-blank-empty.n\endcsname{0}
\expandafter\def\csname regval:M-e5-dec-blank-empty.d\endcsname{390}
\expandafter\def\csname regval:M-e5-dec-blank-answer\endcsname{253}
\expandafter\def\csname regval:M-e5-dec-blank-answer.n\endcsname{253}
\expandafter\def\csname regval:M-e5-dec-blank-answer.d\endcsname{390}
\expandafter\def\csname regval:M-e5-dec-id_unrelated-exact\endcsname{0}
\expandafter\def\csname regval:M-e5-dec-id_unrelated-exact.n\endcsname{0}
\expandafter\def\csname regval:M-e5-dec-id_unrelated-exact.d\endcsname{390}
\expandafter\def\csname regval:M-e5-dec-id_unrelated-near\endcsname{0}
\expandafter\def\csname regval:M-e5-dec-id_unrelated-near.n\endcsname{0}
\expandafter\def\csname regval:M-e5-dec-id_unrelated-near.d\endcsname{390}
\expandafter\def\csname regval:M-e5-dec-id_unrelated-refusalpattern\endcsname{181}
\expandafter\def\csname regval:M-e5-dec-id_unrelated-refusalpattern.n\endcsname{181}
\expandafter\def\csname regval:M-e5-dec-id_unrelated-refusalpattern.d\endcsname{390}
\expandafter\def\csname regval:M-e5-dec-id_unrelated-empty\endcsname{0}
\expandafter\def\csname regval:M-e5-dec-id_unrelated-empty.n\endcsname{0}
\expandafter\def\csname regval:M-e5-dec-id_unrelated-empty.d\endcsname{390}
\expandafter\def\csname regval:M-e5-dec-id_unrelated-answer\endcsname{209}
\expandafter\def\csname regval:M-e5-dec-id_unrelated-answer.n\endcsname{209}
\expandafter\def\csname regval:M-e5-dec-id_unrelated-answer.d\endcsname{390}
\expandafter\def\csname regval:M-e5-dec-od_unrelated-exact\endcsname{0}
\expandafter\def\csname regval:M-e5-dec-od_unrelated-exact.n\endcsname{0}
\expandafter\def\csname regval:M-e5-dec-od_unrelated-exact.d\endcsname{390}
\expandafter\def\csname regval:M-e5-dec-od_unrelated-near\endcsname{0}
\expandafter\def\csname regval:M-e5-dec-od_unrelated-near.n\endcsname{0}
\expandafter\def\csname regval:M-e5-dec-od_unrelated-near.d\endcsname{390}
\expandafter\def\csname regval:M-e5-dec-od_unrelated-refusalpattern\endcsname{213}
\expandafter\def\csname regval:M-e5-dec-od_unrelated-refusalpattern.n\endcsname{213}
\expandafter\def\csname regval:M-e5-dec-od_unrelated-refusalpattern.d\endcsname{390}
\expandafter\def\csname regval:M-e5-dec-od_unrelated-empty\endcsname{0}
\expandafter\def\csname regval:M-e5-dec-od_unrelated-empty.n\endcsname{0}
\expandafter\def\csname regval:M-e5-dec-od_unrelated-empty.d\endcsname{390}
\expandafter\def\csname regval:M-e5-dec-od_unrelated-answer\endcsname{177}
\expandafter\def\csname regval:M-e5-dec-od_unrelated-answer.n\endcsname{177}
\expandafter\def\csname regval:M-e5-dec-od_unrelated-answer.d\endcsname{390}
\expandafter\def\csname regval:M-e5-dec-F\endcsname{0.000}
\expandafter\def\csname regval:M-e5-dec-F.n\endcsname{0}
\expandafter\def\csname regval:M-e5-dec-F.d\endcsname{390}
\expandafter\def\csname regval:M-e5-dec-F.ci\endcsname{[0.000, 0.010]}
\expandafter\def\csname regval:M-e5-dec-M\endcsname{1.000}
\expandafter\def\csname regval:M-e5-dec-M.n\endcsname{1{,}170}
\expandafter\def\csname regval:M-e5-dec-M.d\endcsname{1{,}170}
\expandafter\def\csname regval:M-e5-dec-M.ci\endcsname{[0.997, 1.000]}
\expandafter\def\csname regval:M-e5-dec-sum\endcsname{0.750}
\expandafter\def\csname regval:M-e5-dec-sum.d\endcsname{1{,}560}
\expandafter\def\csname regval:M-e5-dec-sum.ci\endcsname{[0.750, 0.750]}
\expandafter\def\csname regval:M-e5-dec-se\endcsname{0.011}
\expandafter\def\csname regval:M-e5-dec-se.d\endcsname{1{,}560}
\expandafter\def\csname regval:M-e5-dec-boot-sd\endcsname{0.000}
\expandafter\def\csname regval:M-e5-dec-boot-sd.d\endcsname{390}
\expandafter\def\csname regval:M-e5-dec-exact-sum\endcsname{0.750}
\expandafter\def\csname regval:M-e5-dec-exact-sum.d\endcsname{1{,}560}
\expandafter\def\csname regval:M-e5-dec-exact-sum.ci\endcsname{[0.750, 0.750]}
\expandafter\def\csname regval:M-e5-dec-exact-F\endcsname{0.000}
\expandafter\def\csname regval:M-e5-dec-exact-F.n\endcsname{0}
\expandafter\def\csname regval:M-e5-dec-exact-F.d\endcsname{390}
\expandafter\def\csname regval:M-e5-dec-exact-M\endcsname{1.000}
\expandafter\def\csname regval:M-e5-dec-exact-M.n\endcsname{1{,}170}
\expandafter\def\csname regval:M-e5-dec-exact-M.d\endcsname{1{,}170}
\expandafter\def\csname regval:M-e5-dec-blank-answerrate\endcsname{1.000}
\expandafter\def\csname regval:M-e5-dec-blank-answerrate.n\endcsname{390}
\expandafter\def\csname regval:M-e5-dec-blank-answerrate.d\endcsname{390}
\expandafter\def\csname regval:M-e5-dec-id_unrelated-answerrate\endcsname{1.000}
\expandafter\def\csname regval:M-e5-dec-id_unrelated-answerrate.n\endcsname{390}
\expandafter\def\csname regval:M-e5-dec-id_unrelated-answerrate.d\endcsname{390}
\expandafter\def\csname regval:M-e5-dec-od_unrelated-answerrate\endcsname{1.000}
\expandafter\def\csname regval:M-e5-dec-od_unrelated-answerrate.n\endcsname{390}
\expandafter\def\csname regval:M-e5-dec-od_unrelated-answerrate.d\endcsname{390}
\expandafter\def\csname regval:M-stress-e0-P0-gpt-5.5-iaimr\endcsname{0.457}
\expandafter\def\csname regval:M-stress-e0-P0-gpt-5.5-iaimr.n\endcsname{137}
\expandafter\def\csname regval:M-stress-e0-P0-gpt-5.5-iaimr.d\endcsname{300}
\expandafter\def\csname regval:M-stress-e0-P0-gpt-5.5-iaimr.ci\endcsname{[0.401, 0.513]}
\expandafter\def\csname regval:M-stress-e0-P0-gpt-5.5-blank\endcsname{0.720}
\expandafter\def\csname regval:M-stress-e0-P0-gpt-5.5-blank.n\endcsname{72}
\expandafter\def\csname regval:M-stress-e0-P0-gpt-5.5-blank.d\endcsname{100}
\expandafter\def\csname regval:M-stress-e0-P0-gpt-5.5-id\endcsname{0.360}
\expandafter\def\csname regval:M-stress-e0-P0-gpt-5.5-id.n\endcsname{36}
\expandafter\def\csname regval:M-stress-e0-P0-gpt-5.5-id.d\endcsname{100}
\expandafter\def\csname regval:M-stress-e0-P0-gpt-5.5-od\endcsname{0.290}
\expandafter\def\csname regval:M-stress-e0-P0-gpt-5.5-od.n\endcsname{29}
\expandafter\def\csname regval:M-stress-e0-P0-gpt-5.5-od.d\endcsname{100}
\expandafter\def\csname regval:M-stress-e0-P0-gpt-5.5-acc\endcsname{0.550}
\expandafter\def\csname regval:M-stress-e0-P0-gpt-5.5-acc.n\endcsname{55}
\expandafter\def\csname regval:M-stress-e0-P0-gpt-5.5-acc.d\endcsname{100}
\expandafter\def\csname regval:M-stress-e0-P0-gpt-5.5-answered\endcsname{0.970}
\expandafter\def\csname regval:M-stress-e0-P0-gpt-5.5-answered.n\endcsname{97}
\expandafter\def\csname regval:M-stress-e0-P0-gpt-5.5-answered.d\endcsname{100}
\expandafter\def\csname regval:M-stress-e0-P0-gpt-4o-mini-iaimr\endcsname{0.503}
\expandafter\def\csname regval:M-stress-e0-P0-gpt-4o-mini-iaimr.n\endcsname{151}
\expandafter\def\csname regval:M-stress-e0-P0-gpt-4o-mini-iaimr.d\endcsname{300}
\expandafter\def\csname regval:M-stress-e0-P0-gpt-4o-mini-iaimr.ci\endcsname{[0.447, 0.559]}
\expandafter\def\csname regval:M-stress-e0-P0-gpt-4o-mini-blank\endcsname{0.560}
\expandafter\def\csname regval:M-stress-e0-P0-gpt-4o-mini-blank.n\endcsname{56}
\expandafter\def\csname regval:M-stress-e0-P0-gpt-4o-mini-blank.d\endcsname{100}
\expandafter\def\csname regval:M-stress-e0-P0-gpt-4o-mini-id\endcsname{0.580}
\expandafter\def\csname regval:M-stress-e0-P0-gpt-4o-mini-id.n\endcsname{58}
\expandafter\def\csname regval:M-stress-e0-P0-gpt-4o-mini-id.d\endcsname{100}
\expandafter\def\csname regval:M-stress-e0-P0-gpt-4o-mini-od\endcsname{0.370}
\expandafter\def\csname regval:M-stress-e0-P0-gpt-4o-mini-od.n\endcsname{37}
\expandafter\def\csname regval:M-stress-e0-P0-gpt-4o-mini-od.d\endcsname{100}
\expandafter\def\csname regval:M-stress-e0-P0-gpt-4o-mini-acc\endcsname{0.460}
\expandafter\def\csname regval:M-stress-e0-P0-gpt-4o-mini-acc.n\endcsname{46}
\expandafter\def\csname regval:M-stress-e0-P0-gpt-4o-mini-acc.d\endcsname{100}
\expandafter\def\csname regval:M-stress-e0-P0-gpt-4o-mini-answered\endcsname{0.950}
\expandafter\def\csname regval:M-stress-e0-P0-gpt-4o-mini-answered.n\endcsname{95}
\expandafter\def\csname regval:M-stress-e0-P0-gpt-4o-mini-answered.d\endcsname{100}
\expandafter\def\csname regval:M-stress-e0-P1-gpt-5.5-iaimr\endcsname{0.920}
\expandafter\def\csname regval:M-stress-e0-P1-gpt-5.5-iaimr.n\endcsname{276}
\expandafter\def\csname regval:M-stress-e0-P1-gpt-5.5-iaimr.d\endcsname{300}
\expandafter\def\csname regval:M-stress-e0-P1-gpt-5.5-iaimr.ci\endcsname{[0.884, 0.946]}
\expandafter\def\csname regval:M-stress-e0-P1-gpt-5.5-blank\endcsname{1.000}
\expandafter\def\csname regval:M-stress-e0-P1-gpt-5.5-blank.n\endcsname{100}
\expandafter\def\csname regval:M-stress-e0-P1-gpt-5.5-blank.d\endcsname{100}
\expandafter\def\csname regval:M-stress-e0-P1-gpt-5.5-id\endcsname{0.910}
\expandafter\def\csname regval:M-stress-e0-P1-gpt-5.5-id.n\endcsname{91}
\expandafter\def\csname regval:M-stress-e0-P1-gpt-5.5-id.d\endcsname{100}
\expandafter\def\csname regval:M-stress-e0-P1-gpt-5.5-od\endcsname{0.850}
\expandafter\def\csname regval:M-stress-e0-P1-gpt-5.5-od.n\endcsname{85}
\expandafter\def\csname regval:M-stress-e0-P1-gpt-5.5-od.d\endcsname{100}
\expandafter\def\csname regval:M-stress-e0-P1-gpt-5.5-acc\endcsname{0.500}
\expandafter\def\csname regval:M-stress-e0-P1-gpt-5.5-acc.n\endcsname{50}
\expandafter\def\csname regval:M-stress-e0-P1-gpt-5.5-acc.d\endcsname{100}
\expandafter\def\csname regval:M-stress-e0-P1-gpt-5.5-answered\endcsname{1.000}
\expandafter\def\csname regval:M-stress-e0-P1-gpt-5.5-answered.n\endcsname{100}
\expandafter\def\csname regval:M-stress-e0-P1-gpt-5.5-answered.d\endcsname{100}
\expandafter\def\csname regval:M-stress-e0-P1-gpt-4o-mini-iaimr\endcsname{0.710}
\expandafter\def\csname regval:M-stress-e0-P1-gpt-4o-mini-iaimr.n\endcsname{213}
\expandafter\def\csname regval:M-stress-e0-P1-gpt-4o-mini-iaimr.d\endcsname{300}
\expandafter\def\csname regval:M-stress-e0-P1-gpt-4o-mini-iaimr.ci\endcsname{[0.656, 0.758]}
\expandafter\def\csname regval:M-stress-e0-P1-gpt-4o-mini-blank\endcsname{0.510}
\expandafter\def\csname regval:M-stress-e0-P1-gpt-4o-mini-blank.n\endcsname{51}
\expandafter\def\csname regval:M-stress-e0-P1-gpt-4o-mini-blank.d\endcsname{100}
\expandafter\def\csname regval:M-stress-e0-P1-gpt-4o-mini-id\endcsname{0.850}
\expandafter\def\csname regval:M-stress-e0-P1-gpt-4o-mini-id.n\endcsname{85}
\expandafter\def\csname regval:M-stress-e0-P1-gpt-4o-mini-id.d\endcsname{100}
\expandafter\def\csname regval:M-stress-e0-P1-gpt-4o-mini-od\endcsname{0.770}
\expandafter\def\csname regval:M-stress-e0-P1-gpt-4o-mini-od.n\endcsname{77}
\expandafter\def\csname regval:M-stress-e0-P1-gpt-4o-mini-od.d\endcsname{100}
\expandafter\def\csname regval:M-stress-e0-P1-gpt-4o-mini-acc\endcsname{0.490}
\expandafter\def\csname regval:M-stress-e0-P1-gpt-4o-mini-acc.n\endcsname{49}
\expandafter\def\csname regval:M-stress-e0-P1-gpt-4o-mini-acc.d\endcsname{100}
\expandafter\def\csname regval:M-stress-e0-P1-gpt-4o-mini-answered\endcsname{0.900}
\expandafter\def\csname regval:M-stress-e0-P1-gpt-4o-mini-answered.n\endcsname{90}
\expandafter\def\csname regval:M-stress-e0-P1-gpt-4o-mini-answered.d\endcsname{100}
\expandafter\def\csname regval:M-stress-e0-P2-gpt-5.5-iaimr\endcsname{0.340}
\expandafter\def\csname regval:M-stress-e0-P2-gpt-5.5-iaimr.n\endcsname{102}
\expandafter\def\csname regval:M-stress-e0-P2-gpt-5.5-iaimr.d\endcsname{300}
\expandafter\def\csname regval:M-stress-e0-P2-gpt-5.5-iaimr.ci\endcsname{[0.289, 0.395]}
\expandafter\def\csname regval:M-stress-e0-P2-gpt-5.5-blank\endcsname{0.530}
\expandafter\def\csname regval:M-stress-e0-P2-gpt-5.5-blank.n\endcsname{53}
\expandafter\def\csname regval:M-stress-e0-P2-gpt-5.5-blank.d\endcsname{100}
\expandafter\def\csname regval:M-stress-e0-P2-gpt-5.5-id\endcsname{0.340}
\expandafter\def\csname regval:M-stress-e0-P2-gpt-5.5-id.n\endcsname{34}
\expandafter\def\csname regval:M-stress-e0-P2-gpt-5.5-id.d\endcsname{100}
\expandafter\def\csname regval:M-stress-e0-P2-gpt-5.5-od\endcsname{0.150}
\expandafter\def\csname regval:M-stress-e0-P2-gpt-5.5-od.n\endcsname{15}
\expandafter\def\csname regval:M-stress-e0-P2-gpt-5.5-od.d\endcsname{100}
\expandafter\def\csname regval:M-stress-e0-P2-gpt-5.5-acc\endcsname{0.450}
\expandafter\def\csname regval:M-stress-e0-P2-gpt-5.5-acc.n\endcsname{45}
\expandafter\def\csname regval:M-stress-e0-P2-gpt-5.5-acc.d\endcsname{100}
\expandafter\def\csname regval:M-stress-e0-P2-gpt-5.5-answered\endcsname{0.980}
\expandafter\def\csname regval:M-stress-e0-P2-gpt-5.5-answered.n\endcsname{98}
\expandafter\def\csname regval:M-stress-e0-P2-gpt-5.5-answered.d\endcsname{100}
\expandafter\def\csname regval:M-stress-e0-P2-gpt-4o-mini-iaimr\endcsname{0.380}
\expandafter\def\csname regval:M-stress-e0-P2-gpt-4o-mini-iaimr.n\endcsname{114}
\expandafter\def\csname regval:M-stress-e0-P2-gpt-4o-mini-iaimr.d\endcsname{300}
\expandafter\def\csname regval:M-stress-e0-P2-gpt-4o-mini-iaimr.ci\endcsname{[0.327, 0.436]}
\expandafter\def\csname regval:M-stress-e0-P2-gpt-4o-mini-blank\endcsname{0.410}
\expandafter\def\csname regval:M-stress-e0-P2-gpt-4o-mini-blank.n\endcsname{41}
\expandafter\def\csname regval:M-stress-e0-P2-gpt-4o-mini-blank.d\endcsname{100}
\expandafter\def\csname regval:M-stress-e0-P2-gpt-4o-mini-id\endcsname{0.510}
\expandafter\def\csname regval:M-stress-e0-P2-gpt-4o-mini-id.n\endcsname{51}
\expandafter\def\csname regval:M-stress-e0-P2-gpt-4o-mini-id.d\endcsname{100}
\expandafter\def\csname regval:M-stress-e0-P2-gpt-4o-mini-od\endcsname{0.220}
\expandafter\def\csname regval:M-stress-e0-P2-gpt-4o-mini-od.n\endcsname{22}
\expandafter\def\csname regval:M-stress-e0-P2-gpt-4o-mini-od.d\endcsname{100}
\expandafter\def\csname regval:M-stress-e0-P2-gpt-4o-mini-acc\endcsname{0.390}
\expandafter\def\csname regval:M-stress-e0-P2-gpt-4o-mini-acc.n\endcsname{39}
\expandafter\def\csname regval:M-stress-e0-P2-gpt-4o-mini-acc.d\endcsname{100}
\expandafter\def\csname regval:M-stress-e0-P2-gpt-4o-mini-answered\endcsname{0.960}
\expandafter\def\csname regval:M-stress-e0-P2-gpt-4o-mini-answered.n\endcsname{96}
\expandafter\def\csname regval:M-stress-e0-P2-gpt-4o-mini-answered.d\endcsname{100}
\expandafter\def\csname regval:M-stress-e2-P0-gpt-5.5-iaimr\endcsname{0.773}
\expandafter\def\csname regval:M-stress-e2-P0-gpt-5.5-iaimr.n\endcsname{232}
\expandafter\def\csname regval:M-stress-e2-P0-gpt-5.5-iaimr.d\endcsname{300}
\expandafter\def\csname regval:M-stress-e2-P0-gpt-5.5-iaimr.ci\endcsname{[0.723, 0.817]}
\expandafter\def\csname regval:M-stress-e2-P0-gpt-5.5-blank\endcsname{0.640}
\expandafter\def\csname regval:M-stress-e2-P0-gpt-5.5-blank.n\endcsname{64}
\expandafter\def\csname regval:M-stress-e2-P0-gpt-5.5-blank.d\endcsname{100}
\expandafter\def\csname regval:M-stress-e2-P0-gpt-5.5-id\endcsname{0.830}
\expandafter\def\csname regval:M-stress-e2-P0-gpt-5.5-id.n\endcsname{83}
\expandafter\def\csname regval:M-stress-e2-P0-gpt-5.5-id.d\endcsname{100}
\expandafter\def\csname regval:M-stress-e2-P0-gpt-5.5-od\endcsname{0.850}
\expandafter\def\csname regval:M-stress-e2-P0-gpt-5.5-od.n\endcsname{85}
\expandafter\def\csname regval:M-stress-e2-P0-gpt-5.5-od.d\endcsname{100}
\expandafter\def\csname regval:M-stress-e2-P0-gpt-5.5-acc\endcsname{0.250}
\expandafter\def\csname regval:M-stress-e2-P0-gpt-5.5-acc.n\endcsname{25}
\expandafter\def\csname regval:M-stress-e2-P0-gpt-5.5-acc.d\endcsname{100}
\expandafter\def\csname regval:M-stress-e2-P0-gpt-5.5-answered\endcsname{0.790}
\expandafter\def\csname regval:M-stress-e2-P0-gpt-5.5-answered.n\endcsname{79}
\expandafter\def\csname regval:M-stress-e2-P0-gpt-5.5-answered.d\endcsname{100}
\expandafter\def\csname regval:M-stress-e2-P0-gpt-4o-mini-iaimr\endcsname{0.457}
\expandafter\def\csname regval:M-stress-e2-P0-gpt-4o-mini-iaimr.n\endcsname{137}
\expandafter\def\csname regval:M-stress-e2-P0-gpt-4o-mini-iaimr.d\endcsname{300}
\expandafter\def\csname regval:M-stress-e2-P0-gpt-4o-mini-iaimr.ci\endcsname{[0.401, 0.513]}
\expandafter\def\csname regval:M-stress-e2-P0-gpt-4o-mini-blank\endcsname{0.480}
\expandafter\def\csname regval:M-stress-e2-P0-gpt-4o-mini-blank.n\endcsname{48}
\expandafter\def\csname regval:M-stress-e2-P0-gpt-4o-mini-blank.d\endcsname{100}
\expandafter\def\csname regval:M-stress-e2-P0-gpt-4o-mini-id\endcsname{0.560}
\expandafter\def\csname regval:M-stress-e2-P0-gpt-4o-mini-id.n\endcsname{56}
\expandafter\def\csname regval:M-stress-e2-P0-gpt-4o-mini-id.d\endcsname{100}
\expandafter\def\csname regval:M-stress-e2-P0-gpt-4o-mini-od\endcsname{0.330}
\expandafter\def\csname regval:M-stress-e2-P0-gpt-4o-mini-od.n\endcsname{33}
\expandafter\def\csname regval:M-stress-e2-P0-gpt-4o-mini-od.d\endcsname{100}
\expandafter\def\csname regval:M-stress-e2-P0-gpt-4o-mini-acc\endcsname{0.210}
\expandafter\def\csname regval:M-stress-e2-P0-gpt-4o-mini-acc.n\endcsname{21}
\expandafter\def\csname regval:M-stress-e2-P0-gpt-4o-mini-acc.d\endcsname{100}
\expandafter\def\csname regval:M-stress-e2-P0-gpt-4o-mini-answered\endcsname{0.770}
\expandafter\def\csname regval:M-stress-e2-P0-gpt-4o-mini-answered.n\endcsname{77}
\expandafter\def\csname regval:M-stress-e2-P0-gpt-4o-mini-answered.d\endcsname{100}
\expandafter\def\csname regval:M-stress-e2-P1-gpt-5.5-iaimr\endcsname{0.990}
\expandafter\def\csname regval:M-stress-e2-P1-gpt-5.5-iaimr.n\endcsname{297}
\expandafter\def\csname regval:M-stress-e2-P1-gpt-5.5-iaimr.d\endcsname{300}
\expandafter\def\csname regval:M-stress-e2-P1-gpt-5.5-iaimr.ci\endcsname{[0.971, 0.997]}
\expandafter\def\csname regval:M-stress-e2-P1-gpt-5.5-blank\endcsname{1.000}
\expandafter\def\csname regval:M-stress-e2-P1-gpt-5.5-blank.n\endcsname{100}
\expandafter\def\csname regval:M-stress-e2-P1-gpt-5.5-blank.d\endcsname{100}
\expandafter\def\csname regval:M-stress-e2-P1-gpt-5.5-id\endcsname{0.990}
\expandafter\def\csname regval:M-stress-e2-P1-gpt-5.5-id.n\endcsname{99}
\expandafter\def\csname regval:M-stress-e2-P1-gpt-5.5-id.d\endcsname{100}
\expandafter\def\csname regval:M-stress-e2-P1-gpt-5.5-od\endcsname{0.980}
\expandafter\def\csname regval:M-stress-e2-P1-gpt-5.5-od.n\endcsname{98}
\expandafter\def\csname regval:M-stress-e2-P1-gpt-5.5-od.d\endcsname{100}
\expandafter\def\csname regval:M-stress-e2-P1-gpt-5.5-acc\endcsname{0.320}
\expandafter\def\csname regval:M-stress-e2-P1-gpt-5.5-acc.n\endcsname{32}
\expandafter\def\csname regval:M-stress-e2-P1-gpt-5.5-acc.d\endcsname{100}
\expandafter\def\csname regval:M-stress-e2-P1-gpt-5.5-answered\endcsname{0.930}
\expandafter\def\csname regval:M-stress-e2-P1-gpt-5.5-answered.n\endcsname{93}
\expandafter\def\csname regval:M-stress-e2-P1-gpt-5.5-answered.d\endcsname{100}
\expandafter\def\csname regval:M-stress-e2-P1-gpt-4o-mini-iaimr\endcsname{0.653}
\expandafter\def\csname regval:M-stress-e2-P1-gpt-4o-mini-iaimr.n\endcsname{196}
\expandafter\def\csname regval:M-stress-e2-P1-gpt-4o-mini-iaimr.d\endcsname{300}
\expandafter\def\csname regval:M-stress-e2-P1-gpt-4o-mini-iaimr.ci\endcsname{[0.598, 0.705]}
\expandafter\def\csname regval:M-stress-e2-P1-gpt-4o-mini-blank\endcsname{0.510}
\expandafter\def\csname regval:M-stress-e2-P1-gpt-4o-mini-blank.n\endcsname{51}
\expandafter\def\csname regval:M-stress-e2-P1-gpt-4o-mini-blank.d\endcsname{100}
\expandafter\def\csname regval:M-stress-e2-P1-gpt-4o-mini-id\endcsname{0.880}
\expandafter\def\csname regval:M-stress-e2-P1-gpt-4o-mini-id.n\endcsname{88}
\expandafter\def\csname regval:M-stress-e2-P1-gpt-4o-mini-id.d\endcsname{100}
\expandafter\def\csname regval:M-stress-e2-P1-gpt-4o-mini-od\endcsname{0.570}
\expandafter\def\csname regval:M-stress-e2-P1-gpt-4o-mini-od.n\endcsname{57}
\expandafter\def\csname regval:M-stress-e2-P1-gpt-4o-mini-od.d\endcsname{100}
\expandafter\def\csname regval:M-stress-e2-P1-gpt-4o-mini-acc\endcsname{0.300}
\expandafter\def\csname regval:M-stress-e2-P1-gpt-4o-mini-acc.n\endcsname{30}
\expandafter\def\csname regval:M-stress-e2-P1-gpt-4o-mini-acc.d\endcsname{100}
\expandafter\def\csname regval:M-stress-e2-P1-gpt-4o-mini-answered\endcsname{0.800}
\expandafter\def\csname regval:M-stress-e2-P1-gpt-4o-mini-answered.n\endcsname{80}
\expandafter\def\csname regval:M-stress-e2-P1-gpt-4o-mini-answered.d\endcsname{100}
\expandafter\def\csname regval:M-stress-e2-P2-gpt-5.5-iaimr\endcsname{0.683}
\expandafter\def\csname regval:M-stress-e2-P2-gpt-5.5-iaimr.n\endcsname{205}
\expandafter\def\csname regval:M-stress-e2-P2-gpt-5.5-iaimr.d\endcsname{300}
\expandafter\def\csname regval:M-stress-e2-P2-gpt-5.5-iaimr.ci\endcsname{[0.629, 0.733]}
\expandafter\def\csname regval:M-stress-e2-P2-gpt-5.5-blank\endcsname{0.440}
\expandafter\def\csname regval:M-stress-e2-P2-gpt-5.5-blank.n\endcsname{44}
\expandafter\def\csname regval:M-stress-e2-P2-gpt-5.5-blank.d\endcsname{100}
\expandafter\def\csname regval:M-stress-e2-P2-gpt-5.5-id\endcsname{0.800}
\expandafter\def\csname regval:M-stress-e2-P2-gpt-5.5-id.n\endcsname{80}
\expandafter\def\csname regval:M-stress-e2-P2-gpt-5.5-id.d\endcsname{100}
\expandafter\def\csname regval:M-stress-e2-P2-gpt-5.5-od\endcsname{0.810}
\expandafter\def\csname regval:M-stress-e2-P2-gpt-5.5-od.n\endcsname{81}
\expandafter\def\csname regval:M-stress-e2-P2-gpt-5.5-od.d\endcsname{100}
\expandafter\def\csname regval:M-stress-e2-P2-gpt-5.5-acc\endcsname{0.330}
\expandafter\def\csname regval:M-stress-e2-P2-gpt-5.5-acc.n\endcsname{33}
\expandafter\def\csname regval:M-stress-e2-P2-gpt-5.5-acc.d\endcsname{100}
\expandafter\def\csname regval:M-stress-e2-P2-gpt-5.5-answered\endcsname{0.800}
\expandafter\def\csname regval:M-stress-e2-P2-gpt-5.5-answered.n\endcsname{80}
\expandafter\def\csname regval:M-stress-e2-P2-gpt-5.5-answered.d\endcsname{100}
\expandafter\def\csname regval:M-stress-e2-P2-gpt-4o-mini-iaimr\endcsname{0.383}
\expandafter\def\csname regval:M-stress-e2-P2-gpt-4o-mini-iaimr.n\endcsname{115}
\expandafter\def\csname regval:M-stress-e2-P2-gpt-4o-mini-iaimr.d\endcsname{300}
\expandafter\def\csname regval:M-stress-e2-P2-gpt-4o-mini-iaimr.ci\endcsname{[0.330, 0.440]}
\expandafter\def\csname regval:M-stress-e2-P2-gpt-4o-mini-blank\endcsname{0.300}
\expandafter\def\csname regval:M-stress-e2-P2-gpt-4o-mini-blank.n\endcsname{30}
\expandafter\def\csname regval:M-stress-e2-P2-gpt-4o-mini-blank.d\endcsname{100}
\expandafter\def\csname regval:M-stress-e2-P2-gpt-4o-mini-id\endcsname{0.520}
\expandafter\def\csname regval:M-stress-e2-P2-gpt-4o-mini-id.n\endcsname{52}
\expandafter\def\csname regval:M-stress-e2-P2-gpt-4o-mini-id.d\endcsname{100}
\expandafter\def\csname regval:M-stress-e2-P2-gpt-4o-mini-od\endcsname{0.330}
\expandafter\def\csname regval:M-stress-e2-P2-gpt-4o-mini-od.n\endcsname{33}
\expandafter\def\csname regval:M-stress-e2-P2-gpt-4o-mini-od.d\endcsname{100}
\expandafter\def\csname regval:M-stress-e2-P2-gpt-4o-mini-acc\endcsname{0.280}
\expandafter\def\csname regval:M-stress-e2-P2-gpt-4o-mini-acc.n\endcsname{28}
\expandafter\def\csname regval:M-stress-e2-P2-gpt-4o-mini-acc.d\endcsname{100}
\expandafter\def\csname regval:M-stress-e2-P2-gpt-4o-mini-answered\endcsname{0.730}
\expandafter\def\csname regval:M-stress-e2-P2-gpt-4o-mini-answered.n\endcsname{73}
\expandafter\def\csname regval:M-stress-e2-P2-gpt-4o-mini-answered.d\endcsname{100}
\expandafter\def\csname regval:M-stress-e3-P0-gpt-5.5-iaimr\endcsname{0.093}
\expandafter\def\csname regval:M-stress-e3-P0-gpt-5.5-iaimr.n\endcsname{28}
\expandafter\def\csname regval:M-stress-e3-P0-gpt-5.5-iaimr.d\endcsname{300}
\expandafter\def\csname regval:M-stress-e3-P0-gpt-5.5-iaimr.ci\endcsname{[0.065, 0.132]}
\expandafter\def\csname regval:M-stress-e3-P0-gpt-5.5-blank\endcsname{0.000}
\expandafter\def\csname regval:M-stress-e3-P0-gpt-5.5-blank.n\endcsname{0}
\expandafter\def\csname regval:M-stress-e3-P0-gpt-5.5-blank.d\endcsname{100}
\expandafter\def\csname regval:M-stress-e3-P0-gpt-5.5-id\endcsname{0.240}
\expandafter\def\csname regval:M-stress-e3-P0-gpt-5.5-id.n\endcsname{24}
\expandafter\def\csname regval:M-stress-e3-P0-gpt-5.5-id.d\endcsname{100}
\expandafter\def\csname regval:M-stress-e3-P0-gpt-5.5-od\endcsname{0.040}
\expandafter\def\csname regval:M-stress-e3-P0-gpt-5.5-od.n\endcsname{4}
\expandafter\def\csname regval:M-stress-e3-P0-gpt-5.5-od.d\endcsname{100}
\expandafter\def\csname regval:M-stress-e3-P0-gpt-5.5-acc\endcsname{0.410}
\expandafter\def\csname regval:M-stress-e3-P0-gpt-5.5-acc.n\endcsname{41}
\expandafter\def\csname regval:M-stress-e3-P0-gpt-5.5-acc.d\endcsname{100}
\expandafter\def\csname regval:M-stress-e3-P0-gpt-5.5-answered\endcsname{0.820}
\expandafter\def\csname regval:M-stress-e3-P0-gpt-5.5-answered.n\endcsname{82}
\expandafter\def\csname regval:M-stress-e3-P0-gpt-5.5-answered.d\endcsname{100}
\expandafter\def\csname regval:M-stress-e3-P0-gpt-4o-mini-iaimr\endcsname{0.093}
\expandafter\def\csname regval:M-stress-e3-P0-gpt-4o-mini-iaimr.n\endcsname{28}
\expandafter\def\csname regval:M-stress-e3-P0-gpt-4o-mini-iaimr.d\endcsname{300}
\expandafter\def\csname regval:M-stress-e3-P0-gpt-4o-mini-iaimr.ci\endcsname{[0.065, 0.132]}
\expandafter\def\csname regval:M-stress-e3-P0-gpt-4o-mini-blank\endcsname{0.000}
\expandafter\def\csname regval:M-stress-e3-P0-gpt-4o-mini-blank.n\endcsname{0}
\expandafter\def\csname regval:M-stress-e3-P0-gpt-4o-mini-blank.d\endcsname{100}
\expandafter\def\csname regval:M-stress-e3-P0-gpt-4o-mini-id\endcsname{0.240}
\expandafter\def\csname regval:M-stress-e3-P0-gpt-4o-mini-id.n\endcsname{24}
\expandafter\def\csname regval:M-stress-e3-P0-gpt-4o-mini-id.d\endcsname{100}
\expandafter\def\csname regval:M-stress-e3-P0-gpt-4o-mini-od\endcsname{0.040}
\expandafter\def\csname regval:M-stress-e3-P0-gpt-4o-mini-od.n\endcsname{4}
\expandafter\def\csname regval:M-stress-e3-P0-gpt-4o-mini-od.d\endcsname{100}
\expandafter\def\csname regval:M-stress-e3-P0-gpt-4o-mini-acc\endcsname{0.400}
\expandafter\def\csname regval:M-stress-e3-P0-gpt-4o-mini-acc.n\endcsname{40}
\expandafter\def\csname regval:M-stress-e3-P0-gpt-4o-mini-acc.d\endcsname{100}
\expandafter\def\csname regval:M-stress-e3-P0-gpt-4o-mini-answered\endcsname{0.820}
\expandafter\def\csname regval:M-stress-e3-P0-gpt-4o-mini-answered.n\endcsname{82}
\expandafter\def\csname regval:M-stress-e3-P0-gpt-4o-mini-answered.d\endcsname{100}
\expandafter\def\csname regval:M-stress-e3-P1-gpt-5.5-iaimr\endcsname{0.127}
\expandafter\def\csname regval:M-stress-e3-P1-gpt-5.5-iaimr.n\endcsname{38}
\expandafter\def\csname regval:M-stress-e3-P1-gpt-5.5-iaimr.d\endcsname{300}
\expandafter\def\csname regval:M-stress-e3-P1-gpt-5.5-iaimr.ci\endcsname{[0.094, 0.169]}
\expandafter\def\csname regval:M-stress-e3-P1-gpt-5.5-blank\endcsname{0.100}
\expandafter\def\csname regval:M-stress-e3-P1-gpt-5.5-blank.n\endcsname{10}
\expandafter\def\csname regval:M-stress-e3-P1-gpt-5.5-blank.d\endcsname{100}
\expandafter\def\csname regval:M-stress-e3-P1-gpt-5.5-id\endcsname{0.240}
\expandafter\def\csname regval:M-stress-e3-P1-gpt-5.5-id.n\endcsname{24}
\expandafter\def\csname regval:M-stress-e3-P1-gpt-5.5-id.d\endcsname{100}
\expandafter\def\csname regval:M-stress-e3-P1-gpt-5.5-od\endcsname{0.040}
\expandafter\def\csname regval:M-stress-e3-P1-gpt-5.5-od.n\endcsname{4}
\expandafter\def\csname regval:M-stress-e3-P1-gpt-5.5-od.d\endcsname{100}
\expandafter\def\csname regval:M-stress-e3-P1-gpt-5.5-acc\endcsname{0.400}
\expandafter\def\csname regval:M-stress-e3-P1-gpt-5.5-acc.n\endcsname{40}
\expandafter\def\csname regval:M-stress-e3-P1-gpt-5.5-acc.d\endcsname{100}
\expandafter\def\csname regval:M-stress-e3-P1-gpt-5.5-answered\endcsname{0.770}
\expandafter\def\csname regval:M-stress-e3-P1-gpt-5.5-answered.n\endcsname{77}
\expandafter\def\csname regval:M-stress-e3-P1-gpt-5.5-answered.d\endcsname{100}
\expandafter\def\csname regval:M-stress-e3-P1-gpt-4o-mini-iaimr\endcsname{0.117}
\expandafter\def\csname regval:M-stress-e3-P1-gpt-4o-mini-iaimr.n\endcsname{35}
\expandafter\def\csname regval:M-stress-e3-P1-gpt-4o-mini-iaimr.d\endcsname{300}
\expandafter\def\csname regval:M-stress-e3-P1-gpt-4o-mini-iaimr.ci\endcsname{[0.085, 0.158]}
\expandafter\def\csname regval:M-stress-e3-P1-gpt-4o-mini-blank\endcsname{0.070}
\expandafter\def\csname regval:M-stress-e3-P1-gpt-4o-mini-blank.n\endcsname{7}
\expandafter\def\csname regval:M-stress-e3-P1-gpt-4o-mini-blank.d\endcsname{100}
\expandafter\def\csname regval:M-stress-e3-P1-gpt-4o-mini-id\endcsname{0.240}
\expandafter\def\csname regval:M-stress-e3-P1-gpt-4o-mini-id.n\endcsname{24}
\expandafter\def\csname regval:M-stress-e3-P1-gpt-4o-mini-id.d\endcsname{100}
\expandafter\def\csname regval:M-stress-e3-P1-gpt-4o-mini-od\endcsname{0.040}
\expandafter\def\csname regval:M-stress-e3-P1-gpt-4o-mini-od.n\endcsname{4}
\expandafter\def\csname regval:M-stress-e3-P1-gpt-4o-mini-od.d\endcsname{100}
\expandafter\def\csname regval:M-stress-e3-P1-gpt-4o-mini-acc\endcsname{0.400}
\expandafter\def\csname regval:M-stress-e3-P1-gpt-4o-mini-acc.n\endcsname{40}
\expandafter\def\csname regval:M-stress-e3-P1-gpt-4o-mini-acc.d\endcsname{100}
\expandafter\def\csname regval:M-stress-e3-P1-gpt-4o-mini-answered\endcsname{0.770}
\expandafter\def\csname regval:M-stress-e3-P1-gpt-4o-mini-answered.n\endcsname{77}
\expandafter\def\csname regval:M-stress-e3-P1-gpt-4o-mini-answered.d\endcsname{100}
\expandafter\def\csname regval:M-stress-e3-P2-gpt-5.5-iaimr\endcsname{0.087}
\expandafter\def\csname regval:M-stress-e3-P2-gpt-5.5-iaimr.n\endcsname{26}
\expandafter\def\csname regval:M-stress-e3-P2-gpt-5.5-iaimr.d\endcsname{300}
\expandafter\def\csname regval:M-stress-e3-P2-gpt-5.5-iaimr.ci\endcsname{[0.060, 0.124]}
\expandafter\def\csname regval:M-stress-e3-P2-gpt-5.5-blank\endcsname{0.000}
\expandafter\def\csname regval:M-stress-e3-P2-gpt-5.5-blank.n\endcsname{0}
\expandafter\def\csname regval:M-stress-e3-P2-gpt-5.5-blank.d\endcsname{100}
\expandafter\def\csname regval:M-stress-e3-P2-gpt-5.5-id\endcsname{0.230}
\expandafter\def\csname regval:M-stress-e3-P2-gpt-5.5-id.n\endcsname{23}
\expandafter\def\csname regval:M-stress-e3-P2-gpt-5.5-id.d\endcsname{100}
\expandafter\def\csname regval:M-stress-e3-P2-gpt-5.5-od\endcsname{0.030}
\expandafter\def\csname regval:M-stress-e3-P2-gpt-5.5-od.n\endcsname{3}
\expandafter\def\csname regval:M-stress-e3-P2-gpt-5.5-od.d\endcsname{100}
\expandafter\def\csname regval:M-stress-e3-P2-gpt-5.5-acc\endcsname{0.380}
\expandafter\def\csname regval:M-stress-e3-P2-gpt-5.5-acc.n\endcsname{38}
\expandafter\def\csname regval:M-stress-e3-P2-gpt-5.5-acc.d\endcsname{100}
\expandafter\def\csname regval:M-stress-e3-P2-gpt-5.5-answered\endcsname{0.800}
\expandafter\def\csname regval:M-stress-e3-P2-gpt-5.5-answered.n\endcsname{80}
\expandafter\def\csname regval:M-stress-e3-P2-gpt-5.5-answered.d\endcsname{100}
\expandafter\def\csname regval:M-stress-e3-P2-gpt-4o-mini-iaimr\endcsname{0.087}
\expandafter\def\csname regval:M-stress-e3-P2-gpt-4o-mini-iaimr.n\endcsname{26}
\expandafter\def\csname regval:M-stress-e3-P2-gpt-4o-mini-iaimr.d\endcsname{300}
\expandafter\def\csname regval:M-stress-e3-P2-gpt-4o-mini-iaimr.ci\endcsname{[0.060, 0.124]}
\expandafter\def\csname regval:M-stress-e3-P2-gpt-4o-mini-blank\endcsname{0.000}
\expandafter\def\csname regval:M-stress-e3-P2-gpt-4o-mini-blank.n\endcsname{0}
\expandafter\def\csname regval:M-stress-e3-P2-gpt-4o-mini-blank.d\endcsname{100}
\expandafter\def\csname regval:M-stress-e3-P2-gpt-4o-mini-id\endcsname{0.230}
\expandafter\def\csname regval:M-stress-e3-P2-gpt-4o-mini-id.n\endcsname{23}
\expandafter\def\csname regval:M-stress-e3-P2-gpt-4o-mini-id.d\endcsname{100}
\expandafter\def\csname regval:M-stress-e3-P2-gpt-4o-mini-od\endcsname{0.030}
\expandafter\def\csname regval:M-stress-e3-P2-gpt-4o-mini-od.n\endcsname{3}
\expandafter\def\csname regval:M-stress-e3-P2-gpt-4o-mini-od.d\endcsname{100}
\expandafter\def\csname regval:M-stress-e3-P2-gpt-4o-mini-acc\endcsname{0.370}
\expandafter\def\csname regval:M-stress-e3-P2-gpt-4o-mini-acc.n\endcsname{37}
\expandafter\def\csname regval:M-stress-e3-P2-gpt-4o-mini-acc.d\endcsname{100}
\expandafter\def\csname regval:M-stress-e3-P2-gpt-4o-mini-answered\endcsname{0.800}
\expandafter\def\csname regval:M-stress-e3-P2-gpt-4o-mini-answered.n\endcsname{80}
\expandafter\def\csname regval:M-stress-e3-P2-gpt-4o-mini-answered.d\endcsname{100}
\expandafter\def\csname regval:M-stress-e4-P0-gpt-5.5-iaimr\endcsname{0.513}
\expandafter\def\csname regval:M-stress-e4-P0-gpt-5.5-iaimr.n\endcsname{154}
\expandafter\def\csname regval:M-stress-e4-P0-gpt-5.5-iaimr.d\endcsname{300}
\expandafter\def\csname regval:M-stress-e4-P0-gpt-5.5-iaimr.ci\endcsname{[0.457, 0.569]}
\expandafter\def\csname regval:M-stress-e4-P0-gpt-5.5-blank\endcsname{0.010}
\expandafter\def\csname regval:M-stress-e4-P0-gpt-5.5-blank.n\endcsname{1}
\expandafter\def\csname regval:M-stress-e4-P0-gpt-5.5-blank.d\endcsname{100}
\expandafter\def\csname regval:M-stress-e4-P0-gpt-5.5-id\endcsname{0.780}
\expandafter\def\csname regval:M-stress-e4-P0-gpt-5.5-id.n\endcsname{78}
\expandafter\def\csname regval:M-stress-e4-P0-gpt-5.5-id.d\endcsname{100}
\expandafter\def\csname regval:M-stress-e4-P0-gpt-5.5-od\endcsname{0.750}
\expandafter\def\csname regval:M-stress-e4-P0-gpt-5.5-od.n\endcsname{75}
\expandafter\def\csname regval:M-stress-e4-P0-gpt-5.5-od.d\endcsname{100}
\expandafter\def\csname regval:M-stress-e4-P0-gpt-5.5-acc\endcsname{0.340}
\expandafter\def\csname regval:M-stress-e4-P0-gpt-5.5-acc.n\endcsname{34}
\expandafter\def\csname regval:M-stress-e4-P0-gpt-5.5-acc.d\endcsname{100}
\expandafter\def\csname regval:M-stress-e4-P0-gpt-5.5-answered\endcsname{0.850}
\expandafter\def\csname regval:M-stress-e4-P0-gpt-5.5-answered.n\endcsname{85}
\expandafter\def\csname regval:M-stress-e4-P0-gpt-5.5-answered.d\endcsname{100}
\expandafter\def\csname regval:M-stress-e4-P0-gpt-4o-mini-iaimr\endcsname{0.223}
\expandafter\def\csname regval:M-stress-e4-P0-gpt-4o-mini-iaimr.n\endcsname{67}
\expandafter\def\csname regval:M-stress-e4-P0-gpt-4o-mini-iaimr.d\endcsname{300}
\expandafter\def\csname regval:M-stress-e4-P0-gpt-4o-mini-iaimr.ci\endcsname{[0.180, 0.274]}
\expandafter\def\csname regval:M-stress-e4-P0-gpt-4o-mini-blank\endcsname{0.010}
\expandafter\def\csname regval:M-stress-e4-P0-gpt-4o-mini-blank.n\endcsname{1}
\expandafter\def\csname regval:M-stress-e4-P0-gpt-4o-mini-blank.d\endcsname{100}
\expandafter\def\csname regval:M-stress-e4-P0-gpt-4o-mini-id\endcsname{0.380}
\expandafter\def\csname regval:M-stress-e4-P0-gpt-4o-mini-id.n\endcsname{38}
\expandafter\def\csname regval:M-stress-e4-P0-gpt-4o-mini-id.d\endcsname{100}
\expandafter\def\csname regval:M-stress-e4-P0-gpt-4o-mini-od\endcsname{0.280}
\expandafter\def\csname regval:M-stress-e4-P0-gpt-4o-mini-od.n\endcsname{28}
\expandafter\def\csname regval:M-stress-e4-P0-gpt-4o-mini-od.d\endcsname{100}
\expandafter\def\csname regval:M-stress-e4-P0-gpt-4o-mini-acc\endcsname{0.310}
\expandafter\def\csname regval:M-stress-e4-P0-gpt-4o-mini-acc.n\endcsname{31}
\expandafter\def\csname regval:M-stress-e4-P0-gpt-4o-mini-acc.d\endcsname{100}
\expandafter\def\csname regval:M-stress-e4-P0-gpt-4o-mini-answered\endcsname{0.840}
\expandafter\def\csname regval:M-stress-e4-P0-gpt-4o-mini-answered.n\endcsname{84}
\expandafter\def\csname regval:M-stress-e4-P0-gpt-4o-mini-answered.d\endcsname{100}
\expandafter\def\csname regval:M-stress-e4-P1-gpt-5.5-iaimr\endcsname{0.530}
\expandafter\def\csname regval:M-stress-e4-P1-gpt-5.5-iaimr.n\endcsname{159}
\expandafter\def\csname regval:M-stress-e4-P1-gpt-5.5-iaimr.d\endcsname{300}
\expandafter\def\csname regval:M-stress-e4-P1-gpt-5.5-iaimr.ci\endcsname{[0.473, 0.586]}
\expandafter\def\csname regval:M-stress-e4-P1-gpt-5.5-blank\endcsname{0.030}
\expandafter\def\csname regval:M-stress-e4-P1-gpt-5.5-blank.n\endcsname{3}
\expandafter\def\csname regval:M-stress-e4-P1-gpt-5.5-blank.d\endcsname{100}
\expandafter\def\csname regval:M-stress-e4-P1-gpt-5.5-id\endcsname{0.710}
\expandafter\def\csname regval:M-stress-e4-P1-gpt-5.5-id.n\endcsname{71}
\expandafter\def\csname regval:M-stress-e4-P1-gpt-5.5-id.d\endcsname{100}
\expandafter\def\csname regval:M-stress-e4-P1-gpt-5.5-od\endcsname{0.850}
\expandafter\def\csname regval:M-stress-e4-P1-gpt-5.5-od.n\endcsname{85}
\expandafter\def\csname regval:M-stress-e4-P1-gpt-5.5-od.d\endcsname{100}
\expandafter\def\csname regval:M-stress-e4-P1-gpt-5.5-acc\endcsname{0.340}
\expandafter\def\csname regval:M-stress-e4-P1-gpt-5.5-acc.n\endcsname{34}
\expandafter\def\csname regval:M-stress-e4-P1-gpt-5.5-acc.d\endcsname{100}
\expandafter\def\csname regval:M-stress-e4-P1-gpt-5.5-answered\endcsname{0.840}
\expandafter\def\csname regval:M-stress-e4-P1-gpt-5.5-answered.n\endcsname{84}
\expandafter\def\csname regval:M-stress-e4-P1-gpt-5.5-answered.d\endcsname{100}
\expandafter\def\csname regval:M-stress-e4-P1-gpt-4o-mini-iaimr\endcsname{0.303}
\expandafter\def\csname regval:M-stress-e4-P1-gpt-4o-mini-iaimr.n\endcsname{91}
\expandafter\def\csname regval:M-stress-e4-P1-gpt-4o-mini-iaimr.d\endcsname{300}
\expandafter\def\csname regval:M-stress-e4-P1-gpt-4o-mini-iaimr.ci\endcsname{[0.254, 0.358]}
\expandafter\def\csname regval:M-stress-e4-P1-gpt-4o-mini-blank\endcsname{0.030}
\expandafter\def\csname regval:M-stress-e4-P1-gpt-4o-mini-blank.n\endcsname{3}
\expandafter\def\csname regval:M-stress-e4-P1-gpt-4o-mini-blank.d\endcsname{100}
\expandafter\def\csname regval:M-stress-e4-P1-gpt-4o-mini-id\endcsname{0.440}
\expandafter\def\csname regval:M-stress-e4-P1-gpt-4o-mini-id.n\endcsname{44}
\expandafter\def\csname regval:M-stress-e4-P1-gpt-4o-mini-id.d\endcsname{100}
\expandafter\def\csname regval:M-stress-e4-P1-gpt-4o-mini-od\endcsname{0.440}
\expandafter\def\csname regval:M-stress-e4-P1-gpt-4o-mini-od.n\endcsname{44}
\expandafter\def\csname regval:M-stress-e4-P1-gpt-4o-mini-od.d\endcsname{100}
\expandafter\def\csname regval:M-stress-e4-P1-gpt-4o-mini-acc\endcsname{0.310}
\expandafter\def\csname regval:M-stress-e4-P1-gpt-4o-mini-acc.n\endcsname{31}
\expandafter\def\csname regval:M-stress-e4-P1-gpt-4o-mini-acc.d\endcsname{100}
\expandafter\def\csname regval:M-stress-e4-P1-gpt-4o-mini-answered\endcsname{0.820}
\expandafter\def\csname regval:M-stress-e4-P1-gpt-4o-mini-answered.n\endcsname{82}
\expandafter\def\csname regval:M-stress-e4-P1-gpt-4o-mini-answered.d\endcsname{100}
\expandafter\def\csname regval:M-stress-e4-P2-gpt-5.5-iaimr\endcsname{0.477}
\expandafter\def\csname regval:M-stress-e4-P2-gpt-5.5-iaimr.n\endcsname{143}
\expandafter\def\csname regval:M-stress-e4-P2-gpt-5.5-iaimr.d\endcsname{300}
\expandafter\def\csname regval:M-stress-e4-P2-gpt-5.5-iaimr.ci\endcsname{[0.421, 0.533]}
\expandafter\def\csname regval:M-stress-e4-P2-gpt-5.5-blank\endcsname{0.010}
\expandafter\def\csname regval:M-stress-e4-P2-gpt-5.5-blank.n\endcsname{1}
\expandafter\def\csname regval:M-stress-e4-P2-gpt-5.5-blank.d\endcsname{100}
\expandafter\def\csname regval:M-stress-e4-P2-gpt-5.5-id\endcsname{0.690}
\expandafter\def\csname regval:M-stress-e4-P2-gpt-5.5-id.n\endcsname{69}
\expandafter\def\csname regval:M-stress-e4-P2-gpt-5.5-id.d\endcsname{100}
\expandafter\def\csname regval:M-stress-e4-P2-gpt-5.5-od\endcsname{0.730}
\expandafter\def\csname regval:M-stress-e4-P2-gpt-5.5-od.n\endcsname{73}
\expandafter\def\csname regval:M-stress-e4-P2-gpt-5.5-od.d\endcsname{100}
\expandafter\def\csname regval:M-stress-e4-P2-gpt-5.5-acc\endcsname{0.320}
\expandafter\def\csname regval:M-stress-e4-P2-gpt-5.5-acc.n\endcsname{32}
\expandafter\def\csname regval:M-stress-e4-P2-gpt-5.5-acc.d\endcsname{100}
\expandafter\def\csname regval:M-stress-e4-P2-gpt-5.5-answered\endcsname{0.850}
\expandafter\def\csname regval:M-stress-e4-P2-gpt-5.5-answered.n\endcsname{85}
\expandafter\def\csname regval:M-stress-e4-P2-gpt-5.5-answered.d\endcsname{100}
\expandafter\def\csname regval:M-stress-e4-P2-gpt-4o-mini-iaimr\endcsname{0.203}
\expandafter\def\csname regval:M-stress-e4-P2-gpt-4o-mini-iaimr.n\endcsname{61}
\expandafter\def\csname regval:M-stress-e4-P2-gpt-4o-mini-iaimr.d\endcsname{300}
\expandafter\def\csname regval:M-stress-e4-P2-gpt-4o-mini-iaimr.ci\endcsname{[0.162, 0.253]}
\expandafter\def\csname regval:M-stress-e4-P2-gpt-4o-mini-blank\endcsname{0.010}
\expandafter\def\csname regval:M-stress-e4-P2-gpt-4o-mini-blank.n\endcsname{1}
\expandafter\def\csname regval:M-stress-e4-P2-gpt-4o-mini-blank.d\endcsname{100}
\expandafter\def\csname regval:M-stress-e4-P2-gpt-4o-mini-id\endcsname{0.370}
\expandafter\def\csname regval:M-stress-e4-P2-gpt-4o-mini-id.n\endcsname{37}
\expandafter\def\csname regval:M-stress-e4-P2-gpt-4o-mini-id.d\endcsname{100}
\expandafter\def\csname regval:M-stress-e4-P2-gpt-4o-mini-od\endcsname{0.230}
\expandafter\def\csname regval:M-stress-e4-P2-gpt-4o-mini-od.n\endcsname{23}
\expandafter\def\csname regval:M-stress-e4-P2-gpt-4o-mini-od.d\endcsname{100}
\expandafter\def\csname regval:M-stress-e4-P2-gpt-4o-mini-acc\endcsname{0.300}
\expandafter\def\csname regval:M-stress-e4-P2-gpt-4o-mini-acc.n\endcsname{30}
\expandafter\def\csname regval:M-stress-e4-P2-gpt-4o-mini-acc.d\endcsname{100}
\expandafter\def\csname regval:M-stress-e4-P2-gpt-4o-mini-answered\endcsname{0.830}
\expandafter\def\csname regval:M-stress-e4-P2-gpt-4o-mini-answered.n\endcsname{83}
\expandafter\def\csname regval:M-stress-e4-P2-gpt-4o-mini-answered.d\endcsname{100}
\expandafter\def\csname regval:M-diag-e4-abstain-before\endcsname{0.021}
\expandafter\def\csname regval:M-diag-e4-abstain-before.d\endcsname{10}
\expandafter\def\csname regval:M-diag-e4-abstain-after\endcsname{1.675}
\expandafter\def\csname regval:M-diag-e4-abstain-after.d\endcsname{10}
\expandafter\def\csname regval:M-diag-e4-real-before\endcsname{0.332}
\expandafter\def\csname regval:M-diag-e4-real-before.d\endcsname{10}
\expandafter\def\csname regval:M-diag-e4-real-after\endcsname{2.611}
\expandafter\def\csname regval:M-diag-e4-real-after.d\endcsname{10}
\expandafter\def\csname regval:M-diag-e5-mean_A_positive-before\endcsname{4.963}
\expandafter\def\csname regval:M-diag-e5-mean_A_positive-before.d\endcsname{64}
\expandafter\def\csname regval:M-diag-e5-mean_A_positive-after\endcsname{12.1}
\expandafter\def\csname regval:M-diag-e5-mean_A_positive-after.d\endcsname{64}
\expandafter\def\csname regval:M-diag-e5-candidate_top1_accuracy-before\endcsname{0.750}
\expandafter\def\csname regval:M-diag-e5-candidate_top1_accuracy-before.d\endcsname{64}
\expandafter\def\csname regval:M-diag-e5-candidate_top1_accuracy-after\endcsname{0.938}
\expandafter\def\csname regval:M-diag-e5-candidate_top1_accuracy-after.d\endcsname{64}
\expandafter\def\csname regval:M-diag-e5-evidence_positive_rate-before\endcsname{0.625}
\expandafter\def\csname regval:M-diag-e5-evidence_positive_rate-before.d\endcsname{64}
\expandafter\def\csname regval:M-diag-e5-evidence_positive_rate-after\endcsname{0.984}
\expandafter\def\csname regval:M-diag-e5-evidence_positive_rate-after.d\endcsname{64}
\expandafter\def\csname regval:M-diag-e5-contrastive_loss-before\endcsname{0.760}
\expandafter\def\csname regval:M-diag-e5-contrastive_loss-before.d\endcsname{64}
\expandafter\def\csname regval:M-diag-e5-contrastive_loss-after\endcsname{0.145}
\expandafter\def\csname regval:M-diag-e5-contrastive_loss-after.d\endcsname{64}
\expandafter\def\csname regval:M-diag-e5-empty-grad\endcsname{0.000}
\expandafter\def\csname regval:M-diag-e5-empty-grad.d\endcsname{300}
\expandafter\def\csname regval:M-diag-e2-heldmatch\endcsname{0.924}
\expandafter\def\csname regval:M-diag-e2-heldmatch.n\endcsname{231}
\expandafter\def\csname regval:M-diag-e2-heldmatch.d\endcsname{250}
\expandafter\def\csname regval:M-poc-bclean-removed-total\endcsname{0.461}
\expandafter\def\csname regval:M-poc-bclean-removed-total.n\endcsname{2{,}190}
\expandafter\def\csname regval:M-poc-bclean-removed-total.d\endcsname{4{,}751}
\expandafter\def\csname regval:M-poc-bclean-lomo-min\endcsname{0.383}
\expandafter\def\csname regval:M-poc-bclean-lomo-min.d\endcsname{4{,}751}
\expandafter\def\csname regval:M-poc-bclean-lomo-max\endcsname{0.456}
\expandafter\def\csname regval:M-poc-bclean-lomo-max.d\endcsname{4{,}751}
\expandafter\def\csname regval:M-poc-union5-expected\endcsname{0.750}
\expandafter\def\csname regval:M-poc-union5-expected.d\endcsname{2{,}816}
\expandafter\def\csname regval:M-poc-bclean-removed-chancelevel\endcsname{0.658}
\expandafter\def\csname regval:M-poc-bclean-removed-chancelevel.d\endcsname{2{,}816}
\expandafter\def\csname regval:M-poc-bclean-removed-nochance\endcsname{0.175}
\expandafter\def\csname regval:M-poc-bclean-removed-nochance.d\endcsname{1{,}935}
\expandafter\def\csname regval:M-poc-chancelevel-n\endcsname{2{,}816}
\expandafter\def\csname regval:M-poc-chancelevel-n.n\endcsname{2{,}816}
\expandafter\def\csname regval:M-poc-chancelevel-n.d\endcsname{4{,}751}
\expandafter\def\csname regval:M-poc-bclean-removed-fmt-mcq\endcsname{0.610}
\expandafter\def\csname regval:M-poc-bclean-removed-fmt-mcq.d\endcsname{2{,}280}
\expandafter\def\csname regval:M-poc-fmt-n-mcq\endcsname{2{,}280}
\expandafter\def\csname regval:M-poc-fmt-n-mcq.n\endcsname{2{,}280}
\expandafter\def\csname regval:M-poc-fmt-n-mcq.d\endcsname{4{,}751}
\expandafter\def\csname regval:M-poc-bclean-removed-fmt-yesno\endcsname{0.862}
\expandafter\def\csname regval:M-poc-bclean-removed-fmt-yesno.d\endcsname{536}
\expandafter\def\csname regval:M-poc-fmt-n-yesno\endcsname{536}
\expandafter\def\csname regval:M-poc-fmt-n-yesno.n\endcsname{536}
\expandafter\def\csname regval:M-poc-fmt-n-yesno.d\endcsname{4{,}751}
\expandafter\def\csname regval:M-poc-bclean-removed-fmt-open\endcsname{0.175}
\expandafter\def\csname regval:M-poc-bclean-removed-fmt-open.d\endcsname{1{,}935}
\expandafter\def\csname regval:M-poc-fmt-n-open\endcsname{1{,}935}
\expandafter\def\csname regval:M-poc-fmt-n-open.n\endcsname{1{,}935}
\expandafter\def\csname regval:M-poc-fmt-n-open.d\endcsname{4{,}751}
\expandafter\def\csname regval:M-poc-cov-blind\endcsname{23{,}755}
\expandafter\def\csname regval:M-poc-cov-image\endcsname{23{,}723}
\expandafter\def\csname regval:M-poc-cov-image-missing\endcsname{32}
\expandafter\def\csname regval:M-poc-cov-paired\endcsname{19{,}607}
\expandafter\def\csname regval:M-poc-cov-paired.n\endcsname{19{,}607}
\expandafter\def\csname regval:M-poc-cov-paired.d\endcsname{23{,}755}
\expandafter\def\csname regval:M-poc-cov-paired-claude-sonnet-4.6\endcsname{4{,}702}
\expandafter\def\csname regval:M-poc-cov-paired-claude-sonnet-4.6.n\endcsname{4{,}702}
\expandafter\def\csname regval:M-poc-cov-paired-claude-sonnet-4.6.d\endcsname{4{,}751}
\expandafter\def\csname regval:M-poc-cov-paired-gemini-3.1-flash-lite\endcsname{4{,}751}
\expandafter\def\csname regval:M-poc-cov-paired-gemini-3.1-flash-lite.n\endcsname{4{,}751}
\expandafter\def\csname regval:M-poc-cov-paired-gemini-3.1-flash-lite.d\endcsname{4{,}751}
\expandafter\def\csname regval:M-poc-cov-paired-llama-4-scout\endcsname{4{,}601}
\expandafter\def\csname regval:M-poc-cov-paired-llama-4-scout.n\endcsname{4{,}601}
\expandafter\def\csname regval:M-poc-cov-paired-llama-4-scout.d\endcsname{4{,}751}
\expandafter\def\csname regval:M-poc-cov-paired-gpt-5.5\endcsname{805}
\expandafter\def\csname regval:M-poc-cov-paired-gpt-5.5.n\endcsname{805}
\expandafter\def\csname regval:M-poc-cov-paired-gpt-5.5.d\endcsname{4{,}751}
\expandafter\def\csname regval:M-poc-cov-paired-qwen2.5-vl-72b-instruct\endcsname{4{,}748}
\expandafter\def\csname regval:M-poc-cov-paired-qwen2.5-vl-72b-instruct.n\endcsname{4{,}748}
\expandafter\def\csname regval:M-poc-cov-paired-qwen2.5-vl-72b-instruct.d\endcsname{4{,}751}
\expandafter\def\csname regval:M-poc-cov-image-csv\endcsname{23{,}721}
\expandafter\def\csname regval:M-poc-cov-image-missing-claude-sonnet-4.6-infovqa\endcsname{28}
\expandafter\def\csname regval:M-poc-cov-image-missing-gpt-5.5-pathvqa\endcsname{4}
\expandafter\def\csname regval:M-poc-union5-fmt-mcq\endcsname{0.699}
\expandafter\def\csname regval:M-poc-union5-fmt-mcq.d\endcsname{2{,}280}
\expandafter\def\csname regval:M-poc-union5-fmt-yesno\endcsname{0.969}
\expandafter\def\csname regval:M-poc-union5-fmt-yesno.d\endcsname{536}
\expandafter\def\csname regval:M-poc-bclean-clean-n\endcsname{2{,}561}
\expandafter\def\csname regval:M-poc-bclean-clean-n.n\endcsname{2{,}561}
\expandafter\def\csname regval:M-poc-bclean-clean-n.d\endcsname{4{,}751}
\expandafter\def\csname regval:M-poc-cov-blind-unjudged\endcsname{51}
\expandafter\def\csname regval:M-poc-cov-image-unjudged\endcsname{123}
\expandafter\def\csname regval:M-poc-empty-blind-claude-sonnet-4.6\endcsname{0.004}
\expandafter\def\csname regval:M-poc-empty-blind-claude-sonnet-4.6.n\endcsname{21}
\expandafter\def\csname regval:M-poc-empty-blind-claude-sonnet-4.6.d\endcsname{4{,}751}
\expandafter\def\csname regval:M-poc-empty-blind-gemini-3.1-flash-lite\endcsname{0}
\expandafter\def\csname regval:M-poc-empty-blind-gemini-3.1-flash-lite.n\endcsname{0}
\expandafter\def\csname regval:M-poc-empty-blind-gemini-3.1-flash-lite.d\endcsname{4{,}751}
\expandafter\def\csname regval:M-poc-empty-blind-llama-4-scout\endcsname{0.008}
\expandafter\def\csname regval:M-poc-empty-blind-llama-4-scout.n\endcsname{39}
\expandafter\def\csname regval:M-poc-empty-blind-llama-4-scout.d\endcsname{4{,}751}
\expandafter\def\csname regval:M-poc-empty-blind-gpt-5.5\endcsname{0.789}
\expandafter\def\csname regval:M-poc-empty-blind-gpt-5.5.n\endcsname{3{,}751}
\expandafter\def\csname regval:M-poc-empty-blind-gpt-5.5.d\endcsname{4{,}751}
\expandafter\def\csname regval:M-poc-empty-blind-qwen2.5-vl-72b-instruct\endcsname{0.000}
\expandafter\def\csname regval:M-poc-empty-blind-qwen2.5-vl-72b-instruct.n\endcsname{2}
\expandafter\def\csname regval:M-poc-empty-blind-qwen2.5-vl-72b-instruct.d\endcsname{4{,}751}
\expandafter\def\csname regval:M-poc-empty-image-claude-sonnet-4.6\endcsname{0.004}
\expandafter\def\csname regval:M-poc-empty-image-claude-sonnet-4.6.n\endcsname{20}
\expandafter\def\csname regval:M-poc-empty-image-claude-sonnet-4.6.d\endcsname{4{,}751}
\expandafter\def\csname regval:M-poc-empty-image-gemini-3.1-flash-lite\endcsname{0}
\expandafter\def\csname regval:M-poc-empty-image-gemini-3.1-flash-lite.n\endcsname{0}
\expandafter\def\csname regval:M-poc-empty-image-gemini-3.1-flash-lite.d\endcsname{4{,}751}
\expandafter\def\csname regval:M-poc-empty-image-llama-4-scout\endcsname{0.024}
\expandafter\def\csname regval:M-poc-empty-image-llama-4-scout.n\endcsname{114}
\expandafter\def\csname regval:M-poc-empty-image-llama-4-scout.d\endcsname{4{,}751}
\expandafter\def\csname regval:M-poc-empty-image-gpt-5.5\endcsname{0.384}
\expandafter\def\csname regval:M-poc-empty-image-gpt-5.5.n\endcsname{1{,}826}
\expandafter\def\csname regval:M-poc-empty-image-gpt-5.5.d\endcsname{4{,}751}
\expandafter\def\csname regval:M-poc-empty-image-qwen2.5-vl-72b-instruct\endcsname{0.000}
\expandafter\def\csname regval:M-poc-empty-image-qwen2.5-vl-72b-instruct.n\endcsname{1}
\expandafter\def\csname regval:M-poc-empty-image-qwen2.5-vl-72b-instruct.d\endcsname{4{,}751}
\expandafter\def\csname regval:M-poc-empty-gpt55\endcsname{0.789}
\expandafter\def\csname regval:M-poc-empty-gpt55.n\endcsname{3{,}751}
\expandafter\def\csname regval:M-poc-empty-gpt55.d\endcsname{4{,}751}
\expandafter\def\csname regval:M-poc-bclean-n-chest_vqa\endcsname{451}
\expandafter\def\csname regval:M-poc-bclean-removed-chest_vqa\endcsname{0.612}
\expandafter\def\csname regval:M-poc-bclean-removed-chest_vqa.n\endcsname{276}
\expandafter\def\csname regval:M-poc-bclean-removed-chest_vqa.d\endcsname{451}
\expandafter\def\csname regval:M-poc-lomo-min-chest_vqa\endcsname{0.388}
\expandafter\def\csname regval:M-poc-lomo-min-chest_vqa.d\endcsname{451}
\expandafter\def\csname regval:M-poc-lomo-max-chest_vqa\endcsname{0.612}
\expandafter\def\csname regval:M-poc-lomo-max-chest_vqa.d\endcsname{451}
\expandafter\def\csname regval:M-poc-union5-chest_vqa\endcsname{0.539}
\expandafter\def\csname regval:M-poc-union5-chest_vqa.d\endcsname{451}
\expandafter\def\csname regval:M-poc-substantive-rate-chest_vqa\endcsname{0.482}
\expandafter\def\csname regval:M-poc-substantive-rate-chest_vqa.d\endcsname{451}
\expandafter\def\csname regval:M-poc-decline-rate-chest_vqa\endcsname{0.358}
\expandafter\def\csname regval:M-poc-decline-rate-chest_vqa.d\endcsname{451}
\expandafter\def\csname regval:M-poc-empty-rate-chest_vqa\endcsname{0.160}
\expandafter\def\csname regval:M-poc-empty-rate-chest_vqa.d\endcsname{451}
\expandafter\def\csname regval:M-poc-blind-acc-paired-chest_vqa\endcsname{0.308}
\expandafter\def\csname regval:M-poc-blind-acc-paired-chest_vqa.d\endcsname{451}
\expandafter\def\csname regval:M-poc-image-acc-paired-chest_vqa\endcsname{0.671}
\expandafter\def\csname regval:M-poc-image-acc-paired-chest_vqa.d\endcsname{451}
\expandafter\def\csname regval:M-poc-gain-paired-chest_vqa\endcsname{0.364}
\expandafter\def\csname regval:M-poc-gain-paired-chest_vqa.d\endcsname{451}
\expandafter\def\csname regval:M-poc-gain-paired-empty-incorrect-chest_vqa\endcsname{0.359}
\expandafter\def\csname regval:M-poc-gain-paired-empty-incorrect-chest_vqa.d\endcsname{451}
\expandafter\def\csname regval:M-poc-bclean-n-pathvqa\endcsname{500}
\expandafter\def\csname regval:M-poc-bclean-removed-pathvqa\endcsname{0.456}
\expandafter\def\csname regval:M-poc-bclean-removed-pathvqa.n\endcsname{228}
\expandafter\def\csname regval:M-poc-bclean-removed-pathvqa.d\endcsname{500}
\expandafter\def\csname regval:M-poc-lomo-min-pathvqa\endcsname{0.352}
\expandafter\def\csname regval:M-poc-lomo-min-pathvqa.d\endcsname{500}
\expandafter\def\csname regval:M-poc-lomo-max-pathvqa\endcsname{0.456}
\expandafter\def\csname regval:M-poc-lomo-max-pathvqa.d\endcsname{500}
\expandafter\def\csname regval:M-poc-union5-pathvqa\endcsname{0.488}
\expandafter\def\csname regval:M-poc-union5-pathvqa.d\endcsname{500}
\expandafter\def\csname regval:M-poc-substantive-rate-pathvqa\endcsname{0.490}
\expandafter\def\csname regval:M-poc-substantive-rate-pathvqa.d\endcsname{500}
\expandafter\def\csname regval:M-poc-decline-rate-pathvqa\endcsname{0.373}
\expandafter\def\csname regval:M-poc-decline-rate-pathvqa.d\endcsname{500}
\expandafter\def\csname regval:M-poc-empty-rate-pathvqa\endcsname{0.137}
\expandafter\def\csname regval:M-poc-empty-rate-pathvqa.d\endcsname{500}
\expandafter\def\csname regval:M-poc-blind-acc-paired-pathvqa\endcsname{0.276}
\expandafter\def\csname regval:M-poc-blind-acc-paired-pathvqa.d\endcsname{500}
\expandafter\def\csname regval:M-poc-image-acc-paired-pathvqa\endcsname{0.455}
\expandafter\def\csname regval:M-poc-image-acc-paired-pathvqa.d\endcsname{500}
\expandafter\def\csname regval:M-poc-gain-paired-pathvqa\endcsname{0.179}
\expandafter\def\csname regval:M-poc-gain-paired-pathvqa.d\endcsname{500}
\expandafter\def\csname regval:M-poc-gain-paired-empty-incorrect-pathvqa\endcsname{0.178}
\expandafter\def\csname regval:M-poc-gain-paired-empty-incorrect-pathvqa.d\endcsname{500}
\expandafter\def\csname regval:M-poc-bclean-n-rexvqa\endcsname{500}
\expandafter\def\csname regval:M-poc-bclean-removed-rexvqa\endcsname{0.834}
\expandafter\def\csname regval:M-poc-bclean-removed-rexvqa.n\endcsname{417}
\expandafter\def\csname regval:M-poc-bclean-removed-rexvqa.d\endcsname{500}
\expandafter\def\csname regval:M-poc-lomo-min-rexvqa\endcsname{0.766}
\expandafter\def\csname regval:M-poc-lomo-min-rexvqa.d\endcsname{500}
\expandafter\def\csname regval:M-poc-lomo-max-rexvqa\endcsname{0.824}
\expandafter\def\csname regval:M-poc-lomo-max-rexvqa.d\endcsname{500}
\expandafter\def\csname regval:M-poc-union5-rexvqa\endcsname{0.763}
\expandafter\def\csname regval:M-poc-union5-rexvqa.d\endcsname{500}
\expandafter\def\csname regval:M-poc-substantive-rate-rexvqa\endcsname{0.754}
\expandafter\def\csname regval:M-poc-substantive-rate-rexvqa.d\endcsname{500}
\expandafter\def\csname regval:M-poc-decline-rate-rexvqa\endcsname{0.126}
\expandafter\def\csname regval:M-poc-decline-rate-rexvqa.d\endcsname{500}
\expandafter\def\csname regval:M-poc-empty-rate-rexvqa\endcsname{0.121}
\expandafter\def\csname regval:M-poc-empty-rate-rexvqa.d\endcsname{500}
\expandafter\def\csname regval:M-poc-blind-acc-paired-rexvqa\endcsname{0.614}
\expandafter\def\csname regval:M-poc-blind-acc-paired-rexvqa.d\endcsname{500}
\expandafter\def\csname regval:M-poc-image-acc-paired-rexvqa\endcsname{0.744}
\expandafter\def\csname regval:M-poc-image-acc-paired-rexvqa.d\endcsname{500}
\expandafter\def\csname regval:M-poc-gain-paired-rexvqa\endcsname{0.130}
\expandafter\def\csname regval:M-poc-gain-paired-rexvqa.d\endcsname{500}
\expandafter\def\csname regval:M-poc-gain-paired-empty-incorrect-rexvqa\endcsname{0.138}
\expandafter\def\csname regval:M-poc-gain-paired-empty-incorrect-rexvqa.d\endcsname{500}
\expandafter\def\csname regval:M-poc-bclean-n-docvqa\endcsname{500}
\expandafter\def\csname regval:M-poc-bclean-removed-docvqa\endcsname{0.082}
\expandafter\def\csname regval:M-poc-bclean-removed-docvqa.n\endcsname{41}
\expandafter\def\csname regval:M-poc-bclean-removed-docvqa.d\endcsname{500}
\expandafter\def\csname regval:M-poc-lomo-min-docvqa\endcsname{0.070}
\expandafter\def\csname regval:M-poc-lomo-min-docvqa.d\endcsname{500}
\expandafter\def\csname regval:M-poc-lomo-max-docvqa\endcsname{0.080}
\expandafter\def\csname regval:M-poc-lomo-max-docvqa.d\endcsname{500}
\expandafter\def\csname regval:M-poc-union5-docvqa\endcsname{0.012}
\expandafter\def\csname regval:M-poc-union5-docvqa.d\endcsname{500}
\expandafter\def\csname regval:M-poc-substantive-rate-docvqa\endcsname{0.386}
\expandafter\def\csname regval:M-poc-substantive-rate-docvqa.d\endcsname{500}
\expandafter\def\csname regval:M-poc-decline-rate-docvqa\endcsname{0.426}
\expandafter\def\csname regval:M-poc-decline-rate-docvqa.d\endcsname{500}
\expandafter\def\csname regval:M-poc-empty-rate-docvqa\endcsname{0.188}
\expandafter\def\csname regval:M-poc-empty-rate-docvqa.d\endcsname{500}
\expandafter\def\csname regval:M-poc-blind-acc-paired-docvqa\endcsname{0.116}
\expandafter\def\csname regval:M-poc-blind-acc-paired-docvqa.d\endcsname{500}
\expandafter\def\csname regval:M-poc-image-acc-paired-docvqa\endcsname{0.888}
\expandafter\def\csname regval:M-poc-image-acc-paired-docvqa.d\endcsname{500}
\expandafter\def\csname regval:M-poc-gain-paired-docvqa\endcsname{0.773}
\expandafter\def\csname regval:M-poc-gain-paired-docvqa.d\endcsname{500}
\expandafter\def\csname regval:M-poc-gain-paired-empty-incorrect-docvqa\endcsname{0.818}
\expandafter\def\csname regval:M-poc-gain-paired-empty-incorrect-docvqa.d\endcsname{500}
\expandafter\def\csname regval:M-poc-bclean-n-textvqa\endcsname{500}
\expandafter\def\csname regval:M-poc-bclean-removed-textvqa\endcsname{0.166}
\expandafter\def\csname regval:M-poc-bclean-removed-textvqa.n\endcsname{83}
\expandafter\def\csname regval:M-poc-bclean-removed-textvqa.d\endcsname{500}
\expandafter\def\csname regval:M-poc-lomo-min-textvqa\endcsname{0.126}
\expandafter\def\csname regval:M-poc-lomo-min-textvqa.d\endcsname{500}
\expandafter\def\csname regval:M-poc-lomo-max-textvqa\endcsname{0.164}
\expandafter\def\csname regval:M-poc-lomo-max-textvqa.d\endcsname{500}
\expandafter\def\csname regval:M-poc-union5-textvqa\endcsname{0.052}
\expandafter\def\csname regval:M-poc-union5-textvqa.d\endcsname{500}
\expandafter\def\csname regval:M-poc-substantive-rate-textvqa\endcsname{0.331}
\expandafter\def\csname regval:M-poc-substantive-rate-textvqa.d\endcsname{500}
\expandafter\def\csname regval:M-poc-decline-rate-textvqa\endcsname{0.492}
\expandafter\def\csname regval:M-poc-decline-rate-textvqa.d\endcsname{500}
\expandafter\def\csname regval:M-poc-empty-rate-textvqa\endcsname{0.178}
\expandafter\def\csname regval:M-poc-empty-rate-textvqa.d\endcsname{500}
\expandafter\def\csname regval:M-poc-blind-acc-paired-textvqa\endcsname{0.092}
\expandafter\def\csname regval:M-poc-blind-acc-paired-textvqa.d\endcsname{500}
\expandafter\def\csname regval:M-poc-image-acc-paired-textvqa\endcsname{0.843}
\expandafter\def\csname regval:M-poc-image-acc-paired-textvqa.d\endcsname{500}
\expandafter\def\csname regval:M-poc-gain-paired-textvqa\endcsname{0.751}
\expandafter\def\csname regval:M-poc-gain-paired-textvqa.d\endcsname{500}
\expandafter\def\csname regval:M-poc-gain-paired-empty-incorrect-textvqa\endcsname{0.760}
\expandafter\def\csname regval:M-poc-gain-paired-empty-incorrect-textvqa.d\endcsname{500}
\expandafter\def\csname regval:M-poc-bclean-n-infovqa\endcsname{500}
\expandafter\def\csname regval:M-poc-bclean-removed-infovqa\endcsname{0.332}
\expandafter\def\csname regval:M-poc-bclean-removed-infovqa.n\endcsname{166}
\expandafter\def\csname regval:M-poc-bclean-removed-infovqa.d\endcsname{500}
\expandafter\def\csname regval:M-poc-lomo-min-infovqa\endcsname{0.296}
\expandafter\def\csname regval:M-poc-lomo-min-infovqa.d\endcsname{500}
\expandafter\def\csname regval:M-poc-lomo-max-infovqa\endcsname{0.332}
\expandafter\def\csname regval:M-poc-lomo-max-infovqa.d\endcsname{500}
\expandafter\def\csname regval:M-poc-union5-infovqa\endcsname{0.000}
\expandafter\def\csname regval:M-poc-union5-infovqa.d\endcsname{500}
\expandafter\def\csname regval:M-poc-substantive-rate-infovqa\endcsname{0.605}
\expandafter\def\csname regval:M-poc-substantive-rate-infovqa.d\endcsname{500}
\expandafter\def\csname regval:M-poc-decline-rate-infovqa\endcsname{0.235}
\expandafter\def\csname regval:M-poc-decline-rate-infovqa.d\endcsname{500}
\expandafter\def\csname regval:M-poc-empty-rate-infovqa\endcsname{0.160}
\expandafter\def\csname regval:M-poc-empty-rate-infovqa.d\endcsname{500}
\expandafter\def\csname regval:M-poc-blind-acc-paired-infovqa\endcsname{0.339}
\expandafter\def\csname regval:M-poc-blind-acc-paired-infovqa.d\endcsname{500}
\expandafter\def\csname regval:M-poc-image-acc-paired-infovqa\endcsname{0.877}
\expandafter\def\csname regval:M-poc-image-acc-paired-infovqa.d\endcsname{500}
\expandafter\def\csname regval:M-poc-gain-paired-infovqa\endcsname{0.537}
\expandafter\def\csname regval:M-poc-gain-paired-infovqa.d\endcsname{500}
\expandafter\def\csname regval:M-poc-gain-paired-empty-incorrect-infovqa\endcsname{0.627}
\expandafter\def\csname regval:M-poc-gain-paired-empty-incorrect-infovqa.d\endcsname{500}
\expandafter\def\csname regval:M-poc-bclean-n-microvqa\endcsname{500}
\expandafter\def\csname regval:M-poc-bclean-removed-microvqa\endcsname{0.738}
\expandafter\def\csname regval:M-poc-bclean-removed-microvqa.n\endcsname{369}
\expandafter\def\csname regval:M-poc-bclean-removed-microvqa.d\endcsname{500}
\expandafter\def\csname regval:M-poc-lomo-min-microvqa\endcsname{0.668}
\expandafter\def\csname regval:M-poc-lomo-min-microvqa.d\endcsname{500}
\expandafter\def\csname regval:M-poc-lomo-max-microvqa\endcsname{0.732}
\expandafter\def\csname regval:M-poc-lomo-max-microvqa.d\endcsname{500}
\expandafter\def\csname regval:M-poc-union5-microvqa\endcsname{0.714}
\expandafter\def\csname regval:M-poc-union5-microvqa.d\endcsname{500}
\expandafter\def\csname regval:M-poc-substantive-rate-microvqa\endcsname{0.863}
\expandafter\def\csname regval:M-poc-substantive-rate-microvqa.d\endcsname{500}
\expandafter\def\csname regval:M-poc-decline-rate-microvqa\endcsname{0.000}
\expandafter\def\csname regval:M-poc-decline-rate-microvqa.d\endcsname{500}
\expandafter\def\csname regval:M-poc-empty-rate-microvqa\endcsname{0.137}
\expandafter\def\csname regval:M-poc-empty-rate-microvqa.d\endcsname{500}
\expandafter\def\csname regval:M-poc-blind-acc-paired-microvqa\endcsname{0.572}
\expandafter\def\csname regval:M-poc-blind-acc-paired-microvqa.d\endcsname{500}
\expandafter\def\csname regval:M-poc-image-acc-paired-microvqa\endcsname{0.598}
\expandafter\def\csname regval:M-poc-image-acc-paired-microvqa.d\endcsname{500}
\expandafter\def\csname regval:M-poc-gain-paired-microvqa\endcsname{0.025}
\expandafter\def\csname regval:M-poc-gain-paired-microvqa.d\endcsname{500}
\expandafter\def\csname regval:M-poc-gain-paired-empty-incorrect-microvqa\endcsname{0.034}
\expandafter\def\csname regval:M-poc-gain-paired-empty-incorrect-microvqa.d\endcsname{500}
\expandafter\def\csname regval:M-poc-bclean-n-medxpertqa_mm\endcsname{500}
\expandafter\def\csname regval:M-poc-bclean-removed-medxpertqa_mm\endcsname{0.608}
\expandafter\def\csname regval:M-poc-bclean-removed-medxpertqa_mm.n\endcsname{304}
\expandafter\def\csname regval:M-poc-bclean-removed-medxpertqa_mm.d\endcsname{500}
\expandafter\def\csname regval:M-poc-lomo-min-medxpertqa_mm\endcsname{0.532}
\expandafter\def\csname regval:M-poc-lomo-min-medxpertqa_mm.d\endcsname{500}
\expandafter\def\csname regval:M-poc-lomo-max-medxpertqa_mm\endcsname{0.602}
\expandafter\def\csname regval:M-poc-lomo-max-medxpertqa_mm.d\endcsname{500}
\expandafter\def\csname regval:M-poc-union5-medxpertqa_mm\endcsname{0.672}
\expandafter\def\csname regval:M-poc-union5-medxpertqa_mm.d\endcsname{500}
\expandafter\def\csname regval:M-poc-substantive-rate-medxpertqa_mm\endcsname{0.802}
\expandafter\def\csname regval:M-poc-substantive-rate-medxpertqa_mm.d\endcsname{500}
\expandafter\def\csname regval:M-poc-decline-rate-medxpertqa_mm\endcsname{0.028}
\expandafter\def\csname regval:M-poc-decline-rate-medxpertqa_mm.d\endcsname{500}
\expandafter\def\csname regval:M-poc-empty-rate-medxpertqa_mm\endcsname{0.170}
\expandafter\def\csname regval:M-poc-empty-rate-medxpertqa_mm.d\endcsname{500}
\expandafter\def\csname regval:M-poc-blind-acc-paired-medxpertqa_mm\endcsname{0.413}
\expandafter\def\csname regval:M-poc-blind-acc-paired-medxpertqa_mm.d\endcsname{500}
\expandafter\def\csname regval:M-poc-image-acc-paired-medxpertqa_mm\endcsname{0.425}
\expandafter\def\csname regval:M-poc-image-acc-paired-medxpertqa_mm.d\endcsname{500}
\expandafter\def\csname regval:M-poc-gain-paired-medxpertqa_mm\endcsname{0.012}
\expandafter\def\csname regval:M-poc-gain-paired-medxpertqa_mm.d\endcsname{500}
\expandafter\def\csname regval:M-poc-gain-paired-empty-incorrect-medxpertqa_mm\endcsname{0.036}
\expandafter\def\csname regval:M-poc-gain-paired-empty-incorrect-medxpertqa_mm.d\endcsname{500}
\expandafter\def\csname regval:M-poc-bclean-n-mmmu_pro\endcsname{500}
\expandafter\def\csname regval:M-poc-bclean-removed-mmmu_pro\endcsname{0.298}
\expandafter\def\csname regval:M-poc-bclean-removed-mmmu_pro.n\endcsname{149}
\expandafter\def\csname regval:M-poc-bclean-removed-mmmu_pro.d\endcsname{500}
\expandafter\def\csname regval:M-poc-lomo-min-mmmu_pro\endcsname{0.136}
\expandafter\def\csname regval:M-poc-lomo-min-mmmu_pro.d\endcsname{500}
\expandafter\def\csname regval:M-poc-lomo-max-mmmu_pro\endcsname{0.298}
\expandafter\def\csname regval:M-poc-lomo-max-mmmu_pro.d\endcsname{500}
\expandafter\def\csname regval:M-poc-union5-mmmu_pro\endcsname{0.772}
\expandafter\def\csname regval:M-poc-union5-mmmu_pro.d\endcsname{500}
\expandafter\def\csname regval:M-poc-substantive-rate-mmmu_pro\endcsname{0.733}
\expandafter\def\csname regval:M-poc-substantive-rate-mmmu_pro.d\endcsname{500}
\expandafter\def\csname regval:M-poc-decline-rate-mmmu_pro\endcsname{0.084}
\expandafter\def\csname regval:M-poc-decline-rate-mmmu_pro.d\endcsname{500}
\expandafter\def\csname regval:M-poc-empty-rate-mmmu_pro\endcsname{0.183}
\expandafter\def\csname regval:M-poc-empty-rate-mmmu_pro.d\endcsname{500}
\expandafter\def\csname regval:M-poc-blind-acc-paired-mmmu_pro\endcsname{0.058}
\expandafter\def\csname regval:M-poc-blind-acc-paired-mmmu_pro.d\endcsname{500}
\expandafter\def\csname regval:M-poc-image-acc-paired-mmmu_pro\endcsname{0.103}
\expandafter\def\csname regval:M-poc-image-acc-paired-mmmu_pro.d\endcsname{500}
\expandafter\def\csname regval:M-poc-gain-paired-mmmu_pro\endcsname{0.045}
\expandafter\def\csname regval:M-poc-gain-paired-mmmu_pro.d\endcsname{500}
\expandafter\def\csname regval:M-poc-gain-paired-empty-incorrect-mmmu_pro\endcsname{0.050}
\expandafter\def\csname regval:M-poc-gain-paired-empty-incorrect-mmmu_pro.d\endcsname{500}
\expandafter\def\csname regval:M-poc-bclean-n-video_mmmu\endcsname{300}
\expandafter\def\csname regval:M-poc-bclean-removed-video_mmmu\endcsname{0.523}
\expandafter\def\csname regval:M-poc-bclean-removed-video_mmmu.n\endcsname{157}
\expandafter\def\csname regval:M-poc-bclean-removed-video_mmmu.d\endcsname{300}
\expandafter\def\csname regval:M-poc-lomo-min-video_mmmu\endcsname{0.410}
\expandafter\def\csname regval:M-poc-lomo-min-video_mmmu.d\endcsname{300}
\expandafter\def\csname regval:M-poc-lomo-max-video_mmmu\endcsname{0.520}
\expandafter\def\csname regval:M-poc-lomo-max-video_mmmu.d\endcsname{300}
\expandafter\def\csname regval:M-poc-union5-video_mmmu\endcsname{0.441}
\expandafter\def\csname regval:M-poc-union5-video_mmmu.d\endcsname{300}
\expandafter\def\csname regval:M-poc-substantive-rate-video_mmmu\endcsname{0.706}
\expandafter\def\csname regval:M-poc-substantive-rate-video_mmmu.d\endcsname{300}
\expandafter\def\csname regval:M-poc-decline-rate-video_mmmu\endcsname{0.114}
\expandafter\def\csname regval:M-poc-decline-rate-video_mmmu.d\endcsname{300}
\expandafter\def\csname regval:M-poc-empty-rate-video_mmmu\endcsname{0.180}
\expandafter\def\csname regval:M-poc-empty-rate-video_mmmu.d\endcsname{300}
\expandafter\def\csname regval:M-poc-blind-acc-paired-video_mmmu\endcsname{0.387}
\expandafter\def\csname regval:M-poc-blind-acc-paired-video_mmmu.d\endcsname{300}
\expandafter\def\csname regval:M-poc-image-acc-paired-video_mmmu\endcsname{0.493}
\expandafter\def\csname regval:M-poc-image-acc-paired-video_mmmu.d\endcsname{300}
\expandafter\def\csname regval:M-poc-gain-paired-video_mmmu\endcsname{0.106}
\expandafter\def\csname regval:M-poc-gain-paired-video_mmmu.d\endcsname{300}
\expandafter\def\csname regval:M-poc-gain-paired-empty-incorrect-video_mmmu\endcsname{0.170}
\expandafter\def\csname regval:M-poc-gain-paired-empty-incorrect-video_mmmu.d\endcsname{300}
\expandafter\def\csname regval:M-poc-full-claude-sonnet-4.6-chest_vqa\endcsname{0.616}
\expandafter\def\csname regval:M-poc-full-claude-sonnet-4.6-chest_vqa.d\endcsname{451}
\expandafter\def\csname regval:M-poc-full-claude-sonnet-4.6-docvqa\endcsname{0.136}
\expandafter\def\csname regval:M-poc-full-claude-sonnet-4.6-docvqa.d\endcsname{500}
\expandafter\def\csname regval:M-poc-full-claude-sonnet-4.6-infovqa\endcsname{0.420}
\expandafter\def\csname regval:M-poc-full-claude-sonnet-4.6-infovqa.d\endcsname{500}
\expandafter\def\csname regval:M-poc-full-claude-sonnet-4.6-medxpertqa_mm\endcsname{0.988}
\expandafter\def\csname regval:M-poc-full-claude-sonnet-4.6-medxpertqa_mm.d\endcsname{500}
\expandafter\def\csname regval:M-poc-full-claude-sonnet-4.6-microvqa\endcsname{0.958}
\expandafter\def\csname regval:M-poc-full-claude-sonnet-4.6-microvqa.d\endcsname{500}
\expandafter\def\csname regval:M-poc-full-claude-sonnet-4.6-mmmu_pro\endcsname{0.898}
\expandafter\def\csname regval:M-poc-full-claude-sonnet-4.6-mmmu_pro.d\endcsname{500}
\expandafter\def\csname regval:M-poc-full-claude-sonnet-4.6-pathvqa\endcsname{0.538}
\expandafter\def\csname regval:M-poc-full-claude-sonnet-4.6-pathvqa.d\endcsname{500}
\expandafter\def\csname regval:M-poc-full-claude-sonnet-4.6-rexvqa\endcsname{1.000}
\expandafter\def\csname regval:M-poc-full-claude-sonnet-4.6-rexvqa.d\endcsname{500}
\expandafter\def\csname regval:M-poc-full-claude-sonnet-4.6-textvqa\endcsname{0.110}
\expandafter\def\csname regval:M-poc-full-claude-sonnet-4.6-textvqa.d\endcsname{500}
\expandafter\def\csname regval:M-poc-full-claude-sonnet-4.6-video_mmmu\endcsname{0.867}
\expandafter\def\csname regval:M-poc-full-claude-sonnet-4.6-video_mmmu.d\endcsname{300}
\expandafter\def\csname regval:M-poc-full-gemini-3.1-flash-lite-chest_vqa\endcsname{0.191}
\expandafter\def\csname regval:M-poc-full-gemini-3.1-flash-lite-chest_vqa.d\endcsname{451}
\expandafter\def\csname regval:M-poc-full-gemini-3.1-flash-lite-docvqa\endcsname{0.182}
\expandafter\def\csname regval:M-poc-full-gemini-3.1-flash-lite-docvqa.d\endcsname{500}
\expandafter\def\csname regval:M-poc-full-gemini-3.1-flash-lite-infovqa\endcsname{0.640}
\expandafter\def\csname regval:M-poc-full-gemini-3.1-flash-lite-infovqa.d\endcsname{500}
\expandafter\def\csname regval:M-poc-full-gemini-3.1-flash-lite-medxpertqa_mm\endcsname{0.980}
\expandafter\def\csname regval:M-poc-full-gemini-3.1-flash-lite-medxpertqa_mm.d\endcsname{500}
\expandafter\def\csname regval:M-poc-full-gemini-3.1-flash-lite-microvqa\endcsname{1.000}
\expandafter\def\csname regval:M-poc-full-gemini-3.1-flash-lite-microvqa.d\endcsname{500}
\expandafter\def\csname regval:M-poc-full-gemini-3.1-flash-lite-mmmu_pro\endcsname{0.940}
\expandafter\def\csname regval:M-poc-full-gemini-3.1-flash-lite-mmmu_pro.d\endcsname{500}
\expandafter\def\csname regval:M-poc-full-gemini-3.1-flash-lite-pathvqa\endcsname{0.262}
\expandafter\def\csname regval:M-poc-full-gemini-3.1-flash-lite-pathvqa.d\endcsname{500}
\expandafter\def\csname regval:M-poc-full-gemini-3.1-flash-lite-rexvqa\endcsname{0.910}
\expandafter\def\csname regval:M-poc-full-gemini-3.1-flash-lite-rexvqa.d\endcsname{500}
\expandafter\def\csname regval:M-poc-full-gemini-3.1-flash-lite-textvqa\endcsname{0.068}
\expandafter\def\csname regval:M-poc-full-gemini-3.1-flash-lite-textvqa.d\endcsname{500}
\expandafter\def\csname regval:M-poc-full-gemini-3.1-flash-lite-video_mmmu\endcsname{0.930}
\expandafter\def\csname regval:M-poc-full-gemini-3.1-flash-lite-video_mmmu.d\endcsname{300}
\expandafter\def\csname regval:M-poc-full-llama-4-scout-chest_vqa\endcsname{0.488}
\expandafter\def\csname regval:M-poc-full-llama-4-scout-chest_vqa.d\endcsname{451}
\expandafter\def\csname regval:M-poc-full-llama-4-scout-docvqa\endcsname{0.654}
\expandafter\def\csname regval:M-poc-full-llama-4-scout-docvqa.d\endcsname{500}
\expandafter\def\csname regval:M-poc-full-llama-4-scout-infovqa\endcsname{0.798}
\expandafter\def\csname regval:M-poc-full-llama-4-scout-infovqa.d\endcsname{500}
\expandafter\def\csname regval:M-poc-full-llama-4-scout-medxpertqa_mm\endcsname{0.888}
\expandafter\def\csname regval:M-poc-full-llama-4-scout-medxpertqa_mm.d\endcsname{500}
\expandafter\def\csname regval:M-poc-full-llama-4-scout-microvqa\endcsname{0.994}
\expandafter\def\csname regval:M-poc-full-llama-4-scout-microvqa.d\endcsname{500}
\expandafter\def\csname regval:M-poc-full-llama-4-scout-mmmu_pro\endcsname{0.742}
\expandafter\def\csname regval:M-poc-full-llama-4-scout-mmmu_pro.d\endcsname{500}
\expandafter\def\csname regval:M-poc-full-llama-4-scout-pathvqa\endcsname{0.544}
\expandafter\def\csname regval:M-poc-full-llama-4-scout-pathvqa.d\endcsname{500}
\expandafter\def\csname regval:M-poc-full-llama-4-scout-rexvqa\endcsname{0.462}
\expandafter\def\csname regval:M-poc-full-llama-4-scout-rexvqa.d\endcsname{500}
\expandafter\def\csname regval:M-poc-full-llama-4-scout-textvqa\endcsname{0.594}
\expandafter\def\csname regval:M-poc-full-llama-4-scout-textvqa.d\endcsname{500}
\expandafter\def\csname regval:M-poc-full-llama-4-scout-video_mmmu\endcsname{0.647}
\expandafter\def\csname regval:M-poc-full-llama-4-scout-video_mmmu.d\endcsname{300}
\expandafter\def\csname regval:M-poc-full-gpt-5.5-chest_vqa\endcsname{0.162}
\expandafter\def\csname regval:M-poc-full-gpt-5.5-chest_vqa.d\endcsname{451}
\expandafter\def\csname regval:M-poc-full-gpt-5.5-docvqa\endcsname{0.068}
\expandafter\def\csname regval:M-poc-full-gpt-5.5-docvqa.d\endcsname{500}
\expandafter\def\csname regval:M-poc-full-gpt-5.5-infovqa\endcsname{0.204}
\expandafter\def\csname regval:M-poc-full-gpt-5.5-infovqa.d\endcsname{500}
\expandafter\def\csname regval:M-poc-full-gpt-5.5-medxpertqa_mm\endcsname{0.154}
\expandafter\def\csname regval:M-poc-full-gpt-5.5-medxpertqa_mm.d\endcsname{500}
\expandafter\def\csname regval:M-poc-full-gpt-5.5-microvqa\endcsname{0.362}
\expandafter\def\csname regval:M-poc-full-gpt-5.5-microvqa.d\endcsname{500}
\expandafter\def\csname regval:M-poc-full-gpt-5.5-mmmu_pro\endcsname{0.086}
\expandafter\def\csname regval:M-poc-full-gpt-5.5-mmmu_pro.d\endcsname{500}
\expandafter\def\csname regval:M-poc-full-gpt-5.5-pathvqa\endcsname{0.158}
\expandafter\def\csname regval:M-poc-full-gpt-5.5-pathvqa.d\endcsname{500}
\expandafter\def\csname regval:M-poc-full-gpt-5.5-rexvqa\endcsname{0.396}
\expandafter\def\csname regval:M-poc-full-gpt-5.5-rexvqa.d\endcsname{500}
\expandafter\def\csname regval:M-poc-full-gpt-5.5-textvqa\endcsname{0.094}
\expandafter\def\csname regval:M-poc-full-gpt-5.5-textvqa.d\endcsname{500}
\expandafter\def\csname regval:M-poc-full-gpt-5.5-video_mmmu\endcsname{0.103}
\expandafter\def\csname regval:M-poc-full-gpt-5.5-video_mmmu.d\endcsname{300}
\expandafter\def\csname regval:M-poc-full-qwen2.5-vl-72b-instruct-chest_vqa\endcsname{0.953}
\expandafter\def\csname regval:M-poc-full-qwen2.5-vl-72b-instruct-chest_vqa.d\endcsname{451}
\expandafter\def\csname regval:M-poc-full-qwen2.5-vl-72b-instruct-docvqa\endcsname{0.890}
\expandafter\def\csname regval:M-poc-full-qwen2.5-vl-72b-instruct-docvqa.d\endcsname{500}
\expandafter\def\csname regval:M-poc-full-qwen2.5-vl-72b-instruct-infovqa\endcsname{0.964}
\expandafter\def\csname regval:M-poc-full-qwen2.5-vl-72b-instruct-infovqa.d\endcsname{500}
\expandafter\def\csname regval:M-poc-full-qwen2.5-vl-72b-instruct-medxpertqa_mm\endcsname{1.000}
\expandafter\def\csname regval:M-poc-full-qwen2.5-vl-72b-instruct-medxpertqa_mm.d\endcsname{500}
\expandafter\def\csname regval:M-poc-full-qwen2.5-vl-72b-instruct-microvqa\endcsname{1.000}
\expandafter\def\csname regval:M-poc-full-qwen2.5-vl-72b-instruct-microvqa.d\endcsname{500}
\expandafter\def\csname regval:M-poc-full-qwen2.5-vl-72b-instruct-mmmu_pro\endcsname{0.998}
\expandafter\def\csname regval:M-poc-full-qwen2.5-vl-72b-instruct-mmmu_pro.d\endcsname{500}
\expandafter\def\csname regval:M-poc-full-qwen2.5-vl-72b-instruct-pathvqa\endcsname{0.950}
\expandafter\def\csname regval:M-poc-full-qwen2.5-vl-72b-instruct-pathvqa.d\endcsname{500}
\expandafter\def\csname regval:M-poc-full-qwen2.5-vl-72b-instruct-rexvqa\endcsname{1.000}
\expandafter\def\csname regval:M-poc-full-qwen2.5-vl-72b-instruct-rexvqa.d\endcsname{500}
\expandafter\def\csname regval:M-poc-full-qwen2.5-vl-72b-instruct-textvqa\endcsname{0.788}
\expandafter\def\csname regval:M-poc-full-qwen2.5-vl-72b-instruct-textvqa.d\endcsname{500}
\expandafter\def\csname regval:M-poc-full-qwen2.5-vl-72b-instruct-video_mmmu\endcsname{0.983}
\expandafter\def\csname regval:M-poc-full-qwen2.5-vl-72b-instruct-video_mmmu.d\endcsname{300}
\expandafter\def\csname regval:M-poc-sens-gpt-5.5-chest_vqa\endcsname{0.091}
\expandafter\def\csname regval:M-poc-sens-gpt-5.5-chest_vqa.d\endcsname{175}
\expandafter\def\csname regval:M-poc-sens-gpt-5.5-pathvqa\endcsname{0.125}
\expandafter\def\csname regval:M-poc-sens-gpt-5.5-pathvqa.d\endcsname{272}
\expandafter\def\csname regval:M-poc-sens-gpt-5.5-rexvqa\endcsname{0.084}
\expandafter\def\csname regval:M-poc-sens-gpt-5.5-rexvqa.d\endcsname{83}
\expandafter\def\csname regval:M-poc-sens-gpt-5.5-docvqa\endcsname{0.044}
\expandafter\def\csname regval:M-poc-sens-gpt-5.5-docvqa.d\endcsname{459}
\expandafter\def\csname regval:M-poc-sens-gpt-5.5-textvqa\endcsname{0.058}
\expandafter\def\csname regval:M-poc-sens-gpt-5.5-textvqa.d\endcsname{417}
\expandafter\def\csname regval:M-poc-sens-gpt-5.5-infovqa\endcsname{0.069}
\expandafter\def\csname regval:M-poc-sens-gpt-5.5-infovqa.d\endcsname{334}
\expandafter\def\csname regval:M-poc-sens-gpt-5.5-microvqa\endcsname{0.168}
\expandafter\def\csname regval:M-poc-sens-gpt-5.5-microvqa.d\endcsname{131}
\expandafter\def\csname regval:M-poc-sens-gpt-5.5-medxpertqa_mm\endcsname{0.056}
\expandafter\def\csname regval:M-poc-sens-gpt-5.5-medxpertqa_mm.d\endcsname{196}
\expandafter\def\csname regval:M-poc-sens-gpt-5.5-mmmu_pro\endcsname{0.046}
\expandafter\def\csname regval:M-poc-sens-gpt-5.5-mmmu_pro.d\endcsname{351}
\expandafter\def\csname regval:M-poc-sens-gpt-5.5-video_mmmu\endcsname{0.007}
\expandafter\def\csname regval:M-poc-sens-gpt-5.5-video_mmmu.d\endcsname{143}
\expandafter\def\csname regval:M-poc-sens-gemini-3.1-flash-lite-chest_vqa\endcsname{0.103}
\expandafter\def\csname regval:M-poc-sens-gemini-3.1-flash-lite-chest_vqa.d\endcsname{175}
\expandafter\def\csname regval:M-poc-sens-gemini-3.1-flash-lite-pathvqa\endcsname{0.129}
\expandafter\def\csname regval:M-poc-sens-gemini-3.1-flash-lite-pathvqa.d\endcsname{272}
\expandafter\def\csname regval:M-poc-sens-gemini-3.1-flash-lite-rexvqa\endcsname{0.771}
\expandafter\def\csname regval:M-poc-sens-gemini-3.1-flash-lite-rexvqa.d\endcsname{83}
\expandafter\def\csname regval:M-poc-sens-gemini-3.1-flash-lite-docvqa\endcsname{0.139}
\expandafter\def\csname regval:M-poc-sens-gemini-3.1-flash-lite-docvqa.d\endcsname{459}
\expandafter\def\csname regval:M-poc-sens-gemini-3.1-flash-lite-textvqa\endcsname{0.043}
\expandafter\def\csname regval:M-poc-sens-gemini-3.1-flash-lite-textvqa.d\endcsname{417}
\expandafter\def\csname regval:M-poc-sens-gemini-3.1-flash-lite-infovqa\endcsname{0.512}
\expandafter\def\csname regval:M-poc-sens-gemini-3.1-flash-lite-infovqa.d\endcsname{334}
\expandafter\def\csname regval:M-poc-sens-gemini-3.1-flash-lite-microvqa\endcsname{1.000}
\expandafter\def\csname regval:M-poc-sens-gemini-3.1-flash-lite-microvqa.d\endcsname{131}
\expandafter\def\csname regval:M-poc-sens-gemini-3.1-flash-lite-medxpertqa_mm\endcsname{0.985}
\expandafter\def\csname regval:M-poc-sens-gemini-3.1-flash-lite-medxpertqa_mm.d\endcsname{196}
\expandafter\def\csname regval:M-poc-sens-gemini-3.1-flash-lite-mmmu_pro\endcsname{0.943}
\expandafter\def\csname regval:M-poc-sens-gemini-3.1-flash-lite-mmmu_pro.d\endcsname{351}
\expandafter\def\csname regval:M-poc-sens-gemini-3.1-flash-lite-video_mmmu\endcsname{0.888}
\expandafter\def\csname regval:M-poc-sens-gemini-3.1-flash-lite-video_mmmu.d\endcsname{143}
\expandafter\def\csname regval:M-poc-sens-claude-sonnet-4.6-chest_vqa\endcsname{0.389}
\expandafter\def\csname regval:M-poc-sens-claude-sonnet-4.6-chest_vqa.d\endcsname{175}
\expandafter\def\csname regval:M-poc-sens-claude-sonnet-4.6-pathvqa\endcsname{0.346}
\expandafter\def\csname regval:M-poc-sens-claude-sonnet-4.6-pathvqa.d\endcsname{272}
\expandafter\def\csname regval:M-poc-sens-claude-sonnet-4.6-rexvqa\endcsname{1.000}
\expandafter\def\csname regval:M-poc-sens-claude-sonnet-4.6-rexvqa.d\endcsname{83}
\expandafter\def\csname regval:M-poc-sens-claude-sonnet-4.6-docvqa\endcsname{0.105}
\expandafter\def\csname regval:M-poc-sens-claude-sonnet-4.6-docvqa.d\endcsname{459}
\expandafter\def\csname regval:M-poc-sens-claude-sonnet-4.6-textvqa\endcsname{0.079}
\expandafter\def\csname regval:M-poc-sens-claude-sonnet-4.6-textvqa.d\endcsname{417}
\expandafter\def\csname regval:M-poc-sens-claude-sonnet-4.6-infovqa\endcsname{0.284}
\expandafter\def\csname regval:M-poc-sens-claude-sonnet-4.6-infovqa.d\endcsname{334}
\expandafter\def\csname regval:M-poc-sens-claude-sonnet-4.6-microvqa\endcsname{0.947}
\expandafter\def\csname regval:M-poc-sens-claude-sonnet-4.6-microvqa.d\endcsname{131}
\expandafter\def\csname regval:M-poc-sens-claude-sonnet-4.6-medxpertqa_mm\endcsname{0.980}
\expandafter\def\csname regval:M-poc-sens-claude-sonnet-4.6-medxpertqa_mm.d\endcsname{196}
\expandafter\def\csname regval:M-poc-sens-claude-sonnet-4.6-mmmu_pro\endcsname{0.892}
\expandafter\def\csname regval:M-poc-sens-claude-sonnet-4.6-mmmu_pro.d\endcsname{351}
\expandafter\def\csname regval:M-poc-sens-claude-sonnet-4.6-video_mmmu\endcsname{0.818}
\expandafter\def\csname regval:M-poc-sens-claude-sonnet-4.6-video_mmmu.d\endcsname{143}
\expandafter\def\csname regval:M-poc-sens-llama-4-scout-chest_vqa\endcsname{0.377}
\expandafter\def\csname regval:M-poc-sens-llama-4-scout-chest_vqa.d\endcsname{175}
\expandafter\def\csname regval:M-poc-sens-llama-4-scout-pathvqa\endcsname{0.301}
\expandafter\def\csname regval:M-poc-sens-llama-4-scout-pathvqa.d\endcsname{272}
\expandafter\def\csname regval:M-poc-sens-llama-4-scout-rexvqa\endcsname{0.361}
\expandafter\def\csname regval:M-poc-sens-llama-4-scout-rexvqa.d\endcsname{83}
\expandafter\def\csname regval:M-poc-sens-llama-4-scout-docvqa\endcsname{0.634}
\expandafter\def\csname regval:M-poc-sens-llama-4-scout-docvqa.d\endcsname{459}
\expandafter\def\csname regval:M-poc-sens-llama-4-scout-textvqa\endcsname{0.571}
\expandafter\def\csname regval:M-poc-sens-llama-4-scout-textvqa.d\endcsname{417}
\expandafter\def\csname regval:M-poc-sens-llama-4-scout-infovqa\endcsname{0.737}
\expandafter\def\csname regval:M-poc-sens-llama-4-scout-infovqa.d\endcsname{334}
\expandafter\def\csname regval:M-poc-sens-llama-4-scout-microvqa\endcsname{1.000}
\expandafter\def\csname regval:M-poc-sens-llama-4-scout-microvqa.d\endcsname{131}
\expandafter\def\csname regval:M-poc-sens-llama-4-scout-medxpertqa_mm\endcsname{0.832}
\expandafter\def\csname regval:M-poc-sens-llama-4-scout-medxpertqa_mm.d\endcsname{196}
\expandafter\def\csname regval:M-poc-sens-llama-4-scout-mmmu_pro\endcsname{0.744}
\expandafter\def\csname regval:M-poc-sens-llama-4-scout-mmmu_pro.d\endcsname{351}
\expandafter\def\csname regval:M-poc-sens-llama-4-scout-video_mmmu\endcsname{0.629}
\expandafter\def\csname regval:M-poc-sens-llama-4-scout-video_mmmu.d\endcsname{143}
\expandafter\def\csname regval:M-poc-sens-qwen2.5-vl-72b-instruct-chest_vqa\endcsname{0.891}
\expandafter\def\csname regval:M-poc-sens-qwen2.5-vl-72b-instruct-chest_vqa.d\endcsname{175}
\expandafter\def\csname regval:M-poc-sens-qwen2.5-vl-72b-instruct-pathvqa\endcsname{0.908}
\expandafter\def\csname regval:M-poc-sens-qwen2.5-vl-72b-instruct-pathvqa.d\endcsname{272}
\expandafter\def\csname regval:M-poc-sens-qwen2.5-vl-72b-instruct-rexvqa\endcsname{1.000}
\expandafter\def\csname regval:M-poc-sens-qwen2.5-vl-72b-instruct-rexvqa.d\endcsname{83}
\expandafter\def\csname regval:M-poc-sens-qwen2.5-vl-72b-instruct-docvqa\endcsname{0.882}
\expandafter\def\csname regval:M-poc-sens-qwen2.5-vl-72b-instruct-docvqa.d\endcsname{459}
\expandafter\def\csname regval:M-poc-sens-qwen2.5-vl-72b-instruct-textvqa\endcsname{0.770}
\expandafter\def\csname regval:M-poc-sens-qwen2.5-vl-72b-instruct-textvqa.d\endcsname{417}
\expandafter\def\csname regval:M-poc-sens-qwen2.5-vl-72b-instruct-infovqa\endcsname{0.955}
\expandafter\def\csname regval:M-poc-sens-qwen2.5-vl-72b-instruct-infovqa.d\endcsname{334}
\expandafter\def\csname regval:M-poc-sens-qwen2.5-vl-72b-instruct-microvqa\endcsname{1.000}
\expandafter\def\csname regval:M-poc-sens-qwen2.5-vl-72b-instruct-microvqa.d\endcsname{131}
\expandafter\def\csname regval:M-poc-sens-qwen2.5-vl-72b-instruct-medxpertqa_mm\endcsname{1.000}
\expandafter\def\csname regval:M-poc-sens-qwen2.5-vl-72b-instruct-medxpertqa_mm.d\endcsname{196}
\expandafter\def\csname regval:M-poc-sens-qwen2.5-vl-72b-instruct-mmmu_pro\endcsname{0.997}
\expandafter\def\csname regval:M-poc-sens-qwen2.5-vl-72b-instruct-mmmu_pro.d\endcsname{351}
\expandafter\def\csname regval:M-poc-sens-qwen2.5-vl-72b-instruct-video_mmmu\endcsname{0.965}
\expandafter\def\csname regval:M-poc-sens-qwen2.5-vl-72b-instruct-video_mmmu.d\endcsname{143}
\expandafter\def\csname regval:M-poc-corr-full50\endcsname{-0.411}
\expandafter\def\csname regval:M-poc-corr-full50.d\endcsname{50}
\expandafter\def\csname regval:M-poc-corr-full40\endcsname{-0.691}
\expandafter\def\csname regval:M-poc-corr-full40.d\endcsname{40}
\expandafter\def\csname regval:M-poc-corr-full50-decline\endcsname{0.655}
\expandafter\def\csname regval:M-poc-corr-full50-decline.d\endcsname{50}
\expandafter\def\csname regval:M-poc-corr-full50-nonempty\endcsname{-0.667}
\expandafter\def\csname regval:M-poc-corr-full50-nonempty.d\endcsname{50}
\expandafter\def\csname regval:M-poc-corr-full50-emptyincorrect\endcsname{-0.523}
\expandafter\def\csname regval:M-poc-corr-full50-emptyincorrect.d\endcsname{50}
\expandafter\def\csname regval:M-poc-corr-blindacc50\endcsname{-0.023}
\expandafter\def\csname regval:M-poc-corr-blindacc50.d\endcsname{50}
\expandafter\def\csname regval:M-poc-boot-domains-lo\endcsname{-0.564}
\expandafter\def\csname regval:M-poc-boot-domains-lo.d\endcsname{2{,}000}
\expandafter\def\csname regval:M-poc-boot-domains-hi\endcsname{-0.121}
\expandafter\def\csname regval:M-poc-boot-domains-hi.d\endcsname{2{,}000}
\expandafter\def\csname regval:M-poc-boot-models-lo\endcsname{-0.867}
\expandafter\def\csname regval:M-poc-boot-models-lo.d\endcsname{2{,}000}
\expandafter\def\csname regval:M-poc-boot-models-hi\endcsname{-0.015}
\expandafter\def\csname regval:M-poc-boot-models-hi.d\endcsname{2{,}000}
\expandafter\def\csname regval:M-poc-corr-domainmeans10\endcsname{-0.867}
\expandafter\def\csname regval:M-poc-corr-domainmeans10.d\endcsname{10}
\expandafter\def\csname regval:M-poc-corr-modelmeans5\endcsname{0.571}
\expandafter\def\csname regval:M-poc-corr-modelmeans5.d\endcsname{5}
\expandafter\def\csname regval:M-poc-spearman-full-gpt-5.5\endcsname{-0.190}
\expandafter\def\csname regval:M-poc-spearman-full-gpt-5.5.d\endcsname{10}
\expandafter\def\csname regval:M-poc-spearman-full-gemini-3.1-flash-lite\endcsname{-0.903}
\expandafter\def\csname regval:M-poc-spearman-full-gemini-3.1-flash-lite.d\endcsname{10}
\expandafter\def\csname regval:M-poc-spearman-full-claude-sonnet-4.6\endcsname{-0.952}
\expandafter\def\csname regval:M-poc-spearman-full-claude-sonnet-4.6.d\endcsname{10}
\expandafter\def\csname regval:M-poc-spearman-full-llama-4-scout\endcsname{-0.455}
\expandafter\def\csname regval:M-poc-spearman-full-llama-4-scout.d\endcsname{10}
\expandafter\def\csname regval:M-poc-spearman-full-qwen2.5-vl-72b-instruct\endcsname{-0.841}
\expandafter\def\csname regval:M-poc-spearman-full-qwen2.5-vl-72b-instruct.d\endcsname{10}
\expandafter\def\csname regval:M-poc-corr-fmt-mcq\endcsname{-0.069}
\expandafter\def\csname regval:M-poc-corr-fmt-mcq.d\endcsname{25}
\expandafter\def\csname regval:M-poc-corr-fmt-yesno\endcsname{-0.478}
\expandafter\def\csname regval:M-poc-corr-fmt-yesno.d\endcsname{10}
\expandafter\def\csname regval:M-poc-corr-fmt-open\endcsname{-0.054}
\expandafter\def\csname regval:M-poc-corr-fmt-open.d\endcsname{25}
\expandafter\def\csname regval:M-poc-corr-sens50\endcsname{-0.340}
\expandafter\def\csname regval:M-poc-corr-sens50.d\endcsname{50}
\expandafter\def\csname regval:M-poc-corr-sens40\endcsname{-0.494}
\expandafter\def\csname regval:M-poc-corr-sens40.d\endcsname{40}
\expandafter\def\csname regval:M-poc-corr-sens50-nonempty\endcsname{-0.406}
\expandafter\def\csname regval:M-poc-corr-sens50-nonempty.d\endcsname{50}
\expandafter\def\csname regval:M-poc-corr-subset20\endcsname{-0.767}
\expandafter\def\csname regval:M-poc-corr-subset20.d\endcsname{20}
\expandafter\def\csname regval:v5.qwen.E0.bench.imr\endcsname{\unmeasured{v5.qwen.E0.bench.imr}}
\expandafter\def\csname regval:v5.qwen.E0.fresh.imr\endcsname{\unmeasured{v5.qwen.E0.fresh.imr}}
\expandafter\def\csname regval:v5.qwen.E0.cmp.imr\endcsname{\unmeasured{v5.qwen.E0.cmp.imr}}
\expandafter\def\csname regval:v5.qwen.E0.bench.blank\endcsname{\unmeasured{v5.qwen.E0.bench.blank}}
\expandafter\def\csname regval:v5.qwen.E0.fresh.blank\endcsname{\unmeasured{v5.qwen.E0.fresh.blank}}
\expandafter\def\csname regval:v5.qwen.E0.cmp.blank\endcsname{\unmeasured{v5.qwen.E0.cmp.blank}}
\expandafter\def\csname regval:v5.qwen.E0.bench.id\endcsname{\unmeasured{v5.qwen.E0.bench.id}}
\expandafter\def\csname regval:v5.qwen.E0.fresh.id\endcsname{\unmeasured{v5.qwen.E0.fresh.id}}
\expandafter\def\csname regval:v5.qwen.E0.cmp.id\endcsname{\unmeasured{v5.qwen.E0.cmp.id}}
\expandafter\def\csname regval:v5.qwen.E0.bench.od\endcsname{\unmeasured{v5.qwen.E0.bench.od}}
\expandafter\def\csname regval:v5.qwen.E0.fresh.od\endcsname{\unmeasured{v5.qwen.E0.fresh.od}}
\expandafter\def\csname regval:v5.qwen.E0.cmp.od\endcsname{\unmeasured{v5.qwen.E0.cmp.od}}
\expandafter\def\csname regval:v5.qwen.E0.bench.iaimr\endcsname{\unmeasured{v5.qwen.E0.bench.iaimr}}
\expandafter\def\csname regval:v5.qwen.E0.fresh.iaimr\endcsname{\unmeasured{v5.qwen.E0.fresh.iaimr}}
\expandafter\def\csname regval:v5.qwen.E0.cmp.iaimr\endcsname{\unmeasured{v5.qwen.E0.cmp.iaimr}}
\expandafter\def\csname regval:v5.qwen.E0.bench.acc\endcsname{\unmeasured{v5.qwen.E0.bench.acc}}
\expandafter\def\csname regval:v5.qwen.E0.fresh.acc\endcsname{\unmeasured{v5.qwen.E0.fresh.acc}}
\expandafter\def\csname regval:v5.qwen.E0.cmp.acc\endcsname{\unmeasured{v5.qwen.E0.cmp.acc}}
\expandafter\def\csname regval:v5.qwen.E0.bench.fa\endcsname{\unmeasured{v5.qwen.E0.bench.fa}}
\expandafter\def\csname regval:v5.qwen.E0.fresh.fa\endcsname{\unmeasured{v5.qwen.E0.fresh.fa}}
\expandafter\def\csname regval:v5.qwen.E0.cmp.fa\endcsname{\unmeasured{v5.qwen.E0.cmp.fa}}
\expandafter\def\csname regval:v5.llava.E0.imr\endcsname{\unmeasured{v5.llava.E0.imr}}
\expandafter\def\csname regval:v5.llava.E0.acc\endcsname{\unmeasured{v5.llava.E0.acc}}
\expandafter\def\csname regval:v5.internvl.E0.imr\endcsname{\unmeasured{v5.internvl.E0.imr}}
\expandafter\def\csname regval:v5.internvl.E0.acc\endcsname{\unmeasured{v5.internvl.E0.acc}}
\expandafter\def\csname regval:v5.qwen3.E0.imr\endcsname{\unmeasured{v5.qwen3.E0.imr}}
\expandafter\def\csname regval:v5.qwen3.E0.acc\endcsname{\unmeasured{v5.qwen3.E0.acc}}
\expandafter\def\csname regval:v5.gemma.E0.imr\endcsname{\unmeasured{v5.gemma.E0.imr}}
\expandafter\def\csname regval:v5.gemma.E0.acc\endcsname{\unmeasured{v5.gemma.E0.acc}}
\expandafter\def\csname regval:stress5.E0.P0\endcsname{\unmeasured{stress5.E0.P0}}
\expandafter\def\csname regval:stress5.E0.P1\endcsname{\unmeasured{stress5.E0.P1}}
\expandafter\def\csname regval:stress5.E0.P2\endcsname{\unmeasured{stress5.E0.P2}}
\expandafter\def\csname regval:v5.qwen.TIDE.bench.imr\endcsname{\unmeasured{v5.qwen.TIDE.bench.imr}}
\expandafter\def\csname regval:v5.qwen.TIDE.fresh.imr\endcsname{\unmeasured{v5.qwen.TIDE.fresh.imr}}
\expandafter\def\csname regval:v5.qwen.TIDE.cmp.imr\endcsname{\unmeasured{v5.qwen.TIDE.cmp.imr}}
\expandafter\def\csname regval:v5.qwen.TIDE.bench.blank\endcsname{\unmeasured{v5.qwen.TIDE.bench.blank}}
\expandafter\def\csname regval:v5.qwen.TIDE.fresh.blank\endcsname{\unmeasured{v5.qwen.TIDE.fresh.blank}}
\expandafter\def\csname regval:v5.qwen.TIDE.cmp.blank\endcsname{\unmeasured{v5.qwen.TIDE.cmp.blank}}
\expandafter\def\csname regval:v5.qwen.TIDE.bench.id\endcsname{\unmeasured{v5.qwen.TIDE.bench.id}}
\expandafter\def\csname regval:v5.qwen.TIDE.fresh.id\endcsname{\unmeasured{v5.qwen.TIDE.fresh.id}}
\expandafter\def\csname regval:v5.qwen.TIDE.cmp.id\endcsname{\unmeasured{v5.qwen.TIDE.cmp.id}}
\expandafter\def\csname regval:v5.qwen.TIDE.bench.od\endcsname{\unmeasured{v5.qwen.TIDE.bench.od}}
\expandafter\def\csname regval:v5.qwen.TIDE.fresh.od\endcsname{\unmeasured{v5.qwen.TIDE.fresh.od}}
\expandafter\def\csname regval:v5.qwen.TIDE.cmp.od\endcsname{\unmeasured{v5.qwen.TIDE.cmp.od}}
\expandafter\def\csname regval:v5.qwen.TIDE.bench.iaimr\endcsname{\unmeasured{v5.qwen.TIDE.bench.iaimr}}
\expandafter\def\csname regval:v5.qwen.TIDE.fresh.iaimr\endcsname{\unmeasured{v5.qwen.TIDE.fresh.iaimr}}
\expandafter\def\csname regval:v5.qwen.TIDE.cmp.iaimr\endcsname{\unmeasured{v5.qwen.TIDE.cmp.iaimr}}
\expandafter\def\csname regval:v5.qwen.TIDE.bench.acc\endcsname{\unmeasured{v5.qwen.TIDE.bench.acc}}
\expandafter\def\csname regval:v5.qwen.TIDE.fresh.acc\endcsname{\unmeasured{v5.qwen.TIDE.fresh.acc}}
\expandafter\def\csname regval:v5.qwen.TIDE.cmp.acc\endcsname{\unmeasured{v5.qwen.TIDE.cmp.acc}}
\expandafter\def\csname regval:v5.qwen.TIDE.bench.fa\endcsname{\unmeasured{v5.qwen.TIDE.bench.fa}}
\expandafter\def\csname regval:v5.qwen.TIDE.fresh.fa\endcsname{\unmeasured{v5.qwen.TIDE.fresh.fa}}
\expandafter\def\csname regval:v5.qwen.TIDE.cmp.fa\endcsname{\unmeasured{v5.qwen.TIDE.cmp.fa}}
\expandafter\def\csname regval:v5.llava.TIDE.imr\endcsname{\unmeasured{v5.llava.TIDE.imr}}
\expandafter\def\csname regval:v5.llava.TIDE.acc\endcsname{\unmeasured{v5.llava.TIDE.acc}}
\expandafter\def\csname regval:v5.internvl.TIDE.imr\endcsname{\unmeasured{v5.internvl.TIDE.imr}}
\expandafter\def\csname regval:v5.internvl.TIDE.acc\endcsname{\unmeasured{v5.internvl.TIDE.acc}}
\expandafter\def\csname regval:v5.qwen3.TIDE.imr\endcsname{\unmeasured{v5.qwen3.TIDE.imr}}
\expandafter\def\csname regval:v5.qwen3.TIDE.acc\endcsname{\unmeasured{v5.qwen3.TIDE.acc}}
\expandafter\def\csname regval:v5.gemma.TIDE.imr\endcsname{\unmeasured{v5.gemma.TIDE.imr}}
\expandafter\def\csname regval:v5.gemma.TIDE.acc\endcsname{\unmeasured{v5.gemma.TIDE.acc}}
\expandafter\def\csname regval:stress5.TIDE.P0\endcsname{\unmeasured{stress5.TIDE.P0}}
\expandafter\def\csname regval:stress5.TIDE.P1\endcsname{\unmeasured{stress5.TIDE.P1}}
\expandafter\def\csname regval:stress5.TIDE.P2\endcsname{\unmeasured{stress5.TIDE.P2}}
\expandafter\def\csname regval:v5.qwen.E1.bench.imr\endcsname{\unmeasured{v5.qwen.E1.bench.imr}}
\expandafter\def\csname regval:v5.qwen.E1.fresh.imr\endcsname{\unmeasured{v5.qwen.E1.fresh.imr}}
\expandafter\def\csname regval:v5.qwen.E1.cmp.imr\endcsname{\unmeasured{v5.qwen.E1.cmp.imr}}
\expandafter\def\csname regval:v5.qwen.E1.bench.blank\endcsname{\unmeasured{v5.qwen.E1.bench.blank}}
\expandafter\def\csname regval:v5.qwen.E1.fresh.blank\endcsname{\unmeasured{v5.qwen.E1.fresh.blank}}
\expandafter\def\csname regval:v5.qwen.E1.cmp.blank\endcsname{\unmeasured{v5.qwen.E1.cmp.blank}}
\expandafter\def\csname regval:v5.qwen.E1.bench.id\endcsname{\unmeasured{v5.qwen.E1.bench.id}}
\expandafter\def\csname regval:v5.qwen.E1.fresh.id\endcsname{\unmeasured{v5.qwen.E1.fresh.id}}
\expandafter\def\csname regval:v5.qwen.E1.cmp.id\endcsname{\unmeasured{v5.qwen.E1.cmp.id}}
\expandafter\def\csname regval:v5.qwen.E1.bench.od\endcsname{\unmeasured{v5.qwen.E1.bench.od}}
\expandafter\def\csname regval:v5.qwen.E1.fresh.od\endcsname{\unmeasured{v5.qwen.E1.fresh.od}}
\expandafter\def\csname regval:v5.qwen.E1.cmp.od\endcsname{\unmeasured{v5.qwen.E1.cmp.od}}
\expandafter\def\csname regval:v5.qwen.E1.bench.iaimr\endcsname{\unmeasured{v5.qwen.E1.bench.iaimr}}
\expandafter\def\csname regval:v5.qwen.E1.fresh.iaimr\endcsname{\unmeasured{v5.qwen.E1.fresh.iaimr}}
\expandafter\def\csname regval:v5.qwen.E1.cmp.iaimr\endcsname{\unmeasured{v5.qwen.E1.cmp.iaimr}}
\expandafter\def\csname regval:v5.qwen.E1.bench.acc\endcsname{\unmeasured{v5.qwen.E1.bench.acc}}
\expandafter\def\csname regval:v5.qwen.E1.fresh.acc\endcsname{\unmeasured{v5.qwen.E1.fresh.acc}}
\expandafter\def\csname regval:v5.qwen.E1.cmp.acc\endcsname{\unmeasured{v5.qwen.E1.cmp.acc}}
\expandafter\def\csname regval:v5.qwen.E1.bench.fa\endcsname{\unmeasured{v5.qwen.E1.bench.fa}}
\expandafter\def\csname regval:v5.qwen.E1.fresh.fa\endcsname{\unmeasured{v5.qwen.E1.fresh.fa}}
\expandafter\def\csname regval:v5.qwen.E1.cmp.fa\endcsname{\unmeasured{v5.qwen.E1.cmp.fa}}
\expandafter\def\csname regval:v5.llava.E1.imr\endcsname{\unmeasured{v5.llava.E1.imr}}
\expandafter\def\csname regval:v5.llava.E1.acc\endcsname{\unmeasured{v5.llava.E1.acc}}
\expandafter\def\csname regval:v5.internvl.E1.imr\endcsname{\unmeasured{v5.internvl.E1.imr}}
\expandafter\def\csname regval:v5.internvl.E1.acc\endcsname{\unmeasured{v5.internvl.E1.acc}}
\expandafter\def\csname regval:v5.qwen3.E1.imr\endcsname{\unmeasured{v5.qwen3.E1.imr}}
\expandafter\def\csname regval:v5.qwen3.E1.acc\endcsname{\unmeasured{v5.qwen3.E1.acc}}
\expandafter\def\csname regval:v5.gemma.E1.imr\endcsname{\unmeasured{v5.gemma.E1.imr}}
\expandafter\def\csname regval:v5.gemma.E1.acc\endcsname{\unmeasured{v5.gemma.E1.acc}}
\expandafter\def\csname regval:stress5.E1.P0\endcsname{\unmeasured{stress5.E1.P0}}
\expandafter\def\csname regval:stress5.E1.P1\endcsname{\unmeasured{stress5.E1.P1}}
\expandafter\def\csname regval:stress5.E1.P2\endcsname{\unmeasured{stress5.E1.P2}}
\expandafter\def\csname regval:v5.qwen.E2.bench.imr\endcsname{\unmeasured{v5.qwen.E2.bench.imr}}
\expandafter\def\csname regval:v5.qwen.E2.fresh.imr\endcsname{\unmeasured{v5.qwen.E2.fresh.imr}}
\expandafter\def\csname regval:v5.qwen.E2.cmp.imr\endcsname{\unmeasured{v5.qwen.E2.cmp.imr}}
\expandafter\def\csname regval:v5.qwen.E2.bench.blank\endcsname{\unmeasured{v5.qwen.E2.bench.blank}}
\expandafter\def\csname regval:v5.qwen.E2.fresh.blank\endcsname{\unmeasured{v5.qwen.E2.fresh.blank}}
\expandafter\def\csname regval:v5.qwen.E2.cmp.blank\endcsname{\unmeasured{v5.qwen.E2.cmp.blank}}
\expandafter\def\csname regval:v5.qwen.E2.bench.id\endcsname{\unmeasured{v5.qwen.E2.bench.id}}
\expandafter\def\csname regval:v5.qwen.E2.fresh.id\endcsname{\unmeasured{v5.qwen.E2.fresh.id}}
\expandafter\def\csname regval:v5.qwen.E2.cmp.id\endcsname{\unmeasured{v5.qwen.E2.cmp.id}}
\expandafter\def\csname regval:v5.qwen.E2.bench.od\endcsname{\unmeasured{v5.qwen.E2.bench.od}}
\expandafter\def\csname regval:v5.qwen.E2.fresh.od\endcsname{\unmeasured{v5.qwen.E2.fresh.od}}
\expandafter\def\csname regval:v5.qwen.E2.cmp.od\endcsname{\unmeasured{v5.qwen.E2.cmp.od}}
\expandafter\def\csname regval:v5.qwen.E2.bench.iaimr\endcsname{\unmeasured{v5.qwen.E2.bench.iaimr}}
\expandafter\def\csname regval:v5.qwen.E2.fresh.iaimr\endcsname{\unmeasured{v5.qwen.E2.fresh.iaimr}}
\expandafter\def\csname regval:v5.qwen.E2.cmp.iaimr\endcsname{\unmeasured{v5.qwen.E2.cmp.iaimr}}
\expandafter\def\csname regval:v5.qwen.E2.bench.acc\endcsname{\unmeasured{v5.qwen.E2.bench.acc}}
\expandafter\def\csname regval:v5.qwen.E2.fresh.acc\endcsname{\unmeasured{v5.qwen.E2.fresh.acc}}
\expandafter\def\csname regval:v5.qwen.E2.cmp.acc\endcsname{\unmeasured{v5.qwen.E2.cmp.acc}}
\expandafter\def\csname regval:v5.qwen.E2.bench.fa\endcsname{\unmeasured{v5.qwen.E2.bench.fa}}
\expandafter\def\csname regval:v5.qwen.E2.fresh.fa\endcsname{\unmeasured{v5.qwen.E2.fresh.fa}}
\expandafter\def\csname regval:v5.qwen.E2.cmp.fa\endcsname{\unmeasured{v5.qwen.E2.cmp.fa}}
\expandafter\def\csname regval:v5.llava.E2.imr\endcsname{\unmeasured{v5.llava.E2.imr}}
\expandafter\def\csname regval:v5.llava.E2.acc\endcsname{\unmeasured{v5.llava.E2.acc}}
\expandafter\def\csname regval:v5.internvl.E2.imr\endcsname{\unmeasured{v5.internvl.E2.imr}}
\expandafter\def\csname regval:v5.internvl.E2.acc\endcsname{\unmeasured{v5.internvl.E2.acc}}
\expandafter\def\csname regval:v5.qwen3.E2.imr\endcsname{\unmeasured{v5.qwen3.E2.imr}}
\expandafter\def\csname regval:v5.qwen3.E2.acc\endcsname{\unmeasured{v5.qwen3.E2.acc}}
\expandafter\def\csname regval:v5.gemma.E2.imr\endcsname{\unmeasured{v5.gemma.E2.imr}}
\expandafter\def\csname regval:v5.gemma.E2.acc\endcsname{\unmeasured{v5.gemma.E2.acc}}
\expandafter\def\csname regval:stress5.E2.P0\endcsname{\unmeasured{stress5.E2.P0}}
\expandafter\def\csname regval:stress5.E2.P1\endcsname{\unmeasured{stress5.E2.P1}}
\expandafter\def\csname regval:stress5.E2.P2\endcsname{\unmeasured{stress5.E2.P2}}
\expandafter\def\csname regval:v5.qwen.E3.bench.imr\endcsname{\unmeasured{v5.qwen.E3.bench.imr}}
\expandafter\def\csname regval:v5.qwen.E3.fresh.imr\endcsname{\unmeasured{v5.qwen.E3.fresh.imr}}
\expandafter\def\csname regval:v5.qwen.E3.cmp.imr\endcsname{\unmeasured{v5.qwen.E3.cmp.imr}}
\expandafter\def\csname regval:v5.qwen.E3.bench.blank\endcsname{\unmeasured{v5.qwen.E3.bench.blank}}
\expandafter\def\csname regval:v5.qwen.E3.fresh.blank\endcsname{\unmeasured{v5.qwen.E3.fresh.blank}}
\expandafter\def\csname regval:v5.qwen.E3.cmp.blank\endcsname{\unmeasured{v5.qwen.E3.cmp.blank}}
\expandafter\def\csname regval:v5.qwen.E3.bench.id\endcsname{\unmeasured{v5.qwen.E3.bench.id}}
\expandafter\def\csname regval:v5.qwen.E3.fresh.id\endcsname{\unmeasured{v5.qwen.E3.fresh.id}}
\expandafter\def\csname regval:v5.qwen.E3.cmp.id\endcsname{\unmeasured{v5.qwen.E3.cmp.id}}
\expandafter\def\csname regval:v5.qwen.E3.bench.od\endcsname{\unmeasured{v5.qwen.E3.bench.od}}
\expandafter\def\csname regval:v5.qwen.E3.fresh.od\endcsname{\unmeasured{v5.qwen.E3.fresh.od}}
\expandafter\def\csname regval:v5.qwen.E3.cmp.od\endcsname{\unmeasured{v5.qwen.E3.cmp.od}}
\expandafter\def\csname regval:v5.qwen.E3.bench.iaimr\endcsname{\unmeasured{v5.qwen.E3.bench.iaimr}}
\expandafter\def\csname regval:v5.qwen.E3.fresh.iaimr\endcsname{\unmeasured{v5.qwen.E3.fresh.iaimr}}
\expandafter\def\csname regval:v5.qwen.E3.cmp.iaimr\endcsname{\unmeasured{v5.qwen.E3.cmp.iaimr}}
\expandafter\def\csname regval:v5.qwen.E3.bench.acc\endcsname{\unmeasured{v5.qwen.E3.bench.acc}}
\expandafter\def\csname regval:v5.qwen.E3.fresh.acc\endcsname{\unmeasured{v5.qwen.E3.fresh.acc}}
\expandafter\def\csname regval:v5.qwen.E3.cmp.acc\endcsname{\unmeasured{v5.qwen.E3.cmp.acc}}
\expandafter\def\csname regval:v5.qwen.E3.bench.fa\endcsname{\unmeasured{v5.qwen.E3.bench.fa}}
\expandafter\def\csname regval:v5.qwen.E3.fresh.fa\endcsname{\unmeasured{v5.qwen.E3.fresh.fa}}
\expandafter\def\csname regval:v5.qwen.E3.cmp.fa\endcsname{\unmeasured{v5.qwen.E3.cmp.fa}}
\expandafter\def\csname regval:v5.llava.E3.imr\endcsname{\unmeasured{v5.llava.E3.imr}}
\expandafter\def\csname regval:v5.llava.E3.acc\endcsname{\unmeasured{v5.llava.E3.acc}}
\expandafter\def\csname regval:v5.internvl.E3.imr\endcsname{\unmeasured{v5.internvl.E3.imr}}
\expandafter\def\csname regval:v5.internvl.E3.acc\endcsname{\unmeasured{v5.internvl.E3.acc}}
\expandafter\def\csname regval:v5.qwen3.E3.imr\endcsname{\unmeasured{v5.qwen3.E3.imr}}
\expandafter\def\csname regval:v5.qwen3.E3.acc\endcsname{\unmeasured{v5.qwen3.E3.acc}}
\expandafter\def\csname regval:v5.gemma.E3.imr\endcsname{\unmeasured{v5.gemma.E3.imr}}
\expandafter\def\csname regval:v5.gemma.E3.acc\endcsname{\unmeasured{v5.gemma.E3.acc}}
\expandafter\def\csname regval:stress5.E3.P0\endcsname{\unmeasured{stress5.E3.P0}}
\expandafter\def\csname regval:stress5.E3.P1\endcsname{\unmeasured{stress5.E3.P1}}
\expandafter\def\csname regval:stress5.E3.P2\endcsname{\unmeasured{stress5.E3.P2}}
\expandafter\def\csname regval:abl.A1.imr\endcsname{\unmeasured{abl.A1.imr}}
\expandafter\def\csname regval:abl.A1.blank\endcsname{\unmeasured{abl.A1.blank}}
\expandafter\def\csname regval:abl.A1.id\endcsname{\unmeasured{abl.A1.id}}
\expandafter\def\csname regval:abl.A1.od\endcsname{\unmeasured{abl.A1.od}}
\expandafter\def\csname regval:abl.A1.acc\endcsname{\unmeasured{abl.A1.acc}}
\expandafter\def\csname regval:abl.A1.fa\endcsname{\unmeasured{abl.A1.fa}}
\expandafter\def\csname regval:abl.A2.imr\endcsname{\unmeasured{abl.A2.imr}}
\expandafter\def\csname regval:abl.A2.blank\endcsname{\unmeasured{abl.A2.blank}}
\expandafter\def\csname regval:abl.A2.id\endcsname{\unmeasured{abl.A2.id}}
\expandafter\def\csname regval:abl.A2.od\endcsname{\unmeasured{abl.A2.od}}
\expandafter\def\csname regval:abl.A2.acc\endcsname{\unmeasured{abl.A2.acc}}
\expandafter\def\csname regval:abl.A2.fa\endcsname{\unmeasured{abl.A2.fa}}
\expandafter\def\csname regval:abl.A3.imr\endcsname{\unmeasured{abl.A3.imr}}
\expandafter\def\csname regval:abl.A3.blank\endcsname{\unmeasured{abl.A3.blank}}
\expandafter\def\csname regval:abl.A3.id\endcsname{\unmeasured{abl.A3.id}}
\expandafter\def\csname regval:abl.A3.od\endcsname{\unmeasured{abl.A3.od}}
\expandafter\def\csname regval:abl.A3.acc\endcsname{\unmeasured{abl.A3.acc}}
\expandafter\def\csname regval:abl.A3.fa\endcsname{\unmeasured{abl.A3.fa}}
\expandafter\def\csname regval:abl.A4.imr\endcsname{\unmeasured{abl.A4.imr}}
\expandafter\def\csname regval:abl.A4.blank\endcsname{\unmeasured{abl.A4.blank}}
\expandafter\def\csname regval:abl.A4.id\endcsname{\unmeasured{abl.A4.id}}
\expandafter\def\csname regval:abl.A4.od\endcsname{\unmeasured{abl.A4.od}}
\expandafter\def\csname regval:abl.A4.acc\endcsname{\unmeasured{abl.A4.acc}}
\expandafter\def\csname regval:abl.A4.fa\endcsname{\unmeasured{abl.A4.fa}}
\expandafter\def\csname regval:abl.A5.imr\endcsname{\unmeasured{abl.A5.imr}}
\expandafter\def\csname regval:abl.A5.blank\endcsname{\unmeasured{abl.A5.blank}}
\expandafter\def\csname regval:abl.A5.id\endcsname{\unmeasured{abl.A5.id}}
\expandafter\def\csname regval:abl.A5.od\endcsname{\unmeasured{abl.A5.od}}
\expandafter\def\csname regval:abl.A5.acc\endcsname{\unmeasured{abl.A5.acc}}
\expandafter\def\csname regval:abl.A5.fa\endcsname{\unmeasured{abl.A5.fa}}
\expandafter\def\csname regval:abl.A6.imr\endcsname{\unmeasured{abl.A6.imr}}
\expandafter\def\csname regval:abl.A6.blank\endcsname{\unmeasured{abl.A6.blank}}
\expandafter\def\csname regval:abl.A6.id\endcsname{\unmeasured{abl.A6.id}}
\expandafter\def\csname regval:abl.A6.od\endcsname{\unmeasured{abl.A6.od}}
\expandafter\def\csname regval:abl.A6.acc\endcsname{\unmeasured{abl.A6.acc}}
\expandafter\def\csname regval:abl.A6.fa\endcsname{\unmeasured{abl.A6.fa}}
\expandafter\def\csname regval:abl.A7.imr\endcsname{\unmeasured{abl.A7.imr}}
\expandafter\def\csname regval:abl.A7.blank\endcsname{\unmeasured{abl.A7.blank}}
\expandafter\def\csname regval:abl.A7.id\endcsname{\unmeasured{abl.A7.id}}
\expandafter\def\csname regval:abl.A7.od\endcsname{\unmeasured{abl.A7.od}}
\expandafter\def\csname regval:abl.A7.acc\endcsname{\unmeasured{abl.A7.acc}}
\expandafter\def\csname regval:abl.A7.fa\endcsname{\unmeasured{abl.A7.fa}}
\expandafter\def\csname regval:hb.E0.aAcc\endcsname{\unmeasured{hb.E0.aAcc}}
\expandafter\def\csname regval:hb.E0.qAcc\endcsname{\unmeasured{hb.E0.qAcc}}
\expandafter\def\csname regval:hb.E0.fAcc\endcsname{\unmeasured{hb.E0.fAcc}}
\expandafter\def\csname regval:hb.TIDE.aAcc\endcsname{\unmeasured{hb.TIDE.aAcc}}
\expandafter\def\csname regval:hb.TIDE.qAcc\endcsname{\unmeasured{hb.TIDE.qAcc}}
\expandafter\def\csname regval:hb.TIDE.fAcc\endcsname{\unmeasured{hb.TIDE.fAcc}}
\expandafter\def\csname regval:hb.E2.aAcc\endcsname{\unmeasured{hb.E2.aAcc}}
\expandafter\def\csname regval:hb.E2.qAcc\endcsname{\unmeasured{hb.E2.qAcc}}
\expandafter\def\csname regval:hb.E2.fAcc\endcsname{\unmeasured{hb.E2.fAcc}}
\expandafter\def\csname regval:hb.E3.aAcc\endcsname{\unmeasured{hb.E3.aAcc}}
\expandafter\def\csname regval:hb.E3.qAcc\endcsname{\unmeasured{hb.E3.qAcc}}
\expandafter\def\csname regval:hb.E3.fAcc\endcsname{\unmeasured{hb.E3.fAcc}}
\expandafter\def\csname regval:hj.iaa\endcsname{\unmeasured{hj.iaa}}
\expandafter\def\csname regval:hj.ruleb\endcsname{\unmeasured{hj.ruleb}}
\expandafter\def\csname regval:hj.ruleb2\endcsname{\unmeasured{hj.ruleb2}}
\expandafter\def\csname regval:hj.judgeu\endcsname{\unmeasured{hj.judgeu}}
\expandafter\def\csname regval:hard.E0.low\endcsname{\unmeasured{hard.E0.low}}
\expandafter\def\csname regval:hard.TIDE.low\endcsname{\unmeasured{hard.TIDE.low}}
\expandafter\def\csname regval:hard.E0.mid\endcsname{\unmeasured{hard.E0.mid}}
\expandafter\def\csname regval:hard.TIDE.mid\endcsname{\unmeasured{hard.TIDE.mid}}
\expandafter\def\csname regval:hard.E0.high\endcsname{\unmeasured{hard.E0.high}}
\expandafter\def\csname regval:hard.TIDE.high\endcsname{\unmeasured{hard.TIDE.high}}

%% file: sec/0_abstract.tex
\begin{abstract}
Vision-language models answer questions about images they were never shown, a failure called mirage.
Such an answer can be correct without being justified by the evidence, and the difference matters
wherever a model's output is checked by someone who did not see its input. We locate the failure in
the training objective: cross-entropy separates a model that reads the image from one that ignores it
by only the conditional mutual information between answer and image, and no training example rewards
declining. We formalise support completion, TIDE, which also shows each training question with a blank
image, a same-domain wrong image and an out-of-domain image, with targets that decline, and prove that
under TIDE every image-blind predictor is suboptimal by at least $1.04$ nats per example, that the
optimum declines exactly on the counterfactual kinds while keeping its behaviour on valid images, and
that for any finite model the error of its answer-or-decline decision, declining on valid input plus
answering on invalid input, is bounded below by how well its representation separates valid from
invalid evidence. On Qwen2.5-VL-7B under a frozen held-out
protocol with two judges, the three-condition predecessor of TIDE reduced the invalid-evidence mirage
rate from $0.759$ to $0.097$ and the blank-image rate from $0.700$ to $0.003$ at a false-abstention
cost of $0.126$; label-free grounding raised the rate to $0.859$, a likelihood-advantage contrast
moved it only to $0.718$ because its empty-image term carries no gradient, and grounding applied after
support completion re-opened the mirage. The four-condition protocol on three base models, with seeds
and component ablations, is in progress and not reported here.
\end{abstract}

%% file: sec/1_intro.tex
\begin{figure*}[t]
\centering
\includegraphics[width=\textwidth]{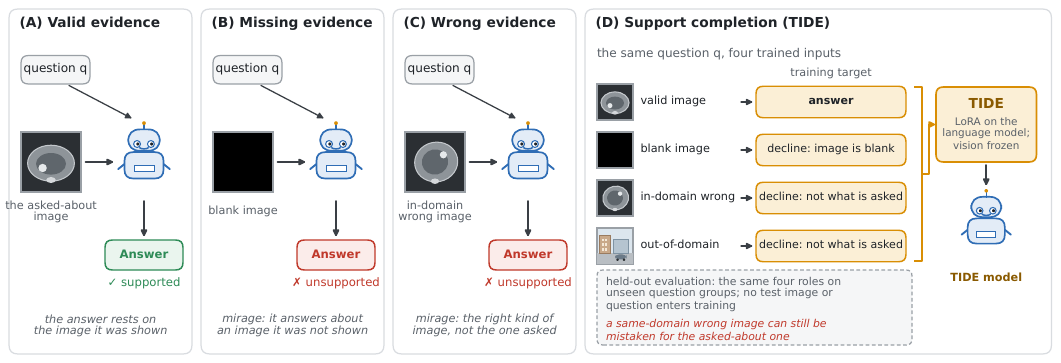}
\caption{\textbf{Mirage, and support completion.} A question with (A) the image it is about, (B) a
blank image and (C) a real image of the right kind that is not the one asked about: a stock model
answers in all three cases, and only the first answer rests on evidence. (D) Support completion (TIDE)
trains the same question with its valid image and three counterfactual inputs whose targets decline;
only the language model's adapters are trained, and evaluation uses the same four roles on groups
that never enter training. No measured quantity is shown.}
\label{fig:concept}
\end{figure*}

\section{Introduction}
\label{sec:intro}

Give a vision-language model a question about a scan and a blank image, and it will often describe a
finding anyway. Give it a genuine scan that is not the one the question refers to, and it will describe a
finding in that one. \citet{asadi2026mirage} named this behaviour \emph{mirage} and
separated it from hallucination: a hallucinating model invents detail inside a valid frame, whereas a
model in mirage mode treats absent or unrelated visual input as the evidence it was asked about. The
wrong-image case is the dangerous one. A mis-routed study or a stale screenshot produces a real image
of the expected kind, nothing about it looks anomalous, and a confident description of it has the
same form as a correct one. Inference-time detectors are weakest there, on the in-domain hard
negative~\cite{chowdhury2026tclia}.

\noindent\textbf{Why mirage is a safety problem.} An answer produced without the evidence can still
be right, because many questions are predictable from their text, so benchmark accuracy does not
reveal whether the model read the image; \citet{asadi2026mirage} found that on three multimodal
benchmarks three quarters of the questions were answered correctly without the image by at least one
frontier model. In deployment the difference between a correct answer and a justified one decides
whether an error can be caught: a radiology report, an invoice value or an agent's action on a
screenshot is checked, if at all, by a person or a program that did not see the model's input. For
that overseer four things matter: \emph{evidence provenance}, whether the output traces to
the input it names; \emph{unsupported confidence}, whether the output signals when it does not;
\emph{error detectability}, whether a wrong output differs in form from a right one; and the
\emph{value of abstention}, whether the system can hand the case back. A model that reports evidence
it never received defeats all four, because its failure is silent; oversight is cheap only when
outputs can be checked without redoing the task~\cite{amodei2016concrete,bowman2022scalable}, and
reporting what one observed is honesty in its narrowest sense~\cite{evans2021truthful,askell2021general}.

\noindent\textbf{The argument in seven steps.}
\begin{enumerate}
\itemsep-0.1em
\item Mirage is a relation between an output and the evidence shown with it (\cref{sec:problem}).
\item Standard training data contain no input on which declining is the right response.
\item The objective separates a model that reads the image from one that ignores it by only
$I(A;V\mid Q)$ (\cref{prop:starvation}), small when answers are predictable from the question.
\item Hence an intervention that leaves the training targets unchanged cannot make declining optimal
(\cref{sec:consequence}).
\item Support completion changes the distribution: under TIDE the optimum declines on every
counterfactual kind, keeps its behaviour on valid images, and every image-blind predictor loses at
least $H(B)=\tfrac32\ln 2$ nats per example (\cref{prop:tide}).
\item For any finite model, the error of its answer-or-decline decision (declining on valid inputs,
answering on counterfactual ones) is bounded below by the Bayes error of deciding validity from its
representation (\cref{prop:rep}).
\item These statements become hypotheses about where a trained model should and should not reach its
optimum, tested under a frozen protocol with two judges (\cref{sec:results}).
\end{enumerate}

\noindent\textbf{What is measured.} The frozen record of this paper is a single-model study of
Qwen2.5-VL-7B-Instruct~\cite{bai2025qwen25vl} under protocol v4.1, in which the TIDE adapter was
trained on three of the four conditions (out-of-domain held out of training): one fine-tune took the
invalid-evidence mirage rate from $\rv{M-e0-U-imr}$ to $\rv{M-e1-U-imr}$, leaving a residual on
in-domain wrong images and a false-abstention cost, and three target-preserving interventions behaved
as step 4 predicts. The four-condition protocol v5, further base models, seeds, component ablations
and a human validation of the judges are in progress and appear here only as designs.

\noindent\textbf{Contributions.} (1) A target-level account of mirage (\cref{prop:starvation}) and the
objective-behaviour decoupling it predicts. (2) TIDE, with a guarantee on its optimum
(\cref{prop:tide}), a representation-limited bound for finite models (\cref{prop:rep}) and the
token-mean form that is trained (\cref{cor:token}). (3) A frozen, leakage-safe protocol with two
judges, and a measured single-model test of the hypotheses (the component-and-order grid, the
evidence-advantage contrast and its gradient cancellation, a pre-registered prompt-pressure pilot). (4) A registry mapping every number to a cell, checkpoint, judge and
artifact, and a build that excludes anything unmeasured.

%% file: sec/2_related.tex
\section{Related Work}
\label{sec:related}

\noindent\textbf{The mirage failure mode.} \citet{asadi2026mirage} define mirage reasoning, separate
it from object hallucination, show that frontier systems describe images they were never given, and
report that common evaluation instructions raise the rate (their Figure S.1 instruction is our stress
condition P2). We adopt their framing and their response-only judge.

\noindent\textbf{Detecting rather than curing.} \citet{chowdhury2026tclia} gate a model at inference
with a probe of the layerwise image-question alignment in a frozen contrastive encoder, fused with a
blank-input gate and the model's self-assessment; the signal degrades towards chance on in-domain
hard negatives, where our residual also sits. Their mirage rate is a detector metric over all
evaluation inputs; our TC-LIA MR borrows only its denominator. \citet{benlevi2026mirageprobes}
train linear and MLP probes on hidden activations to predict whether a response is a mirage, defined
there by consistency between the with-image and without-image answers, on two models and three
benchmarks, with difference-of-activation probes the most accurate. They propose two mechanisms, spurious internal images and
textual biases, evaluate no intervention, and call their labels heuristic and their findings
correlational. Our account is complementary: both mechanisms answer from an input that does not
support the question and \cref{prop:starvation} says why neither is penalised; their probes target a
response-level mirage label, whereas \cref{prop:rep} concerns the detectability of input validity, a
different target (row-by-row comparison in the supplementary material).

\noindent\textbf{Hallucination benchmarks and decoding.} POPE~\cite{li2023pope} established polling
for object hallucination, whose probe structure our image-aware judge adopts;
HallusionBench~\cite{guan2024hallusionbench} tests visual dependence with paired questions and is the
external benchmark of our plan; visual contrastive decoding~\cite{leng2024vcd} contrasts clean and
distorted images at inference. Hallucination metrics measure invented content inside a valid frame, so we report them separately.

\noindent\textbf{Abstention and selective prediction.} Selective prediction treats declining as a
decision to be learned or thresholded, trading coverage for
risk~\cite{elyaniv2010selective,geifman2017selective,hendrycks2017baseline}, under domain
shift~\cite{kamath2020selective} and in visual question answering with a learned
selector~\cite{whitehead2022reliable}; refusal tuning teaches language models to say they do not
know~\cite{zhang2024rtuning}, and VizWiz~\cite{gurari2018vizwiz} contains unanswerable visual
questions. TIDE differs in what triggers declining: whether the input supports the question, rather than the
expected correctness of a guess, with counterfactual examples that differ from answerable ones only in
the image.

\noindent\textbf{Contrastive grounding and the prior study.} Contrastive image-text
objectives~\cite{radford2021clip,oord2018cpc} are the standard alignment tool and our grounding cell is
one, applied to the projector of an in-sequence model~\cite{liu2023llava,bai2025qwen25vl,chen2024internvl}.
Gradient starvation~\cite{pezeshki2021gradient} describes how an early-explaining feature suppresses
others; a prior single-model study measured a modality-level instance and argued that
representation-level interventions do not move the answer-versus-decline
decision~\cite{anon2026dissociation}; none of its numbers enters this paper, and \cref{sec:e5} re-tests
its likelihood-advantage contrast.

%% file: sec/3_problem.tex
\section{Problem Formulation}
\label{sec:problem}

We write $Q$ for the question, $V$ for the image or image set shown with it, and $A$ for the response.
For each question $q$ let $S(q)$ be its \emph{support}, the set of images the question is about; the
training distribution $D$ of visual instruction tuning places all its mass on $V\in S(Q)$. An
\emph{invalid-evidence input} is a pair $(q,v)$ with $v\notin S(q)$, of three kinds here: a blank
image (all black, supplied in every image slot), an \emph{in-domain} wrong image (a genuine image of
the same domain that is not the one asked about) and an \emph{out-of-domain} image. A response to
such an input is a \emph{mirage} when it answers as if the input supported the question, and not one
when it declines, states the problem, or is about what the shown image contains. The same string can
be a mirage under a blank image and justified under the right one, and a mirage can be factually
correct. A judge decides the event on generated text (\cref{sec:protocol}); the theory concerns the
targets that training rewards.

A predictor is a conditional distribution $p(a\mid q,v)$; its risk under $D$ is the expected log-loss
$R_D(p)=\mathbb{E}_{(Q,V,A)\sim D}[-\log p(A\mid Q,V)]$. A predictor is \emph{image-blind} if
$p(\cdot\mid q,v)=p(\cdot\mid q,v')$ for all $q,v,v'$. Minima are over all conditional distributions
or over all image-blind ones, with no restriction to a model class; \cref{prop:rep} returns to that
restriction. On valid rows we also report accuracy, the answered rate and the \emph{false abstention} rate, the
fraction of valid inputs on which the model declines.

%% file: sec/4_tide.tex
\section{Support Completion: Objective and Theory}
\label{sec:tide}

\subsection{Why the standard objective admits mirage}
\label{sec:theory}

\begin{proposition}[Evidence starvation of the objective]
\label{prop:starvation}
For any distribution $D$ with finite conditional entropies,
\begin{equation}
\Delta(D) := \min_{p\ \text{image-blind}} R_D(p) - \min_{p} R_D(p) = I_D(A;V\mid Q).
\label{eq:gap}
\end{equation}
\end{proposition}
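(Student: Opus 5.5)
The plan is to compute the two minima in \cref{eq:gap} separately, each as a conditional entropy, and then subtract. Both computations rest on one fact. For any fixed conditioning value $c$ and any distribution $r$ over responses, the expected log-loss decomposes as
\[
\mathbb{E}[-\log r(A)\mid C=c]=H(A\mid C=c)+\mathrm{KL}\bigl(D(\cdot\mid c)\,\|\,r\bigr),
\]
which is minimised exactly at $r=D(\cdot\mid c)$. This is Gibbs' inequality. Because the minimisation is over all conditional distributions with no model class, the choice can be made separately at each conditioning value.

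For the unrestricted minimum we take $C=(Q,V)$. Choosing $p(\cdot\mid q,v)=D(A\mid Q=q,V=v)$ on the support of $D$ gives
\[
\min_p R_D(p)=H_D(A\mid Q,V).
\]
Values of $(q,v)$ outside the support of $D$ do not affect $R_D$, so $p$ may be arbitrary there.

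For the image-blind minimum we write $p(a\mid q,v)=r_q(a)$, where $r_q$ depends on $q$ only. By the tower property,
\[
R_D(p)=\mathbb{E}_Q\,\mathbb{E}[-\log r_Q(A)\mid Q].
\]
The inner expectation involves $A$ averaged over $V$ given $Q$, so it is the log-loss against the marginal $D(A\mid Q=q)$. Applying Gibbs' inequality with $C=Q$, the minimiser is $r_q=D(A\mid Q=q)$, and
\[
\min_{p\ \text{image-blind}} R_D(p)=H_D(A\mid Q).
\]
The one point to check is that the constraint "image-blind" is exactly "depends on $q$ only". Any function of $q$ alone gives a valid image-blind predictor once extended constantly in $v$, and conversely every image-blind predictor has this form. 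The restricted minimum is therefore attained.

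Subtracting gives
\[
\Delta(D)=H_D(A\mid Q)-H_D(A\mid Q,V)=I_D(A;V\mid Q),
\]
which is the chain-rule definition of conditional mutual information.

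The main obstacle is not conceptual but measure-theoretic. We must ensure that the entropies are finite, which the hypothesis supplies, so that the subtraction is not of the form $\infty-\infty$. We must also ensure that conditional distributions exist when $V$ is continuous. For the latter, regular conditional probabilities exist on standard Borel spaces. For a general response space we would interpret $H$ as differential or relative entropy with respect to a common dominating measure, and the KL decomposition still holds.

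A secondary subtlety is that images outside the support of $D$ are unconstrained. This is harmless for \cref{eq:gap}, but it is worth stating, because it is precisely the freedom that \cref{sec:consequence} exploits: the objective places no constraint on behaviour off the support.
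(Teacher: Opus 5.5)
Your proof is correct and follows the paper's own argument. Both decompose the log-loss as a conditional entropy plus an expected KL term, read off the unrestricted minimum $H_D(A\mid Q,V)$ and the image-blind minimum $H_D(A\mid Q)$ at $r=D(A\mid Q)$, and subtract to get $I_D(A;V\mid Q)$; your remarks on finiteness and regular conditional distributions add care the paper leaves implicit rather than a different route.
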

\begin{proof}
For any $p$, $R_D(p)=H_D(A\mid Q,V)+\mathbb{E}_{Q,V}\,\mathrm{KL}\big(D(\cdot\mid Q,V)\,\|\,p(\cdot\mid
Q,V)\big)$, so the unrestricted minimum is $H_D(A\mid Q,V)$. An image-blind $p$ is a function
$r(\cdot\mid q)$ of the question, and the same decomposition with $V$ marginalised gives the minimum
$H_D(A\mid Q)$ at $r=D(A\mid Q)$. The difference is $I_D(A;V\mid Q)$.
\end{proof}

When the answer is largely predictable from the question, the image-blind predictor sits within a
small $I_D(A;V\mid Q)$ of the optimum, and multimodal benchmarks are in this
regime~\cite{asadi2026mirage}; on the ten domains of this paper, at least one of five API models produced a
correct answer without receiving the image on a fraction $\rv{M-poc-bclean-removed-total}$ of the questions
(supplementary material), a sign of question-only correctness and of reliance on textual priors, not a
measurement of $I_D(A;V\mid Q)$. A second fact explains why the failure is a confident answer rather
than a refusal. Every image in $D$ is one its question is about, so on any input outside that support
$R_D$ places no constraint on $p$: answering there is never penalised and declining is never rewarded. A
model near the image-blind solution meets a blank or wrong image by doing what it does everywhere,
answering from the question.

\noindent\textbf{Consequence for representation-level fixes.}\label{sec:consequence} Both facts are
properties of $D$ alone. An intervention that changes the model but not the targets leaves the gap in
\cref{eq:gap} unchanged and adds no reward for declining. It can change what the model does on an
invalid input, in either direction, because behaviour there is extrapolation; it cannot make
declining optimal. We therefore do not claim that representation changes cannot reduce
mirage; we predict that they do not bring blank-image mirage near zero and do not produce the
abstention a target-level objective produces (\cref{sec:results}).

\subsection{TIDE: completing the support}
\label{sec:objective}

For each kind $k\in\{\mathrm{blank},\mathrm{id},\mathrm{od}\}$ let a \emph{counterfactual kernel}
$C_k(\cdot\mid q)$ be a distribution over images whose support is disjoint from $S(q)$ and from the
other kinds' supports. Let $\dec{blank}$ be a response stating that the image is blank and
$\dec{unrel}$ a response stating that the image does not show what the question asks about; both are
distinct and never valid answers, and the in-domain and out-of-domain kinds share the second (an
unrelated image is unrelated whichever domain it comes from). The completed distribution $\Dc$ draws a
kind $K$ uniformly from $\{\mathrm{real},\mathrm{blank},\mathrm{id},\mathrm{od}\}$, independently of
everything else; if $K=\mathrm{real}$ it draws $(Q,V,A)\sim D$, otherwise $Q\sim D$,
$V\sim C_K(\cdot\mid Q)$ and $A=\dec{K}$ with $\dec{id}=\dec{od}=\dec{unrel}$. TIDE is cross-entropy on
the completed distribution,
\begin{equation}
R_{\Dc}(p)=\tfrac14 R_D(p)+\tfrac14\sum_{k}\mathbb{E}_{k}\big[-\log p(\dec{k}\mid Q,V)\big],
\label{eq:tide}
\end{equation}
where $k$ runs over the three counterfactual kinds and $\mathbb{E}_k$ is over $Q\sim D$, $V\sim
C_k(\cdot\mid Q)$. TIDE adds no auxiliary head, contrastive term, detector or architectural change. The four examples built from one question differ
only in the image and the target. Let $B=g(A)\in\{\ans,\dec{blank},\dec{unrel}\}$ be the
\emph{behaviour class} of a response, with masses $(\tfrac14,\tfrac14,\tfrac12)$ under $\Dc$, so
$H(B)=\tfrac32\ln 2\approx 1.040$ nats.

\begin{proposition}[TIDE removes the image-blind optimum and preserves valid behaviour]
\label{prop:tide}
Under the conditions above: (i) every image-blind predictor is at least $H(B)$ nats per example worse
than the optimum,
\begin{equation}
\Delta(\Dc)=H(B)+\tfrac14 I_D(A;V\mid Q)\;\ge\;\tfrac32\ln 2;
\label{eq:tidegap}
\end{equation}
(ii) the minimiser $p^\star$ satisfies $p^\star(\dec{k}\mid q,v)=1$ for every $v$ in the support of
$C_k(\cdot\mid q)$, and $p^\star(\cdot\mid q,v)=D(\cdot\mid q,v)$ for every $v\in S(q)$.
\end{proposition}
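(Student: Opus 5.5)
Apply \cref{prop:starvation} to $\Dc$ itself: $\Delta(\Dc)=I_{\Dc}(A;V\mid Q)$, with unrestricted optimum $H_{\Dc}(A\mid Q,V)$ and image-blind optimum $H_{\Dc}(A\mid Q)$. Part (i) therefore reduces to computing these two conditional entropies. Part (ii) follows from the KL decomposition in that proof, which says the minimiser is $p^\star=\Dc(\cdot\mid q,v)$ almost everywhere. So everything depends on identifying the conditional law of $A$ given $(Q,V)$ under $\Dc$ and the conditional law given $Q$ alone.

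\textbf{Step 1: the unrestricted optimum.} Because the supports of $S(q)$ and the three kernels $C_k(\cdot\mid q)$ are pairwise disjoint, the image $V$ determines the kind $K$ given $Q$. On $v\in\operatorname{supp}C_k(\cdot\mid q)$ the conditional is the point mass at $\dec{k}$. On $v\in S(q)$ it is $D(\cdot\mid q,v)$, since conditioning on $K=\mathrm{real}$ recovers $D$. This yields (ii) directly.

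It also gives $H_{\Dc}(A\mid Q,V)=\tfrac14 H_D(A\mid Q,V)$, since the three counterfactual branches contribute zero entropy. Two points need care. First, the points in (ii) are those with positive $\Dc$-mass; off that mass $p^\star$ is unconstrained, and I would remark on this. Second, the kernel-support condition must hold for $Q$-almost every $q$.

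\textbf{Step 2: the image-blind optimum.} Given $Q=q$, $A$ is a mixture. With probability $\tfrac14$ it is drawn from $D(\cdot\mid q)$, with probability $\tfrac14$ it equals $\dec{blank}$, and with probability $\tfrac12$ it equals $\dec{unrel}$, because $\dec{id}=\dec{od}$. The behaviour class $B=g(A)$ is a deterministic function of $A$. Valid answers never coincide with the decline strings, so $B$ is also recoverable from $A$.

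The chain rule then gives $H_{\Dc}(A\mid Q)=H(B\mid Q)+H(A\mid B,Q)$. The first term equals $H(B)=\tfrac32\ln2$ because $K$ is independent of $Q$. The second term equals $\tfrac14 H_D(A\mid Q)$, since $A$ is deterministic given $B$ except on $B=\ans$, where its law is $D(\cdot\mid q)$.

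\textbf{Step 3: combine.} Subtracting the two entropies gives $\Delta(\Dc)=H(B)+\tfrac14\big(H_D(A\mid Q)-H_D(A\mid Q,V)\big)=H(B)+\tfrac14 I_D(A;V\mid Q)$. Nonnegativity of mutual information then gives the bound $\ge\tfrac32\ln2$.

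The finite-entropy hypothesis of \cref{prop:starvation} carries over from $D$, because $\Dc$ only adds three atoms. \textbf{The main obstacle} is the bookkeeping in Step 2: one has to justify that $B$ is a function of $A$ and vice versa on the decline atoms. This rests on the assumption that decline strings are ``never valid answers.'' Without it, the mixture would blur and the gap could shrink, so I would state explicitly where this assumption is used.
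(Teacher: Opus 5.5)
Your proposal is correct and follows the paper's route. Both arguments apply \cref{prop:starvation} to $\Dc$, use disjointness so that the kind (hence $B$) is a function of $(Q,V)$, use independence of $K$ from $Q$, and use that the response is constant on each decline class; the difference is only bookkeeping, since you compute $H_{\Dc}(A\mid Q,V)=\tfrac14 H_D(A\mid Q,V)$ and $H_{\Dc}(A\mid Q)=H(B)+\tfrac14 H_D(A\mid Q)$ separately and subtract, whereas the paper applies the chain rule $I(A;V\mid Q)=I(B;V\mid Q)+I(A;V\mid Q,B)$ directly. Your part (ii) is the paper's argument as well: the log-loss minimiser is $\Dc(A\mid Q,V)$, unique on the support, and only one branch has mass at each input.
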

\begin{proof}[Proof sketch]
$\Delta(\Dc)=I_{\Dc}(A;V\mid Q)$ by \cref{prop:starvation}. Since $B=g(A)$, the chain rule gives
$I(A;V\mid Q)=I(B;V\mid Q)+I(A;V\mid Q,B)$; $B$ is a function of $(Q,V)$ by disjointness and
independent of $Q$, so the first term is $H(B)$, and the second is $\tfrac14 I_D(A;V\mid Q)$ because a
decline class has a constant response. (ii) The minimiser is $\Dc(A\mid Q,V)$, which has one branch on
each input. Full proof in the supplementary material.
\end{proof}

$H(B)$ depends on the kind weights and on the number of distinct decline strings (three would give
$\ln 4$, one shared string $h(\tfrac14)\approx 0.56$ nats); the decisions in (ii) are the same in
every variant.

\begin{corollary}[The token-mean loss that is trained]
\label{cor:token}
The implementation averages the log-loss over the $T(a)$ labelled tokens of each target. That
objective equals $\bar w\,R_{\tilde D}(p)$ for the length-tilted distribution
$\tilde D(q,v,a)\propto \Dc(q,v,a)/T(a)$, $\bar w=\mathbb{E}_{\Dc}[1/T(A)]$, so its image-blind gap is
$\bar w\,I_{\tilde D}(A;V\mid Q)\ge H(B)/T_{\max}$ nats per labelled token when every target has at
most $T_{\max}$ labelled tokens. Its optimum declines with probability one on every counterfactual
input and on valid inputs equals $D(\cdot\mid q,v)$ tilted towards shorter answers (unchanged when
competing answers have equal length).
\end{corollary}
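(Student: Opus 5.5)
The plan has three parts: an exact change of measure, an application of \cref{prop:starvation} to the tilted distribution, and a direct per-question computation for the lower bound, where a pure information-theoretic route looks awkward. First, let the token-mean objective be
\[
L(p)=\mathbb{E}_{\Dc}\big[T(A)^{-1}(-\log p(A\mid Q,V))\big].
\]
Because $\bar w\,\tilde D(q,v,a)=\Dc(q,v,a)/T(a)$ pointwise, I will rewrite $L(p)=\bar w\,R_{\tilde D}(p)$ exactly, with $\bar w$ a constant that does not depend on $p$. Minima over all predictors, and minima over image-blind ones, therefore scale by $\bar w$. \cref{prop:starvation}, applied to $\tilde D$ (its conditional entropies are finite because $\tilde D\ll\Dc$ with bounded density ratio), then gives the image-blind gap $\bar w\,I_{\tilde D}(A;V\mid Q)$ immediately.

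Second, I will identify the optimum pointwise. For each $(q,v)$ the unrestricted minimiser of $R_{\tilde D}$ is
\[
\tilde D(a\mid q,v)\propto \Dc(a\mid q,v)/T(a).
\]
On a counterfactual input, disjointness of the kernels' supports means $\Dc(\cdot\mid q,v)$ is a point mass on $\dec{k}$, and tilting a point mass leaves it unchanged, so the optimum declines with probability one. On a valid input, $\Dc(\cdot\mid q,v)=D(\cdot\mid q,v)$, so the optimum is $D$ reweighted by $1/T(a)$ and renormalised. That is the claimed tilt toward shorter answers, and it is the identity when all competing answers have equal length.

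Third, I will prove the lower bound $H(B)/T_{\max}$, which I expect to be the main obstacle. Under $\tilde D$ the kind is no longer uniform, nor independent of $Q$, because answer lengths vary with $q$. So the proof-sketch step $I(B;V\mid Q)=H(B)$ from \cref{prop:tide} does not transfer directly. Instead I will bound the gap per question and then average over $q\sim D$. Fix $q$ and an image-blind $r(\cdot\mid q)$, and split it into class masses $r_{\mathrm{ans}},r_{\mathrm{blank}},r_{\mathrm{unrel}}$ plus a conditional $r(a\mid q,\ans)$ over answers. Since declines are never valid answers, $r(a\mid q)=r_{\mathrm{ans}}\,r(a\mid q,\ans)$ for $a$ an answer, and its loss on valid rows splits as
\[
\tfrac14\mathbb{E}\big[(-\log r_{\mathrm{ans}})/T\big]+\tfrac14\mathbb{E}\big[(-\log r(A\mid q,\ans))/T\big].
\]
The second term is at least the unrestricted valid-row minimum, because $r(\cdot\mid q,\ans)$ is a feasible predictor on the valid rows. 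The unrestricted optimum has zero loss on the counterfactual rows. What remains is therefore
\[
\tfrac14\,\mathbb{E}\Big[\tfrac{-\log r_{\mathrm{ans}}}{T}\Big]+\tfrac14\,\tfrac{-\log r_{\mathrm{blank}}}{T_{\mathrm{blank}}}+\tfrac12\,\tfrac{-\log r_{\mathrm{unrel}}}{T_{\mathrm{unrel}}}.
\]
Every term is nonnegative, so using $1/T\ge 1/T_{\max}$ bounds it below by $T_{\max}^{-1}$ times the cross-entropy of $(r_{\mathrm{ans}},r_{\mathrm{blank}},r_{\mathrm{unrel}})$ against $(\tfrac14,\tfrac14,\tfrac12)$. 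By Gibbs' inequality that cross-entropy is at least $H(B)=\tfrac32\ln2$. Averaging over $q$ gives $\bar w\,I_{\tilde D}(A;V\mid Q)\ge H(B)/T_{\max}$, in the same per-labelled-token normalisation as $L$. The delicate point to check is that the split of $r$ into class mass and within-class conditional is legitimate. It is, because the three behaviour classes partition the response space.
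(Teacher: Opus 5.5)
Your proof is correct. The change of measure, the appeal to \cref{prop:starvation} and the pointwise identification of the tilted optimum match the paper; the lower bound takes a different route.

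The paper writes the excess risk of any predictor as $\sum_{q,v}\Dc(q,v)\,W(q,v)\,\mathrm{KL}\big(\tilde D(\cdot\mid q,v)\,\|\,p(\cdot\mid q,v)\big)$, with $W(q,v)=\mathbb{E}[1/T(A)\mid q,v]\ge 1/T_{\max}$. For an image-blind $r$ it uses the compensation identity to bound the $v$-average of these KL terms below by the mutual information of $V$ and $A$ under the hybrid joint $\Dc(v\mid q)\,\tilde D(a\mid q,v)$. It then reaches $H(B)$ by data processing through the kind and the behaviour class. The hybrid joint is how the paper handles the obstacle you point out: it keeps the uniform kind weights of $\Dc$ while using the tilted conditionals.

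You instead factor $r$ into behaviour-class masses and a within-answer conditional, which is the chain rule for log-loss over $B$ done by hand. You drop the within-answer term by feasibility and apply Gibbs' inequality to the class masses against the $\Dc$ weights $(\tfrac14,\tfrac14,\tfrac12)$.

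\textbf{What your route buys.} It is more elementary, and it shows exactly which structure produces the constant. Declines are never valid answers, decline targets are point masses with zero optimal loss, and kinds are uniform given $q$ under $\Dc$. It also keeps the separate weights $\mathbb{E}[1/T(A)\mid q,\mathrm{real}]$, $1/T_{\mathrm{blank}}$ and $1/T_{\mathrm{unrel}}$ until the last line. Minimising the remaining expression over the class masses would therefore give a sharper constant than $H(B)/T_{\max}$ for the realised token counts.

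\textbf{What the paper's route buys.} It needs only a lower bound on the weight and the relations $K=f(V)$, $B=g(A)$, so it carries over unchanged to other weightings and target sets. Before the data-processing step it also keeps the full hybrid mutual information, including the within-answer evidence term that both arguments finally discard.
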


\subsection{What a finite model can achieve}
\label{sec:realise}

\Cref{prop:tide} is a statement about the unrestricted optimum; a trained model is a function of its
representation. The next result says what that costs, in the two quantities we measure.

\begin{proposition}[Representation-limited optimum]
\label{prop:rep}
Let $\phi$ map inputs $(q,v)$ to a representation $Z=\phi(Q,V)$ and let $\mathcal P_\phi$ be the
predictors that factor through $\phi$. Call a predictor's \emph{decision} on an input binary: decline
(a decline string) or answer (any other response). Under $\Dc$, write
$\pi(z)=\Pr(K=\mathrm{real}\mid Z=z)$, $M$ for the probability of the answer decision on a
counterfactual input and $F$ for the probability of the decline decision on a valid one. Then
(i) the best risk in $\mathcal P_\phi$ exceeds the unrestricted optimum by $I(A;Q,V\mid Z)\ge H(B\mid Z)$;
(ii) for every predictor in $\mathcal P_\phi$ and every decoding rule,
\begin{equation}
\tfrac14 F+\tfrac34 M\;\ge\;e(\phi):=\mathbb{E}\big[\min(\pi(Z),1-\pi(Z))\big],
\label{eq:bayes}
\end{equation}
the Bayes error of deciding validity from $Z$, with equality for the rule that answers iff
$\pi(Z)>\tfrac12$;
(iii) if responses are sampled from the class optimum, $\tfrac14F=\tfrac34M=\mathbb{E}[\pi(Z)(1-\pi(Z))]$.
Both sides vanish iff validity is determined by $Z$ almost surely.
\end{proposition}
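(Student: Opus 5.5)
The proof treats $Z$ as a statistic of $(Q,V)$ and works conditionally on $Z=z$ throughout. Part (i) repeats the decomposition in the proof of \cref{prop:starvation} with $Z$ in place of $(Q,V)$. Parts (ii) and (iii) reduce to a pointwise inequality between $\pi(z)$ and the predictor's decline probability at $z$.

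For (i), every $p\in\mathcal P_\phi$ has the form $p(a\mid z)$. Hence $R_{\Dc}(p)=H_{\Dc}(A\mid Z)+\mathbb{E}_Z\,\mathrm{KL}\big(\Dc(\cdot\mid Z)\,\|\,p(\cdot\mid Z)\big)$, and the class optimum $H(A\mid Z)$ is attained at $p=\Dc(A\mid Z)$. The unrestricted optimum is $H(A\mid Q,V)$. Since $Z$ is a function of $(Q,V)$, we have $H(A\mid Z)-H(A\mid Q,V)=H(A\mid Z)-H(A\mid Q,V,Z)=I(A;Q,V\mid Z)$. For the lower bound I use that $B=g(A)$ is a function of $A$, so data processing gives $I(A;Q,V\mid Z)\ge I(B;Q,V\mid Z)$. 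By the disjointness of the supports, as in the proof of \cref{prop:tide}, $B$ is a function of $(Q,V)$, so $H(B\mid Q,V,Z)=0$ and $I(B;Q,V\mid Z)=H(B\mid Z)$.

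For (ii), fix a predictor in $\mathcal P_\phi$ and a decoding rule. The one modelling assumption I make explicit is that the decoding randomness is independent of $(K,Q,V)$ given $Z$, which holds because the rule only sees the output distribution $p(\cdot\mid Z)$. Let $\delta(z)$ be the resulting probability of the decline decision at $z$. Under $\Dc$ we have $\Pr(K=\mathrm{real})=\tfrac14$, $F=\Pr(\text{decline}\mid K=\mathrm{real})$ and $M=\Pr(\text{answer}\mid K\neq\mathrm{real})$. It follows that
\[
\tfrac14F+\tfrac34M=\Pr(K=\mathrm{real},\text{decline})+\Pr(K\ne\mathrm{real},\text{answer})=\mathbb{E}\big[\pi(Z)\delta(Z)+(1-\pi(Z))(1-\delta(Z))\big].
\]
The integrand is affine in $\delta\in[0,1]$, so it is at least $\min(\pi,1-\pi)$ pointwise. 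Equality holds for the rule with $\delta=0$ when $\pi>\tfrac12$ and $\delta=1$ otherwise; at $\pi=\tfrac12$ either choice attains the bound.

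For (iii), I sample from the class optimum $\Dc(A\mid Z)$. The decline strings are never valid answers, and on counterfactual inputs the target is always a decline string. So the probability of emitting a decline string at $z$ is exactly $\Pr(K\ne\mathrm{real}\mid Z=z)=1-\pi(z)$, i.e.\ $\delta=1-\pi$. Substituting gives $\tfrac14F=\mathbb{E}[\pi\cdot(1-\pi)]$ and $\tfrac34M=\mathbb{E}[(1-\pi)\cdot\pi]$. For the last claim, both $\min(\pi,1-\pi)$ and $\pi(1-\pi)$ are nonnegative and vanish exactly when $\pi\in\{0,1\}$. Their expectations are therefore zero iff $\pi(Z)\in\{0,1\}$ almost surely, i.e.\ iff validity is determined by $Z$ almost surely. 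I read ``both sides'' as the two sides of \cref{eq:bayes}. $H(B\mid Z)$ is a stronger quantity: it also requires $Z$ to separate blank inputs from unrelated ones.

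The step I expect to need the most care is (ii), specifically the identity that rewrites $\tfrac14F+\tfrac34M$ as an expectation of $\pi\delta+(1-\pi)(1-\delta)$. It relies on two things. First, the decision depends on the input only through $Z$; this is the conditional-independence assumption on decoding, which I would state explicitly. Second, ``decline'' means emitting one of the two decline strings, so that $M$ and $F$ are conditioned on exactly the event $\{K\ne\mathrm{real}\}$ and its complement. Once that is set up, (i) and (iii) are bookkeeping.
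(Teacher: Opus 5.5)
Your proof is correct and follows the paper's own argument. In (i) you use the KL decomposition with $Z$ in place of $(Q,V)$ and then data processing through $B=g(A)$; in (ii) you rewrite $\tfrac14F+\tfrac34M$ as the probability of a wrong validity decision made from $Z$; in (iii) you note that the class optimum puts mass $1-\pi(z)$ on the decline strings. The paper leaves two things implicit in ``a (possibly randomised) decision $d(Z)$'' that you spell out: the pointwise minimisation over $\delta(z)$, and the conditional independence of the decoding randomness from $K$ given $Z$.
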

\begin{proof}[Proof sketch]
(i) is the class-restricted form of \cref{prop:starvation} with $H(B\mid Q,V)=0$. (ii) The left-hand
side is the probability that the answer-or-decline decision is wrong, and any decision based on $Z$
errs with probability at least $e(\phi)$. (iii) The class optimum puts mass $1-\pi(z)$ on declining.
\end{proof}

The proposition turns the residual into a property of the representation: if $\phi$ does not separate
a valid image from its in-domain counterfactual, the model must answer on the wrong one or decline on
the right one, and no further
training of the same representation can push the weighted sum in \cref{eq:bayes} below $e(\phi)$.
The bound does not fix the ratio of the two residuals; (iii) gives it for one decoding rule, and
greedy decoding is neither rule exactly. $F$ and $M$ are rates of the string-level decision (is the
response a trained decline string), not judge verdicts: I-MR and IA-IMR count judged mirages, and a
refusal in other words is an answer under the rule; \cref{sec:mechanism} computes the rates from the
stored responses.

\noindent\textbf{Scope.} \Cref{prop:tide} needs disjoint supports: if a question is ambiguous enough
that a counterfactual image could be valid for it, the gap shrinks by $H(B\mid Q,V)$ and the optimum
on the overlap is a mixture of answering and declining. The realised training set is disjoint at the
level of image content and question strings (\cref{sec:protocol}), not of meaning. The guarantee
covers the trained kinds only, and says nothing about finite samples, optimisation or seeds.

\noindent\textbf{Hypotheses.} The results become predictions about where a trained model should and
should not match its optimum. \textbf{H1}: blank inputs are separable without reading the question,
so blank-image mirage should reach zero. \textbf{H2}: declining should not spread to valid inputs.
\textbf{H3}: an out-of-domain image needs only its domain recognised, so its rate should sit near the
blank level once the kind is trained; under the three-condition predecessor any reduction there is
generalisation. \textbf{H4}: the residual should concentrate on in-domain wrong images.
\textbf{H5}: target-preserving interventions should not bring blank-image mirage near zero.
\textbf{H6} (empirical, not implied by \cref{prop:rep}, which constrains only a weighted sum): if
both the in-domain residual and false abstention track how hard a domain's valid and wrong images are
to tell apart, they should rise and fall together across domains.

%% file: sec/5_protocol.tex
\section{Protocol}
\label{sec:protocol}

\noindent\textbf{Benchmark and roles.} Each item is a \emph{group}: one question shown in four roles,
\textbf{real} (the correct image or images), \textbf{blank} (an all-black image in every image slot;
supplied, not removed), \textbf{in-domain} (a different genuine image set from the same domain) and
\textbf{out-of-domain} (an image set from another domain). Ten domains contribute (chest radiographs,
pathology, microscopy, medical and general multiple-choice sets, documents, infographics, scene text
and video frames); options are rendered once and every image is placed at its own marker.

\noindent\textbf{Leakage-safe tiers.} The 4{,}746 eligible groups are joined into 3{,}141
\emph{content components} by shared real-image content or identical question string; whole
components are assigned to tiers before any counterfactual image is chosen, and each counterfactual is
drawn from the same tier and a different component, so no held-out image or question string reaches
training in any role. Protocol v5 (frozen
2026-10-09) has a train tier of 3{,}000 groups (250 double as the grounding probe) yielding 12{,}000
TIDE examples; an eval tier of 500 groups; a \emph{full held-out benchmark} of 1{,}000 groups
(4{,}000 rows), held out from training, of which 712 groups (585 by key), including the 388 v4.1
test anchors it embeds, appear by key, image content or component in some earlier evaluation set,
result file or document (a frozen pre-v5 list of 1{,}170 keys) and are classed as previously
evaluated; a \emph{fresh sensitivity holdout} of 288 groups over nine domains whose anchor and
counterfactual components appear in no such artifact; and the 390-group \emph{v4.1 comparison
subset}. The predecessor protocol v4.1, whose results are the measured record of
this paper, used a 390-group final test (39 per domain, 1{,}560 rows) packed from components outside a 736-key
inspection list and never generated, read or judged before the frozen runs; its rows have never been
read by content, and only their judged aggregates and verdict labels enter this paper. Its TIDE adapter
was trained on three conditions, the out-of-domain image held out of training (9{,}000 examples). An earlier, defective pipeline is documented in the supplementary
material; no number from it is reused.

\noindent\textbf{Cells.} The frozen vocabulary names four principal cells and one appendix cell.
\textbf{E0 Base} is the released checkpoint. \textbf{E1 Grounding} is the target-preserving
intervention of \cref{sec:consequence}: it trains only the vision projector (language model and
encoder frozen, no answer read) by a joint softmax over the cosine scores of one anchor against five
candidate image sets (the real image, its two counterfactuals and two further negatives; under v4.1
three in-batch real images stood in for the explicit negatives), temperature 0.2, batch 16, 300 fixed
steps, fp32 master weights.
\textbf{E2 Grounding then TIDE} freezes the E1 projector and trains a fresh TIDE adapter;
\textbf{E3 TIDE then Grounding} trains the TIDE adapter first and then the projector on exactly the E1
pool, candidate policy, schedule and budget. \textbf{TIDE only}, the appendix cell, is the adapter
alone (E3's first stage): LoRA~\cite{hu2022lora} of rank 8 on the language model's q/k/v/o
projections, learning rate $10^{-4}$, 800 steps with accumulation 4, per-example token-mean loss, on
the support-completion examples, vision pathway untouched. \textbf{E5}, a v4.1 diagnostic, is the
evidence-advantage contrast of \cref{sec:e5}. Every stage uses its fixed final step and nothing is
selected on a validation metric.

\noindent\textbf{Metrics.} Let $M_c$ and $N_c$ be the mirage count and row count of an invalid role
$c\in\{\mathrm{blank},\mathrm{id},\mathrm{od}\}$ and $N_\mathrm{real}$ the real rows. We report
\begin{equation}
\begin{aligned}
\text{I-MR} &= \frac{M_\mathrm{blank}+M_\mathrm{id}+M_\mathrm{od}}{N_\mathrm{blank}+N_\mathrm{id}+N_\mathrm{od}},\\
\text{TC-LIA MR} &= \frac{M_\mathrm{blank}+M_\mathrm{id}+M_\mathrm{od}}{N_\mathrm{real}+N_\mathrm{blank}+N_\mathrm{id}+N_\mathrm{od}},
\end{aligned}
\label{eq:metrics}
\end{equation}
the \emph{invalid-evidence mirage rate}, our primary metric, and the full-set rate that shares the
denominator of \citet{chowdhury2026tclia} and is the only number of ours that may be placed next to
theirs. Two judges decide which rows are mirages. \textbf{Judge U} is
the response-only uncertainty judge of \citet{asadi2026mirage}: their template, gpt-4o-mini at
temperature 0, the response text alone; it defines I-MR and TC-LIA MR. \textbf{Rule B} is an
image-aware judge: it receives the rendered question, a caption of every image the model actually
saw, and the role, and answers four yes-or-no probes in the polling style of POPE~\cite{li2023pope};
an invalid-evidence response is a mirage unless it acknowledges the image problem or, on an unrelated
image, is about what that image shows. Its rates on the same denominators are IA-IMR and IA-FMR, and
on real rows it scores the answered rate, false abstention, accuracy against every gold string, and
accuracy conditional on answering. The primary rule-B model is gpt-5.5, the secondary gpt-4o-mini;
captions come from Qwen3-VL-32B-Instruct~\cite{qwen3vl2025}; the judges' known disagreement is
analysed in \cref{sec:mechanism}. An unparseable judge reply is counted, never dropped and never a
mirage. Rule B with gpt-5.5 agreed with human labels on 38 earlier-era rows, in sample; a blinded
human validation of both judges on 300 to 500 rows is designed (supplementary material) and not run.

\noindent\textbf{Statistics and status.} Per-rate intervals are 95\% Wilson intervals from the stored
counts; contrasts between cells carry paired bootstrap intervals over question groups (2{,}000
resamples; the four rows of a group move together). No hypothesis test was pre-registered and none is
reported. Every number in this paper is for Qwen2.5-VL-7B-Instruct at seed 0 under protocol v4.1.
Protocol v5 runs (the four-condition TIDE on Qwen2.5-VL-7B, LLaVA-OneVision-7B
\cite{li2024llavaonevision} and InternVL3.5-8B~\cite{wang2025internvl35}, three seeds, the component
ablations of \cref{sec:ablation}) are in progress; the registry marks them IN\_PROGRESS or PLANNED and
this build prints none of their values.

%% file: sec/6_results.tex
\section{Main Results}
\label{sec:results}

\begin{table*}[t]
\caption{\textbf{The measured record: one model, the component-and-order grid plus E5, one frozen
held-out test.} Qwen2.5-VL-7B-Instruct, seed 0, protocol v4.1 final test (390 groups, 1{,}560 rows);
the TIDE stage is the three-condition predecessor. Left: judge U, I-MR over the 1{,}170
invalid-evidence rows, the rate per role (390 rows each) and TC-LIA MR (denominator 1{,}560). Right:
rule B (gpt-5.5), IA-IMR on the same rows, and accuracy, answered rate and false abstention on the
390 real rows. No judge reply was unparseable; E5 was judged with judge U only. Bold: the
support-completion cells.}
\label{tab:main}
\centering\scriptsize\setlength{\tabcolsep}{5pt}
\input{tables/arxiv/tab_main_v41}
\end{table*}

\begin{figure*}[t]
\centering
\includegraphics[width=\textwidth]{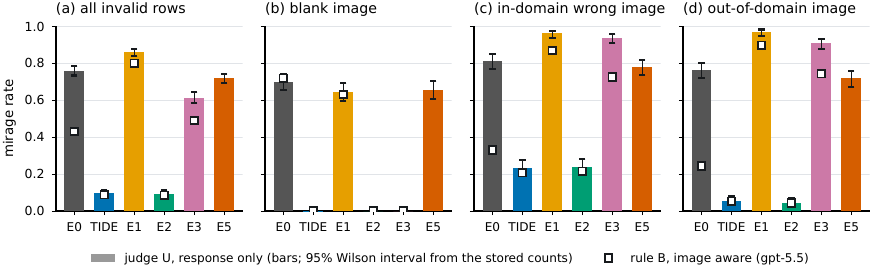}
\caption{\textbf{Mirage rate by cell and role (measured, v4.1 final test).} Bars: judge U with 95\%
Wilson intervals; hollow squares: rule B (gpt-5.5). Support completion (the TIDE-only cell and E2, Grounding then TIDE) empties the
blank role, nearly empties the untrained out-of-domain role and leaves the in-domain role as the
residual; grounding after TIDE (E3) keeps blank at zero and re-opens both unrelated-image roles.}
\label{fig:roles}
\end{figure*}

\Cref{tab:main} and \cref{fig:roles} hold every measured number; we read them against the hypotheses
of \cref{sec:realise}.

\noindent\textbf{Support completion removes most of the mirage (H1, H3, H4).} The stock model answers
non-uncertainly on $\rv{M-e0-U-imr}$ of the invalid-evidence rows (blank $\rv{M-e0-U-blank}$,
in-domain $\rv{M-e0-U-id_unrelated}$, out-of-domain $\rv{M-e0-U-od_unrelated}$) and, under the
image-aware judge, commits as if the shown image supported the question on $\rv{M-e0-B-iaimr}$ of
them. One support-completion fine-tune takes I-MR to $\rv{M-e1-U-imr}$ and IA-IMR to
$\rv{M-e1-B-iaimr}$ (paired difference $\rv{M-contrast-U-e0-e1}$ $\rv{M-contrast-U-e0-e1.ci}$). On blank images the mirage is gone, $\rv{M-e1-U-blank.n}$ of 390 rows under either judge,
as H1 predicts. On out-of-domain images,
which this predecessor never trained on, the rate falls from $\rv{M-e0-U-od_unrelated}$ to
$\rv{M-e1-U-od_unrelated}$, the generalisation H3 allows but does not guarantee; whether the trained
kind reaches the blank level is a v5 question. The residual sits where H4 places it: in-domain wrong
images at $\rv{M-e1-U-id_unrelated}$ under judge U and $\rv{M-e1-B-id_unrelated}$ under rule B.

\noindent\textbf{The cost on valid images (H2 not supported).} Declining does spread to valid
inputs, modestly: the answered rate falls from $\rv{M-e0-B-answered}$ to $\rv{M-e1-B-answered}$,
false abstention rises from $\rv{M-e0-B-fa}$ to $\rv{M-e1-B-fa}$, and accuracy over all 390 real rows
falls from $\rv{M-e0-B-acc}$ to $\rv{M-e1-B-acc}$ while accuracy conditional on answering rises from
$\rv{M-e0-B-acc-cond}$ to $\rv{M-e1-B-acc-cond}$. Whether the residual and the false abstention co-vary across domains is H6
(\cref{sec:mechanism}).

\noindent\textbf{Grounding alone (H5).} Grounding (E1) raises I-MR to $\rv{M-e2-U-imr}$ (in-domain
$\rv{M-e2-U-id_unrelated}$, out-of-domain $\rv{M-e2-U-od_unrelated}$) and IA-IMR to
$\rv{M-e2-B-iaimr}$, and lowers accuracy to $\rv{M-e2-B-acc}$, although its projector converged
(held-match $\rv{M-diag-e2-heldmatch}$ on the train-owned probe). Blank-image mirage falls slightly,
from $\rv{M-e0-U-blank}$ to $\rv{M-e2-U-blank}$, so a stricter earlier wording of H5 (no reduction at
all) fails while the stated prediction (not near zero) holds ($\rv{M-e2-U-blank}$ against
$\rv{M-e1-U-blank}$).

\noindent\textbf{Order.} Grounding then TIDE (E2) is indistinguishable from TIDE only (I-MR
$\rv{M-e3-U-imr}$ against $\rv{M-e1-U-imr}$, paired difference $\rv{M-contrast-U-e1-e3}$
$\rv{M-contrast-U-e1-e3.ci}$); we do not claim that grounding first helps. TIDE then Grounding (E3)
keeps the blank role at $\rv{M-e4-U-blank}$ and re-opens both unrelated-image roles (in-domain
$\rv{M-e4-U-id_unrelated}$, out-of-domain $\rv{M-e4-U-od_unrelated}$). The two orders differ in order alone (\cref{sec:order}).

%% file: tables/arxiv/tab_main_v41.tex
\begin{tabular}{@{}lccccc|cccc@{}}
\toprule
& \multicolumn{5}{c|}{judge U (response only)} & \multicolumn{4}{c}{rule B (gpt-5.5, image aware)} \\
cell & I-MR & blank & in-dom. & out-dom. & TC-LIA MR & IA-IMR & acc. & answered & false abst. \\
\midrule
E0 Base & 0.759 & 0.700 & 0.813 & 0.764 & 0.569 & 0.431 & 0.577 & 0.982 & 0.018 \\
TIDE only (three-condition) & \textbf{0.097} & \textbf{0.003} & \textbf{0.233} & \textbf{0.054} & \textbf{0.072} & \textbf{0.088} & \textbf{0.559} & \textbf{0.874} & \textbf{0.126} \\
E1 Grounding & 0.859 & 0.646 & 0.962 & 0.969 & 0.644 & 0.799 & 0.377 & 0.774 & 0.226 \\
E2 Grounding then TIDE (three-cond.) & \textbf{0.094} & \textbf{0.000} & \textbf{0.236} & \textbf{0.046} & \textbf{0.070} & \textbf{0.086} & \textbf{0.526} & \textbf{0.849} & \textbf{0.151} \\
E3 TIDE then Grounding (three-cond.) & 0.615 & 0.000 & 0.936 & 0.908 & 0.461 & 0.490 & 0.423 & 0.821 & 0.179 \\
E5 evidence-advantage contrast & 0.718 & 0.656 & 0.779 & 0.718 & 0.538 & -- & -- & -- & -- \\
\bottomrule
\end{tabular}

%% file: sec/7_ablation.tex
\section{Ablation and Generalisation}
\label{sec:ablation}

\begin{figure}[t]
\centering
\includegraphics[width=\linewidth]{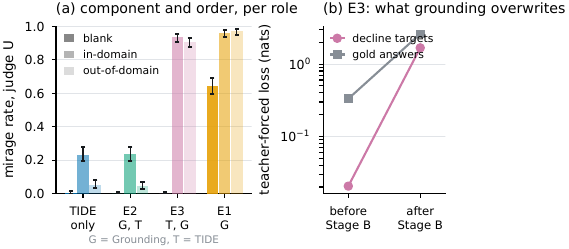}
\caption{\textbf{Training order (measured, v4.1).} (a) Per-role mirage rate for the four trained
cells. (b) Teacher-forced loss of the E3 model on ten train-owned probe keys before and after its
grounding stage: the decline targets are forgotten (a diagnostic, not a held-out result).}
\label{fig:order}
\end{figure}

\subsection{Training order}
\label{sec:order}
In the component-and-order grid (\cref{fig:order}a), grounding before TIDE leaves the result unchanged; grounding after TIDE undoes it on exactly the roles that need
image content and not on the blank role, which needs none. A train-owned retention diagnostic agrees
(\cref{fig:order}b): the fine-tuned model's loss on its own decline targets rises from
$\rv{M-diag-e4-abstain-before}$ to $\rv{M-diag-e4-abstain-after}$ nats during the grounding stage (and
on the gold answers from $\rv{M-diag-e4-real-before}$ to $\rv{M-diag-e4-real-after}$). The projector
update moves the inputs the adapter conditions on, and the adapter, frozen in Stage B, cannot follow.

\subsection{The evidence-advantage contrast}
\label{sec:e5}
The proposal from which TIDE grew also had a contrastive component: the \emph{evidence advantage}
$\mathcal{A}(v,q,a)=\log p(a\mid q,v)-\log p(a\mid q,\emptyset)$, contrasted across candidate images so
that the gold answer's advantage is largest under the correct image. E5 implements it literally: for
an anchor $(q_i,a_i)$ with the same five candidates $v_k$ as Grounding and its blank role
$\emptyset_i$, $S_{ik}=\log p(a_i\mid q_i,v_k)-\log p(a_i\mid q_i,\emptyset_i)$ and
$\mathcal{L}_i=-\log(e^{S_{i0}}/\sum_k e^{S_{ik}})$, projector only, 300 steps from stock. The
empty-image term is constant along a row and softmax is shift invariant, so neither the loss nor its
gradient depends on it: the autograd gradient through $\log p(a_i\mid q_i,\emptyset_i)$ never exceeds
$\rv{M-diag-e5-empty-grad}$. The objective trains only which candidate image best supports the gold
answer, and it did (mean advantage on 64 train-owned anchors from
$\rv{M-diag-e5-mean_A_positive-before}$ to $\rv{M-diag-e5-mean_A_positive-after}$ nats; top-1 from
$\rv{M-diag-e5-candidate_top1_accuracy-before}$ to $\rv{M-diag-e5-candidate_top1_accuracy-after}$).
Behaviour moved little: I-MR $\rv{M-e5-U-imr}$ against $\rv{M-e0-U-imr}$ for stock, far from the
support-completion cells, which is the decoupling \cref{sec:consequence} predicts.

\subsection{Prompt-induced stress: a dev pilot}
\label{sec:stress}

\begin{figure}[t]
\centering
\includegraphics[width=\linewidth]{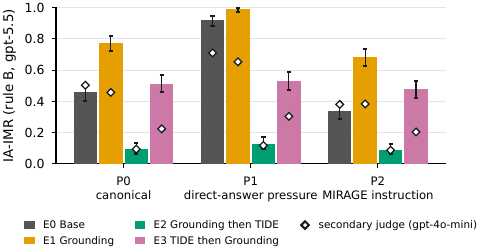}
\caption{\textbf{Stress pilot (measured, DEV, 100 groups): IA-IMR by prompt and cell.} Bars: primary
judge (gpt-5.5) with 95\% Wilson intervals; diamonds: secondary judge. A dev pilot, not a final-test
result. }
\label{fig:stress}
\end{figure}

\citet{asadi2026mirage} report that common evaluation instructions raise mirage rates. We ask
whether a one-line instruction changes what the trained cells do, with everything else fixed, under a
protocol frozen and hashed before any stress generation: \textbf{P0}, the canonical rendering;
\textbf{P1}, direct-answer pressure in our wording (``Directly answer the question based on the
image \ldots''); \textbf{P2}, the MIRAGE Figure S.1 instruction verbatim. The pilot covers the first
ten dev groups of each domain (100 groups, 400 rows per cell and prompt) for E0, E1, E2 and E3
(\cref{fig:stress}). Direct-answer pressure raises IA-IMR in every cell: stock from
$\rv{M-stress-e0-P0-gpt-5.5-iaimr}$ to $\rv{M-stress-e0-P1-gpt-5.5-iaimr}$, grounding then TIDE from
$\rv{M-stress-e3-P0-gpt-5.5-iaimr}$ to $\rv{M-stress-e3-P1-gpt-5.5-iaimr}$. The MIRAGE instruction
lowers it in every cell, the opposite direction to what \citet{asadi2026mirage} report for frontier
API models with the image removed; the settings differ enough (black image, user-turn prefix, 7B
open model) that this pilot neither reproduces nor refutes their finding. A stock model whose rate
moves from $\rv{M-stress-e0-P2-gpt-5.5-iaimr}$ to $\rv{M-stress-e0-P1-gpt-5.5-iaimr}$ with one
instruction has no stable disposition toward evidence; support completion has one under every
instruction tried. The TIDE-only cell was outside the pilot budget and comes first in the v5 stage.

\subsection{Planned studies}
\label{sec:plan}
The component ablations A0 to A7, leave-one-domain-out training, across-model evaluation, seeds,
thresholded selective-prediction baselines, HallusionBench~\cite{guan2024hallusionbench} under its
official grader, the representation probes and the human audits of judges and counterfactuals are
specified with budgets in the supplementary material; none has a result here.

%% file: sec/8_mechanism.tex
\section{Mechanistic Analysis}
\label{sec:mechanism}

\noindent\textbf{Where the decision lives.} Three completed observations are consistent with the
answer-or-decline decision living in the language model's mapping from representation to response
rather than in the visual representation; none of them localises it causally: the evidence-advantage contrast's empty-image term carries no gradient, so that
objective can only re-rank candidate images (\cref{sec:e5}); grounding after TIDE erases the decline
targets of unrelated images while leaving the blank decline, which needs no image content, intact
(\cref{sec:order}); and grounding alone, with a converged projector, makes the model commit on more
wrong images, not fewer. Representation changes still matter: they bound the residual (\cref{prop:rep}) and can destroy a
learned decision.

\noindent\textbf{The residual and false abstention across domains (H6).} \Cref{prop:rep} bounds a
weighted sum of the two decision errors from below by the representation's validity error; it does not
say whether, across domains, the two rise together or trade off. H6 is the empirical hypothesis that
they rise together because both track how separable a domain's valid and wrong images are. On the ten
domains of the v4.1 test the TIDE-only cell's in-domain mirage rate and its false-abstention rate have
a Spearman rank correlation of $\rv{M-e1-H6-spearman}$ (E2: $\rv{M-e3-H6-spearman}$), with pathology
and radiology highest on both and documents and scene text lowest (supplementary material); a post-hoc
analysis of ten points from 39 rows each, not a pre-registered test. The proposition's quantity is computed from the
stored responses with its own rule: decline iff the response is a trained decline string (exact, or
its opening clause; one row), answer otherwise; the test has 390 rows per role, so its prior is that of
$\Dc$. For TIDE only, $F=\rv{M-e1-dec-F}$ ($\rv{M-e1-dec-F.n}$ of 390 real rows),
$M=\rv{M-e1-dec-M}$ ($\rv{M-e1-dec-M.n}$ of 1{,}170 counterfactual rows) and
$\tfrac14F+\tfrac34M=\rv{M-e1-dec-sum}$, paired question-group bootstrap 95\% interval
$\rv{M-e1-dec-sum.ci}$ (exact matches only: $\rv{M-e1-dec-exact-sum}$; E2: $\rv{M-e3-dec-sum}$). In these two cells every response is a trained decline string or an answer,
with no hedge, other refusal or empty output, and the rule agrees row for row with the image-aware
judge's answered-or-abstained category (supplementary material). If
the response is a function of the question and the visual tokens, the $Z$ of \cref{prop:rep}, and the
test is a sample from the completed distribution, then by \cref{eq:bayes} $\rv{M-e1-dec-sum}$
estimates an upper bound on the Bayes error of deciding validity from the TIDE-only representation;
equality would need the class optimum, which we cannot verify. Cells that never or rarely emit a trained
decline string (stock, grounding, E3, E5) get a value above the $0.25$ of always declining, which
bounds nothing; the judge's substantive-answer probe, which sees the question, captions and role, is a
descriptive proxy only.

\noindent\textbf{Two judges, two events.} Judge U asks whether a response shows uncertainty about a
missing image; rule B asks whether it treats the shown input as the evidence. On stock-model rows that
acknowledge an unrelated image, judge U still calls 134 of 190 in-domain and 171 of 257 out-of-domain
rows mirages, so I-MR exceeds IA-IMR by about $0.33$ for the stock model and by at most $0.01$ for
the support-completion cells; both judges agree on every ranking.

\noindent\textbf{Relation to probing accounts.} The two mechanisms of \citet{benlevi2026mirageprobes},
internally spurious images and textual priors, both produce $B=\ans$ on an input outside $S(q)$, and
\cref{prop:starvation} says why the objective penalises neither. Their probes decode a response-level
mirage label from activations; \cref{prop:rep} concerns the detectability of input validity, a
different target, and neither implies the other without the matched probe experiment of the plan
(supplementary material); a decodable representation does not decline by itself, as the grounding
cell shows.

%% file: sec/9_limitations.tex
\section{Limitations and Conclusion}
\label{sec:limitations}

Every measured result is for one base model, Qwen2.5-VL-7B-Instruct, at one seed, under the
three-condition predecessor of the objective we define. A per-role rate on 390 rows carries a binomial standard error near 0.03, and no statistical test
was pre-registered. Rule B reads automatic captions and was validated in sample on 38 earlier-era rows,
judge U was not validated against humans, and the v4.1 test rows, never read by content, have been
judged and their aggregates used while writing this paper, so v5 classes them as previously evaluated
and adds a never-inspected fresh tier.
\Cref{prop:tide} concerns the population optimum and needs supports that are disjoint in meaning, not
only in content; \cref{prop:rep} bounds a weighted sum of string-level decision errors, not each residual and not a
judge-defined mirage rate; the token-mean corollary
is a bound per labelled token, not a rate.
TIDE covers the kinds of invalid input it trains on, the residual it leaves falls on the case the
introduction calls most dangerous, and a support-completed model is not thereby safe for any
deployment named there.

Models see mirages because the standard objective barely separates reading the image from ignoring it
and never rewards declining. Completing each question's support changes the optimum, provably, and
a finite model's answer-or-decline error is bounded by how well its representation separates valid
from invalid evidence. On one 7B model under a frozen protocol the predecessor of this objective took
blank-image mirage from $\rv{M-e0-U-blank}$ to $\rv{M-e1-U-blank}$ and the invalid-evidence rate from
$\rv{M-e0-U-imr}$ to $\rv{M-e1-U-imr}$, with the residual and its cost where the theory places them.
The four-condition objective, other base models and further seeds are the next measurements.

%% file: sec/X_suppl.tex
\clearpage
\setcounter{page}{1}
\maketitlesupplementary
\appendix

This supplementary material is self-contained. It records the protocol repair, the leakage-safe
tiers, every recipe, the two judges and their validation, the full tables behind every number in the
main paper, the stress pilot with both judge models, the training-time diagnostics, a qualitative
example from dev data, the canonical cell registry, the experiment registry with the planned
ablations, a methodological comparison with Mirage Probes, the full proofs and research notes behind
the theory, and the compute used. Every number comes from a protocol-hashed artifact of the v4.1 run
through the registry; no benchmark row is shown or described by content anywhere in this paper.

\section{Protocol repair}
\label{sec:supp-protocol}
An earlier pipeline, used in a prior draft of this work, carried six defects that a forensic audit
established and that invalidate every number produced with it (\cref{tab:supp-protocol}): options of
multiple-choice questions never reached the model, every image after the first was dropped, all but
the first gold string was discarded, counterfactual images were assigned before any split existed
(803 of 1{,}000 test groups had their real image inside training as a decline target), decoding was
capped at 64 tokens, and checkpoints were identified by file name. Protocol v4 replaced all of it with
one canonical record per example, one renderer, a documented target policy per domain, a
content-component split computed before counterfactual assignment, a recorded decoding cap and a
protocol record on every artifact; v4.1 added the 390-group final test packed from whole uninspected
components, fp32 master weights for projector training, fixed final steps and the cap of 3{,}072 after
a dev-only audit; v5 adds the fourth trained condition, the five-candidate grounding row, and the
three evaluation tiers of \cref{sec:supp-split}.

\begin{table*}[h]
\caption{The earlier pipeline against the frozen protocols. No number from the left column is reused.}
\label{tab:supp-protocol}
\centering\footnotesize
\input{tables/arxiv/supp_protocol}

\end{table*}

\section{Tiers, split and training set}
\label{sec:supp-split}
\Cref{fig:tiers} shows the construction and \cref{tab:supp-split} the counts and fingerprints. The
v5 training set was rebuilt from the frozen protocol by a read-only check that verifies, on the
realised 12{,}000 examples, that each group contributes exactly one example of each kind, that the two
decline strings are distinct and equal no gold string, that no counterfactual shares an image, by path
or by content, with a valid image of the same question string or with another kind, and that
$H(B)=\tfrac32\ln 2$ and $H(B\mid Q,V)=0$ exactly on the realised set. The check also records the
tokenizer facts behind \cref{cor:token}: the blank decline has 16 labelled tokens, the unrelated
decline 23, gold answers 3 to 23 (median 3), so $T_{\max}=23$ and $H(B)/T_{\max}=0.0452$ nats per
labelled token; the two decline strings share a nine-token prefix and diverge at the tenth token.

\begin{figure*}[h]
\centering
\includegraphics[width=\textwidth]{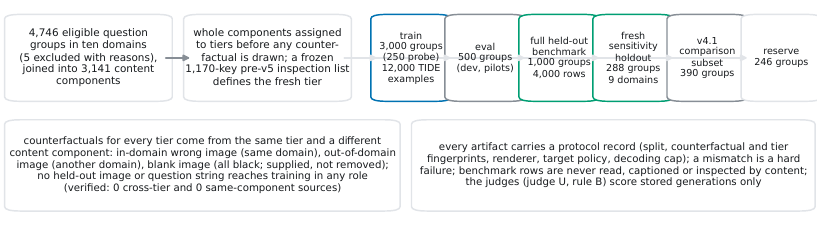}
\caption{\textbf{Leakage-safe construction of the protocol v5 tiers.} Whole content components are
assigned to tiers before any counterfactual is drawn; counterfactuals come from the same tier and a
different component.}
\label{fig:tiers}
\end{figure*}

\begin{table}[h]
\caption{Split, holdout and training-set facts for protocols v4.1 and v5.}
\label{tab:supp-split}
\centering\scriptsize
\input{tables/arxiv/supp_split}
\end{table}

\section{Text-sufficiency of the ten domains (five API models)}
\label{sec:supp-textsuff}
Before the protocol was built, five API models (GPT-5.5, Gemini-3.1-Flash-Lite, Claude-Sonnet-4.6, Llama-4-Scout
and Qwen2.5-VL-72B) answered the \rv{M-poc-bclean-removed-total.d} real-image questions of the benchmark twice through one neutral prompt
(options rendered for multiple choice; 128 output tokens, 512 for Gemini), once with every image attached and once
with the image removed. These are descriptive, exploratory observations from a pre-protocol pipeline: the image
was removed rather than replaced by a black image, the output cap truncated long answers, the judges are
response-only, and the models are instruction-tuned API systems whose refusal behaviour we do not control. They
say nothing about any trained cell, and the accuracy gain below is a model-specific 0-1 quantity with a chance
floor, not an estimator of the conditional mutual information of \cref{prop:starvation}. Question formats are taken
row by row: \rv{M-poc-fmt-n-mcq} multiple-choice questions (two to ten options), \rv{M-poc-fmt-n-yesno} yes/no
questions without options (most of them in the chest and pathology domains) and \rv{M-poc-fmt-n-open} other
open-ended questions.

\noindent\textbf{Records, categories and pairing.} There are \rv{M-poc-cov-blind} blind response records and
\rv{M-poc-cov-image} with-image records; \rv{M-poc-cov-image-missing} questions have no with-image response, every
one with logged request failures (\rv{M-poc-cov-image-missing-claude-sonnet-4.6-infovqa} Claude-Sonnet-4.6 InfoVQA
requests rejected by the provider, \rv{M-poc-cov-image-missing-gpt-5.5-pathvqa} GPT-5.5 PathVQA requests without a
completion), and the scored file holds \rv{M-poc-cov-image-csv} rows because it predates a resumed run. Each
blind response is one of three categories with every evaluated question as the denominator: a \emph{substantive
answer} (non-empty, and a gpt-4o-mini uncertainty judge found no acknowledgement of the missing image), an
\emph{explicit decline} (non-empty and acknowledged), or \emph{empty}. Empty output is never counted as abstention.
GPT-5.5 returned an empty blind response in a fraction $\rv{M-poc-empty-gpt55}$ of cases and an empty with-image
response in $\rv{M-poc-empty-image-gpt-5.5}$; Llama-4-Scout returned $\rv{M-poc-empty-image-llama-4-scout}$ empty
with-image responses; the other models fewer than 1\%. Correctness is exact match for multiple choice and a
gpt-4o-mini judge against every gold string for open-ended questions; the judge was not run on empty responses
(\rv{M-poc-cov-blind-unjudged} blind and \rv{M-poc-cov-image-unjudged} with-image rows), and no missing judgment
is counted as incorrect. The accuracy gain is computed on \emph{paired complete cases}, questions whose blind and
with-image responses are both present, non-empty and judged (\rv{M-poc-cov-paired} of \rv{M-poc-cov-blind}; only
\rv{M-poc-cov-paired-gpt-5.5} for GPT-5.5, so its gains rest on a small and non-random subset); the inherited rule that counts empty
responses as incorrect is kept as a sensitivity.

\noindent\textbf{Question-only correctness (\cref{tab:textsuff}, \cref{fig:textsuff}a).} The B-Clean rule of
\citet{asadi2026mirage} flags a question when at least one evaluated model answers it correctly without the image.
At least one of the five models produced a correct answer without receiving the image on
\rv{M-poc-bclean-removed-total.n} of \rv{M-poc-bclean-removed-total.d} questions (a fraction
$\rv{M-poc-bclean-removed-total}$; leaving out any one model, $\rv{M-poc-bclean-lomo-min}$ to
$\rv{M-poc-bclean-lomo-max}$). This is an operational count, not a measure of intrinsic text-sufficiency, because a
guess can satisfy it and the union over five models compounds the chance. By format, the rule flags
$\rv{M-poc-bclean-removed-fmt-mcq}$ of the multiple-choice questions, where independent uniform guessing by five
models would flag $\rv{M-poc-union5-fmt-mcq}$; $\rv{M-poc-bclean-removed-fmt-yesno}$ of the yes/no questions, against
$\rv{M-poc-union5-fmt-yesno}$ under guessing; and $\rv{M-poc-bclean-removed-fmt-open}$ of the other open-ended questions, which have no chance level.
The yes/no and multiple-choice counts therefore say little; the open-ended count and the model-mean blind accuracy
on paired cases, listed per domain, are the better-defined quantities. The accuracy gained from the image on paired
cases ranges from $\rv{M-poc-gain-paired-medxpertqa_mm}$ (MedXpertQA-MM) to $\rv{M-poc-gain-paired-docvqa}$
(DocVQA).

\noindent\textbf{Blind answering against accuracy gain (\cref{tab:textsuffassoc}, \cref{fig:textsuff}b).} The
primary analysis pairs the substantive blind-answer rate (all questions) with the paired complete-case gain for
each of the \rv{M-poc-corr-full50.d} (model, domain) pairs: Pearson $\rv{M-poc-corr-full50}$ ($\rv{M-poc-corr-full40}$ without GPT-5.5,
whose substantive rate is lowered by its empty responses). The pairs are five models crossed with ten domains, not
independent samples, so no $p$-value is reported; a cluster bootstrap that resamples domains gives a 95\% interval of
$[\rv{M-poc-boot-domains-lo}, \rv{M-poc-boot-domains-hi}]$ for the pooled $r$, and one that resamples models
$[\rv{M-poc-boot-models-lo}, \rv{M-poc-boot-models-hi}]$. The association is a contrast between question formats:
within cells of one format with at least 30 questions it is $\rv{M-poc-corr-fmt-mcq}$ for multiple choice
(\rv{M-poc-corr-fmt-mcq.d} cells), $\rv{M-poc-corr-fmt-yesno}$ for yes/no (\rv{M-poc-corr-fmt-yesno.d} cells, a small
stratum) and $\rv{M-poc-corr-fmt-open}$ for the other open-ended questions (\rv{M-poc-corr-fmt-open.d} cells), while over the ten domain means it is $\rv{M-poc-corr-domainmeans10}$. The
reading is that the document and scene-text domains, where the image carries most of the answer, are also the
domains where these models most often say they need the image, and the knowledge-heavy multiple-choice domains are
answered from the question; within a format the two quantities do not track each other. Within models the Spearman
correlation across domains ranges from $\rv{M-poc-spearman-full-claude-sonnet-4.6}$ (Claude) to
$\rv{M-poc-spearman-full-gpt-5.5}$ (GPT-5.5). The rate conditional on non-empty responses, kept as a supplementary
diagnostic, gives $\rv{M-poc-corr-full50-nonempty}$, and counting empty responses as incorrect in the gain gives
$\rv{M-poc-corr-full50-emptyincorrect}$.

\noindent\textbf{Sensitivity: the B-Clean subset.} An earlier draft computed the association on the
\rv{M-poc-bclean-clean-n} questions that no model answered correctly blind. That subset is selection-conditioned: blind accuracy is zero on
it by construction, so its accuracy gain equals its with-image accuracy, and the blind-answer rate is measured on
questions chosen by the models' own blind behaviour. On that subset the pooled correlation is
$\rv{M-poc-corr-sens50}$ ($\rv{M-poc-corr-sens40}$ without GPT-5.5). The earlier draft's figure further excluded
five domains and GPT-5.5 after the fact, giving $\rv{M-poc-corr-subset20}$ on \rv{M-poc-corr-subset20.d} pairs; that figure is not used.

\begin{table*}[h]
\caption{Text-sufficiency of the ten domains, five API models, image removed (2026-07 pipeline, all questions).
Row-level formats (MC multiple choice, yes/no without options, OE other open-ended); model-mean composition of the
blind responses with every evaluated question as the denominator; model-mean blind and with-image accuracy and
their difference on paired complete cases; the number and fraction of questions on which at least one model was
correct without the image, with the fraction the same rule would flag if the questions with a chance level were
guessed uniformly (guessing ref.) and, for the total, the leave-one-model-out range. Descriptive; not a protocol
result.}
\label{tab:textsuff}
\centering\scriptsize\setlength{\tabcolsep}{4pt}
\input{tables/arxiv/tab_textsuff}
\end{table*}

\begin{table}[h]
\caption{Exploratory associations between the substantive blind-answer rate and the paired accuracy gain across
(model, domain) pairs or (model, domain, format) cells. Observations are clustered by model and domain; no
hypothesis test is claimed.}
\label{tab:textsuffassoc}
\centering\scriptsize\setlength{\tabcolsep}{3pt}
\input{tables/arxiv/tab_textsuff_assoc}
\end{table}

\begin{figure*}[h]
\centering
\includegraphics[width=\textwidth]{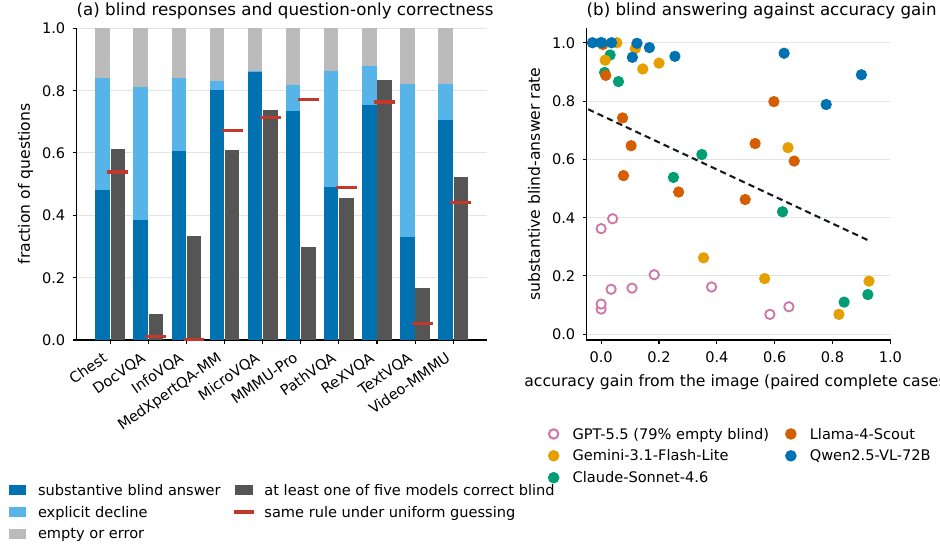}
\caption{\textbf{Text-sufficiency of the ten domains (five API models, image removed; descriptive).} (a) Per
domain, model mean: the blind responses split into substantive answers, explicit declines and empty output, next
to the fraction of questions on which at least one model was correct without the image and the fraction the same
rule would flag if the questions with a chance level were guessed uniformly. (b) Substantive blind-answer rate
against the accuracy gain from the image on paired complete cases, one point per (model, domain) pair; the dashed
line is the pooled fit over all \rv{M-poc-corr-full50.d} pairs (Pearson $\rv{M-poc-corr-full50}$); GPT-5.5 is hollow
because a fraction $\rv{M-poc-empty-gpt55}$ of its blind responses are empty.}
\label{fig:textsuff}
\end{figure*}

\section{Training and generation settings}
\label{sec:supp-hparams}
\Cref{fig:method} shows where each stage acts and how the cells are composed; \cref{tab:supp-hparams}
lists every recipe. All projector training uses fp32 master weights with a bf16 forward; a
CPU-versus-GPU comparison of the master updates found only bf16 kernel nondeterminism.

\begin{figure}[h]
\centering
\includegraphics[width=\linewidth]{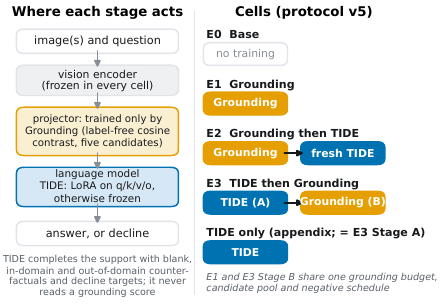}
\caption{\textbf{Where each stage acts, and how the cells are composed.} TIDE trains LoRA adapters on
the language model's attention projections on the completed support and never reads a grounding
score; Grounding trains only the vision projector with a label-free cosine contrast and never reads
an answer. E1 and E3 Stage B share one grounding budget, candidate pool and negative schedule, so E2
and E3 differ in order only.}
\label{fig:method}
\end{figure}

\begin{table*}[h]
\caption{Recipes of every cell, read from the stored protocol records.}
\label{tab:supp-hparams}
\centering\footnotesize
\input{tables/arxiv/supp_hparams}
\end{table*}

\section{Metric definitions}
\label{sec:supp-metrics}
Let $M_c$ be the number of rows of invalid role $c$ that a judge calls a mirage and $N_c$ the number of
rows of that role; $N_\mathrm{real}$ is the number of real rows. With judge U in the numerator,
I-MR and TC-LIA MR are as in \cref{eq:metrics}; with rule B in the numerator the same two quotients
are IA-IMR and IA-FMR. Real rows contribute no mirage and enter only the full-set denominators. On a
balanced test the full-set rates are $0.75$ times the invalid-row rates by composition, and the
count-based I-MR equals the mean of the three per-role rates; the count-based form is the definition.
On real rows, the answered rate counts answered and hedged verdicts, false abstention is its
complement, accuracy is the fraction of all real rows whose answer matches any gold string, and
accuracy conditional on answering divides by the answered rows. TC-LIA MR shares the denominator of
the detector mirage rate of \citet{chowdhury2026tclia}, the fraction of all evaluation inputs on which a
non-related input is wrongly allowed, but its numerator is a judgement of a generated response rather
than a detector decision; it is the metric to place beside theirs, and it does not reproduce their
experiment.
Per-rate uncertainty is a 95\% Wilson interval from the stored counts; contrasts carry a paired
bootstrap over question groups (2{,}000 resamples, groups drawn with replacement, all four rows of a
group moving together), which respects the dependence between the roles of one question.

\section{The two judges}
\label{sec:supp-judge}
\noindent\textbf{Judge U.} The uncertainty-judge template of \citet{asadi2026mirage} (their Phantom-0
judge), applied with gpt-4o-mini at temperature 0 to the response text alone, with no question,
caption or image; a row is a mirage when the judge finds no uncertainty about an absent or invalid
image. The template was written for a removed-image setting and asks whether the response says it
cannot see an image, that no image was provided, or that it needs an image; on unrelated-image rows it
often does not credit an explicit statement that the shown image is unrelated. On stock-model rows
that rule B's acknowledgement probe marks as such, judge U still calls 134 of 190 in-domain rows
(70.5\%), 171 of 257 out-of-domain rows (66.5\%) and 33 of 136 blank rows (24.3\%) mirages; rows that
merely describe the shown image without acknowledgement are mirages under judge U as intended (98 of
110 in-domain, 39 of 44 out-of-domain). Judge U was not validated against human labels.

\noindent\textbf{Rule B.} For each response the image-aware judge receives the rendered question with
visible options, a caption of every image the model actually saw, in order, the role's case text (a
blank row states that the image was blank), and the response, and answers four probes in one call:
P1, does the response give a substantive answer; P2, does it state that the image is blank, missing,
unrelated, or that it cannot answer from this image; P3, is it about what the image actually shows
(unrelated-image rows only); P4, does it describe visual content of the question's subject as if it
were present. Rule B: an invalid-evidence row is a mirage unless P2 holds or, on an unrelated image,
P3 holds; P4 is always a mirage. A reply that does not parse is recorded as unparseable, enters the
denominator and is never a mirage; the final test has none after four re-issued calls, and the stress
pilot has 3 of 4{,}800 primary verdicts. Real rows receive answered, hedged or abstained, never mirage;
accuracy comes from a correctness probe of the same run against every gold string. Captions come
from Qwen3-VL-32B-Instruct: 1{,}311 automatic and 69 verified captions cover the 1{,}380 image slots of
the final test, none missing. \Cref{tab:supp-judge} records the validation on 38 human-audited
earlier-era rows (21 in-domain, 9 out-of-domain, 8 blank; 5 mirages); the gpt-5.5 figures are in
sample because the rule was adjusted on these rows. This record is insufficient as evidence of
validity, which is why the study of \cref{sec:supp-humanjudge} is designed.

\begin{table}[h]
\caption{Rule-B judge models on the 38 audited rows (agreement and Cohen's $\kappa$ on non-real rows,
precision and recall of the mirage label). In sample for gpt-5.5.}
\label{tab:supp-judge}
\centering\footnotesize
\input{tables/arxiv/supp_judge_record}
\end{table}

\subsection{Human validation of the judges (design)}
\label{sec:supp-humanjudge}
\begin{figure}[h]
\centering
\includegraphics[width=\linewidth]{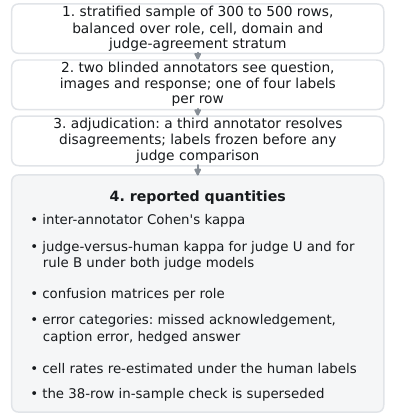}
\caption{\textbf{Design of the blinded human validation of both judges.} Not run; no number exists.}
\label{fig:judgedesign}
\end{figure}
\Cref{fig:judgedesign} shows the design. A stratified sample of 300 to 500 rows is drawn from the v5
eval tier, balanced over the three invalid roles, the five cells, the ten domains and the four
judge-agreement strata (U and B agree on mirage, agree on not mirage, U-only mirage, B-only mirage),
so that disagreements are over-represented. Two annotators, blind to cell and judge verdicts, see the
rendered question, every image the model saw and the response, and assign one of four labels (mirage;
acknowledges the image problem; about the shown image; unparseable or empty); a third adjudicates
disagreements and the labels are frozen before any judge output is consulted. Reported quantities:
inter-annotator Cohen's $\kappa$; judge-versus-human $\kappa$ for judge U and for rule B with both
judge models; confusion matrices per role; error categories (missed acknowledgement, caption error,
hedged answer, option-letter-only responses); and a re-estimate of each cell's rate under the human
labels for the audited strata. The sample size gives a standard error of about 0.04 on a $\kappa$
near 0.7.

\section{Full results}
\label{sec:supp-full}
\Cref{tab:supp-u} gives the judge-U counts and both rates for every cell; \cref{tab:supp-ruleb} every
rule-B quantity under both judge models; \cref{tab:supp-probes} the probe yes-rates per role;
\cref{tab:supp-decision} the string-level decision statistic of \cref{prop:rep} beside the judge
proxy; \cref{tab:contrasts} the paired group-level contrasts; \cref{tab:domains} and
\cref{fig:domains} the domain breakdown behind H6.

\begin{table*}[h]
\caption{The decision statistic of \cref{prop:rep}, v4.1 final test. \emph{decline}: the response equals
one of the two trained v4.1 decline strings or begins with their shared clause ``I can't answer this
question'' (\emph{near}); \emph{answer}: anything else, including refusals in other words (\emph{refusal
wording}: a fixed list of refusal phrases, counted for description only and entering no rate), hedges and
empty outputs. Hedged replies (rule B's P1 and P2 both yes) are answers under the rule; none occurs in the
support-completion cells. $F$ is the decline count on the 390 real rows over 390, $M$ the answer count
on the 1{,}170 counterfactual rows over 1{,}170; the interval is a paired question-group bootstrap
(2{,}000 resamples of the 390 groups, the four roles of a group kept together, percentile); \emph{exact
only} counts the near matches as answers. Counts come from exact string operations on the stored
generation files; no row was read by a person. \emph{Judge proxy}: $\tfrac14(1-\text{P1}_\text{real})
+\tfrac34\,\overline{\text{P1}}_\text{cf.}$ from rule B (gpt-5.5), whose judge sees the question, the
captions and the role; descriptive only. \emph{Agree}: real rows on which the string decline and the
judge's ``abstained'' coincide, and counterfactual rows declined under the string rule that rule B also
calls not a mirage.}
\label{tab:supp-decision}
\centering\scriptsize\setlength{\tabcolsep}{1.9pt}
\input{tables/arxiv/supp_decision}
\end{table*}

\begin{table}[h]
\caption{Judge U, v4.1 final test: mirage counts per invalid role (390 rows each), their sum over the
1{,}170 invalid-evidence rows, I-MR and TC-LIA MR (denominator 1{,}560).}
\label{tab:supp-u}
\centering\scriptsize\setlength{\tabcolsep}{2.5pt}
\input{tables/arxiv/supp_u}
\end{table}

\begin{table*}[h]
\caption{Rule B, v4.1 final test, both judge models: mirage rate per invalid role, IA-IMR (1{,}170
rows), IA-FMR (1{,}560 rows), real-image accuracy, answered rate and false abstention (390 rows), and
unparseable replies.}
\label{tab:supp-ruleb}
\centering\scriptsize\setlength{\tabcolsep}{4pt}
\input{tables/arxiv/supp_ruleb}
\end{table*}

\begin{table*}[h]
\caption{Probe yes-rates per role, rule B, gpt-5.5. P3 is not asked on blank rows.}
\label{tab:supp-probes}
\centering\scriptsize\setlength{\tabcolsep}{3pt}
\input{tables/arxiv/supp_probes}
\end{table*}

\begin{table*}[h]
\caption{Paired group-level bootstrap contrasts (2{,}000 resamples over the 390 question groups) for
the differences the main text reports.}
\label{tab:contrasts}
\centering\scriptsize\setlength{\tabcolsep}{3pt}
\input{tables/arxiv/tab_contrasts}
\end{table*}

\begin{table*}[h]
\caption{Domain breakdown, v4.1 final test (117 invalid rows and 39 real rows per domain): pooled
mirage rates under both judges, the in-domain rate and the false-abstention rate of the two
support-completion cells. Spearman rank correlation across domains between in-domain mirage and false
abstention: TIDE only $\rv{M-e1-H6-spearman}$, E2 $\rv{M-e3-H6-spearman}$.}
\label{tab:domains}
\centering\scriptsize\setlength{\tabcolsep}{3pt}
\input{tables/arxiv/tab_domains}
\end{table*}

\begin{figure}[h]
\centering
\includegraphics[width=\linewidth]{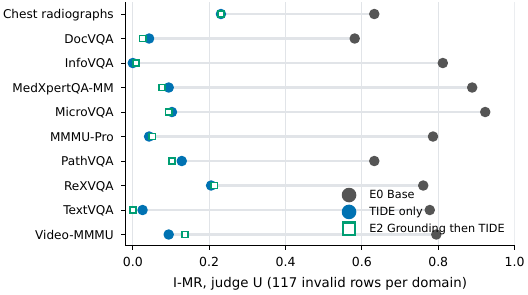}
\caption{\textbf{I-MR per domain (measured, v4.1).} Stock model, TIDE only and Grounding then TIDE.}
\label{fig:domains}
\end{figure}

\section{Stress pilot, both judge models}
\label{sec:supp-stress}
The three conditions, verbatim and complete: P0 is the canonical rendering with no added text; P1
prepends \emph{``Directly answer the question based on the image. If options are provided, respond
with the option letter only. Do not provide an explanation.''} to the user turn followed by one blank
line; P2 prepends \emph{``You are a helpful vision-language assistant. Base your statements on the
visual evidence in the data.''} in the same way (MIRAGE supplementary material, Figure S.1). No other
wording and no other placement is part of the protocol; the protocol file's frozen block is hashed
(sha256 \texttt{955d501c1690d1d3\allowbreak cd4c9b15a54b8a4d\allowbreak b97ed6aeaf8ce2ac\allowbreak
8323973c8865459f}). The pilot subset is the first ten dev groups of each domain in the frozen dev
order (100 groups, fingerprint \texttt{5f928385bde4}), disjoint from every test tier.
\Cref{tab:supp-stress} gives every pre-registered quantity under both judge models. A stock rate above
0.70 under some prompt was recorded in the protocol as a non-criterion: reported, not selected for.
The v5 stress stage must include the TIDE-only cell and the eval tier before any benchmark stage.

\begin{table*}[h]
\caption{Mirage Stress Test V1, DEV pilot, 100 groups: every pre-registered quantity, both judge
models. Preliminary; not a benchmark result.}
\label{tab:supp-stress}
\centering\scriptsize\setlength{\tabcolsep}{3pt}
\input{tables/arxiv/supp_stress_full}
\end{table*}

\section{Training-time diagnostics}
\label{sec:supp-diag}
\Cref{tab:supp-diag} lists the train-owned diagnostics; they measure retention or convergence on
training-owned keys and are not held-out results. \Cref{fig:e5} shows the E5 training loss and the
before-and-after probe rates.

\begin{table}[h]
\caption{Training-time diagnostics (train-owned keys; not held-out results).}
\label{tab:supp-diag}
\centering\footnotesize
\input{tables/arxiv/supp_diag}
\end{table}

\begin{figure}[h]
\centering
\includegraphics[width=\linewidth]{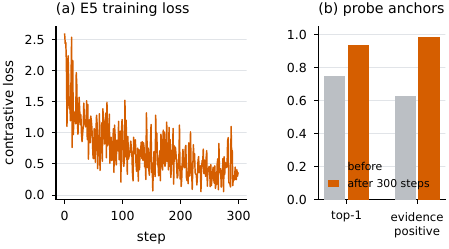}
\caption{E5 on train-owned probe anchors: per-step contrastive loss (left) and the candidate top-1
and evidence-positive rates before and after 300 steps (right).}
\label{fig:e5}
\end{figure}

\section{A qualitative example from dev data}
\label{sec:supp-qual}
\Cref{tab:supp-qual} shows one dev group from the stress pilot's canonical-prompt cells, selected by a
fixed rule: the first group, in key order, on which the stock model is a rule-B mirage on both the
blank and the in-domain image while the grounding-then-TIDE model is not on either and answers the
real image. The question asks about hyperintensities surrounding the aorta in a CT image. On the blank
image the stock model describes the image it was not given; on the in-domain wrong image it describes
that image as if it were the one asked about; on the out-of-domain image (a whiteboard) it
acknowledges the mismatch. The trained model answers the real image and emits its two decline strings
on the counterfactuals. Outputs are verbatim apart from truncation, dashes normalised to a hyphen and
the removal of all other non-ASCII characters. No benchmark row is shown.

\begin{table*}[h]
\caption{Outputs on one dev group (stress pilot, P0), with the rule-B verdict (gpt-5.5).}
\label{tab:supp-qual}
\centering\footnotesize
\input{tables/arxiv/supp_qual}

\end{table*}

\section{Canonical cell registry}
\label{sec:supp-cells}
\Cref{tab:cells} is the registry of every cell, with the internal tag that names its artifacts and
the historical aliases that earlier drafts used and this paper never does. The historical ``E1'' of
the v4.1 draft was the three-condition TIDE-only cell; the current E1 is Grounding.

\begin{table*}[h]
\caption{Canonical cell registry (from \texttt{registry/cells.json}).}
\label{tab:cells}
\centering\scriptsize\setlength{\tabcolsep}{3pt}
\input{tables/arxiv/tab_cells}
\end{table*}

\section{Experiment registry and ablation plan}
\label{sec:supp-plan}
\Cref{tab:experiments} lists every experiment with its priority and status on 2026-10-10;
\cref{tab:abldesign} the component ablations. Priorities: P0, essential to the paper's validity
(core ablations, competitive abstention baselines, across-model evaluation, three seeds, judge
validation, valid-image utility, counterfactual validity audit); P1, mechanism and generalisation
(counterfactual hardness, negative composition, decline wording, leave-one-domain-out, visual-swap
sensitivity, evidence margin, representation separability, training-order forgetting, prompt stress
including the TIDE-only cell, actual image removal versus the black image); P2, sensitivity (LoRA
rank, learning rate, steps, training-set size, modules, token-loss weighting, image corruption,
paraphrases, counterfactual ratios, joint variants). The staged plan runs P0 on Qwen2.5-VL-7B first
(eight ablation cells at 800 steps each, three seeds of the four trainable principal cells, the
two baselines and the human audit), then the two further base models, then P1 on the single model, and
P2 only on the TIDE-only cell. Every planned cell is evaluated on the full held-out benchmark and the
fresh holdout with the same two judges and reported with the metrics of \cref{sec:supp-metrics};
comparisons between cells use the paired group bootstrap; selective-prediction baselines are reported
as risk-coverage curves over the stock model's confidence.

\begin{table*}[h]
\caption{Experiment registry (from \texttt{registry/experiments.json}).}
\label{tab:experiments}
\centering\scriptsize\setlength{\tabcolsep}{3pt}
\input{tables/arxiv/tab_experiments}
\end{table*}

\begin{table*}[h]
\caption{Component ablations A0 to A7: composition and hypothesis. A4 is the TIDE-only cell; A0 is E0.
Matched optimizer steps, LoRA budget and decline strings; separate from the training-order grid.}
\label{tab:abldesign}
\centering\scriptsize\setlength{\tabcolsep}{3pt}
\input{tables/arxiv/tab_abl_design}
\end{table*}

\section{Comparison with Mirage Probes}
\label{sec:supp-probes}
\Cref{tab:probescmp} compares this paper with \citet{benlevi2026mirageprobes} row by row; every entry
in the right column was checked against their arXiv v1 text.

\begin{table*}[h]
\caption{Methodological comparison with Mirage Probes (arXiv:2606.13870v1).}
\label{tab:probescmp}
\centering\scriptsize
\begin{tabular}{@{}p{0.16\textwidth}p{0.4\textwidth}p{0.4\textwidth}@{}}
\toprule
aspect & this paper & Mirage Probes \\
\midrule
goal & remove mirage by changing the training objective; prove what the optimum does & detect mirage responses from internal activations; characterise mechanisms \\
mirage label & a judge's verdict on the generated response given the role: response-only (judge U) and image-aware (rule B); invalid inputs are constructed (blank, in-domain wrong, out-of-domain) & consistency between the with-image and the without-image generation (bag-of-words cosine above 0.7), uncertainty detected by regular expressions; no wrong-image condition \\
models & Qwen2.5-VL-7B measured; LLaVA-OneVision-7B and InternVL3.5-8B in progress & Ovis2.5-2B and Qwen3-VL-32B; GLM-4.6-flash dropped for too few non-mirage responses \\
data & ten domains, frozen leakage-safe tiers, 390 groups measured (1{,}560 rows) & VQA-RAD, MMMU-Pro, MedXpertQA; MicroVQA dropped for too few non-mirage responses \\
intervention & TIDE (support completion) and the grounding and contrast baselines; held-out behavioural evaluation & none evaluated; discussion of text-distribution cleaning and representational interventions \\
mechanistic claim & the decision lives in the mapping from representation to response (order grid, gradient cancellation); the residual is bounded by validity separability (Prop.~3) & two mechanisms, spurious internal images and textual biases, stated as a hypothesis \\
reported accuracies & mirage and utility rates with exact denominators and group-level intervals & probe accuracy: linear 57 to 75\%, MLP 60 to 76\%, difference-of-activation 71 to 98\%, naive Bayes 45 to 82\% \\
stated limitations & one model, one seed, three-condition predecessor, judges not human-validated & heuristic labels, correlational findings, with-image mirages not causally validated \\
relation & complementary; different probe targets (their response-level mirage label against the input-validity detectability of Prop.~3), related only by an experiment not yet run & \\
\bottomrule
\end{tabular}
\end{table*}

\section{Theory: full proofs and research notes}
\label{sec:supp-theory}

\noindent\textbf{Proof of \cref{prop:tide}.} (i) By \cref{prop:starvation} applied to $\Dc$,
$\Delta(\Dc)=I_{\Dc}(A;V\mid Q)$. The class $B=g(A)$ is a function of the response because the two
decline strings are distinct from each other and from every valid answer. By the chain rule,
$I(A;V\mid Q)=I(A,B;V\mid Q)=I(B;V\mid Q)+I(A;V\mid Q,B)$. The kind $K$ is a function of $(Q,V)$
because the four supports are disjoint, and $B$ is a function of $K$, so $H(B\mid Q,V)=0$ and
$I(B;V\mid Q)=H(B\mid Q)-H(B\mid Q,V)=H(B\mid Q)=H(B)$, the last step because $K$, hence $B$, is
independent of $Q$. For the second term, $I(A;V\mid Q,B)=\sum_b\Pr(B=b)\,I(A;V\mid Q,B=b)$; given a
decline class the response is a constant, so those terms vanish, and given $B=\ans$ the triple
$(Q,V,A)$ has law $D$, contributing $\tfrac14 I_D(A;V\mid Q)$. With masses $(\tfrac14,\tfrac14,\tfrac12)$,
$H(B)=\tfrac12\ln 4+\tfrac12\ln 2=\tfrac32\ln 2$. (ii) The unrestricted minimiser of a log-loss is the
conditional $\Dc(A\mid Q,V)$, unique on the support of $\Dc$. On an input $(q,v)$ with $v$ in the
support of $C_k(\cdot\mid q)$ only the branch $K=k$ has mass, and its response is $\dec{k}$; on an
input with $v\in S(q)$ only the real branch has mass, so the conditional is $D(\cdot\mid q,v)$.
\qed

\noindent\textbf{General weights and targets.} If the kinds are drawn with weights $w$ instead of
uniformly, and the decline strings induce classes with masses $\pi$, the same proof gives
$\Delta=H(\pi)+w_{\mathrm{real}}I_D(A;V\mid Q)$ and the same optimum decisions. Three distinct decline
strings give $H(\pi)=\ln 4$; one shared string gives $h(\tfrac14)=0.562$ nats; the realised choice
(shared string for the two unrelated kinds) gives $\tfrac32\ln 2$. None of these constants is a
per-token quantity, and none should be compared with $\ln 3$, which was the constant of the
three-condition predecessor with two distinct strings.

\noindent\textbf{Proof of \cref{cor:token}.} The per-example token-mean loss equals the sequence
log-loss divided by the number $T(a)$ of labelled tokens, so the objective is
$R_w(p)=\mathbb{E}_{\Dc}[w(A)(-\log p(A\mid Q,V))]$ with $w=1/T$. Let $W(q,v)=\mathbb{E}[w(A)\mid q,v]$,
$\bar w=\mathbb{E}[w(A)]$ and $\tilde D(q,v,a)=\Dc(q,v,a)w(a)/\bar w$. Then
$R_w(p)=\bar w\,\mathbb{E}_{\tilde D}[-\log p(A\mid Q,V)]=\bar w R_{\tilde D}(p)$, so by
\cref{prop:starvation} the image-blind gap is $\bar w I_{\tilde D}(A;V\mid Q)$, and the unrestricted
minimiser is $\tilde D(a\mid q,v)\propto D_4(a\mid q,v)/T(a)$: a point mass on every counterfactual
input and the length-tilted conditional on valid inputs. For the lower bound, write
$R_w(p)=R_w(p_w)+\sum_{q,v}\Dc(q,v)W(q,v)\,\mathrm{KL}(\tilde D(\cdot\mid q,v)\|p(\cdot\mid q,v))$ for the
tilted optimum $p_w$. For an image-blind $r$, $W(q,v)\ge 1/T_{\max}$ gives
$R_w(r)-R_w(p_w)\ge T_{\max}^{-1}\mathbb{E}_q\sum_v\Dc(v\mid q)\mathrm{KL}(\tilde D_{q,v}\|r_q)$, and by the
compensation identity $\sum_v\lambda_v\mathrm{KL}(P_v\|r)=\sum_v\lambda_v\mathrm{KL}(P_v\|\bar P)+\mathrm{KL}(\bar P\|r)$
the inner sum is at least the mutual information between $V$ and $A$ under the joint
$\Dc(v\mid q)\tilde D(a\mid q,v)$. Under that joint the kind is still a function of $v$ and
$B$ a function of $a$, so by data processing the mutual information is at least $H(B\mid q)=H(B)$
(the tilt acts on $A$ given $(q,v)$ and leaves the kind weights uniform). Hence the gap is at least
$H(B)/T_{\max}$. \qed

\noindent\textbf{Proof of \cref{prop:rep}.} (i) For $p=r\circ\phi$,
$R(p)=H(A\mid Z)+\mathbb{E}_Z\mathrm{KL}(\Pr(A\mid Z)\|r(\cdot\mid Z))$, so the class minimum is
$H(A\mid Z)$ and the excess over $H(A\mid Q,V)$ is $I(A;Q,V\mid Z)$; since $B=g(A)$,
$I(A;Q,V\mid Z)\ge I(B;Q,V\mid Z)=H(B\mid Z)-H(B\mid Q,V,Z)=H(B\mid Z)$. (ii) Any predictor in
$\mathcal P_\phi$ with any decoding rule induces a (possibly randomised) decision $d(Z)\in\{\ans,
\mathsf{decline}\}$; $\tfrac14F+\tfrac34M=\Pr(d=\mathsf{decline},K=\mathrm{real})+\Pr(d=\ans,K\ne\mathrm{real})$
is the probability of a wrong decision, which for every rule based on $Z$ is at least
$\mathbb{E}[\min(\pi(Z),1-\pi(Z))]$, with equality for the rule that answers iff $\pi(Z)>\tfrac12$.
(iii) The class optimum $r^\star(a\mid z)=\Pr(A=a\mid Z=z)$ puts mass $1-\pi(z)$ on the decline
strings, so sampling from it gives $\Pr(\mathsf{decline},K=\mathrm{real})=\mathbb{E}[\pi(Z)(1-\pi(Z))]$
and $\Pr(\ans,K\ne\mathrm{real})=\mathbb{E}[(1-\pi(Z))\pi(Z)]$. Both sides vanish iff
$\pi(Z)\in\{0,1\}$ almost surely. \qed

\noindent\textbf{Research notes: derivations that were tried and not used.} (a) A Fano-type bound,
$H(B\mid Z)\le h(e)+e\ln 2$, relates (i) and (ii) but is loose at the error levels observed and is not
stated. (b) Under greedy decoding over tokens the decision is neither the class-MAP rule nor
posterior sampling: the two decline strings share a nine-token prefix, so their mass is pooled for
the first nine tokens while the mass of competing gold answers is split across strings; a model can
therefore decline under greedy decoding even when $\pi(z)>\tfrac12$. No clean identity for the ratio
$F/M$ under greedy decoding was found, and none is claimed; (iii) is stated for posterior sampling
only. (c) The identity in (iii) holds for posterior sampling; the cells are decoded greedily, so it is not
expected to hold for them and is not tested. The decision rates of \cref{sec:mechanism} use the
proposition's own rule on the stored responses (decline iff the response is a trained decline string,
exact or by its opening clause); a refusal in other words, a hedge and an empty output are answers under
that rule by definition, not by judgement. Applying (ii) to them needs three things: that the response
is a function of $Z$ (true for $Z$ = the question with the visual tokens, from which the language
model decodes), that the test is a sample from $\Dc$ with the test's counterfactual kernels (its roles
are balanced, so the prior matches), and that sampling error is carried (the paired question-group bootstrap interval of
\cref{tab:supp-decision}, which keeps the four roles of a group together; the binomial standard error
of about $0.008$ that treats rows as independent is not used). The result is an estimate of an upper bound on
$e(\phi)$ and nothing further. The image-aware judge's substantive-answer probe is a different quantity:
the judge sees the question, the captions and the role, so its decision does not factor through $Z$ and
inherits no bound; it coincides with the string rule row for row on the two support-completion cells
(\cref{tab:supp-decision}) and not on the others, and it is reported as a descriptive proxy only. (d) Replacing disjoint supports by overlapping ones (an
ambiguous question answerable from many images) turns $H(B\mid Q,V)$ positive and the bound in (i)
into $H(B)-H(B\mid Q,V)$; we did not attempt to estimate semantic overlap and treat it as a
limitation. (e) A per-token constant for the token-mean objective (``1.04 nats per token'') is
false; only the bound per labelled token of \cref{cor:token} holds. (f) A generalisation statement
for untrained kinds of invalid input was attempted through a Lipschitz assumption on $\phi$ and
abandoned: it requires assumptions about the representation that nothing in the protocol verifies.

\section{Compute}
\label{sec:supp-compute}
Training and generation ran on one H100 per job, judging on a local machine. Recorded wall-clock
times for the v4.1 study: the grounding projector 2 h 01 min; the stock cell's 1{,}560 generations
4{,}098 s; the grounding and grounding-then-TIDE cells together 4{,}471 s; the TIDE-only and
TIDE-then-grounding cells together, including both training stages, 11{,}799 s; E5 training 2 h 39 min
and its generation 76 min; the stress pilot 34 min on three GPUs and its judging 37 min at
concurrency 8.

%% file: tables/arxiv/supp_protocol.tex
\begin{tabular}{@{}p{0.15\textwidth}p{0.4\textwidth}p{0.4\textwidth}@{}}
\toprule
aspect & earlier pipeline (superseded; no number reused) & protocols v4.1 and v5 (this paper) \\
\midrule
MCQ options & never reached the model (1{,}131 train / 399 test groups) & rendered once, visibly; the MCQ target is the option letter \\
multi-image records & cut to the first image (477 train / 121 test groups) & every image at its own marker; all roles count-matched \\
gold answers & collapsed to the first string (9 of 10 TextVQA annotators discarded) & per-domain target policy; scoring keeps every gold string \\
counterfactual images & assigned before the split: 803 of 1{,}000 test groups had their real image in training & content-component split first, counterfactuals after, from a different component \\
decoding & hard cap of 64 tokens (stock and grounding cells truncated on 71 to 95\% of rows) & greedy, cap 3{,}072, frozen after a dev-only audit \\
artifact identity & a file name & a protocol record; a mismatch is a hard failure \\
held-out test & 1{,}000 groups overlapping inspected material & 390 groups packed from whole uninspected components; never inspected by content \\
\bottomrule
\end{tabular}

%% file: tables/arxiv/supp_split.tex
\begin{tabular}{@{}p{0.5\linewidth}p{0.47\linewidth}@{}}
\toprule
fact & value \\
\midrule
protocol versions & v4.1 (measured study); v5.0 (frozen 2026-10-09; runs in progress) \\
eligible groups (five recorded exclusions) & 4{,}746 \\
content components (shared real-image content or identical question string) & 3{,}141 \\
v4.1: train / dev / final test groups & 3{,}000 / 484 / 390 (39 per domain; 1{,}560 rows) \\
v4.1: TIDE training examples (three-condition) & 9{,}000 (fingerprint \texttt{a2a16a8fe2eb}) \\
v4.1: split / counterfactual / final-test fingerprints & \texttt{be1e7673f79a} / \texttt{v4.0:1f894eacffbd} / \texttt{a84e616b5c82} \\
v5: train groups (grounding probe inside) & 3{,}000 (250); fingerprint \texttt{9fc0b02aad22} \\
v5: TIDE training examples (four conditions) & 12{,}000; $H(B)=1.0397$ nats, $H(B\mid Q,V)=0$ (verified on the realised set) \\
v5: eval tier & 500 groups; fingerprint \texttt{0eb43b470f71} \\
v5: full held-out benchmark & 1{,}000 groups, 4{,}000 rows; fingerprint \texttt{94f54eb34ca5}; held out from training; 712 groups (585 by key) appear in an earlier evaluation set, result file or document (pre-v5 inspection rule, manifest \texttt{n\_test\_inspected\_pre\_v5}), of which 388 are v4.1 test anchors and 324 were evaluated under the superseded v3 pipeline \\
v5: fresh sensitivity holdout & 288 groups, 1{,}152 rows, nine domains (MicroVQA absent); fingerprint \texttt{f312268efaf0}; anchor and counterfactual components in no pre-v5 artifact \\
v5: v4.1 comparison subset & 390 groups (the v4.1 final test), 1{,}560 rows; descriptive only \\
v5: reserve / excluded & 246 / 5 \\
v5: split / counterfactual fingerprints & \texttt{b6cfc0d501d9} / \texttt{v4.0:46db3c6d2040} \\
v5: frozen pre-v5 inspection list (defines freshness; not an exclusion from the benchmark) & 1{,}170 keys in 979 components (\texttt{9ec78ec660c6}); rule: a group is listed if its key, real-image content or component appears in any evaluation set, result file, judge output, debug trace, audit, document or paper source before 2026-10-09 \\
v4.1: inspection list used to pack the final test & 736 keys in 639 components (\texttt{992e3a75f73c}); overlap with the final test 0 \\
\bottomrule
\end{tabular}

%% file: tables/arxiv/tab_textsuff.tex
\begin{tabular}{@{}lrl|ccc|ccc|rr@{}}
\toprule
& & & \multicolumn{3}{c|}{blind responses (model mean)} & \multicolumn{3}{c|}{paired complete cases (model mean)} & \multicolumn{2}{c}{$\ge 1$ model correct blind} \\
domain & $n$ & formats & answer & decline & empty & blind acc. & image acc. & gain & observed & guessing ref. \\
\midrule
Chest radiographs & 451 & 251 yes/no, 200 OE & 0.48 & 0.36 & 0.16 & 0.31 & 0.67 & 0.36 & 276 (0.61) & 0.54 \\
DocVQA & 500 & 494 OE, 6 yes/no & 0.39 & 0.43 & 0.19 & 0.12 & 0.89 & 0.77 & 41 (0.08) & 0.01 \\
InfoVQA & 500 & 500 OE & 0.61 & 0.23 & 0.16 & 0.34 & 0.88 & 0.54 & 166 (0.33) & 0.00 \\
MedXpertQA-MM & 500 & 500 MC & 0.80 & 0.03 & 0.17 & 0.41 & 0.43 & 0.01 & 304 (0.61) & 0.67 \\
MicroVQA & 500 & 500 MC & 0.86 & 0.00 & 0.14 & 0.57 & 0.60 & 0.03 & 369 (0.74) & 0.71 \\
MMMU-Pro & 500 & 500 MC & 0.73 & 0.08 & 0.18 & 0.06 & 0.10 & 0.04 & 149 (0.30) & 0.77 \\
PathVQA & 500 & 252 yes/no, 248 OE & 0.49 & 0.37 & 0.14 & 0.28 & 0.46 & 0.18 & 228 (0.46) & 0.49 \\
ReXVQA & 500 & 500 MC & 0.75 & 0.13 & 0.12 & 0.61 & 0.74 & 0.13 & 417 (0.83) & 0.76 \\
TextVQA & 500 & 473 OE, 27 yes/no & 0.33 & 0.49 & 0.18 & 0.09 & 0.84 & 0.75 & 83 (0.17) & 0.05 \\
Video-MMMU & 300 & 280 MC, 20 OE & 0.71 & 0.11 & 0.18 & 0.39 & 0.49 & 0.11 & 157 (0.52) & 0.44 \\
\midrule
all ten & 4{,}751 & & & & & & & & 2{,}190 (0.46) & LOMO 0.38 to 0.46 \\
\bottomrule
\end{tabular}

%% file: tables/arxiv/tab_textsuff_assoc.tex
\begin{tabular}{@{}p{0.63\linewidth}rcc@{}}
\toprule
pairs or cells & $n$ & Pearson $r$ & Spearman $\rho$ \\
\midrule
full benchmark, all 50 pairs (primary) & 50 & -0.41 & -0.43 \\
same, 40 pairs without GPT-5.5 & 40 & -0.69 & -0.72 \\
same, rate conditional on non-empty responses & 50 & -0.67 & -0.72 \\
same, empty responses counted incorrect in the gain & 50 & -0.52 & -0.59 \\
paired gain vs explicit-decline rate, 50 pairs & 50 & +0.66 & +0.70 \\
within multiple-choice cells (25 cells, 5 domains) & 25 & -0.07 & -0.03 \\
within yes/no cells (10 cells, 2 domains) & 10 & -0.48 & -0.43 \\
within open-ended cells (25 cells, 5 domains) & 25 & -0.05 & -0.16 \\
B-Clean subset, 50 pairs (selection-conditioned) & 50 & -0.34 & -0.34 \\
B-Clean subset, 40 pairs without GPT-5.5 & 40 & -0.49 & -0.52 \\
earlier draft, 20 pairs after post hoc exclusions (not used) & 20 & -0.77 & -0.47 \\
\midrule
within GPT-5.5 (79\% empty blind responses), ten domains & 10 & -0.45 & -0.19 \\
within Gemini-3.1-Flash-Lite, ten domains & 10 & -0.86 & -0.90 \\
within Claude-Sonnet-4.6, ten domains & 10 & -0.98 & -0.95 \\
within Llama-4-Scout, ten domains & 10 & -0.41 & -0.46 \\
within Qwen2.5-VL-72B, ten domains & 10 & -0.80 & -0.84 \\
domain means & 10 & -0.87 & -0.90 \\
model means & 5 & +0.57 & +0.10 \\
\bottomrule
\end{tabular}

%% file: tables/arxiv/supp_hparams.tex
\begin{tabular}{@{}p{0.2\textwidth}p{0.77\textwidth}@{}}
\toprule
TIDE (the adapter of TIDE only, E2 stage 2 and E3 Stage A) & LoRA rank 8, $\alpha$ 16, dropout 0, on the language model's q/k/v/o projections; lr $10^{-4}$, 800 steps, accumulation 4; per-example token-mean cross-entropy; v5: 12{,}000 examples from 3{,}000 train groups (real image to the gold answer; blank image to the blank decline; in-domain and out-of-domain wrong images to the shared unrelated decline); v4.1: 9{,}000 examples, out-of-domain never trained \\
Grounding (E1; E3 Stage B) & vision projector only, language model and vision encoder frozen, no answer read; joint softmax over the cosine scores of one anchor against five candidate image sets (v5: real, in-domain wrong, out-of-domain wrong and two further negatives; v4.1: real, in-domain wrong, three in-batch real images), temperature 0.2, batch 16, lr $10^{-4}$, 300 fixed steps, fp32 master weights with a bf16 forward; pool 2{,}750 source groups / 2{,}747 anchors; E1 and E3 Stage B share one negative schedule \\
E2 Grounding then TIDE & the E1 projector, frozen, then a fresh TIDE adapter (no adapter is inherited) \\
E3 TIDE then Grounding & the TIDE adapter (Stage A), then Stage B grounding on the E1 pool and schedule \\
E5 evidence-advantage contrast (v4.1 diagnostic) & projector only, from the released weights; row-wise cross-entropy over $\log p(a\mid q,v_k)-\log p(a\mid q,\emptyset)$ across the five candidates, temperature 1.0, batch 16, 300 fixed steps, fp32 masters \\
generation (every cell) & greedy, \texttt{max\_new\_tokens} 3{,}072, one renderer (options visible once, every image at its marker), pinned fast processor, bf16, seed 0 \\
selection & fixed final step for every stage; no early stopping and no checkpoint selection; the 250-group held-match probe is a diagnostic only \\
\bottomrule
\end{tabular}

%% file: tables/arxiv/supp_judge_record.tex
\begin{tabular}{@{}lccccc@{}}
\toprule
rule-B judge model & agreement & $\kappa$ & precision & recall & $F_1$ \\
\midrule
gpt-5.5 (in sample) & 1.000 & 1.000 & 1.000 & 1.000 & 1.000 \\
gpt-4o-mini & 0.974 & 0.874 & 1.000 & 0.800 & 0.889 \\
\bottomrule
\end{tabular}

%% file: tables/arxiv/supp_decision.tex
\begin{tabular}{@{}l|cc|cccc|cccc|c|c|c@{}}
\toprule
cell & \multicolumn{2}{c|}{real rows (390)} & \multicolumn{4}{c|}{counterfactual rows (1{,}170)} & \multicolumn{4}{c|}{string rule} & exact & judge & agree \\
 & decline & refusal wd. & decline & near & refusal wd. & empty & $F$ & $M$ & $\tfrac14F+\tfrac34M$ & 95\% CI & only & proxy & real / cf. \\
\midrule
E0 Base & 0 & 58 & 0 & 0 & 491 & 0 & 0.000 & 1.000 & 0.750 & [0.750, 0.750] & 0.750 & 0.516 & 383/390, -- \\
TIDE only & 49 & 0 & 1{,}056 & 1 & 0 & 0 & 0.126 & 0.097 & 0.104 & [0.090, 0.118] & 0.104 & 0.104 & 390/390, 1{,}057/1{,}057 \\
E1 Grounding & 0 & 23 & 0 & 0 & 194 & 1 & 0.000 & 1.000 & 0.750 & [0.750, 0.750] & 0.750 & 0.389 & 302/390, -- \\
E2 Grounding then TIDE & 59 & 0 & 1{,}060 & 0 & 0 & 0 & 0.151 & 0.094 & 0.108 & [0.094, 0.122] & 0.108 & 0.108 & 390/390, 1{,}060/1{,}060 \\
E3 TIDE then Grounding & 5 & 1 & 398 & 4 & 1 & 0 & 0.013 & 0.656 & 0.495 & [0.490, 0.501] & 0.497 & 0.177 & 325/390, 402/402 \\
E5 contrast & 0 & 71 & 0 & 0 & 531 & 0 & 0.000 & 1.000 & 0.750 & [0.750, 0.750] & 0.750 & -- & -- \\
\bottomrule
\end{tabular}

%% file: tables/arxiv/supp_u.tex
\begin{tabular}{@{}lrrrrcc@{}}
\toprule
cell & $M_\mathrm{blank}$ & $M_\mathrm{id}$ & $M_\mathrm{od}$ & $M_\mathrm{inv}$ & I-MR & TC-LIA MR \\
\midrule
E0 Base & 273 & 317 & 298 & 888 & 0.7590 & 0.5692 \\
TIDE only (3-cond.) & 1 & 91 & 21 & 113 & 0.0966 & 0.0724 \\
E1 Grounding & 252 & 375 & 378 & 1005 & 0.8590 & 0.6442 \\
E2 Grounding then TIDE & 0 & 92 & 18 & 110 & 0.0940 & 0.0705 \\
E3 TIDE then Grounding & 0 & 365 & 354 & 719 & 0.6145 & 0.4609 \\
E5 evidence-advantage & 256 & 304 & 280 & 840 & 0.7179 & 0.5385 \\
\bottomrule
\end{tabular}

%% file: tables/arxiv/supp_ruleb.tex
\begin{tabular}{@{}llccc|cc|ccc|c@{}}
\toprule
judge model & cell & blank & in-dom. & out-dom. & IA-IMR & IA-FMR & acc. & answered & false abst. & unparseable \\
\midrule
gpt-5.5 (primary) & E0 Base & 0.718 & 0.331 & 0.244 & 0.431 & 0.323 & 0.577 & 0.982 & 0.018 & 0 \\
gpt-5.5 (primary) & TIDE only & 0.003 & 0.208 & 0.054 & 0.088 & 0.066 & 0.559 & 0.874 & 0.126 & 0 \\
gpt-5.5 (primary) & E1 Grounding & 0.631 & 0.869 & 0.897 & 0.799 & 0.599 & 0.377 & 0.774 & 0.226 & 0 \\
gpt-5.5 (primary) & E2 Grounding then TIDE & 0.000 & 0.215 & 0.044 & 0.086 & 0.065 & 0.526 & 0.849 & 0.151 & 0 \\
gpt-5.5 (primary) & E3 TIDE then Grounding & 0.000 & 0.726 & 0.744 & 0.490 & 0.367 & 0.423 & 0.821 & 0.179 & 0 \\
\midrule
gpt-4o-mini (secondary) & E0 Base & 0.580 & 0.526 & 0.374 & 0.493 & 0.370 & 0.528 & 0.941 & 0.059 & 0 \\
gpt-4o-mini (secondary) & TIDE only & 0.003 & 0.213 & 0.051 & 0.089 & 0.067 & 0.564 & 0.867 & 0.133 & 0 \\
gpt-4o-mini (secondary) & E1 Grounding & 0.510 & 0.446 & 0.297 & 0.418 & 0.314 & 0.331 & 0.746 & 0.254 & 0 \\
gpt-4o-mini (secondary) & E2 Grounding then TIDE & 0.000 & 0.218 & 0.046 & 0.088 & 0.066 & 0.526 & 0.841 & 0.159 & 0 \\
gpt-4o-mini (secondary) & E3 TIDE then Grounding & 0.000 & 0.346 & 0.210 & 0.185 & 0.139 & 0.415 & 0.813 & 0.187 & 0 \\
\bottomrule
\end{tabular}

%% file: tables/arxiv/supp_probes.tex
\begin{tabular}{@{}llcccc@{}}
\toprule
cell & role & P1 substantive answer & P2 acknowledges problem & P3 about shown image & P4 describes absent content \\
\midrule
E0 Base & real & 0.982 & 0.082 & 0.949 & 0.867 \\
 & blank & 0.756 & 0.349 & -- & 0.564 \\
 & in-domain & 0.695 & 0.487 & 0.749 & 0.272 \\
 & out-of-domain & 0.595 & 0.659 & 0.741 & 0.164 \\
TIDE only & real & 0.874 & 0.126 & 0.726 & 0.451 \\
 & blank & 0.003 & 0.997 & -- & 0.000 \\
 & in-domain & 0.233 & 0.767 & 0.028 & 0.121 \\
 & out-of-domain & 0.054 & 0.946 & 0.003 & 0.028 \\
E1 Grounding & real & 0.774 & 0.100 & 0.718 & 0.551 \\
 & blank & 0.656 & 0.426 & -- & 0.482 \\
 & in-domain & 0.426 & 0.087 & 0.097 & 0.210 \\
 & out-of-domain & 0.246 & 0.085 & 0.046 & 0.095 \\
E2 Grounding then TIDE & real & 0.849 & 0.151 & 0.695 & 0.431 \\
 & blank & 0.000 & 1.000 & -- & 0.000 \\
 & in-domain & 0.236 & 0.764 & 0.021 & 0.128 \\
 & out-of-domain & 0.046 & 0.954 & 0.005 & 0.010 \\
E3 TIDE then Grounding & real & 0.821 & 0.059 & 0.674 & 0.426 \\
 & blank & 0.000 & 1.000 & -- & 0.000 \\
 & in-domain & 0.328 & 0.254 & 0.026 & 0.156 \\
 & out-of-domain & 0.203 & 0.249 & 0.013 & 0.056 \\
\bottomrule
\end{tabular}

%% file: tables/arxiv/tab_contrasts.tex
\begin{tabular}{@{}lcc@{}}
\toprule
contrast (v4.1 final test, 1{,}170 invalid rows) & $\Delta$ I-MR, judge U [95\% CI] & $\Delta$ IA-IMR, rule B [95\% CI] \\
\midrule
TIDE only minus E0 Base & -0.662 [-0.692, -0.632] & -0.343 [-0.380, -0.308] \\
E2 minus E0 Base & -0.665 [-0.696, -0.636] & -0.344 [-0.381, -0.311] \\
E2 minus TIDE only & -0.003 [-0.018, 0.014] & -0.002 [-0.016, 0.014] \\
E1 Grounding minus E0 Base & +0.100 [0.073, 0.127] & +0.368 [0.338, 0.397] \\
E3 minus TIDE only & +0.518 [0.497, 0.538] & +0.402 [0.376, 0.425] \\
E5 minus E0 Base & -0.041 [-0.068, -0.014] & -- \\
\bottomrule
\end{tabular}

%% file: tables/arxiv/tab_domains.tex
\begin{tabular}{@{}lcccc|cccc|cc|cc@{}}
\toprule
& \multicolumn{4}{c|}{I-MR, judge U} & \multicolumn{4}{c|}{IA-IMR, rule B} & \multicolumn{2}{c|}{in-dom., U} & \multicolumn{2}{c}{false abst.} \\
domain (117 invalid, 39 real rows) & E0 & TIDE & E2 & E3 & E0 & TIDE & E2 & E3 & TIDE & E2 & TIDE & E2 \\
\midrule
Chest radiographs & 0.63 & 0.23 & 0.23 & 0.63 & 0.26 & 0.17 & 0.17 & 0.49 & 0.56 & 0.59 & 0.15 & 0.21 \\
DocVQA & 0.58 & 0.04 & 0.03 & 0.57 & 0.14 & 0.03 & 0.01 & 0.32 & 0.08 & 0.08 & 0.05 & 0.03 \\
InfoVQA & 0.81 & 0.00 & 0.01 & 0.64 & 0.28 & 0.00 & 0.01 & 0.50 & 0.00 & 0.03 & 0.03 & 0.08 \\
MedXpertQA-MM & 0.89 & 0.09 & 0.08 & 0.61 & 0.64 & 0.09 & 0.08 & 0.49 & 0.23 & 0.18 & 0.10 & 0.13 \\
MicroVQA & 0.92 & 0.10 & 0.09 & 0.56 & 0.83 & 0.10 & 0.09 & 0.49 & 0.31 & 0.28 & 0.08 & 0.08 \\
MMMU-Pro & 0.79 & 0.04 & 0.05 & 0.66 & 0.54 & 0.04 & 0.05 & 0.48 & 0.05 & 0.08 & 0.03 & 0.00 \\
PathVQA & 0.63 & 0.13 & 0.10 & 0.63 & 0.22 & 0.13 & 0.10 & 0.59 & 0.33 & 0.28 & 0.49 & 0.64 \\
ReXVQA & 0.76 & 0.21 & 0.21 & 0.63 & 0.63 & 0.21 & 0.21 & 0.59 & 0.59 & 0.62 & 0.33 & 0.28 \\
TextVQA & 0.78 & 0.03 & 0.00 & 0.64 & 0.23 & 0.01 & 0.00 & 0.48 & 0.05 & 0.00 & 0.00 & 0.08 \\
Video-MMMU & 0.79 & 0.09 & 0.14 & 0.56 & 0.54 & 0.09 & 0.14 & 0.48 & 0.13 & 0.23 & 0.00 & 0.00 \\
\bottomrule
\end{tabular}

%% file: tables/arxiv/supp_stress_full.tex
\begin{tabular}{@{}llcccc|cc|cccc|cc@{}}
\toprule
& & \multicolumn{6}{c|}{rule B, gpt-5.5 (primary)} & \multicolumn{6}{c}{rule B, gpt-4o-mini (secondary)} \\
cell & prompt & IA-IMR & blank & in-dom. & out-dom. & acc. & answered & IA-IMR & blank & in-dom. & out-dom. & acc. & answered \\
\midrule
E0 Base & P0 & 0.457 & 0.720 & 0.360 & 0.290 & 0.550 & 0.970 & 0.503 & 0.560 & 0.580 & 0.370 & 0.460 & 0.950 \\
 & P1 & 0.920 & 1.000 & 0.910 & 0.850 & 0.500 & 1.000 & 0.710 & 0.510 & 0.850 & 0.770 & 0.490 & 0.900 \\
 & P2 & 0.340 & 0.530 & 0.340 & 0.150 & 0.450 & 0.980 & 0.380 & 0.410 & 0.510 & 0.220 & 0.390 & 0.960 \\
E1 Grounding & P0 & 0.773 & 0.640 & 0.830 & 0.850 & 0.250 & 0.790 & 0.457 & 0.480 & 0.560 & 0.330 & 0.210 & 0.770 \\
 & P1 & 0.990 & 1.000 & 0.990 & 0.980 & 0.320 & 0.930 & 0.653 & 0.510 & 0.880 & 0.570 & 0.300 & 0.800 \\
 & P2 & 0.683 & 0.440 & 0.800 & 0.810 & 0.330 & 0.800 & 0.383 & 0.300 & 0.520 & 0.330 & 0.280 & 0.730 \\
E2 Grounding then TIDE & P0 & 0.093 & 0.000 & 0.240 & 0.040 & 0.410 & 0.820 & 0.093 & 0.000 & 0.240 & 0.040 & 0.400 & 0.820 \\
 & P1 & 0.127 & 0.100 & 0.240 & 0.040 & 0.400 & 0.770 & 0.117 & 0.070 & 0.240 & 0.040 & 0.400 & 0.770 \\
 & P2 & 0.087 & 0.000 & 0.230 & 0.030 & 0.380 & 0.800 & 0.087 & 0.000 & 0.230 & 0.030 & 0.370 & 0.800 \\
E3 TIDE then Grounding & P0 & 0.513 & 0.010 & 0.780 & 0.750 & 0.340 & 0.850 & 0.223 & 0.010 & 0.380 & 0.280 & 0.310 & 0.840 \\
 & P1 & 0.530 & 0.030 & 0.710 & 0.850 & 0.340 & 0.840 & 0.303 & 0.030 & 0.440 & 0.440 & 0.310 & 0.820 \\
 & P2 & 0.477 & 0.010 & 0.690 & 0.730 & 0.320 & 0.850 & 0.203 & 0.010 & 0.370 & 0.230 & 0.300 & 0.830 \\
\bottomrule
\end{tabular}

%% file: tables/arxiv/supp_diag.tex
\begin{tabular}{@{}p{0.72\linewidth}r@{}}
\toprule
diagnostic (train-owned keys; not a held-out result) & value \\
\midrule
E3 Stage B (grounding after TIDE): teacher-forced loss on the model's own decline targets, 10 train-owned probe keys, before / after & 0.021 / 1.675 \\
same, on the real-image gold targets, before / after & 0.332 / 2.611 \\
E1 Grounding: held-match accuracy on the 250 train-owned probe pairs at the fixed final step & 0.924 \\
E5: contrastive loss on 64 train-owned probe anchors, before / after 300 steps & 0.760 / 0.145 \\
E5: mean evidence advantage of the gold answer (nats), before / after & +4.96 / +12.05 \\
E5: candidate top-1 rate, before / after & 0.750 / 0.938 \\
E5: evidence-positive rate, before / after & 0.625 / 0.984 \\
E5: largest $|\partial\mathcal{L}/\partial\log p(a\mid q,\emptyset)|$ over 300 steps (autograd) & $1.4\times10^{-8}$ \\
\bottomrule
\end{tabular}

%% file: tables/arxiv/supp_qual.tex
\begin{tabular}{@{}p{0.12\linewidth}p{0.41\linewidth}p{0.41\linewidth}@{}}
\toprule
\multicolumn{3}{@{}l}{\emph{DEV group} chest\_vqa / 361, canonical prompt P0, stress-pilot generations} \\
\midrule
input & E0 Base & E2 Grounding then TIDE (v4.1, three-condition) \\
\midrule
real image & The hyperintensities surrounding the aorta in this CT image are likely to be fat deposits. In CT scans, fat appears as bright or hyperintense signals compared to other ... \newline {\scriptsize[answered]} & lymph nodes \newline {\scriptsize[answered]} \\
blank image & The hyperintensities surrounding the aorta in the image you provided are likely due to fat deposits. In MRI images, fat appears as bright signals (hyperintense) compared ... \newline {\scriptsize[mirage]} & I can't answer this question because no image was provided. I need to see the image to answer. \newline {\scriptsize[not mirage]} \\
in-domain wrong & The hyperintensities surrounding the aorta in this CT image are likely to be related to the presence of air or gas. In CT scans, air and gas appear as bright ... \newline {\scriptsize[mirage]} & I can't answer this question from this image - it does not show what the question is asking about. \newline {\scriptsize[not mirage]} \\
out-of-domain wrong & The image you provided appears to be a whiteboard with notes related to a database schema and some HTTP methods. It does not contain any medical images or information ... \newline {\scriptsize[not mirage]} & I can't answer this question from this image - it does not show what the question is asking about. \newline {\scriptsize[not mirage]} \\
\bottomrule
\end{tabular}

%% file: tables/arxiv/tab_cells.tex
\begin{tabular}{@{}p{0.07\textwidth}p{0.17\textwidth}p{0.05\textwidth}p{0.27\textwidth}p{0.2\textwidth}p{0.17\textwidth}@{}}
\toprule
paper id & name & internal tag & historical aliases (never used in this paper) & trainable parameters & measured / pending \\
\midrule
E0 & Base & \texttt{e0} & Stock; stock baseline; E0 (v4.1 draft) & none & v4.1: measured; v5: in progress \\
E1 & Grounding & \texttt{e2} & G; Grounding only; E2 (v4.1 draft); M2B merger; grounding merger & vision projector (merger) only, 44.57M parameters, fp32 master weights with bf16 forward & v4.1: measured; v5: in progress \\
E2 & Grounding then TIDE & \texttt{e3} & G+TIDE; Grounding then SC-SFT; E3 (v4.1 draft); TIDE (offline\_analyses.py) & stage 1: projector & v4.1: measured; v5: in progress \\
E3 & TIDE then Grounding & \texttt{e4} & TIDE+G; SC-SFT then Grounding; Reverse; E4 (v4.1 draft) & Stage A: LoRA rank 8 on q/k/v/o & v4.1: measured; v5: in progress \\
TIDE-only & TIDE only (appendix ablation; $=$ E3 Stage A) & \texttt{e1} & TIDE (2026-09-18 draft); SC-SFT (2026-10-09 draft, V2.1); E1 (v4.1 draft); C1 abstain SFT; support-completion SFT only & LoRA rank 8 / alpha 16 on q/k/v/o & v4.1: measured; v5: in progress \\
E5 (v4.1 diagnostic) & Evidence-advantage contrast (proposal Component 2, literal) & \texttt{e5} & TIDE Component 2; C2 (legacy) & projector only & v4.1: measured; v5: not in the v5 grid \\
\bottomrule
\end{tabular}

%% file: tables/arxiv/tab_experiments.tex
\begin{tabular}{@{}p{0.17\textwidth}p{0.05\textwidth}p{0.11\textwidth}p{0.6\textwidth}@{}}
\toprule
experiment & priority & status (2026-10-10) & scope \\
\midrule
EXP-GRID-V41 & P0 & MEASURED & three-condition predecessor of TIDE (out-of-domain not trained); superseded for the v5 method but measured and traceable; reported as the predecessor study \\
EXP-POC-TEXTSUFF & descriptive & MEASURED (API observation, pre-protocol pipeline; descriptive) & question-only correctness of the ten domains (B-Clean union count with row-level format, chance-level and leave-one-model-out references) and the exploratory association between substantive blind answering (denominator: all evaluated questions; empty responses a separate category) and the paired complete-case accuracy gain on the full benchmark; B-Clean subset as a selection-conditioned sensitivity; descriptive context for Sec. 4.1; audited in docs/POC\_V3\_COMPARISON.md \\
EXP-GRID-V5-QWEN & P0 & IN\_PROGRESS & full held-out benchmark (1,000 groups, 4,000 rows) + fresh sensitivity holdout (288) + v4.1 comparison subset (390) \\
EXP-GRID-V5-LLAVA & P0 & IN\_PROGRESS & benchmark + comparison \\
EXP-GRID-V5-INTERNVL & P0 & IN\_PROGRESS & benchmark + comparison \\
EXP-GRID-V5-QWEN3VL & P0 & PLANNED &  \\
EXP-GRID-V5-GEMMA3 & P0 & PLANNED &  \\
EXP-SEEDS & P0 & PLANNED & three seeds of every trainable principal condition; report mean, SD and group-level bootstrap intervals; distinguishes reproducibility from ablation \\
EXP-ABL-A0A7 & P0 & PLANNED & matched optimizer steps (800), LoRA budget and answer-token exposure; A4 = TIDE-only (E3 Stage A); A0 = E0 \\
EXP-BASELINES & P0 & PLANNED & competitive abstention baselines: A5 refusal SFT, a thresholded-confidence selective predictor on the stock model (risk-coverage curve), prompt-based refusal instruction; reported with accuracy-coverage curves \\
EXP-JUDGE-HUMAN & P0 & PLANNED & the existing 38-row in-sample validation is not independent evidence \\
EXP-UTILITY & P0 & MEASURED (v4.1) / IN\_PROGRESS (v5) & real-image accuracy, answered rate, false abstention, accuracy conditional on answering, per domain \\
EXP-CF-AUDIT & P0 & MEASURED (construction) / PLANNED (human audit) & content-level disjointness verified by the v5 preflight (0 cross-tier sources, 0 same-component sources); a human audit of 200 counterfactual pairs for genuine unrelatedness is planned \\
EXP-HARDNESS & P1 & PLANNED & bin in-domain wrong images by visual similarity to the valid image (frozen image encoder cosine) and report the TIDE residual per bin \\
EXP-NEGSAMPLING & P1 & PLANNED & A6-style random negatives versus in-domain hard negatives at matched counts \\
EXP-TARGETS & P1 & PLANNED & distinct (mapping B) versus shared decline strings (A7) \\
EXP-LODO & P1 & PLANNED & leave-one-domain-out: 10 trainings (9 domains each, leakage-safe by component), evaluated on the held-out domain; report real-image utility and invalid-evidence behaviour; distinct from out-of-domain wrong images on seen-domain questions \\
EXP-VISUAL-SWAP & P1 & PLANNED & swap the valid image for its in-domain counterfactual after decoding starts; sensitivity of the answer to the image \\
EXP-MARGIN & P1 & PLANNED & log p(a|q,v\_real) - log p(a|q,v\_wrong) before and after TIDE (correct-versus-wrong evidence margin) \\
EXP-SEPARABILITY & P1 & PLANNED & two probe targets on the same activations (projector outputs and residual stream), per cell: (a) input validity, real versus counterfactual, the target of Proposition 3; (b) the response-level mirage label of Mirage Probes. Decodability of (b) says nothing about (a) without this matched experiment; neither is identified with the other in the text. \\
EXP-ORDER-FORGET & P1 & MEASURED (v4.1 retention diagnostic) / IN\_PROGRESS (v5) & abstain-target loss before and after Stage B of E3 \\
EXP-STRESS-V5 & P1 & PLANNED (pilot v4.1 MEASURED) & P0/P1/P2 on the v5 eval pilot and, if approved, the benchmark; must include the TIDE-only cell \\
EXP-NOIMAGE & P1 & PLANNED & actual image removal versus the black image on the same groups (Phantom-0 comparability) \\
EXP-EXTERNAL-HB & P0 & PLANNED (smokes only) & HallusionBench zero-shot, stock versus TIDE-only versus E2/E3, official grader; distinguish hallucination metrics from mirage metrics and from task accuracy \\
EXP-SENS-LORA & P2 & PLANNED & LoRA rank 4 / 8 / 16 / 32 \\
EXP-SENS-LR & P2 & PLANNED & learning rate 3e-5 / 1e-4 / 3e-4 \\
EXP-SENS-STEPS & P2 & PLANNED & 200 / 400 / 800 / 1,600 steps \\
EXP-SENS-SIZE & P2 & PLANNED & 500 / 1,000 / 2,000 / 3,000 train groups \\
EXP-SENS-MODULES & P2 & PLANNED & q/k/v/o versus q/k/v/o plus MLP; projector unfrozen \\
EXP-SENS-TOKENLOSS & P2 & PLANNED & per-example token mean versus sequence sum \\
EXP-SENS-CORRUPT & P2 & PLANNED & image corruption levels (blur, noise, occlusion) on real images \\
EXP-SENS-PARAPHRASE & P2 & PLANNED & question paraphrases \\
EXP-SENS-RATIO & P2 & PLANNED & counterfactual-to-real ratios 0 / 1 / 2 / 3 per group \\
EXP-SENS-JOINT & P2 & PLANNED & joint Grounding plus TIDE in one stage \\
\bottomrule
\end{tabular}

%% file: tables/arxiv/tab_abl_design.tex
\begin{tabular}{@{}p{0.04\textwidth}p{0.2\textwidth}p{0.38\textwidth}p{0.3\textwidth}@{}}
\toprule
id & cell & composition (matched 800 steps, LoRA rank 8, same decline strings) & hypothesis \\
\midrule
A0 & Stock model & equals E0 & reference: the released weights \\
A1 & Conventional real-image-only SFT & 3,000 real examples only (sft\_mode real\_only), matched to A4 in optimizer steps (800) and LoRA budget & answer-only SFT leaves off-support behaviour unconstrained (Prop.~1) \\
A2 & Real plus blank-image abstention & sft\_mode real\_blank (6,000 examples), 800 steps & blank is trivially separable; nothing rewards declining on wrong images \\
A3 & Real plus in-domain wrong-image abstention & sft\_mode real\_id (6,000 examples), 800 steps & in-domain declining transfers partly to blank and out-of-domain inputs \\
A4 & Full support completion (TIDE, four conditions) & equals TIDE-only (internal e1) under v5 & the four-condition optimum declines on every counterfactual kind (Prop.~2) \\
A5 & Generic refusal-SFT baseline & 3,000 real examples plus 9,000 refusal examples whose inputs are text-only unanswerable or out-of-scope questions with the real image kept (no counterfactual images); same decline strings; 800 steps & a refusal style without counterfactual images is not an evidence check \\
A6 & Matched random-negative SFT & real plus blank plus two random-image negatives drawn uniformly from all train groups regardless of domain (no in-domain hard negative), 12,000 examples, 800 steps & random negatives are mostly out-of-domain, so the in-domain residual stays \\
A7 & Shared abstention target & A4 with one decline string for blank, in-domain and out-of-domain (mapping with two behaviour classes) & one decline string lowers $H(B)$ but leaves the optimum's decisions unchanged \\
\bottomrule
\end{tabular}